\useunder{\uline}{\ul}{}
\definecolor{hlt}{HTML}{F2CFEE}
\newtcolorbox{highlightcell}[1][]{colback=hlt!50, colframe=hlt!50, 
  boxrule=0.5pt, arc=1.5mm, auto outer arc, width=42pt, before skip=0pt, after skip=0pt, left=0pt, right=0pt, top=-2pt, bottom=-2pt, baseline, #1}
\newtcolorbox{highlightcell2}[1][]{colback=hlt!50, colframe=hlt!50, 
  boxrule=0.5pt, arc=1.5mm, auto outer arc, width=73pt, before skip=0pt, after skip=0pt, left=1pt, right=1pt, top=-2pt, bottom=-2pt, baseline, #1}
\newtcolorbox{highlightcell3}[1][]{colback=hlt!50, colframe=hlt!50, 
  boxrule=0.5pt, arc=1.5mm, auto outer arc, width=86pt, before skip=0pt, after skip=0pt, left=0pt, right=0pt, top=-4pt, bottom=-4pt, baseline, #1}
\let\AND\relax
\newcommand{\IF}}
    \newcommand\SCOPE{\begin{ALC@g}}%
    \newcommand\ENDSCOPE{\end{ALC@g}}%
    \newcommand{\IF}%
\theoremstyle{plain}
\newtheorem{theorem}{Theorem}[section]
\newtheorem{lemma}[theorem]{Lemma}
\newtheorem{corollary}[theorem]{Corollary}
\theoremstyle{definition}
\newtheorem{definition}[theorem]{Definition}
\newtheorem{remark}[theorem]{Remark}
\newcommand{\vtheta}{\bm{\theta}}
\newcommand{\vw}{\mathbf{w}}
\newcommand{\vlambda}{\bm{\lambda}}
\newcommand{\loss}{\mathcal{L}}
\newcommand{\Pa}{\mathcal{P}}
\newcommand{\vgamma}{\bm{\gamma}}
\newcommand{\f}{\mathbf{f}}
\newcommand{\x}{\mathbf{x}}
\newcommand{\R}{\mathbb{R}} 
\newcommand{\E}[1]{\mathbb{E}\left[ #1 \right]}
\newcommand{\bvtheta}{\bar{\vtheta}}
\newcommand{\bvlambda}{\bar{\vlambda}}
\newcommand{\tvtheta}{\tilde{\vtheta}}
\newcommand{\vzeta}{\bm{\zeta}}
\newcommand{\vnu}{\bm{\nu}}
\newcommand{\hvtheta}{\hat{\vtheta}}
\newcommand{\vchi}{\bm{\chi}}
\newcommand{\vu}{\bm{\upsilon}}
\title{Adaptive Online Mirror Descent for \\Tchebycheff Scalarization in Multi-Objective Learning}
\author{\name Meitong Liu \email meitong4@illinois.edu \\
      \addr University of Illinois Urbana-Champaign \\
      The University of Hong Kong\\
      \\
      \AND
      \name Xiaoyuan Zhang \email xzhang2523-c@my.cityu.edu.hk \\
      \addr City University of Hong Kong \\
      \\
      \AND
      \name Chulin Xie \email chulinx2@illinois.edu \\
      \addr  University of Illinois Urbana-Champaign \\
      \\
      \AND
      \name Kate Donahue \email kpd@illinois.edu \\
      \addr University of Illinois Urbana-Champaign \\
      \\
      \name Han Zhao \email hanzhao@illinois.edu \\
      \addr University of Illinois Urbana-Champaign
      }
\begin{document}

\maketitle

\begin{abstract}
Multi-objective learning (MOL) aims to learn under multiple potentially conflicting objectives and strike a proper balance. While recent preference-guided MOL methods often rely on additional optimization objectives or constraints, we consider the classic Tchebycheff scalarization (TCH) that naturally allows for locating solutions with user-specified trade-offs. Due to its minimax formulation, directly optimizing TCH often leads to training oscillation and stagnation. In light of this limitation, we propose an adaptive online mirror descent algorithm for TCH, called (Ada)OMD-TCH. One of our main ingredients is an adaptive online-to-batch conversion that significantly improves solution optimality over traditional conversion in practice while maintaining the same theoretical convergence guarantees. We show that (Ada)OMD-TCH achieves a convergence rate of $\mathcal O(\sqrt{\log m/T})$, where $m$ is the number of objectives and $T$ is the number of rounds, providing a tighter dependency on $m$ in the offline setting compared to existing work. Empirically, we demonstrate on both synthetic problems and federated learning tasks that (Ada)OMD-TCH effectively smooths the training process and yields preference-guided, specific, diverse, and fair solutions.
\end{abstract}

\section{Introduction}
Multi-objective learning (MOL) is a fundamental problem in various domains, including multi-task learning (MTL)~\citep{caruana97}, federated learning (FL)~\citep{mcmahan17a}, and drug discovery~\citep{xie21}. These problems often involve conflicting objectives, such as MTL tasks with different label distributions and FL clients with heterogeneous local data, making it infeasible to find a single solution that optimizes all objectives simultaneously. Consequently, MOL methods identify solutions on the Pareto Front (PF), where any improvement in one objective necessitates a compromise in another~\citep{miettinen98}. Moreover, being able to locate specific Pareto optimal (PO) solutions with desired trade-offs is particularly helpful for decision makers with their preferences, such as a uniform preference for fairness or a weighted one to prioritize certain objectives. This motivates the development of \emph{preference-guided} MOL methods~\citep{lin19,mahapatra20,kamani2021pareto,momma2022multi,ferero}.

MOL methods can be categorized into several lines. One classic line is evolutionary algorithms~\citep{deb02,zhang07,schaffer85}, but they often suffer from slow convergence due to the use of zeroth-order optimization oracles. Recently, gradient-based methods have gained increasing attention for their scalability in deep learning~\citep{chen2025gradient}. \emph{Gradient manipulation methods}, represented by the Multiple Gradient Descent Algorithm (MGDA), aim to find a descent direction by resolving gradient conflicts~\citep{fliege00,desideri12}. However, they lack the ability to precisely locate a desired PO solution. To overcome this limitation, recent \textit{preference-guided methods} adopt additional optimization objectives or constraints to move iterates closer to the PF and the specified preference simultaneously~\citep{mahapatra20,ferero}, which often requires solving subprograms and thus induces more complexity. Another line of methods to tackle MOL is \emph{scalarization techniques}, which reduce multiple objectives to a scalar function~\citep{miettinen98}. Linear scalarization has been widely used for its simplicity, yet it is incapable of finding solutions on the non-convex part of the PF~\citep{hu2024revisiting}. Moreover, for a given preference, the solution found by LS depends on the shape of the PF~\citep{boyd2004convex} and thus does not conform to a strict trade-off. In contrast, \emph{Tchebycheff scalarization} overcomes both drawbacks --- it is able to recover the full PF and calibrate the precise quantitative relation among objectives~\citep{ehrgott05}, making it a natural choice for preference-guided MOL. Nevertheless, due to its non-smooth nature, direct optimization often suffers from training oscillation and stagnation~\citep{mahapatra2021exact}. 

\begin{table}[t]
    \caption{\textbf{Comparison of (Ada)OMD-TCH with previous OMD-based methods.}}
    \label{tab:1}
    \begin{center}
    \begin{small}
    \resizebox{\textwidth}{!}{
    \begin{tabular}{@{\hskip 2pt}c@{\hskip 7pt}c@{\hskip 7pt}c@{\hskip 7pt}c@{\hskip 7pt}c@{\hskip 2pt}}
    
        \toprule
        Method & Applied preference & Online-to-batch conversion & OMD instance for $\vlambda$ & Convergence rate \\
        \midrule
        \makecell{AFL~\citep{mohri19}, \\ ALMO~\citep{cortes2020agnostic}} & Uniform & Uniform avg. & PGD & $\mathcal O(\sqrt{m/T})$ \\
        \addlinespace[3pt]
        
        \makecell{GroupDRO~\citep{sagawa20}, \\ ExcessMTL~\citep{he2024robust}} & Uniform & Uniform avg. & EG & $\mathcal O(m\sqrt{\log m / T})$ \\
        \addlinespace[3pt]
        
        \makecell{OMD-TCH (Ours) \\ AdaOMD-TCH (Ours)} & \raisebox{-.25\height}{\begin{highlightcell}\textbf{Diverse}\end{highlightcell}} & \raisebox{-.15\height}{\makecell{Uniform avg. \\ \begin{highlightcell2}\textbf{Adaptive avg.}\end{highlightcell2}}} & PGD or EG & \raisebox{-.18\height}{\makecell{PGD: $\mathcal O(\sqrt{m / T})$\\ \begin{highlightcell3}\textbf{EG: $\mathcal O(\sqrt{\log m / T})$}\end{highlightcell3} }} \\
        \bottomrule
        
    \end{tabular}
    }
    \end{small}
    \end{center}
\end{table}

Given its stated advantages, this paper aims to build upon Tchebycheff scalarization (TCH) to solve preference-guided MOL. To overcome its drawback mentioned above, inspired by the literature on online learning~\citep{nemirovskij83,hazan16}, we propose an adaptive solver for TCH using online mirror descent (OMD), named (Ada)OMD-TCH, that bypasses the unstable non-smooth formulation. One of our main technical ingredients is a novel adaptive online-to-batch conversion scheme, yielding AdaOMD-TCH, that adaptively discards Pareto-dominated iterates and outputs a reweighted average for offline/batch learning. This scheme is motivated by the poor solution optimality of OMD-TCH using the traditional online-to-batch conversion, which uniformly averages all iterates as the final output.

Theoretically, we prove that (Ada)OMD-TCH converges to the optimal solution of the original TCH problem, i.e., the PO solution with a given trade-off, at a rate of $\mathcal O(\sqrt{\log m / T})$ or $\mathcal O(\sqrt{m/T})$, depending on the OMD instance, in the stochastic, offline/batch setting, where $m$ is the number of objectives and $T$ is the number of rounds. Our $\mathcal O(\sqrt{\log m / T})$ bound reveals a tighter dependency on $m$ for offline learning compared to previous works~\citep{sagawa20,he2024robust}.
Empirically, we demonstrate on seven synthetic problems and federated learning tasks that (1) (Ada)OMD-TCH effectively stabilizes the training process as opposed to directly optimizing TCH, (2) AdaOMD-TCH significantly improves solution optimality over OMD-TCH, while retaining the same theoretical convergence properties, and (3) (Ada)OMD-TCH is able to locate the specified and diverse solutions, serving as a competitive preference-guided MOL method.

Beyond the technical contributions, our work contributes to conceptually connecting and unifying different applications in machine learning literature that admit the same problem structure, including agnostic federated learning (AFL)~\citep{mohri19}, agnostic multi-objective learning (ALMO)~\citep{cortes2020agnostic}, group distributionally robust optimization (GroupDRO)~\citep{sagawa20}, and robust multi-task learning (ExcessMTL)~\citep{he2024robust}, among many others. Despite their distinct application scenarios, the underlying minimax formulation adopted in these works is essentially a special case of Tchebycheff scalarization under a \emph{uniform} preference. By allowing for general weighted preferences, we unify and extend these methods into a generalized framework, and contribute (1) a novel adaptive online-to-batch conversion scheme that boosts solution optimality without affecting convergence guarantees, (2) a theoretical iteration complexity with optimal dependence on the number of objectives for the offline setting, and (3) empirical evidence of the framework as a competitive preference-guided MOL method, as summarized in~\Cref{tab:1}.

\section{Preliminaries}

\textbf{Notation.} 
Given a positive integer $n$, we use $[n]$ for the set $\{1,2,\ldots,n\}$ and $\Delta_{n}$ for the $(n-1)$-dimensional simplex $\{ \bm{\alpha} \in \R^n \mid \sum_{i=1}^n \alpha_i = 1, \alpha_i \geq 0, \forall i\}$.

\subsection{Multi-objective learning}

MOL aims to tackle the vector-optimization problem: $\underset{\vtheta \in \Theta}{\min}\ \f(\vtheta) = (f_1(\vtheta), f_2(\vtheta), \ldots, f_m(\vtheta))^{\top}$,
where $\Theta \subset \R^d$ is the feasible region and $f_i: \R^d \mapsto \R$ are $m$ differentiable objective functions. Generally, objectives conflict with each other so that no single solution is optimal for every objective. We are instead interested in Pareto optimal (PO) solutions~\citep{miettinen98}:

\begin{definition}[(Strict) Pareto dominance]
  For two solutions $\vtheta_1, \vtheta_2 \in \Theta$, $\vtheta_1$ \emph{dominates} $\vtheta_2$, denoted as $\vtheta_1 \preceq \vtheta_2$ (slightly misusing the notation), if $f_i(\vtheta_1) \leq f_i(\vtheta_2)$ for all $i \in [m]$ and $f_j(\vtheta_1) < f_j(\vtheta_2)$ for some $j \in [m]$. If $f_i(\vtheta_1) < f_i(\vtheta_2)$ for all $i \in [m]$, $\vtheta_1$ strictly dominates $\vtheta_2$, denoted as $\vtheta_1 \prec \vtheta_2$.
\end{definition}

\begin{definition}[(Weak) Pareto optimality]
  A solution $\vtheta^* \in \Theta$ is Pareto optimal if there is no $\vtheta \in \Theta$ such that $\vtheta \preceq \vtheta^*$. A solution $\vtheta' \in \Theta$ is weakly Pareto optimal if there is no $\vtheta \in \Theta$ such that $\vtheta \prec \vtheta'$.
\end{definition}

All (weakly) Pareto optimal solutions form the (weak) Pareto optimal set, whose image in the objective space forms the (weak) Pareto Front (PF). We also define Pareto stationarity:

\begin{definition}[Pareto stationarity]
  A solution $\vtheta^* \in \Theta$ is Pareto stationary if there exists $\bm{\alpha} \in \Delta_m$ such that $\sum_{i=1}^m \alpha_i\nabla f_i(\vtheta) = 0$.
\end{definition}

Pareto stationarity is a necessary condition for optimality, and a sufficient condition if objectives are convex and every entry of $\bm{\alpha}$ is non-zero~\citep{miettinen98}.

\textbf{Scalarization} is a classic technique for MOL. It finds a particular PO solution by converting the vector-optimization problem into a scalar one given a preference and a scalarization function. Two popular methods are linear scalarization (LS)~\citep{geoffrion67} and Tchebycheff scalarization (TCH)~\citep{bowman76}:
\begin{align}
  \text{LS: }~\underset{\vtheta \in \Theta}{\min}\ \operatorname{LS}(\vtheta;\vw) = \underset{\vtheta \in \Theta}{\min}\ \sum_{i=1}^m w_if_i(\vtheta); \qquad
  \text{TCH: }~\underset{\vtheta \in \Theta}{\min}\ \operatorname{TCH}(\vtheta;\vw) = \underset{\vtheta \in \Theta}{\min}\ \underset{i \in [m]}{\max}\ w_if_i(\vtheta), \label{eq:3}
\end{align}
where $\vw \in \Delta_m$ is a user-defined preference vector\footnote{The complete version of TCH is $\underset{\vtheta \in \Theta}{\min}\ \underset{i \in [m]}{\max}\ w_i(f_i(\vtheta) - z_i)$, where $\mathbf{z} \in \R^m$ is a nadir point (i.e.,  $\bm z \preceq \bm f(\vtheta), \forall \vtheta \in \Theta$). When $\f(\vtheta) \succeq \bm 0$ for all $\vtheta \in \Theta$, $\bm z$ can be set as $\bm 0$ and thus omitted.}. It controls the objective trade-off of the specific PO solution found. In particular, LS locates the PO solution where $\vw$ is a normal vector of the PF's supporting hyperplane~\citep{boyd2004convex}, while TCH enjoys the following property:

\begin{theorem}[Informal,~\citet{ehrgott05}] \label{th:1}
  Suppose $\vtheta^*_{\vw} = \underset{\vtheta \in \Theta}{\arg \min}\ \operatorname{TCH}(\vtheta; \vw)$, under mild conditions~\footnote{\textcircled{1} $\exists \vtheta \in \Theta$, s.t. $w_i f_i(\vtheta) = w_j f_j(\vtheta), \forall i,j \in [m]$; \textcircled{2} $\vtheta^*_{\vw}$ is the only solution to the TCH problem with preference $\vw$. When \textcircled{1} is not met, TCH gives the PO solution with the minimum possible maximum weighted objective. When \textcircled{2} is not met, the solution may be weakly PO and the equality in \cref{th:1} may not hold.}, $w_if_i(\vtheta^*_{\vw}) = w_jf_j(\vtheta^*_{\vw}), \forall i,j \in [m]$.
\end{theorem}

This quantitative relationship is particularly useful for preference-guided MOL---while the objective trade-off obtained by LS depends on the problem-specific PF's shape, TCH finds the PO solution at the intersection of the preference ray and the PF, offering more precise control. Moreover, LS fails to reach the non-convex regions of the PF~\citep{das97,hu2024revisiting}, while TCH can find all (weakly) PO solutions:

\begin{theorem}[\citet{choo83}] \label{th:2}
  A feasible solution $\vtheta \in \Theta$ is weakly Pareto optimal if and only if it is a solution to a Tchebycheff scalarization problem given some $\vw$.
\end{theorem}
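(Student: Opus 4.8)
The plan is to prove the two implications of the equivalence separately, with the substance concentrated in the forward (necessity) direction, where I must exhibit a preference vector that certifies a given weakly Pareto optimal point. Throughout I work with the general weighted form $\max_{i\in[m]} w_i(f_i(\vtheta)-z_i)$, where $\mathbf{z}$ is a reference point chosen strictly below the front, i.e. $z_i < f_i(\vtheta)$ for every feasible $\vtheta$ and every $i$; under the standing assumption $\f(\vtheta)\succeq \bm 0$ one may take each $z_i$ slightly negative, recovering $\operatorname{TCH}(\vtheta;\vw)$ with $\mathbf z=\bm 0$ in the nondegenerate case. The strict choice of $\mathbf z$ guarantees $f_i(\vtheta)-z_i>0$ for all $i$, so every weighted term is positive and the scalarized value is positive at every feasible point.

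For sufficiency ($\Leftarrow$), suppose $\vtheta^*$ solves the Tchebycheff problem for some $\vw\in\Delta_m$, and assume for contradiction that $\vtheta^*$ is not weakly Pareto optimal. Then there is $\vtheta'\in\Theta$ with $\vtheta'\prec\vtheta^*$, i.e. $f_i(\vtheta')<f_i(\vtheta^*)$ for every $i$. Write $M:=\max_i w_i(f_i(\vtheta^*)-z_i)>0$. For each coordinate $i$ with $w_i>0$ we have $w_i(f_i(\vtheta')-z_i)<w_i(f_i(\vtheta^*)-z_i)\le M$, while for each $i$ with $w_i=0$ the term is $0<M$; since there are only finitely many coordinates, the maximum over $i$ of terms each strictly below $M$ is itself strictly below $M$. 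Thus $\max_i w_i(f_i(\vtheta')-z_i)<M$, contradicting the optimality of $\vtheta^*$. Hence $\vtheta^*$ is weakly Pareto optimal.

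For necessity ($\Rightarrow$), let $\vtheta^*$ be weakly Pareto optimal. The key step is to choose the preference so that all weighted objectives are balanced at $\vtheta^*$: set $w_i = c\,/\,(f_i(\vtheta^*)-z_i)$, where $c>0$ is the normalizing constant making $\vw\in\Delta_m$; this is well defined and yields $w_i>0$ because each $f_i(\vtheta^*)-z_i>0$. By construction $w_i(f_i(\vtheta^*)-z_i)=c$ for all $i$, so the scalarized value at $\vtheta^*$ equals $c$. Suppose $\vtheta^*$ did not minimize it; then some $\vtheta'\in\Theta$ achieves $\max_i w_i(f_i(\vtheta')-z_i)<c$, which forces $w_i(f_i(\vtheta')-z_i)<c$ for every $i$ simultaneously, and dividing by $w_i>0$ gives $f_i(\vtheta')<f_i(\vtheta^*)$ for all $i$, i.e. $\vtheta'\prec\vtheta^*$. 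This contradicts weak Pareto optimality, so $\vtheta^*$ solves the Tchebycheff problem with this $\vw$, completing the equivalence.

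The main obstacle is not the two contradiction arguments, which are routine, but the handling of degeneracies in the weight construction: the choice $w_i\propto 1/(f_i(\vtheta^*)-z_i)$ requires each denominator to be strictly positive and finite, so the simplified form with $\mathbf z=\bm 0$ breaks down precisely when $f_i(\vtheta^*)=0$ for some coordinate. This is why a reference point strictly below the Pareto front (a shifted utopia point) is the clean device, and the one assumption that must be made explicit is that such a point exists, i.e. that each $\inf_{\vtheta\in\Theta} f_i(\vtheta)$ is finite. A second subtlety worth flagging is the exact matching between \emph{weak} Pareto optimality and the \emph{strict} coordinate-wise inequalities produced by a strictly smaller maximum; this alignment is what makes the statement hold for weak (rather than strong) optimality, and sharpening it to strong Pareto optimality would require either strictly positive weights together with uniqueness of the minimizer, or an augmented Tchebycheff term $\varepsilon\sum_i w_i f_i(\vtheta)$.
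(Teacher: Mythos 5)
The paper itself contains no proof of this theorem: it is imported as a known result from \citet{choo83} (the appendix only proves Theorems 3--8), so there is no internal argument to compare against. Your proof is correct and is essentially the classical one from that literature: sufficiency via contradiction with a strictly dominating point (you correctly handle the $w_i=0$ coordinates, whose terms are $0$ and hence strictly below the positive maximum $M$), and necessity via the balancing weights $w_i \propto 1/(f_i(\vtheta^*)-z_i)$, which force any point with strictly smaller scalarized value to strictly dominate $\vtheta^*$. One refinement to your closing remark: the degenerate case $\mathbf{z}=\bm 0$ with $f_i(\vtheta^*)=0$ for some $i$ breaks \emph{your particular weight construction} but not the theorem itself. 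Letting $I=\{i : f_i(\vtheta^*)=0\}\neq\emptyset$ and choosing $w_i = 1/|I|$ for $i\in I$ and $w_i=0$ otherwise gives $\max_i w_i f_i(\vtheta^*)=0 \leq \max_i w_i f_i(\vtheta)$ for every feasible $\vtheta$ (using $\f(\vtheta)\succeq\bm 0$), so $\vtheta^*$ still solves a Tchebycheff problem; the strictly-below-the-front reference point is thus a convenience that yields strictly positive weights uniformly, not an assumption the statement genuinely needs in that case.
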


Being able to precisely control objective trade-offs and recover the complete PF, TCH is a natural choice for preference-guided MOL. However, its practicability is limited by training instability and stagnation due to the non-smooth formulation that optimizes only the worst objective at each step~\citep{mahapatra20}.

\subsection{Online learning algorithms}

\textbf{Online learning} refers to scenarios where data arrives in a stream rather than as a fixed dataset~\citep{zinkevich03}. The defining characteristic is that the current optimal solution may no longer be optimal after new data is presented. This process can be modeled as a game between a player and an adversary: in each round $t \in \{ 1, \ldots, T\}$, the player outputs a solution $\x^{(t)} \in \mathcal{X}$ based on past data and the adversary responds with a function $\ell^{(t)}: \mathcal{X} \mapsto \R$ that incurs a loss $\ell^{(t)}(\x^{(t)})$. The goal of the player is to minimize the regret w.r.t. the best solution in hindsight: $\operatorname{Regret}(T) = \sum_{t=1}^T\ell^{(t)}(\x^{(t)}) - \underset{\x \in \mathcal{X}}{\min}\ \sum_{t=1}^T\ell^{(t)}(\x)$.
Online algorithms can also be applied to offline/batch learning scenarios with an \emph{online-to-batch conversion} that calculates a deterministic solution from iterates $\{\x^{(t)}\}_{t=1}^T$, which conforms to a convergence guarantee transformed from the regret guarantee. The classic conversion is \emph{uniform averaging}: $\bar{\x} = \frac{1}{T}\sum_{t=1}^T \x^{(t)}$, which ensures a diminishing optimization error w.r.t. $T$ given a sublinear regret bound~\citep{cesa04}.

\textbf{Online mirror descent} is a fundamental online learning algorithm, commonly favored for its generalizable regret guarantees on decision sets with constrained geometries, as well as practically stable update trajectories~\citep{nemirovskij83,hazan16}. Its general update rule is:
\begin{equation}
  \x^{(t+1)} = \underset{\x \in \mathcal{X}}{\arg \min}\ \langle \nabla \ell^{(t)}(\x^{(t)}), \x \rangle + \frac{1}{\eta} B_{\psi}(\x;\x^{(t)}) , \label{eq:6}
\end{equation}
where $\langle \cdot, \cdot \rangle$ denotes the inner product, $\eta$ is the step size, and $B_{\psi}$ is the Bregman divergence induced by a convex function $\psi: \mathcal{X} \rightarrow \R$. With different choices of $\psi$, \Cref{eq:6} can be instantiated into different forms. With $\psi$ being the $l$-2 norm and the negative entropy, we have Projected Gradient Descent (PGD) and Exponentiated Gradient (EG), respectively:
\begin{gather}
  \text{PGD: }\x^{(t+1)} = \Pi_{\mathcal{X}} \left( \x^{(t)} - \eta \nabla \ell^{(t)}(\x^{(t)}) \right); \qquad
  \text{EG: }x^{(t+1)}_i = \frac{x^{(t)}_i\exp\left(-\eta \nabla_i \ell^{(t)}(\x^{(t)}) \right)}{\sum_{j=1}^d x^{(t)}_j \exp\left(- \eta \nabla_j \ell^{(t)}(\x^{(t)}) \right)},  \label{eq:eg}
\end{gather}
where $\Pi_{\mathcal{X}}: \R^d \rightarrow \mathcal{X}$ is a projection function to the feasible domain of $\x$.

\section{(Ada)OMD-TCH} \label{sec:3}

We now present our main methods. Section \ref{sec:3-1} formulates the unified framework for solving TCH via OMD. Section \ref{sec:3-2} introduces the adaptive online-to-batch conversion and its algorithm implementation, AdaOMD-TCH. Section \ref{sec:3-3} provides the convergence analysis and compares it with previous results. Section \ref{sec:3-4} discusses the conceptual differences between (Ada)OMD-TCH and other (preference-guided) MOL methods. 

\subsection{OMD-TCH: a unified framework} \label{sec:3-1}

\textbf{Formulation.} To solve the general TCH problem in \cref{eq:3}, following \citet{juditsky2011solving}, we perform a key equivalent transformation: $\underset{\vtheta \in \Theta}{\min}\ \underset{i \in [m]}{\max}\ w_if_i(\vtheta) = \underset{\vtheta \in \Theta}{\min} \underset{\vlambda \in \Delta_m}{\max}\sum_{i=1}^m \lambda_i \left (w_if_i(\vtheta) \right )$.
As such, the inner maximization problem is reformulated over a continuous simplex rather than a discrete set, thereby smoothing the one-hot selection. In the following, we use the notation $\loss(\vtheta, \vlambda; \vw) \coloneqq \sum_{i=1}^m \lambda_i\left( w_if_i(\vtheta) \right)$.

The transformed problem can be interpreted as a zero-sum game between two players, $\vtheta$ and $\vlambda$, who attempt to minimize and maximize a shared objective $\loss(\vtheta, \vlambda; \vw)$, respectively. This \emph{adversarial} relationship makes optimization for each player an online learning task -- in $\vtheta$'s view, $\vlambda$ is the adversary and $\loss(\cdot, \vlambda^{(t)}; \vw)$ is the time-varying loss function, and vice versa for player $\vlambda$.
This dual problem can be solved by applying OMD to $\vtheta$ and $\vlambda$ separately. For $\vtheta$, setting $\ell_{\vtheta}^{(t)} = \loss(\ \cdot\ , \vlambda^{(t)}; \vw)$, we derive the PGD update from \Cref{eq:eg}:

\begin{wrapfigure}{L}{0.42\textwidth}
  \vspace{15pt}
  \begin{minipage}{0.42\textwidth}
  \begin{algorithm}[H]
    \caption{OMD-TCH} \label{algo:1}
    \small
    \begin{algorithmic}[1]
  
        \STATE \textbf{Input}: Number of rounds $T$, step sizes $\eta_{\vtheta}$ and $\eta_{\vlambda}$, $\vtheta^{(1)}$, $\vlambda^{(1)}= \left( \frac{1}{m}, \cdots, \frac{1}{m} \right) ^{\top}$.
        
        \FOR{round $t=1,\cdots,T$}
            \STATE Evaluate $f_i(\vtheta^{(t)})$, $\nabla f_i(\vtheta^{(t)})$, $\forall i \in [m]$.
            \STATE Compute $\vlambda^{(t+1)}$ by PGD (\ref{eq:9}) or EG (\ref{eq:10}).
            \STATE Compute $\vtheta^{(t+1)}$ by PGD (\ref{eq:8}).
        \ENDFOR
  
        \STATE \textbf{Output}: $\bar{\vtheta} = \frac{1}{T}\sum_{t=1}^T\vtheta^{(t)}$.
    \end{algorithmic}
  \end{algorithm}
  \end{minipage}
\end{wrapfigure}

\vspace{-1mm}
\begin{equation}
    \vtheta^{(t+1)} = \Pi_{\Theta} \left( \vtheta^{(t)} - \eta_{\vtheta} \sum_{i=1}^m \lambda^{(t)}_i w_i \nabla f_i(\vtheta^{(t)}) \right) . \label{eq:8}
\end{equation}

When $\Theta$ is unconstrained, PGD reduces to gradient descent. For $\vlambda \in \Delta_m$, both PGD and EG are applicable. Setting $\ell_{\vlambda}^{(t)} = \loss(\vtheta^{(t)},\ \cdot \ ; \vw)$, we have from \Cref{eq:eg}:
\begin{gather}
  \text{PGD: }\vlambda^{(t+1)} = \Pi_{\Delta_m} \left( \vlambda^{(t)} + \eta_{\vlambda} \vw \odot \f(\vtheta^{(t)}) \right) , \label{eq:9} \\ 
  \text{EG: }\lambda^{(t+1)}_i = \frac{\lambda^{(t)}_i\exp\left(\eta_{\vlambda} w_if_i(\vtheta^{(t)}) \right)}{\sum_{j=1}^m \lambda^{(t)}_j\exp\left(\eta_{\vlambda} w_jf_j(\vtheta^{(t)}) \right)} . \label{eq:10}
\end{gather}
\vspace{-3mm}
\WFclear
Note that the signs of the gradient terms are flipped for $\vlambda$'s maximization goal. See full steps in Algorithm \ref{algo:1}.

\textbf{Analysis.} The update rule (\ref{eq:8}) for $\vtheta$ involves gradients of all objectives, weighted by the a priori preference vector $\vw$ and the dynamic $\vlambda$. By (\ref{eq:9}) or (\ref{eq:10}), $\vlambda$ is iteratively updated such that larger weights are assigned to objectives with larger accumulated losses. As such, OMD-TCH smooths out the one-hot update by incorporating all objectives while prioritizing the undertrained ones. As shown later, it converges to the solution of the original TCH problem, i.e., the PO point with the specified trade-off $\vw$ as in \cref{th:1}. 

\textbf{Discussion.} In various scenarios requiring fairness or robustness, many methods adopt a minimax objective and apply different instances of OMD solvers. For example, \citet{mohri19} used the PGD instance for $\vlambda$ to encourage equal performance across clients in federated learning. \citet{cortes2020agnostic} extended this to multi-objective learning. \citet{sagawa20} focused on optimizing the worst group performance for better distributional robustness, and \citet{he2024robust} tackled multi-task learning, using excess risks as objectives to penalize noise-contaminated tasks, both using EG for $\vlambda$. Essentially, these methods solve the special case of TCH with a uniform $\vw$, with its validity grounded by \cref{th:1}. However, whether OMD solvers remain effective for diverse trade-offs is left unexplored, which is critical for complete PF traversal or preference-guided MOL. We unify this line by OMD-TCH and provide theoretical and experimental results that support its application as a competitive preference-guided MOL method.

\subsection{Adaptive online-to-batch conversion} \label{sec:3-2}

Like the previous methods discussed above, Algorithm \ref{algo:1} adopts the traditional online-to-batch conversion that outputs the uniformly averaged solution $\bvtheta$. Although it enjoys a convergence guarantee, we challenge whether the final output should involve every iterate, especially those dominated by new ones as training proceeds. Indeed, OMD-TCH suffers from poor solution optimality, i.e., low overall performance, as observed in our experiments. Motivated by this concern, we propose an adaptive conversion that selectively excludes suboptimal iterates and carefully reweighs the others, retaining the same convergence guarantees and significantly improving solution optimality over the traditional conversion in practice. 

\textbf{Key idea.} The final output of the adaptive conversion can be expressed as:
\vspace{-2mm}
\begin{equation*}
  \tvtheta = \frac{1}{T} \sum_{\vtheta^{(\tau)} \in \Pa} \gamma_{\tau} \vtheta^{(\tau)} ,
\end{equation*}
\vspace{-2mm}
with the following properties:
\begin{itemize}[topsep=0.0em, leftmargin=1.0em]
  \item $\Pa = \{ \vtheta^{(\tau)},\tau \in [T] \mid \nexists t \in [T]\ s.t.\ \vtheta^{(t)} \preceq \vtheta^{(\tau)}\}$ is the set of iterates not dominated by any other, which we call trajectory PO iterates.
  \item $\{\gamma_{\tau}\}$ is constructed as follows:
  \begin{enumerate}[itemsep=0.0em, topsep=0.0em]
    \item Assign each iterate $\vtheta^{(t)}$ a unit weight of 1.
    \item For each $\vtheta^{(t)} \notin \Pa$, distribute its unit weight among $\{\vtheta^{(\tau)} \in \Pa \mid \vtheta^{(\tau)} \preceq \vtheta^{(t)}\}$, i.e., the trajectory PO iterates that dominate $\vtheta^{(t)}$.
  \end{enumerate}
  By this redistribution, we have $\sum_{\vtheta^{(\tau)} \in \Pa} \gamma_{\tau} = T$, which ensures the correct scale of $\tvtheta$.
\end{itemize}
In short, $\tvtheta$ excludes a large number of insufficiently trained iterates by being \emph{a weighted average of trajectory PO iterates only}, with each one's weight inherited from suboptimal iterates it dominates. The conversion is ``adaptive'' in the sense that $\Pa$ and $\{\gamma_{\tau}\}$ adapt to the actual searching trajectory, rather than being fixed in advance. We show that $\tvtheta$ achieves the same iteration complexity as $\bvtheta$ in \cref{sec:3-3}, and demonstrate its empirical superiority in Section \ref{sec:4}.

\begin{wrapfigure}{R}{0.46\textwidth}
  \vspace{-25pt}
  \begin{minipage}{0.46\textwidth}
    \begin{algorithm}[H]
      \caption{AdaOMD-TCH} \label{algo:2}
      \small
      \begin{algorithmic}[1]
          \STATE \textbf{Input}: Number of rounds $T$, step sizes $\eta_{\vtheta}$ and $\eta_{\vlambda}$, $\vtheta^{(1)}$, $\vlambda^{(1)} = ( \frac{1}{m}, \cdots, \frac{1}{m} ) ^{\top}$, $\Pa^{(0)} = \vgamma^{(0)} = \emptyset$.
          
          \FOR{round $t=1,\cdots,T$}
              \STATE Evaluate $f_i(\vtheta^{(t)})$, $\nabla f_i(\vtheta^{(t)})$, for all $i \in [m]$.
              
              \STATE Update $\Pa^{(t)}$ and $\vgamma^{(t)}$: // adaptive conversion
              \SCOPE 
                \STATE $A \leftarrow \{ \vtheta^{(\tau)} \in \Pa^{(t-1)} \mid \vtheta^{(\tau)} \preceq \vtheta^{(t)} \}$.
                \IF{$A \neq \emptyset$}
                  \STATE $\Pa^{(t)} \leftarrow \Pa^{(t-1)}$, $\vgamma^{(t)} \leftarrow \vgamma^{(t-1)}$.
                  \FOR{$\vtheta^{(\tau)} \in A$}
                    \STATE $\gamma^{(t)}_{\tau} \leftarrow \gamma^{(t)}_{\tau} + \frac{1}{|A|}$.
                  \ENDFOR
                \ELSE
                  \STATE $\Pa^{(t)} \leftarrow \Pa^{(t-1)} \cup \{\vtheta^{(t)}\}$.
                  \STATE $\vgamma^{(t)} \leftarrow \vgamma^{(t-1)} \cup \{\gamma^{(t)}_t=1\}$.
                  \STATE $B \leftarrow \{ \vtheta^{(\tau)} \in \Pa^{(t-1)} \mid \vtheta^{(t)} \preceq \vtheta^{(\tau)} \}$.
                  \IF{$B \neq \emptyset$}
                    \STATE $\gamma^{(t)}_t \leftarrow \gamma^{(t)}_t + \sum_{\vtheta^{(\tau)} \in B}\gamma^{(t)}_{\tau}$.
                    \STATE $\Pa^{(t)} \leftarrow \Pa^{(t)} \setminus B$.
                    \STATE $\vgamma^{(t)} \leftarrow \vgamma^{(t)} \setminus \{\gamma^{(t)}_{\tau}\}_{\vtheta^{(\tau)} \in B}$.
                  \ENDIF
                \ENDIF
              \ENDSCOPE
    
              \STATE Calculate $\vlambda^{(t+1)}$ and $\vtheta^{(t+1)}$ as in OMD-TCH.
          \ENDFOR
    
          \STATE \textbf{Output}: $\tvtheta = \frac{1}{T}\sum_{\vtheta^{(\tau)} \in \Pa^{(T)}} \gamma_{\tau}^{(T)}\vtheta^{(\tau)}$.
      \end{algorithmic}
    \end{algorithm}
  \end{minipage}
\end{wrapfigure}

\textbf{Algorithm implementation.} We now introduce AdaOMD-TCH that outputs $\tvtheta$, as summarized in Algorithm \ref{algo:2}. The key steps are from lines 4 to 20, where we dynamically maintain a set $\Pa^{(t)}$, the trajectory PO iterates up to round $t$, and their weights $\vgamma^{(t)}$. Specifically, each new candidate $\vtheta^{(t)}$ is handled by three cases:

\begin{itemize}[topsep=0.0em, leftmargin=1.0em]
  \item If $\vtheta^{(t)}$ is dominated by some elements in $\Pa^{(t-1)}$ ($A \neq \emptyset$, line 6), directly discard it (line 7) and equally split its unit weight to the dominating elements (lines 8 to 10).
  \item If $\vtheta^{(t)}$ dominates some elements in $\Pa^{(t-1)}$ ($B \neq \emptyset$, line 15), discard the suboptimal elements (lines 17, 18) and add $\vtheta^{(t)}$ to $\Pa^{(t)}$ (line 12) with its unit weight plus weights inherited from the discarded (lines 13, 16).
  \item If $\vtheta^{(t)}$ is not dominated by nor dominating any element in $\Pa^{(t-1)}$, it is a new temporary trajectory PO iterate; add it to the set with a unit weight of 1 (lines 12, 13).
\end{itemize}

Figure \ref{fig:1} provides an example illustration. The aforementioned properties can be easily checked: when the algorithm finishes, $\Pa^{(T)} = \Pa$; the construction of $\{\gamma_{\tau}\}$ happens at each step, where we assign a unit weight to the new iterate and reallocate the weights of iterates that become suboptimal.

\WFclear

\textbf{Complexity analysis.} The memory complexity of AdaOMD-TCH is determined by the instantaneous size of $\Pa^{(t)}$. In practice, most elements in the current optimal set are soon dominated and replaced by new iterates as training proceeds. Hence, $\Pa^{(t)}$ is often sufficiently small, which is also our motivation to dynamically maintain $\Pa^{(t)}$ and $\vgamma^{(t)}$, instead of deriving them after saving all iterates. The time complexity mainly depends on the efficiency of querying, insertion, and deletion on $\Pa^{(t)}$. With advanced data structures such as k-d trees, such operations can be optimized to a logarithmic complexity w.r.t. the set size on average. In our experiments, we implement naive traversal and already observe minimal overhead, again, due to the small size of $\Pa^{(t)}$ in practice.

\begin{figure*}[t]
  \centering
  \begin{subfigure}[b]{0.19\textwidth}
    \includegraphics[width=\textwidth]{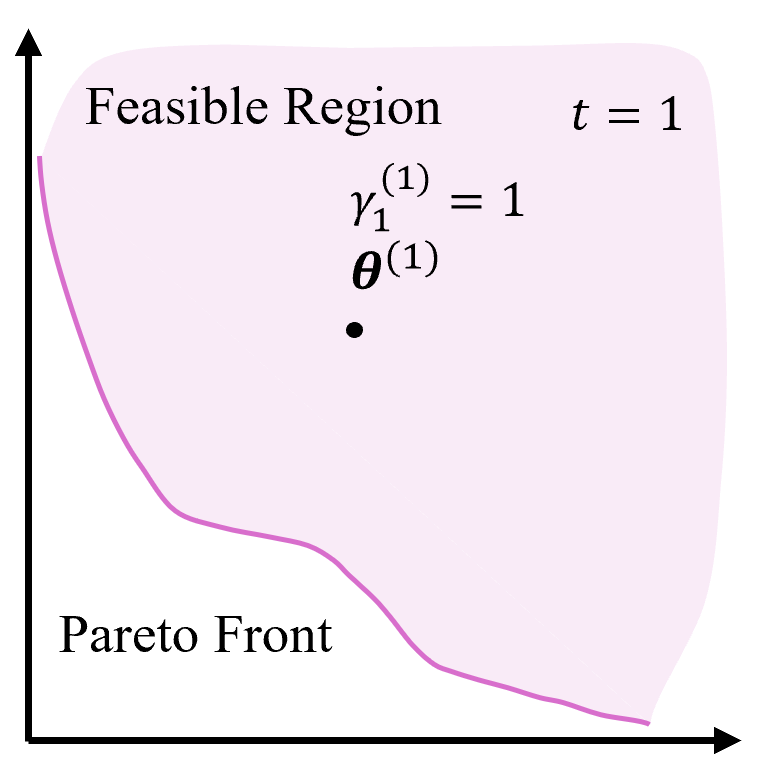}
    \caption{}
    \label{fig:1a}
  \end{subfigure}
  \begin{subfigure}[b]{0.19\textwidth}
    \includegraphics[width=\textwidth]{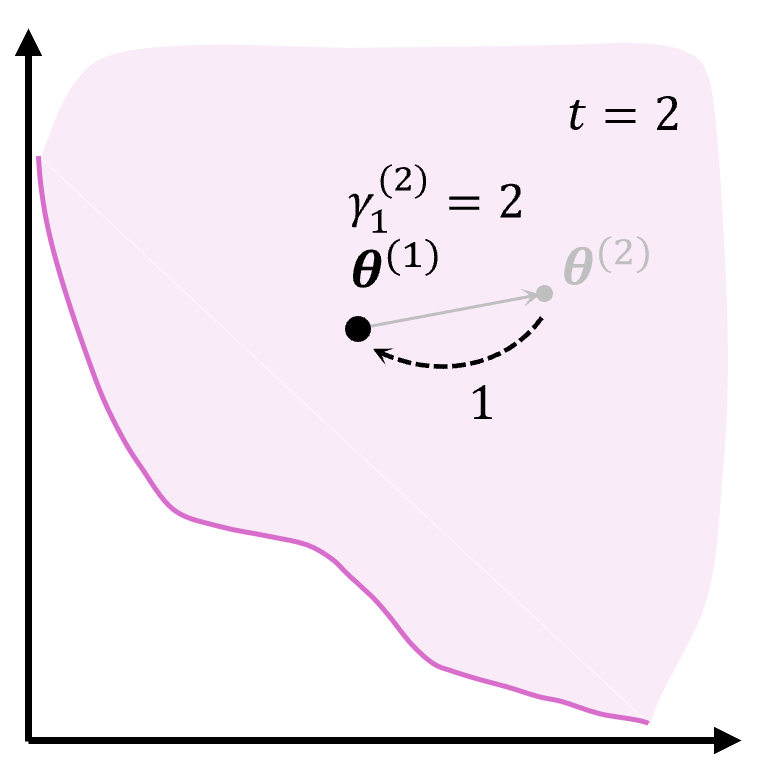}
    \caption{}
    \label{fig:1b}
  \end{subfigure}
  \begin{subfigure}[b]{0.19\textwidth}
    \includegraphics[width=\textwidth]{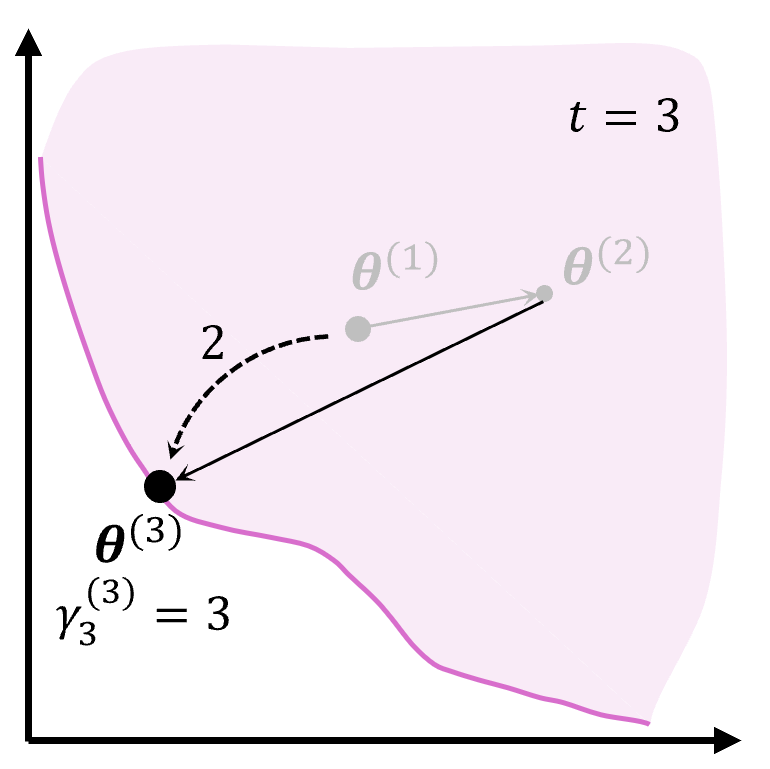}
    \caption{}
    \label{fig:1c}
  \end{subfigure}
  \begin{subfigure}[b]{0.19\textwidth}
    \includegraphics[width=\textwidth]{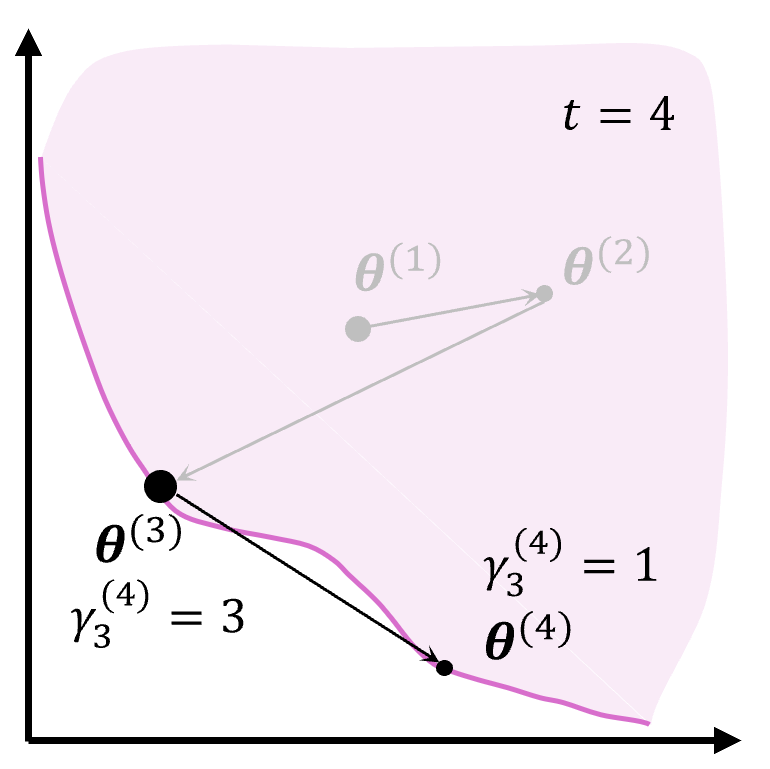}
    \caption{}
    \label{fig:1d}
  \end{subfigure}
  \begin{subfigure}[b]{0.19\textwidth}
    \includegraphics[width=\textwidth]{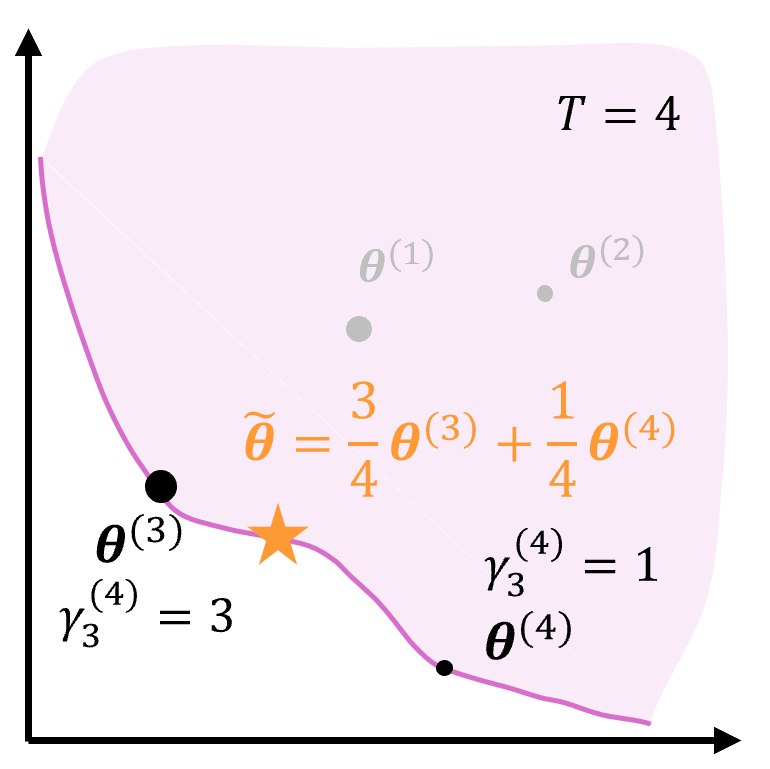}
    \caption{}
    \label{fig:1e}
  \end{subfigure}
  \caption{\textbf{A sample run of AdaOMD-TCH.} (a) $\vtheta^{(1)}$ is added to $\Pa^{(1)}$ with a unit weight of $\gamma^{(1)}_1 = 1$. (b) $\vtheta^{(2)} \succeq \vtheta^{(1)}$ is discarded with its weight transferred to $\vtheta^{(1)}$. (c) $\vtheta^{(3)} \preceq \vtheta^{(1)}$ is added to $\Pa^{(3)}$ with its unit weight plus those of $\vtheta^{(1)}$. $\vtheta^{(1)}$ is discarded. (d) $\vtheta^{(4)}$ is a new trajectory optimal iterate and added to $\Pa^{(4)}$ with $\gamma^{(4)}_4=1$. (e) The final output $\tvtheta$ is a weighted average of iterates in $\Pa^{(4)}$.}
  \label{fig:1}
\end{figure*}

\subsection{Theoretical convergence guarantees} \label{sec:3-3}

This section analyzes the convergence rate of (Ada)OMD-TCH for offline/batch learning, where the training dataset for each objective remains fixed.
We adopt assumptions commonly used in the convergence analysis for OMD-based methods~\citep{mohri19,cortes2020agnostic,sagawa20,he2024robust}:

\begin{restatable}{assumption}{reassump}  \label{assump}
  For all $i \in [m]$, $f_i(\vtheta)$ is convex in $\vtheta$ and bounded by $U$: $f_i(\vtheta) \leq U$, and gradients are bounded by $L$ in the infinity norm: $\| \nabla f_i(\vtheta) \|_{\infty} \leq L$. The feasible region is bounded by $R_{\vtheta}$: $\| \vtheta \|_{\infty} \leq R_{\vtheta}$.
\end{restatable}

Additionally, we consider stochastic feedback and construct the stochastic gradients as follows: in each round $t \in [T]$, for each objective $i \in [m]$, sample a random batch $B_i$ from its training set and compute the stochastic gradients w.r.t. $\vtheta$ and $\vlambda$: $\delta \ell^{(t)}_{\vtheta} = \sum_{i=1}^m \lambda^{(t)}_i \delta f_i(\vtheta^{(t)}) = \sum_{i=1}^m \lambda^{(t)}_i \left( \frac{1}{|B_i|}\sum_{(\mathbf x, y) \in B_i}\nabla f_i(\vtheta^{(t)};(\mathbf x, y)) \right)$, $\delta \ell^{(t)}_{\vlambda} = [\ldots, \frac{1}{|B_i|}\sum_{(\mathbf x, y) \in B_i} f_i(\vtheta^{(t)};(\mathbf x, y)),\ldots]^T$. Note that we use the $\delta$'s to update $\vtheta$ and $\vlambda$ and still use $\nabla$ to denote the true gradient. By construction, the stochastic gradients are unbiased, and bounded under \cref{assump}: $\|\delta \ell^{(t)}_{\vtheta}\|_{\infty} \leq L$, $\|\delta \ell^{(t)}_{\vlambda}\|_{\infty} \leq U$, for any $t \in [T]$.

We first present the general convergence rate using arbitrary distance-generating functions $\psi_{\vtheta}$ and $\psi_{\vlambda}$. 

\begin{restatable}{theorem}{remainth} \label{th:3}
  Suppose Assumption \ref{assump} holds. With $\psi_{\vtheta}$, $\psi_{\vlambda}$, $\eta_{\vtheta}$, and $\eta_{\vlambda}$ satisfying (1) $\psi_{\vtheta}$ is $\mu_{\vtheta}$-strongly convex, (2) $\psi_{\vlambda}$ is $\mu_{\vlambda}$-strongly convex, (3) $\eta_{\vtheta}=\sqrt{\frac{4\mu_{\vtheta}D_{\vtheta}}{5TC_{\vtheta}L^2}}$, and (4) $\eta_{\vlambda}=\sqrt{\frac{4\mu_{\vlambda}D_{\vlambda}}{5TC_{\vlambda}U^2}}$, both \Cref{algo:1,algo:2} converge as: with probability at least $1 - \gamma$, $0 < \gamma < 1$, 
  \begin{small}
  \begin{align}
      \operatorname{TCH}(\hvtheta; \vw) - \underset{\vtheta \in \Theta}{\min}\operatorname{TCH}(\vtheta; \vw)
        \leq \sqrt{\frac{20D_{\vtheta}C_{\vtheta}L^2}{\mu_{\vtheta}T}} + \sqrt{\frac{20D_{\vlambda}C_{\vlambda}U^2}{\mu_{\vlambda}T}} + 4(dR_{\vtheta}L + U)\sqrt{\frac{2}{T}\log \frac{1}{\gamma}} . \label{eq:14} 
  \end{align}
  \end{small}
  On the LHS, $\hvtheta$ is $\bvtheta$ or $\tvtheta$ output by Algorithm \ref{algo:1} or \ref{algo:2}, respectively. On the RHS, $D_{\vtheta}$, $D_{\vlambda}$, $C_{\vtheta}$, and $C_{\vlambda}$ are constants depending on the choices of $\psi_{\vtheta}$ and $\psi_{\vlambda}$ and potentially contain factors of the dimensions of $\vtheta$ (i.e., $d$) and $\vlambda$ (i.e., $m$), respectively. Uncertainty comes from the stochastic gradients.
\end{restatable}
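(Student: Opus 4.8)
The plan is to read \eqref{eq:7} as a convex--concave saddle-point problem and to bound the excess Tchebycheff value of whichever output we choose by the duality gap generated by the two coupled OMD players. Since $\loss(\vtheta,\vlambda;\vw)=\sum_{i}\lambda_i w_i f_i(\vtheta)$ is convex in $\vtheta$ (a nonnegative combination of the convex $f_i$) and linear in $\vlambda$ over the compact simplex, Sion's minimax theorem gives a saddle point $(\vtheta^*,\vlambda^*)$ with $\min_{\vtheta}\operatorname{TCH}(\vtheta;\vw)=\loss(\vtheta^*,\vlambda^*)$. The unifying reduction I would establish first is that \emph{both} outputs satisfy
\[
  \operatorname{TCH}(\hvtheta;\vw)\ \le\ \Phi\ \coloneqq\ \tfrac{1}{T}\,\max_{\vlambda\in\Delta_m}\ \sum_{t=1}^{T}\loss(\vtheta^{(t)},\vlambda;\vw),\qquad \hvtheta\in\{\bvtheta,\tvtheta\}.
\]
For $\bvtheta$ this is immediate from Jensen in the convex argument, $\loss(\bvtheta,\vlambda)\le\tfrac1T\sum_t\loss(\vtheta^{(t)},\vlambda)$, followed by maximizing over $\vlambda$; the adaptive output $\tvtheta$ is deferred to the last paragraph.

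The second step bounds the common quantity $\Phi$. Writing $\Phi=\tfrac1T\big(R_\vlambda(T)+\sum_t\loss(\vtheta^{(t)},\vlambda^{(t)})\big)$ and then $\sum_t\loss(\vtheta^{(t)},\vlambda^{(t)})\le R_\vtheta(T)+\sum_t\loss(\vtheta^*,\vlambda^{(t)})\le R_\vtheta(T)+T\min_\vtheta\operatorname{TCH}(\vtheta;\vw)$, I obtain $\Phi-\min_\vtheta\operatorname{TCH}(\vtheta;\vw)\le\tfrac1T\big(R_\vtheta(T)+R_\vlambda(T)\big)$, the sum of the two per-player regrets. Each regret is then controlled by the standard OMD lemma: for a $\mu$-strongly convex $\psi$ run with step size $\eta$, $\sum_t\langle g^{(t)},\x^{(t)}-\x\rangle\le B_\psi(\x;\x^{(1)})/\eta+\tfrac{\eta}{2\mu}\sum_t\|g^{(t)}\|_*^2$. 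Taking $D_\bullet=\sup_\x B_{\psi_\bullet}(\x;\x^{(1)})$ and bounding the dual-norm gradients through Assumption \ref{assump} --- the $\vtheta$-gradient $\sum_i\lambda_i w_i\nabla f_i(\vtheta^{(t)})$ by $C_\vtheta L^2$ and the $\vlambda$-gradient $\vw\circ\f(\vtheta^{(t)})$ by $C_\vlambda U^2$ --- gives $R_\bullet(T)\le D_\bullet/\eta_\bullet+\eta_\bullet T G_\bullet^2/(2\mu_\bullet)$ with $G_\vtheta^2=C_\vtheta L^2$, $G_\vlambda^2=C_\vlambda U^2$; the prescribed $\eta_\vtheta,\eta_\vlambda$ are precisely those balancing the two summands (the explicit $5$ and $20$ absorbing the second-moment constant from passing to stochastic gradients), and substituting them reproduces the two $\sqrt{\,\cdot/T\,}$ terms in \eqref{eq:14}. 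For the stochastic case I would state the regret lemma in the realized gradients $\delta f_i$, take expectations, and use $\E{\delta f_i(\vtheta^{(t)})\mid\mathcal{F}_{t-1}}=\nabla f_i(\vtheta^{(t)})$ together with convexity so the expected true-loss regret is dominated by the expected gradient-norm bounds; the choice of EG versus PGD for $\vlambda$ enters only through $(D_\vlambda,C_\vlambda)$, yielding $\log m$ versus $m$.

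The crux --- and the step I expect to be the main obstacle --- is showing $\operatorname{TCH}(\tvtheta;\vw)\le\Phi$, since reweighting a convex combination need not preserve such a bound in general. The enabling fact is that Algorithm \ref{algo:2} only ever transfers weight from an iterate $\vtheta^{(t)}$ to iterates $\vtheta^{(\tau)}$ that dominate it, and $\vtheta^{(\tau)}\preceq\vtheta^{(t)}$ forces $f_i(\vtheta^{(\tau)})\le f_i(\vtheta^{(t)})$ \emph{simultaneously in every coordinate} $i$. Maintaining the weight-conservation invariant $\sum_{\vtheta^{(\tau)}\in\Pa^{(t)}}\gamma^{(t)}_\tau=t$, I would prove by induction on $t$ (one case per transfer in lines 7--22) the coordinatewise inequality
\[
  \sum_{\vtheta^{(\tau)}\in\Pa^{(T)}}\gamma_\tau\, f_i(\vtheta^{(\tau)})\ \le\ \sum_{t=1}^{T} f_i(\vtheta^{(t)})\qquad\forall\, i\in[m].
\]
Granting it, Jensen with the weights $\gamma_\tau/T$ gives $\loss(\tvtheta,\vlambda)\le\sum_{\tau}\tfrac{\gamma_\tau}{T}\loss(\vtheta^{(\tau)},\vlambda)$ for every $\vlambda$; maximizing over $\vlambda$ amounts to taking a $w_i$-weighted maximum over coordinates, through which the coordinatewise inequality passes to yield $\operatorname{TCH}(\tvtheta;\vw)\le\tfrac1T\max_\vlambda\sum_t\loss(\vtheta^{(t)},\vlambda)=\Phi$. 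Thus $\tvtheta$ inherits exactly the bound already proved for $\bvtheta$, and combining with the regret estimate of the second paragraph closes both cases of the theorem.
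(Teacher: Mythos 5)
Your overall architecture coincides with the paper's: reduce the excess Tchebycheff value to a saddle-point (duality) gap, bound that gap by the sum of the two players' regrets via the standard OMD regret lemma, and handle the adaptive output $\tvtheta$ by a domination lemma. Your treatment of $\tvtheta$ is sound and is essentially the paper's Lemma~\ref{lm:ap-tilde}: the paper proves $\loss(\tvtheta,\vlambda;\vw)\le\frac{1}{T}\sum_{t}\loss(\vtheta^{(t)},\vlambda;\vw)$ for every $\vlambda$ by tracking the final weight re-allocation and invoking transitivity of Pareto dominance, which is the same content as your coordinatewise inequality proved by induction; either presentation works. Your observation that one of your two comparators (the fixed TCH minimizer) can be taken deterministic is even a small simplification relative to the paper's decomposition, in which both comparators $\vlambda^*$ and $\vtheta^*$ are data-dependent.

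The genuine gap is in the stochastic step. You claim that stating the regret lemma in the realized gradients $\delta$, taking expectations, and invoking unbiasedness suffices. This works for your $\vtheta$-player, whose comparator is fixed, but it fails for the $\vlambda$-player, whose comparator $\arg\max_{\vlambda}\sum_{t}\loss(\vtheta^{(t)},\vlambda;\vw)$ depends on the entire realized trajectory. Writing the true-loss regret as the stochastic-gradient regret plus the noise term $\max_{\vlambda}\sum_{t}\langle \nabla_{\vlambda}\loss(\vtheta^{(t)},\vlambda^{(t)};\vw)-\delta_{\vlambda}\loss(\vtheta^{(t)},\vlambda^{(t)};\vw),\,\vlambda-\vlambda^{(t)}\rangle$, the part paired with $\vlambda^{(t)}$ is indeed a mean-zero martingale (the iterates are adapted to the past batches), but the part $\max_{\vlambda}\sum_{t}\langle \nabla-\delta,\vlambda\rangle$ is the maximum coordinate of a sum of $T$ mean-zero bounded noise vectors, whose expectation is \emph{not} zero --- it is generically of order $U\sqrt{T\log m}$. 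Unbiasedness cannot kill it because the maximum is taken outside the expectation. This is precisely what the paper's proof is engineered around: it introduces auxiliary ``ghost'' OMD sequences $\vchi^{(t)}$ and $\vu^{(t)}$ driven by the noise vectors $\nabla-\delta$, so that the noise term splits into a regret of the ghost sequence --- bounded deterministically, uniformly over comparators, by Lemma~\ref{lm:ap-4} --- plus $\sum_{t}\langle \nabla-\delta,\vchi^{(t)}-\vlambda^{(t)}\rangle$, a genuine martingale with zero mean (terms E1 and E2 in the paper). That construction is also where the constants $5$ and $20$ in the theorem come from: the ghost regret carries a $(2U)^2$ (resp.\ $(2L)^2$) second moment, giving the factor $4$ that adds to the factor $1$ from the direct regret terms. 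Your proposal attributes these constants to stochasticity but supplies no mechanism producing them. To close the proof you must either add the ghost-sequence argument or replace it with an explicit maximal/martingale inequality for the comparator-dependent noise term; the latter would still deliver an $O(\sqrt{\log m/T})$ rate for the EG instance but is tied to the simplex geometry and would not recover the general-$\psi_{\vlambda}$ statement of \Cref{th:3} with its stated constants.
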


\begin{remark}
    \Cref{th:3} indicates that the outputs of both OMD-TCH and AdaOMD-TCH converge to the solution of the original TCH problem, i.e., a specific PO point with trade-offs controlled by $\vw$ following \cref{th:1}. Note that the convergence rate in T, $\mathcal O(1/\sqrt{T})$, matches single-objective stochastic gradient descent. The less-informative expectation bound and proof are deferred to Appendix \ref{sec:ap-a1}. 
\end{remark}

To further reveal the dependency on $d$ and $m$, one must unwrap the factors hidden in the constants $D$ and $C$, which depend on the specific choice of OMD instances, i.e., the choice of $\psi$'s. We present two instantiated bounds where we use PGD and EG for $\vlambda$, respectively, and PGD for $\vtheta$ in both cases. We also discuss the $p$-norm instance in \cref{sec:ap-a2}, whose bound is found to be no better than PGD or EG.

\begin{corollary} \label{th:4}
  Suppose Assumption \ref{assump} holds. Using PGD for both $\vtheta$ and $\vlambda$, the constants are: $\mu_{\vtheta} = \mu_{\vlambda} = 1$, $D_{\vtheta} = 2dR_{\vtheta}^2$, $D_{\vlambda}=2$, $C_{\vtheta} = d$, $C_{\vlambda} = m$. Therefore, under optimal step sizes $\eta_{\vtheta}=\sqrt{\frac{8R_{\vtheta}^2}{5TL^2}}$ and $\eta_{\vlambda}=\sqrt{\frac{8}{5TmU^2}}$, with probability at least $1 - \gamma$, $0 < \gamma < 1$, both \Cref{algo:1,algo:2} converge as
  \begin{small}
  \begin{align}
    \operatorname{TCH}(\hvtheta; \vw) - \underset{\vtheta \in \Theta}{\min}\operatorname{TCH}(\vtheta; \vw) \leq \frac{2\sqrt{10}dR_{\vtheta}L}{\sqrt{T}} + \frac{2\sqrt{10}\sqrt{m}U}{\sqrt{T}} + 4(dR_{\vtheta}L + U)\sqrt{\frac{2}{T}\log \frac{1}{\gamma}} . \label{eq:15}
  \end{align}
  \end{small}
\end{corollary}

\begin{corollary} \label{th:5}
  Suppose \Cref{assump} holds. Using PGD for $\vtheta$ and EG for $\vlambda$, the constants are: $\mu_{\vtheta} = \mu_{\vlambda} = 1$, $D_{\vtheta} = 2dR_{\vtheta}^2$, $D_{\vlambda}=\log m$, $C_{\vtheta} = d$, $C_{\vlambda} = 1$. Therefore, under optimal step sizes $\eta_{\vtheta}=\sqrt{\frac{8R_{\vtheta}^2}{5TL^2}}$ and $\eta_{\vlambda}=\sqrt{\frac{4\log m}{5TU^2}}$, with probability at least $1 - \gamma$, $0 < \gamma < 1$, both \Cref{algo:1,algo:2} converge as
  \begin{small}
  \begin{align}
    \operatorname{TCH}(\hvtheta; \vw) - \underset{\vtheta \in \Theta}{\min}\operatorname{TCH}(\vtheta; \vw) \leq \frac{2\sqrt{10}dR_{\vtheta}L}{\sqrt{T}} + \frac{2\sqrt{5}\sqrt{\log m}U}{\sqrt{T}} + 4(dR_{\vtheta}L + U)\sqrt{\frac{2}{T}\log \frac{1}{\gamma}} . \label{eq:16}
  \end{align}
  \end{small}
\end{corollary}

\begin{remark}
    The factor $d$ in both corollaries can be lifted if we bound the $l$-$2$ norm of $\vtheta$ and the gradients, as adopted by most previous works. Since the $l$-$2$ norm usually scales with dimension, we bound the infinity norm to offer a clearer dependency on $d$. Expectation bounds and proofs are provided in \Cref{sec:ap-a2}.
\end{remark}

Compared to established bounds with EG for $\vlambda$~\citep{sagawa20,he2024robust}, \Cref{th:5} provides a tighter dependency on the number of objectives $m$, improving from $\mathcal O(m\sqrt{\log m})$ to $\mathcal O(\sqrt{\log m})$ with no stricter assumptions on objective properties. The improvement comes from a different construction of unbiased stochastic gradients based on a more common and natural sampling strategy for offline MOL problems, i.e., to sample from all, instead of one, objectives in each round. Such full sampling is adopted by most gradient-based MOL methods~\citep{mahapatra20,chen2024three,ferero}, and is a must for those relying on full gradient information to compute the update direction. As such, our bound reveals optimal iteration complexity better aligned with the MOL setting. We include a detailed discussion in \cref{sec:ap-a3}.

\vspace{-1.5mm}
\subsection{Conceptual comparison} \label{sec:3-4}

In this section, we compare (Ada)OMD-TCH with (1) other scalarization-based methods, (2) gradient manipulation methods, and (3) other preference-based MOL methods.

\textbf{Scalarization-based methods.} We present a connection between (Ada)OMD-TCH with EG for $\vlambda$ and smooth Tchebycheff scalarization (STCH)~\citep{lin24}, which also bypasses the one-hot update in vanilla TCH by leveraging the log-sum-exp technique: $\underset{\vtheta \in \Theta}{\min}\ \operatorname{STCH}(\vtheta;\vw, \mu) = \underset{\vtheta \in \Theta}{\min}\ \mu \log \left ( \sum_{i=1}^m e^{\frac{w_i (f_i(\vtheta) - z_i)}{\mu}} \right )$, where $\mu$ is a scaling constant and $\mathbf{z}$ is a nadir point which we set to 0 for simplicity. Optimizing the STCH objective with gradient descent, the update rule is: $\vtheta^{(t+1)} = \vtheta^{(t)} - \eta_{\vtheta} \sum_{i=1}^m \alpha^{(t)}_i w_i \nabla f_i(\vtheta^{(t)})$, where $\alpha_i^{(t)} = \frac{\exp \left( w_i f_i(\vtheta^{(t)}) / \mu \right)}{\sum_{j=1}^m \exp \left( w_j f_j(\vtheta^{(t)}) / \mu \right)}$.
Compared to \Cref{eq:8,eq:10}, the only difference lies in $\bm{\alpha}^{(t)}$ and $\vlambda^{(t)}$. While $\bm{\alpha}^{(t)}$ is solely determined by current losses, $\vlambda^{(t)}$ incorporates historic information through $\vlambda^{(t-1)}$. Such ``buffering'' can be viewed as a smoothing technique to make $\vlambda$ change less drastically, which can be viewed as a result of OMD's inherent property of regularizing distances between consecutive iterates.

From another perspective, in (Ada)OMD-TCH, by setting $\eta_{\vlambda}$ to 0, $\vlambda$ becomes a static vector initialized uniformly, and the update for $\vtheta$ (\Cref{eq:8}) is equivalent to linear scalarization with preference $\vw$. (Ada)OMD-TCH can, therefore, also be interpreted as a middle ground between LS and TCH, for which we provide an intuitive example on our synthetic experiments in \Cref{sec:ap-b1}.

\textbf{Gradient manipulation (GM) methods.} Although (Ada)OMD-TCH, as well as STCH, uses a dynamic weight to combine objective gradients, appearing similar in form to GM methods, the two streams are fundamentally different. GM methods aim to mitigate gradient conflicts in each iteration by finding a common descent direction. The Multiple Gradient Descent Algorithm (MGDA) solves for the weights that minimize the norm of the composite gradient~\citep{desideri12,sener18}. Some of its variants regularize the change of weights with momentum updates~\citep{zhou2022convergence} or mirror descent updates~\citep{fernando2023mitigating,chen2024three}, along with other conflict-resolving strategies~\citep{liu2021conflict,yu2020gradient}. Algorithmically, scalarization-based methods compute the weights based on \emph{objective values}, while GM methods rely on \emph{gradient information}. The major drawback of GM methods is that, although shown to converge to a Pareto stationary solution~\citep{fliege2019complexity}, they do not have control over which specific PO solution is found, potentially due to their focus on local gradients, but not the objective value trade-offs. As a consequence, \emph{even if reweighing objectives with preferences} during optimization, they are less effective in recovering specific and diverse solutions, as shown empirically in \cref{sec:4-1}.

\textbf{Preference-guided methods} aim to locate specific PO solutions satisfying certain preferences, enabling decision makers to select those with the most suitable trade-offs. Most of the recent approaches stem from gradient manipulation methods and thus rely on designed mechanisms to steer the trajectory, such as using the discrepancy between the current and desired trade-offs as an additional optimization objective~\citep{mahapatra20,kamani2021pareto} or casting preferences as optimization constraints~\citep{lin19,ferero}. In contrast, (Ada)OMD-TCH builds on the inherent properties of Tchebycheff scalarization (\cref{th:1,th:2}), offering a cleaner formulation for preference-guided learning. \citet{momma2022multi} also starts from TCH and aims to avoid the one-hot update. Unlike our approach, they achieve this by deriving several levels of dual problems that lead to a strategy optimizing towards Pareto stationarity and the specified preference simultaneously, similar to those adopting an additional optimization objective. However, theoretical analysis on whether and how fast the method converges to the desired PO solution remains missing, while (Ada)OMD-TCH is guaranteed convergence with a good rate.

\begin{figure*}[t]
  \centering
  \begin{subfigure}[b]{0.19\textwidth}
    \includegraphics[width=\textwidth]{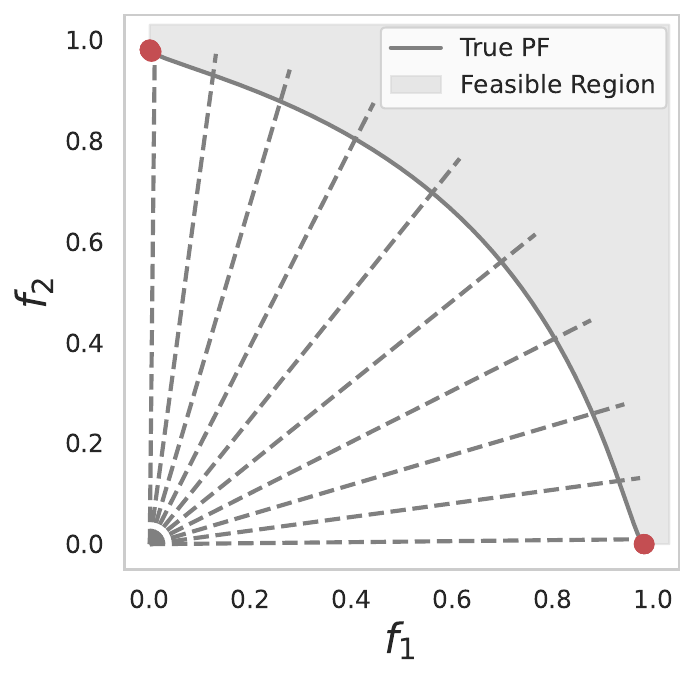}
    \caption{LS}
    \label{fig:2a}
  \end{subfigure}
  \begin{subfigure}[b]{0.19\textwidth}
    \includegraphics[width=\textwidth]{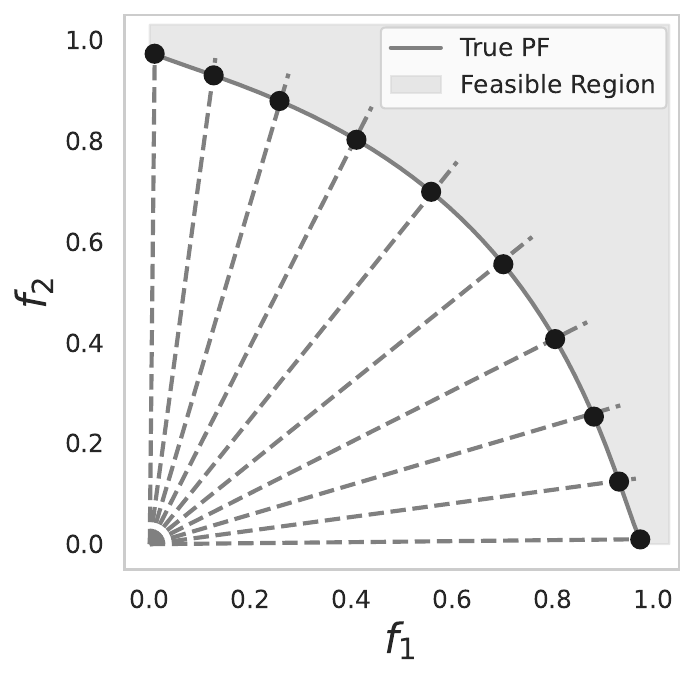}
    \caption{TCH}
    \label{fig:2b}
  \end{subfigure}
  \begin{subfigure}[b]{0.19\textwidth}
    \includegraphics[width=\textwidth]{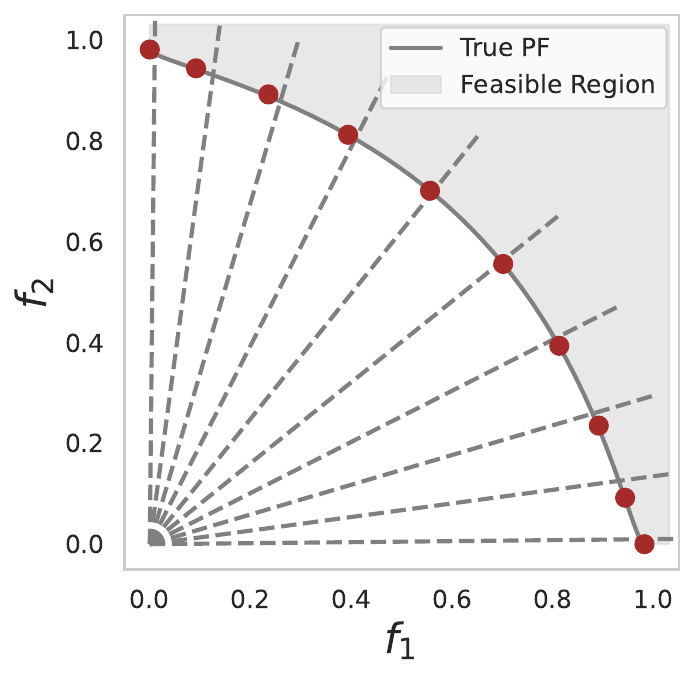}
    \caption{STCH}
    \label{fig:2c}
  \end{subfigure}
  \begin{subfigure}[b]{0.19\textwidth}
    \includegraphics[width=\textwidth]{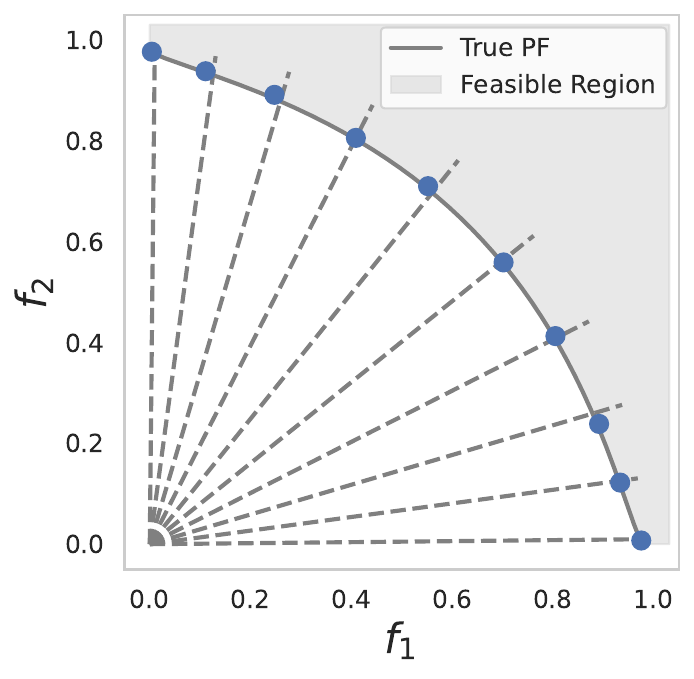}
    \caption{OMD-TCH}
    \label{fig:2d}
  \end{subfigure}
  \begin{subfigure}[b]{0.19\textwidth}
    \includegraphics[width=\textwidth]{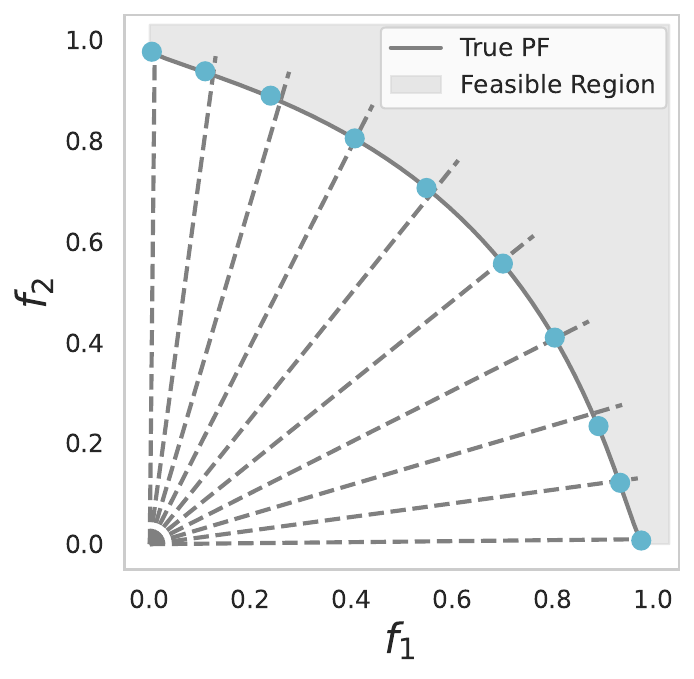}
    \caption{AdaOMD-TCH}
    \label{fig:2e}
  \end{subfigure}
  \begin{subfigure}[b]{0.19\textwidth}
    \includegraphics[width=\textwidth]{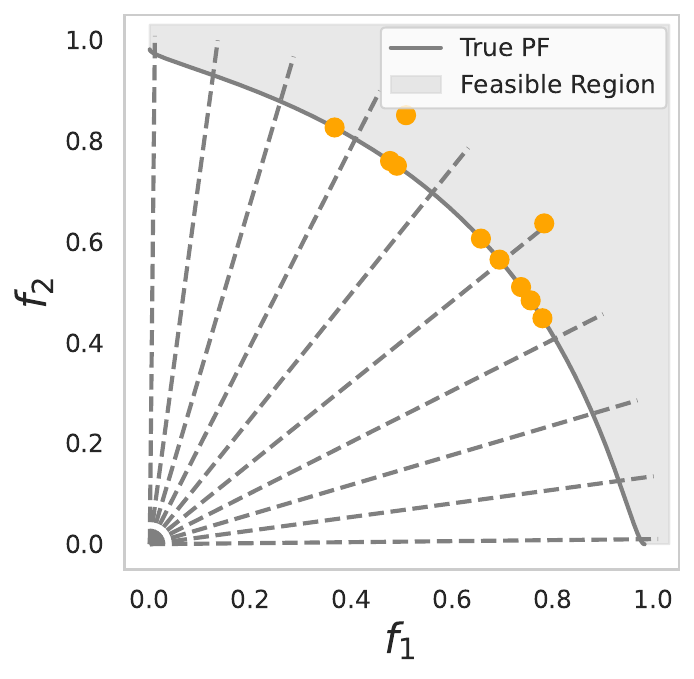}
    \caption{MGDA}
    \label{fig:2f}
  \end{subfigure}
  \begin{subfigure}[b]{0.19\textwidth}
    \includegraphics[width=\textwidth]{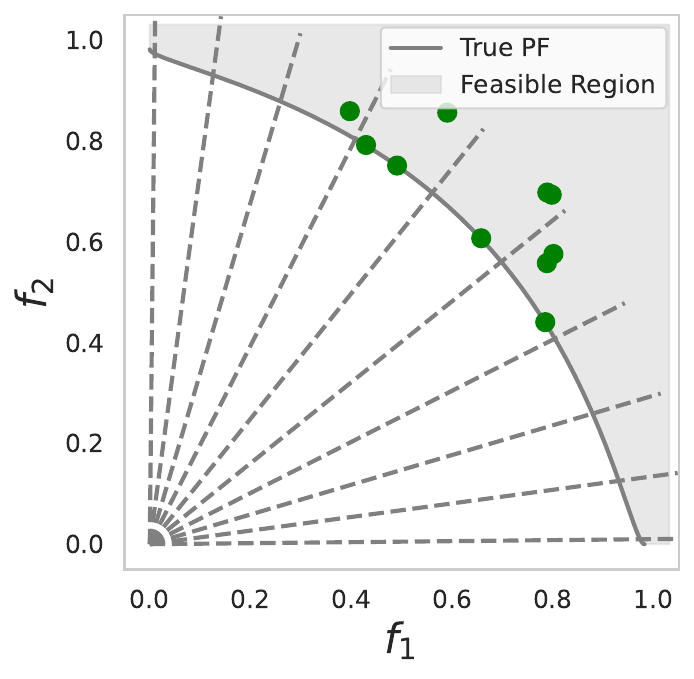}
    \caption{PMTL}
    \label{fig:2g}
  \end{subfigure}
  \begin{subfigure}[b]{0.19\textwidth}
    \includegraphics[width=\textwidth]{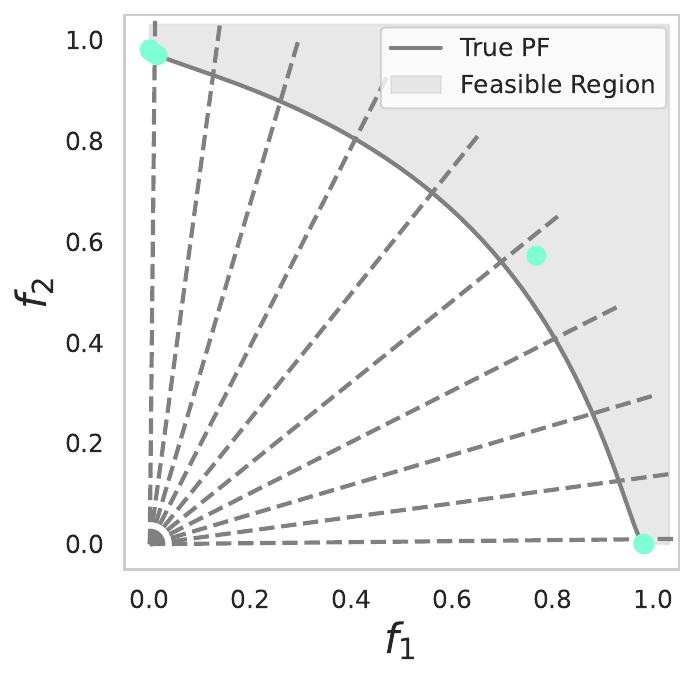}
    \caption{ExcessMTL}
    \label{fig:2h}
  \end{subfigure}
  \begin{subfigure}[b]{0.19\textwidth}
    \includegraphics[width=\textwidth]{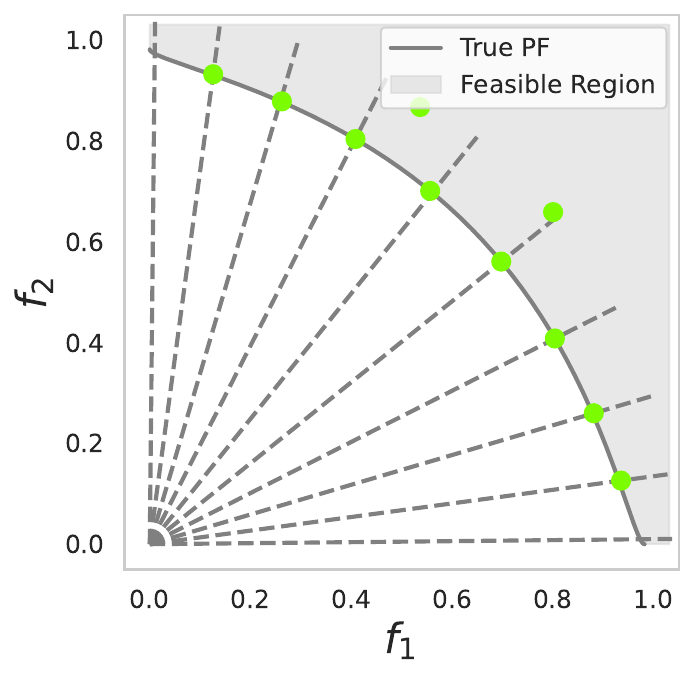}
    \caption{EPO}
    \label{fig:2i}
  \end{subfigure}
  \begin{subfigure}[b]{0.19\textwidth}
    \includegraphics[width=\textwidth]{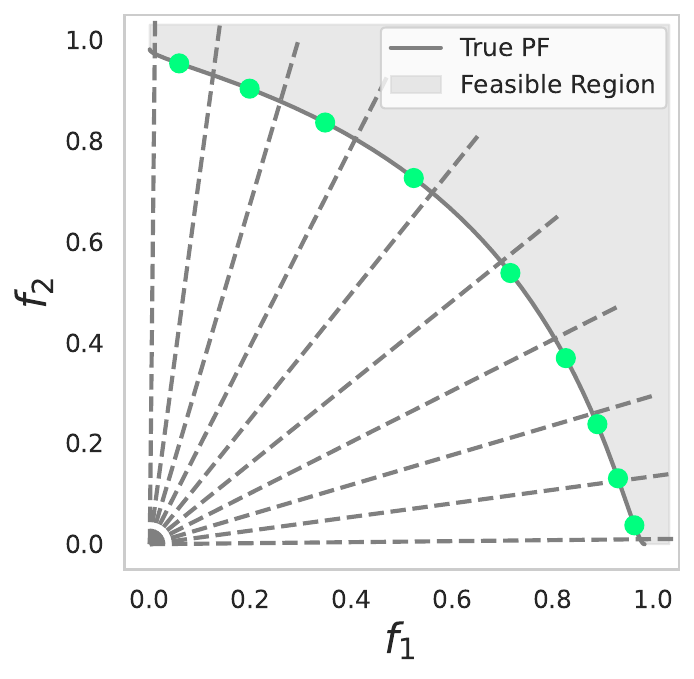}
    \caption{FERERO}
    \label{fig:2j}
  \end{subfigure}
  \caption{\textbf{Solutions found by different methods on VLMOP2.} Each dotted ray denotes the element-wise inverse of a $\vw$. PGD for $\vlambda$ is reported for (Ada)OMD-TCH. Results are averaged over three seeds.}
  \label{fig:2}
\end{figure*}

\section{Experiments} \label{sec:4}

Given the theoretical guarantees for convex objectives in \cref{sec:3-3}, in this section, we study the performance of (Ada)OMD-TCH in non-convex experiments. We mainly investigate three questions: (1) Can (Ada)OMD-TCH preserve the advantage of TCH in finding specific and diverse PO solutions, serving as a preference-guided MOL method? (2) Can the OMD reformulation mitigate the training oscillation and stagnation problems of vanilla TCH? (3) Is the proposed adaptive online-to-batch conversion effective in improving solution optimality over the traditional conversion? Our experiments provide affirmative evidence for all of them and also demonstrate that AdaOMD-TCH remains highly competitive when compared to other MOL methods.

\subsection{Synthetic problems} \label{sec:4-1}

This section focuses on question (1). Synthetic problems with known and visualizable Pareto Fronts provide a suitable environment for testing how well MOL methods locate specific PO solutions under diverse preferences. We verify this ability of (Ada)OMD-TCH on seven non-convex synthetic problems, namely VLMOP2~\citep{van99} and F1--F6~\citep{lin2022pareto}. For each problem, we apply ten preference vector $\vw$'s evenly across the simplex. More detailed setups are deferred to Appendix \ref{sec:ap-b1}.

\textbf{Methods.} Apart from (Ada)OMD-TCH, we include scalarization methods, namely linear scalarization (LS) and vanilla Tchebycheff scalarization (TCH), as well as the smooth TCH (STCH)~\citep{lin24} and ExcessMTL~\citep{he2024robust}, another OMD-based method. We also consider typical gradient manipulation methods, including MGDA, CR-MOGM~\citep{zhou2022convergence}, and Moco~\citep{fernando2023mitigating}. Finally, we compare with other preference-guided methods, including PMTL~\citep{lin19}, EPO~\citep{mahapatra20}, and FERERO~\citep{ferero}. Note that some preference-guided methods discussed earlier~\citep{kamani2021pareto,momma2022multi} are omitted in the experiments due to a lack of official code for fair comparison. Hyperparameter choices are reported in \ref{sec:ap-b1}.

\textbf{Results.} \cref{fig:2} illustrates results on VLMOP2. In the figures, each dotted ray represents the element-wise inverse of a preference vector $\vw$. Ideally, solutions should be scattered across the PF and preferably at the intersections of the rays and the PF, where the objective trade-off strictly follows the corresponding $\vw$. 

As expected, LS only locates the two endpoints on the PF and completely misses the non-convex region. TCH yields diverse solutions on the exact intersections. Note that TCH performs well on these synthetic problems, but degrades drastically when applied to more complex ones, as shown later. Both proposed as smooth solvers for TCH, solutions of STCH and (Ada)OMD-TCH resemble those found by vanilla TCH, as expected. However, ExcessMTL, although using OMD-based updates as well, does not recover TCH, which we verified is not an artifact of hyperparameters and may be caused by inaccurate estimates of excess risks. 

Additionally, MGDA finds arbitrary solutions, even when objectives are reweighed by preferences. Its variants CR-MOGM and Moco show similar results, as in \cref{sec:ap-b1}. This exemplifies that gradient manipulation methods, without additional designs, are insufficient for preference-guided learning. Other preference-guided MOL methods yield less satisfactory solutions compared to STCH and (Ada)OMD-TCH: PMTL does not find the exact PO solutions; EPO performs well for most trade-offs, but misses the two extreme ones; FERERO gives a fairly diverse set, but is less accurate in locating the intersections. 

Similar results on the other problems can be found in Appendix \ref{sec:ap-b1}. Note that all synthetic problems have non-convex objectives, and VLMOP2 has a non-convex PF. As shown, (Ada)OMD-TCH preserves the property of TCH in finding specific and diverse PO solutions and remains effective in non-convex cases, which complements our theoretical analysis assuming convexity.

\subsection{Federated learning} \label{sec:4-2}

To test (Ada)OMD-TCH on real data, we consider federated learning (FL) tasks where clients with heterogeneous local data constitute multiple objectives. First, we address questions (2) and (3) in a 10-client FL setting focused on the fairness-accuracy challenge, where one aims to find the PO solution with a balanced objective trade-off. We demonstrate more stable training using OMD updates and better solution optimality through the adaptive conversion. Next, we employ a 2-client setting to compare (Ada)OMD-TCH against other preference-guided MOL methods under diverse preferences, providing further supporting evidence for question (1). Finally, we benchmark our approach against a wider range of FL methods in the 10-client FL fairness experiments, where AdaOMD-TCH achieves highly competitive results.

\subsubsection{AdaOMD-TCH gives more stable training and better solution optimality} \label{sec:4-2-1}

First, we compare (Ada)OMD-TCH, linear scalarization (LS), and the vanilla Tchebycheff scalarization (TCH), to show that OMD updates effectively stabilize the training process over TCH, and that the adaptive online-to-batch conversion significantly improves solution optimality over the traditional conversion.

\begin{wrapfigure}{R}{0.48\textwidth}
  \vspace{-5pt}
  \centering
  \begin{subfigure}{0.25\textwidth}
    \includegraphics[width=\textwidth]{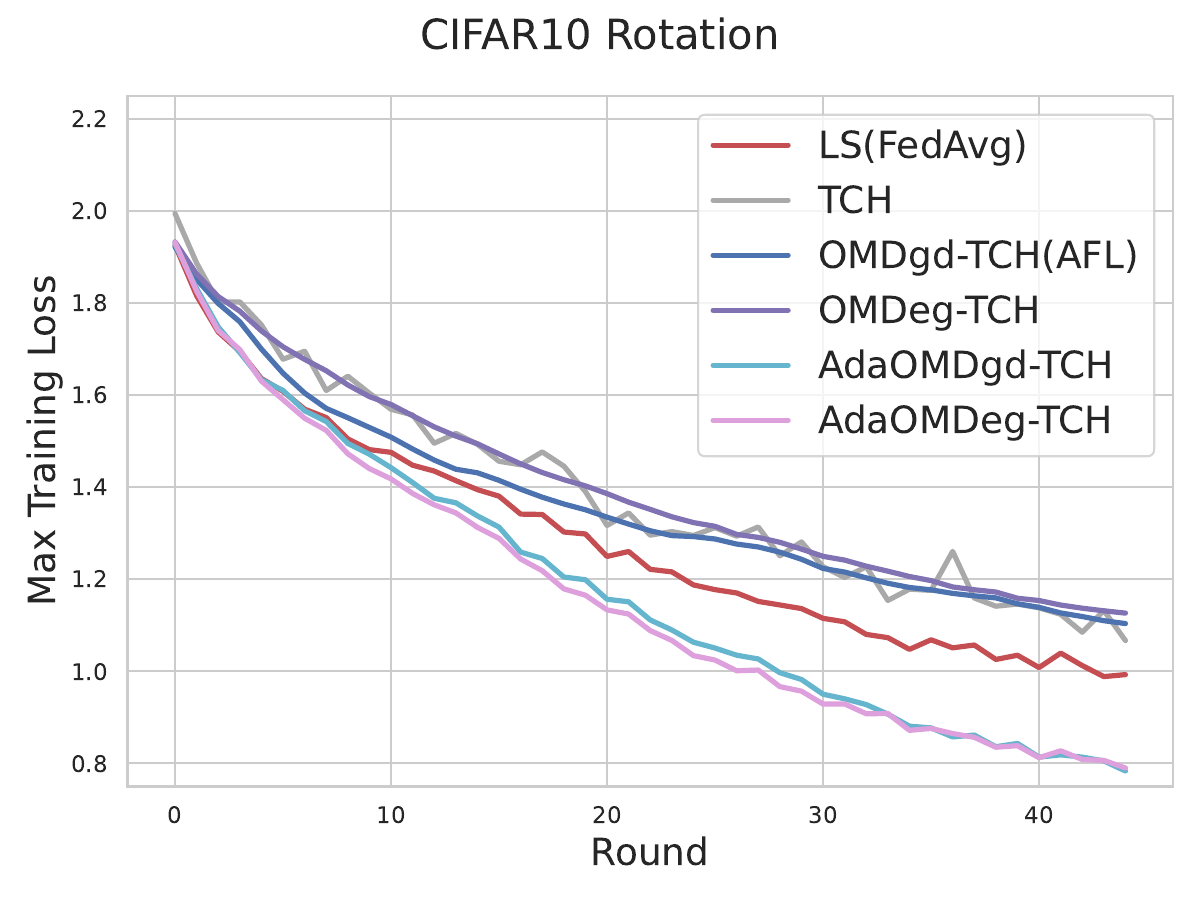}
    \label{fig:3c}
  \end{subfigure}
  \begin{subfigure}{0.2\textwidth}
    \includegraphics[width=\textwidth]{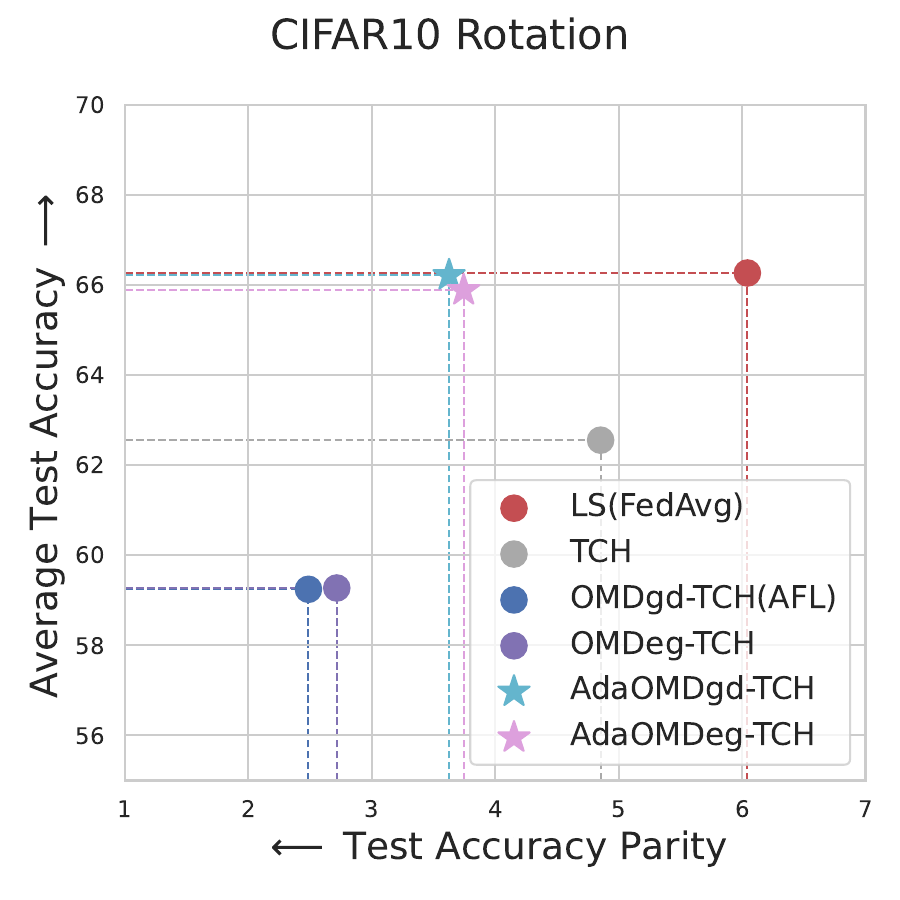}
    \label{fig:3d}
  \end{subfigure}
  \caption{\textbf{Examining OMD updates and the adaptive conversion.} ``gd'' and ``eg'' denote using PGD and EG, respectively, for $\vlambda$. The training curves are plotted on seed=0 to show fluctuations.}
  \label{fig:new-in}
\end{wrapfigure}

\textbf{Data, models, and metrics.} Following the settings in~\citet{ghosh2020efficient} and~\citet{collins2021exploiting}, we simulate three scenarios of data heterogeneity with $m=10$ clients, namely Rotation and Partial Class with $C=2$ and $5$ classes per client, respectively, using MNIST~\citep{deng2012mnist} and CIFAR10~\citep{krizhevsky2009learning}.
We train a two-layer fully connected neural network with ReLU for MNIST and a ResNet18~\citep{he2016deep} model for CIFAR10. The goal is to find the optimal solution with a fair, uniform trade-off. We evaluate the methods by both overall accuracy, for which we use \emph{average test accuracy}, and client-level fairness, for which we use \emph{agnostic loss}~\citep{mohri19} and \emph{accuracy parity}~\citep{li19}. Note that lower fairness metrics are better. Detailed setups, hyperparameters, and random seeds are reported in \cref{sec:ap-b2}.

\textbf{Results.} As shown in \cref{fig:new-in}, TCH suffers from oscillations and slow convergence, with worse results in the Partial Class cases (\cref{sec:ap-b2}). This justifies our motivation for avoiding the one-hot update. OMD-TCH successfully stabilizes the training process and offers a fairer solution than both TCH and LS, but still converges slowly and has an overall accuracy way below LS. Using the adaptive online-to-batch conversion, AdaOMD-TCH preserves only the trajectory PO iterates and thus exhibits significant improvements---faster convergence, higher average accuracy, and minimal impact on fairness. \cref{tab:adaptive} presents a more detailed comparison between OMD-TCH and AdaOMD-TCH on CIFAR, where AdaOMD-TCH outperforms OMD-TCH in almost all cases, especially improving average accuracy to a large extent. The only exception in accuracy parity is possible as the adaptive conversion is mainly designed for better optimality. These results show that (Ada)OMD-TCH stabilizes TCH training, and the adaptive conversion yields significantly better solution optimality over the traditional one.

\begin{table}[t]
    \centering
    \caption{\textbf{Comparison between OMD-TCH and AdaOMD-TCH.}}
    \label{tab:adaptive}
    \resizebox{\textwidth}{!}{
        \begin{tabular}{@{}c|ccc|ccc|ccc@{}}
            \toprule
            \multirow{2}{*}{Method} & \multicolumn{3}{c|}{Rotation}                                                                                                                                                    & \multicolumn{3}{c|}{Partial Class $C=2$}                                                                                                                                         & \multicolumn{3}{c}{Partial Class $C=5$}                                                                                                                                          \\
                                             & \makecell{Average\\Accuracy $\uparrow$} & \makecell{Agnostic\\Loss $\downarrow$} & \makecell{Accuracy\\Parity $\downarrow$} & \makecell{Average\\Accuracy $\uparrow$} & \makecell{Agnostic\\Loss $\downarrow$} & \makecell{Accuracy\\Parity $\downarrow$} & \makecell{Average\\Accuracy $\uparrow$} & \makecell{Agnostic\\Loss $\downarrow$} & \makecell{Accuracy\\Parity $\downarrow$} \\ \midrule
            OMDgd-TCH               & 59.244                                                     & 1.239                                                   & \textbf{2.429}                                            & 31.080                                                     & 2.515                                                   & 20.600                                                    & 47.960                                                     & 1.832                                                   & 8.787                                                     \\
            AdaOMDgd-TCH            & \textbf{66.218}                                            & \textbf{1.148}                                          & 3.592                                                     & \textbf{36.765}                                            & \textbf{2.372}                                          & \textbf{18.422}                                           & \textbf{53.422}                                            & \textbf{1.664}                                          & \textbf{6.823}                                            \\ \midrule
            OMDeg-TCH               & 59.273                                                     & 1.254                                                   & \textbf{2.660}                                            & 34.865                                                     & 2.534                                                   & 22.045                                                    & 48.064                                                     & 1.833                                                   & 8.803                                                     \\
            AdaOMDeg-TCH            & \textbf{65.885}                                            & \textbf{1.156}                                          & 3.688                                                     & \textbf{36.755}                                            & \textbf{2.388}                                          & \textbf{18.933}                                           & \textbf{55.632}                                            & \textbf{1.662}                                          & \textbf{7.930}                                            \\ \bottomrule
        \end{tabular}
    }
\end{table}

\subsubsection{(Ada)OMD-TCH serves as a competitive preference-guided MOL method}

\begin{table}[b]
    \caption{\textbf{Hypervolumes ($\uparrow$) of different methods.} PGD for $\vlambda$ is reported for (Ada)OMD-TCH.}
    \label{tab:new}
    \resizebox{\textwidth}{!}{
    \begin{tabular}{@{}cccccccccc@{}}
    \toprule
    Method   & LS          & TCH         & STCH        & ExcessMTL         & EPO         & FERERO      & OMD-TCH              & AdaOMD-TCH        &  \\ \midrule
    Loss     & 6.911±0.092 & 6.693±0.863 & 7.583±0.074 & {\ul 8.797±0.052} & 7.457±0.208 & 6.041±0.224 & \textbf{9.166±0.065} & 7.005±0.106       &  \\
    Accuracy & 0.156±0.001 & 0.125±0.019 & 0.147±0.002 & 0.153±0.001       & 0.141±0.004 & 0.136±0.004 & \textbf{0.168±0.001} & {\ul 0.159±0.002} &  \\ \bottomrule
    \end{tabular}
    }
\end{table}

Next, we show that (Ada)OMD-TCH remains competitive in finding preference-guided, diverse, and optimal solutions when applied to more complex tasks, complementing results on the synthetic problems.

\textbf{Data, metrics, and methods.} To make it easier to quantify the diversity and optimality of the solutions found under multiple preferences, we adopt a 2-client setting where the CIFAR10 dataset is randomly and equally distributed and rotated 0 and 90 degrees, respectively, to create data discrepancy. Following previous work~\citep{ferero}, we measure the \emph{hypervolumes} of the solution set over three preference choices of each method, on both objective losses and accuracies. A larger hypervolume indicates a solution set that is both more diverse and more optimal. We compare (Ada)OMD-TCH with methods that are also capable of locating preference-specific PO solutions, including LS, TCH, STCH~\citep{lin24}, ExcessMTL~\citep{he2024robust}, EPO~\citep{mahapatra20}, and FERERO~\citep{ferero}. 

\textbf{Results.} As shown in \cref{tab:new}, OMD-TCH achieves the best results, verifying its application as a general preference-guided MOL method that works beyond the uniform trade-off for fairness and robustness, as used by previous works in \cref{tab:1}. AdaOMD-TCH achieves slightly lower hypervolumes, which can be explained by its trade-off between optimality and diversity. Specifically, by considering only the non-dominated iterates, AdaOMD-TCH gives better optimality than OMD-TCH in both overall loss ($1.495 < 1.521$) and accuracy ($65.357\% > 62.533\%$), but relatively less diversity, which together leads to lower hypervolumes. Nevertheless, AdaOMD-TCH still yields good results, achieving the second-best hypervolume in accuracy. Moreover, as shown by the 10-client results in \cref{tab:adaptive}, the improvement in optimality of AdaOMD-TCH is much more significant under a larger client/objective base, making it generally more competitive than OMD-TCH.

\subsubsection{Comparison with MOL and FL methods on the 10-client fairness benchmark}

Finally, we compare (Ada)OMD-TCH with more MOL as well as FL methods under the same 10-client setup as in \cref{sec:4-2-1}, where one tackles the fairness-accuracy challenge by finding the solution with both higher overall accuracy and better client-level fairness. We show that AdaOMD-TCH, using the adaptive conversion, yields highly competitive performance compared to other methods, especially in fairness.

\textbf{Methods.} Apart from (Ada)OMD-TCH, we include all previously mentioned MOL methods: LS, TCH, STCH, ExcessMTL, EPO, and FERERO. Since ExcessMTL adopts a similar OMD framework with EG for $\vlambda$, our adaptive conversion is also applicable, leading to AdaExcessMTL. We also consider fair FL methods: qFFL~\citep{lin19}, FedFV~\citep{wang21}, PropFair~\citep{zhang22}, and FedMGDA+~\citep{hu22}. Due to the same nature of learning under multiple objectives, many fair FL methods are equivalent to an MOL method using a uniform preference. In fact, the selected methods also cover FedAvg~\citep{mcmahan17} (LS), TERM~\citep{li20} (STCH), and AFL~\citep{mohri19} (OMDgd-TCH).

\begin{figure*}[t]
  \centering
  \includegraphics[width=0.98\textwidth]{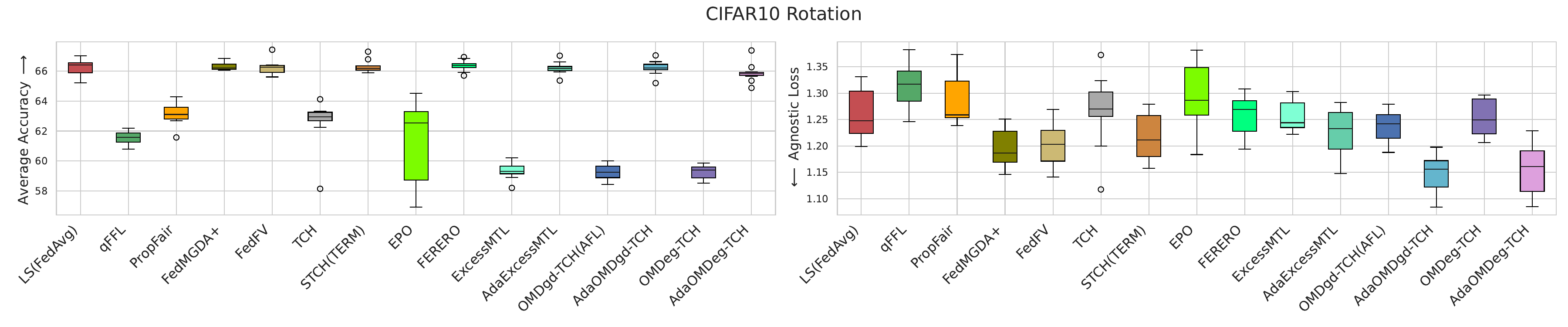}
  \caption{\textbf{10-client results for all methods in average accuracy  ($\uparrow$) and agnostic loss ($\downarrow$).}
  }
  \label{fig:3}
\end{figure*}

\textbf{Results.} 
\cref{fig:3} visualizes the average accuracy and agnostic loss of all methods on CIFAR Rotation. AdaOMD-TCH significantly improves solution optimality over OMD-TCH in terms of overall accuracy and achieves competitive results among all methods. It also provides the best fairness in terms of agnostic loss. Similar improvements are observed between ExcessMTL and AdaExcessMTL, which verifies the wide applicability of our adaptive online-to-batch conversion. Compared to other methods, AdaOMD-TCH consistently outperforms qFFL and PropFair across all settings and metrics, as well as EPO and FERERO in most cases. Since OMDgd-TCH encapsulates AFL, the superiority of AdaOMDgd-TCH over OMDgd-TCH also indicates its superiority over AFL. Full results and hyperparameters are in Appendix \ref{sec:ap-b2}, where we also report the training time, showing that the adaptive conversion is quite light in computation.

\begin{table}[b]
    \centering
    \caption{\textbf{Comparison between AdaOMD-TCH and STCH.}}
    \label{tab:stch}
    \resizebox{\textwidth}{!}{
        \begin{tabular}{@{}c|ccc|ccc|ccc@{}}
            \toprule
            \multirow{2}{*}{Method} & \multicolumn{3}{c|}{Rotation}                       & \multicolumn{3}{c|}{Partial Class $C=2$}              & \multicolumn{3}{c}{Partial Class $C=5$}               \\
                                    & \makecell{Average\\Accuracy $\uparrow$} & \makecell{Agnostic\\Loss $\downarrow$} & \makecell{Accuracy\\Parity $\downarrow$} & \makecell{Average\\Accuracy $\uparrow$} & \makecell{Agnostic\\Loss $\downarrow$} & \makecell{Accuracy\\Parity $\downarrow$} & \makecell{Average\\Accuracy $\uparrow$} & \makecell{Agnostic\\Loss $\downarrow$} & \makecell{Accuracy\\Parity $\downarrow$} \\ \midrule
            STCH                    & {\ul 66.320}     & 1.219          & 4.997           & 35.380           & 2.386          & 19.139          & \textbf{55.328}  & \textbf{1.622} & {\ul 7.311}     \\
            STCH w/ momentum        & \textbf{66.422} $\uparrow$  & 1.205 $\downarrow$          & 4.890 $\downarrow$           & 35.105           & {\ul 2.380} $\downarrow$    & 18.945 $\downarrow$          & 54.240           & {\ul 1.637}    & 7.335           \\
            AdaOMDeg-TCH            & 65.885           & {\ul 1.156}    & {\ul 3.688}     & {\ul 36.755}     & 2.388          & {\ul 18.933}    & {\ul 55.632}     & 1.662          & 7.930           \\
            AdaOMDgd-TCH            & 66.218           & \textbf{1.148} & \textbf{3.592}  & \textbf{36.765}  & \textbf{2.372} & \textbf{18.422} & 53.422           & 1.664          & \textbf{6.823}  \\ \bottomrule
        \end{tabular}
    }
\end{table}

\cref{tab:stch} offers a finer comparison between AdaOMD-TCH and STCH, both smooth solvers for TCH.  As discussed in \cref{sec:3-4}, STCH and AdaOMDeg-TCH share similar updates, except that the dynamic objective weights in STCH are solely determined by current losses, while those in AdaOMDeg-TCH incorporate historic information. As an attempt to close this gap, we apply the momentum updates proposed in \citet{zhou2022convergence} to STCH, which also creates a ``buffering'' effect: $\bm \alpha^{(t)} = \beta_t \bm \alpha^{(t-1)} + (1 - \beta_t) \hat{\bm \alpha}^{(t)}$, where $\hat{\bm \alpha}^{(t)}$ is the original STCH weight and $\beta_t$ is a computable parameter. As shown in \cref{tab:stch}, momentum update can improve STCH, but is still outperformed by AdaOMDeg-TCH in most cases. AdaOMDgd-TCH offers even better results, achieving the best on six out of the nine metrics. Note that STCH w/ momentum, AdaOMDeg-TCH, and AdaOMDgd-TCH all consider past losses for dynamic weights, but in different forms. 

\subsubsection{Ablations and extensions}

\textbf{Design components.} In \cref{sec:ap-b2}, we restate in detail the roles of the design components in AdaOMD-TCH, namely Tchebycheff scalarization, OMD updates, and the adaptive online-to-batch conversion. We apply the adaptive conversion to LS and TCH, although not rigorously motivated, and examine their performance as a light ablation study. The results further demonstrate the necessity of basing Tchebycheff scalarization and applying OMD updates, as well as the wide applicability of the adaptive conversion. 

\textbf{Evaluation data for iterate comparison.} When determining the Pareto dominance among iterates in AdaOMD-TCH, the ideal approach is to evaluate iterates on full training data. Yet in practice, when the data is large and mini-batch updates are adopted, such full evaluations would lead to extra training time. Therefore, in our implementation, we use batch losses as estimates whenever using mini-batch updates. Nevertheless, by comparing results using (1) mini-batch estimates, (2) full training data, and (3) held-out validation sets, respectively, for iterate comparison, we show that the adaptive conversion is robust against different evaluation data. Details are deferred to \cref{sec:ap-b2}.

\textbf{Objectives of different scales.} In our FL setting, the client losses are of similar scales, allowing for direct evaluation of preference satisfaction without other artifacts. In broader application scenarios, the objectives may be of different scales, and directly using them for (Ada)OMD-TCH, or any method that involves dynamic weight allocation based on objective values (e.g., TCH and STCH), can cause bias towards objectives of a larger scale and interfere with the original preference. In \cref{sec:ap-b2}, we discuss a practical ratio trick commonly adopted in such cases and show that (Ada)OMD-TCH is compatible with it.

\vspace{-2mm}
\section{Conclusion}
\vspace{-2mm}

We propose (Ada)OMD-TCH, an adaptive online mirror descent algorithm for preference-guided multi-objective learning and a unified framework covering multiple previous methods for fairness and robustness. (Ada)OMD-TCH preserves the power of Tchebycheff scalarization in locating specific Pareto optimal solutions and bypasses the one-hot update that leads to training oscillation and stagnation. Our novel adaptive online-to-batch conversion significantly improves solution optimality over traditional conversion while retaining the same convergence guarantees. We prove that (Ada)OMD-TCH converges to the original TCH solution at a rate of $\mathcal O(\sqrt{\log m / T})$, revealing an optimal dependency on $m$ for offline learning. We use non-convex synthetic problems and federated learning tasks to show that (1) (Ada)OMD-TCH effectively stabilizes the TCH training process, (2) AdaOMD-TCH significantly improves solution optimality over OMD-TCH, and (3) AdaOMD-TCH is a competitive preference-guided MOL method that finds specific, diverse, and fair solutions. 

\subsubsection*{Broader Impact Statement}
This paper presents work whose goal is to advance the field of multi-objective learning, with potential applications in algorithmic fairness, distributional robustness, and multi-task learning. Given the scope of this research, we do not anticipate immediate ethical concerns or direct societal consequences. 


\subsubsection*{Acknowledgments}
Meitong Liu and Han Zhao are partially supported by an NSF CAREER Award No.\ 2442290 and a research grant from Meta Platforms, Inc. We thank Dr.\ Xi Lin and Prof.\ Qingfu Zhang for their helpful discussion on multi-objective optimization during Han Zhao's visit to the City University of Hong Kong.


\bibliography{references}
\bibliographystyle{tmlr}

\newpage

\appendix

\section{Proofs} \label{sec:ap-a}

\subsection{Theorem \ref{th:3}: the corresponding expectation bound and proofs} \label{sec:ap-a1}

\begin{theorem}[Expectation bound for \Cref{th:3}]\label{th:6}
    Suppose Assumption \ref{assump} holds. With the same choices of $\psi_{\vtheta}$, $\psi_{\vlambda}$, $\eta_{\vtheta}$, and $\eta_{\vlambda}$ as in \Cref{th:3}, both \Cref{algo:1,algo:2} converges as: 
    \begin{gather*}
        \E{\operatorname{TCH}(\hvtheta; \vw)} - \underset{\vtheta \in \Theta}{\min}\operatorname{TCH}(\vtheta; \vw) \leq \sqrt{\frac{20D_{\vtheta}C_{\vtheta}L^2}{\mu_{\vtheta}T}} + \sqrt{\frac{20D_{\vlambda}C_{\vlambda}U^2}{\mu_{\vlambda}T}} ,
    \end{gather*}
    where expectation is taken on the stochastic gradients, and all notations are the same as \Cref{th:3}.
\end{theorem}

We now prove both \Cref{th:3,th:6}, which are based on several lemmas. We first establish the key lemma that fits the proof for $\tvtheta$, output by our adaptive online-to-batch conversion scheme, into the same analysis framework for $\bvtheta$, output by the traditional conversion that uses uniform averaging. Recall that $\loss(\vtheta, \vlambda; \vw) \coloneqq \sum_{i=1}^m \lambda_i \left( w_i f_i(\vtheta)\right)$.

\begin{lemma} \label{lm:ap-tilde}
    Suppose Assumption \ref{assump} holds and $\tvtheta = \frac{1}{T}\sum_{\vtheta^{(\tau)}\in \Pa} \gamma_{\tau} \vtheta^{(\tau)}$ is the output of \Cref{algo:2} on iterates $\{\vtheta^{(t)}\}_{t=1}^T$ and $\{\vlambda^{(t)}\}_{t=1}^T$. Then, for any $\vlambda, \vw \in \Delta_m$,
    \begin{equation}
        \loss(\tvtheta, \vlambda; \vw) \leq \frac{1}{T} \sum_{t=1}^T \loss(\vtheta^{(t)}, \vlambda; \vw) . \label{eq:ap-tilde}
    \end{equation}
    \begin{proof}
        First, by Assumption \ref{assump}, $f_i(\vtheta)$ is convex in $\vtheta$, and hence so is $\loss(\vtheta, \vlambda; \vw)$. By convexity,
        \begin{equation*}
            \loss(\tvtheta, \vlambda; \vw) \leq \frac{1}{T} \sum_{\vtheta^{(\tau)}\in \Pa} \gamma_{\tau} \loss(\vtheta^{(\tau)}, \vlambda; \vw) . 
        \end{equation*}
        Now, to prove (\ref{eq:ap-tilde}), it suffices to prove:
        \begin{equation}
            \frac{1}{T} \sum_{\vtheta^{(\tau)}\in \Pa} \gamma_{\tau} \loss(\vtheta^{(\tau)}, \vlambda; \vw) \leq \frac{1}{T} \sum_{t=1}^T \loss(\vtheta^{(t)}, \vlambda; \vw) \label{eq:ap-5}
        \end{equation}
        In fact, this inequality naturally stems from how $\{\gamma_{\tau}\}$ are constructed through weight re-allocation, as stated in the properties of $\tvtheta$. To see this, we first specify some notations that describe the re-allocation: 
        \setlist{nosep}
        \begin{itemize}
            \item Suppose when Algorithm 2 finishes, for each $\vtheta^{(t)} \notin \Pa$, its unit weight is re-allocated among elements in $P_t \subset \Pa$, with each $\vtheta^{(\tau)} \in P_t$ receiving weight $\beta_{t\tau}$, such that $\sum_{\vtheta^{(\tau)} \in P_t} \beta_{t\tau} = 1$.
            \item On the other way round, for each $\vtheta^{(\tau)} \in \Pa$, its weight $\gamma_{\tau}$, apart from its own unit weight, is inherited from $S_{\tau} \subset \Pa^c$, such that $1 + \sum_{\vtheta^{(t)} \in S_{\tau}} \beta_{t\tau} = \gamma_{\tau}$.
        \end{itemize}
        In each step, the weights of some suboptimal iterates are transferred to existing iterates that \emph{dominate} them. By the transitivity of Pareto dominance, when the algorithm finishes, for any $\vtheta^{(t)} \notin \Pa$ and $\vtheta^{(\tau)} \in P_t$, we have $\vtheta^{(\tau)} \preceq \vtheta^{(t)}$. Therefore, $f_i(\vtheta^{(\tau)}) \leq f_i(\vtheta^{(t)})$ for all $i \in [m]$, and consequently, for any $\vlambda, \vw \in \Delta_m$, we have:
        \begin{equation*}
            \loss(\vtheta^{(\tau)}, \vlambda; \vw) \leq \loss(\vtheta^{(t)}, \vlambda; \vw) .
        \end{equation*}
        With this inequality, we have for each term on the RHS of (\ref{eq:ap-5}) where $\vtheta^{(t)} \notin \Pa$:
        \begin{align*}
            \loss(\vtheta^{(t)}, \vlambda; \vw) & = \sum_{\vtheta^{(\tau)} \in P_t} \beta_{t\tau} \loss(\vtheta^{(t)}, \vlambda; \vw) \quad \quad \text{($\because \sum_{\vtheta^{(\tau)} \in P_t} \beta_{t\tau} = 1$)} \nonumber \\
            & \geq \sum_{\vtheta^{(\tau)} \in P_t} \beta_{t\tau} \loss(\vtheta^{(\tau)}, \vlambda; \vw),
        \end{align*}
        Finally, we can prove (\ref{eq:ap-5}) by:
        \begin{align*}
            T \times \text{RHS} & = \sum_{\vtheta^{(\tau)} \in \Pa} \loss(\vtheta^{(\tau)}, \vlambda; \vw) + \sum_{\vtheta^{(t)} \notin \Pa} \loss(\vtheta^{(t)}, \vlambda; \vw) \\
            & \geq \sum_{\vtheta^{(\tau)} \in \Pa} \loss(\vtheta^{(\tau)}, \vlambda; \vw) + \sum_{\vtheta^{(t)} \notin \Pa} \sum_{\vtheta^{(\tau)} \in P_t} \beta_{t\tau} \loss(\vtheta^{(\tau)}, \vlambda; \vw) \\
            & = \sum_{\vtheta^{(\tau)} \in \Pa} \loss(\vtheta^{(\tau)}, \vlambda; \vw) + \sum_{\vtheta^{(\tau)} \in \Pa} \sum_{\vtheta^{(t)} \in S_{\tau}} \beta_{t\tau} \loss(\vtheta^{(\tau)}, \vlambda; \vw) \\
            & = \sum_{\vtheta^{(\tau)} \in \Pa} \left( 1 + \sum_{\vtheta^{(t)} \in S_{\tau}} \beta_{t\tau} \right) \loss(\vtheta^{(\tau)}, \vlambda; \vw) \\
            & = \sum_{\vtheta^{(\tau)}\in \Pa} \gamma_{\tau} \loss(\vtheta^{(\tau)}, \vlambda; \vw) = T \times \text{LHS} ,
        \end{align*}
        which proves (\ref{eq:ap-tilde}) and hence the lemma.
    \end{proof}
\end{lemma}

\Cref{lm:ap-tilde} provides an inequality that allows us to continue the proof for $\tvtheta$ the same way as $\bvtheta$, which follows the standard analysis of a minimax optimization problem. Before that, we refer to some established results.

\begin{lemma}[Properties of Bregman divergence, ~\citet{threepoint}] \label{lm:ap-1}
    Given a distance generating function $\psi: \mathcal{X} \rightarrow \R$ that is $\mu$-strongly convex w.r.t. a norm $\|\cdot\|: \mathcal{X} \rightarrow \R$ and continuously differentiable on $\operatorname{int} \mathcal{X}$, the Bregman divergence $B_{\psi}: \mathcal{X} \times \operatorname{int} \mathcal{X} \rightarrow \R$ induced by $\psi$ is defined as:
    \begin{equation*}
        B_{\psi}(\x; \mathbf{y}) = \psi(\x) - \psi(\mathbf{y}) - \langle \nabla \psi(\mathbf{y}), \x - \mathbf{y} \rangle .
    \end{equation*}
    Moreover, $B_{\psi}$ has the following properties:
    \setlist{nosep}
    \begin{itemize}
        \item Non-negativity and convexity: $B_{\psi} \geq 0$ and is convex in the first argument.
        \item Lower bound: For any two points $\x \in \mathcal{X}$ and $\mathbf{y} \in \operatorname{int} \mathcal{X}$,
        \begin{equation*}
            B_{\psi}(\x;\mathbf{y}) \geq \frac{\mu}{2} \|\x - \mathbf{y}\|^2 .
        \end{equation*}
        \item Three-point identity: For any three points $\x, \mathbf{y} \in \operatorname{int} \mathcal{X}$ and $\mathbf{z} \in \mathcal{X}$,
        \begin{equation*}
            B_{\psi}(\mathbf{z};\mathbf{y}) - B_{\psi}(\mathbf{z};\mathbf{x}) - B_{\psi}(\mathbf{x};\mathbf{y}) = \langle \nabla\psi(\mathbf{y}) - \nabla\psi(\mathbf{x}), \mathbf{x} - \mathbf{z} \rangle .
        \end{equation*}
    \end{itemize}
\end{lemma}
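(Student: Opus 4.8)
The plan is to verify the three listed properties directly from the definition $B_{\psi}(\x;\mathbf{y}) = \psi(\x) - \psi(\mathbf{y}) - \langle \nabla\psi(\mathbf{y}), \x - \mathbf{y}\rangle$, using only convexity and $\mu$-strong convexity of $\psi$; no machinery beyond first-order characterizations of these properties is required, so the three items are handled independently.

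First I would establish non-negativity and convexity. Since $\psi$ is $\mu$-strongly convex it is in particular convex, so the first-order condition $\psi(\x) \geq \psi(\mathbf{y}) + \langle \nabla\psi(\mathbf{y}), \x-\mathbf{y}\rangle$ holds for all $\x \in \mathcal{X}$ and $\mathbf{y} \in \operatorname{int}\mathcal{X}$; rearranging this is exactly $B_{\psi}(\x;\mathbf{y}) \geq 0$. For convexity in the first argument, I would observe that for fixed $\mathbf{y}$ the map $\x \mapsto B_{\psi}(\x;\mathbf{y})$ equals $\psi(\x)$ minus the affine function $\x \mapsto \psi(\mathbf{y}) + \langle\nabla\psi(\mathbf{y}), \x-\mathbf{y}\rangle$; subtracting an affine function preserves convexity, so $B_{\psi}(\cdot;\mathbf{y})$ inherits the convexity of $\psi$.

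Second, for the quadratic lower bound I would invoke the definition of $\mu$-strong convexity with respect to the norm $\|\cdot\|$, namely $\psi(\x) \geq \psi(\mathbf{y}) + \langle\nabla\psi(\mathbf{y}), \x-\mathbf{y}\rangle + \frac{\mu}{2}\|\x-\mathbf{y}\|^2$; subtracting the affine part again yields $B_{\psi}(\x;\mathbf{y}) \geq \frac{\mu}{2}\|\x-\mathbf{y}\|^2$ directly. The only point requiring care here is that the bound is tied to the specific norm with respect to which $\psi$ is strongly convex, which matters downstream when $\psi_{\vtheta}$ and $\psi_{\vlambda}$ are instantiated in their respective geometries ($\ell_2$ versus entropy).

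Third, the three-point identity is a purely algebraic cancellation. I would expand $B_{\psi}(\mathbf{z};\mathbf{y})$, $B_{\psi}(\mathbf{z};\x)$, and $B_{\psi}(\x;\mathbf{y})$ using the definition and collect terms: the function values $\psi(\mathbf{z})$, $\psi(\x)$, $\psi(\mathbf{y})$ cancel completely, leaving the inner-product terms $-\langle\nabla\psi(\mathbf{y}), \mathbf{z}-\mathbf{y}\rangle + \langle\nabla\psi(\x), \mathbf{z}-\x\rangle + \langle\nabla\psi(\mathbf{y}), \x-\mathbf{y}\rangle$, which regroup into $\langle\nabla\psi(\mathbf{y}) - \nabla\psi(\x), \x-\mathbf{z}\rangle$ after combining the two $\nabla\psi(\mathbf{y})$ terms and rewriting $\langle\nabla\psi(\x), \mathbf{z}-\x\rangle = -\langle\nabla\psi(\x), \x-\mathbf{z}\rangle$. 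Since this lemma is a standard and cited fact, I anticipate no genuine obstacle; the main thing to get right is bookkeeping, namely ensuring gradients are evaluated at interior points so that $\nabla\psi$ exists, and keeping the signs straight in the final regrouping of the third identity.
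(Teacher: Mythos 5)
Your proof is correct. Note, however, that the paper itself offers no proof of this lemma at all: it is imported as an established fact with a citation (\citet{threepoint}), so there is no in-paper argument to compare against. Your verification is the standard one and is complete — non-negativity and the quadratic lower bound follow from the first-order characterizations of convexity and $\mu$-strong convexity after subtracting the affine linearization, convexity in the first argument follows because an affine perturbation preserves convexity, and the three-point identity is exact cancellation of the $\psi$-values followed by regrouping of the inner products (your sign bookkeeping checks out: the terms collect to $\langle \nabla\psi(\mathbf{y}) - \nabla\psi(\x), \x - \mathbf{z}\rangle$ as claimed). Your remark about tracking which norm the strong-convexity constant refers to is also the right thing to flag, since the downstream instantiations ($\ell_2$ for $\vtheta$, negative entropy with the $\ell_1$ norm for $\vlambda$) rely precisely on that distinction.
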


\begin{lemma}[Cauchy–Schwarz inequality for dual norms, ~\citet{boyd2004convex}] \label{lm:ap-2}
    Suppose $\|\cdot\|_*$ is the dual norm of a norm $\|\cdot\|: \mathcal{X} \rightarrow \R$ defined as $\|\x\|_* \coloneqq \underset{\mathbf{y}, \|\mathbf{y}\| \leq 1}{\max} \langle \x, \mathbf{y}\rangle$. Then, for any $\x, \mathbf{y} \in \mathcal{X}$, by definition,
    \begin{equation*}
        \langle \x, \mathbf{y} \rangle \leq \|\x\|_* \| \mathbf{y} \| .
    \end{equation*}
\end{lemma}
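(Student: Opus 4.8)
The plan is to derive the inequality directly from the variational definition of the dual norm, splitting into the degenerate case $\mathbf{y} = \mathbf{0}$ and the main case $\mathbf{y} \neq \mathbf{0}$ where normalization is permissible. The whole argument is a one-line consequence of feasibility in the defining maximization, so the work is almost entirely bookkeeping around the two cases.

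First I would dispose of the case $\mathbf{y} = \mathbf{0}$. Here $\langle \x, \mathbf{y} \rangle = 0$ and $\|\mathbf{y}\| = 0$, so both sides of the claimed bound vanish and the inequality holds trivially (as an equality). This case must be isolated because the subsequent step divides by $\|\mathbf{y}\|$.

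Next, for $\mathbf{y} \neq \mathbf{0}$ I would set the normalized vector $\hat{\mathbf{y}} = \mathbf{y}/\|\mathbf{y}\|$, which satisfies $\|\hat{\mathbf{y}}\| = 1 \le 1$ and is therefore a feasible point of the maximization defining $\|\x\|_*$. Since the optimal value of that maximization is at least its objective evaluated at any feasible point, we have
\[
  \|\x\|_* = \max_{\|\mathbf{z}\| \le 1} \langle \x, \mathbf{z}\rangle \;\ge\; \langle \x, \hat{\mathbf{y}}\rangle = \frac{\langle \x, \mathbf{y}\rangle}{\|\mathbf{y}\|} .
\]
Multiplying through by $\|\mathbf{y}\| > 0$ yields $\langle \x, \mathbf{y}\rangle \le \|\x\|_* \|\mathbf{y}\|$, which is exactly the desired bound.

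There is essentially no obstacle here: as the cited source indicates, the statement follows immediately from the definition, and the only point requiring care is the normalization, which is precisely why the $\mathbf{y} = \mathbf{0}$ case is treated separately. A secondary technical remark is that the definition uses a $\max$ rather than a $\sup$; this is justified because the unit ball $\{\mathbf{z} : \|\mathbf{z}\| \le 1\}$ is compact and $\mathbf{z} \mapsto \langle \x, \mathbf{z}\rangle$ is continuous in finite dimensions, so the maximum is attained and the feasibility comparison above is valid. If one preferred to work with a supremum, the identical argument goes through with $\ge$ replaced by the supremum bound.
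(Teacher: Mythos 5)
Your proof is correct and follows exactly the route the paper intends: the paper states this lemma as an immediate consequence of the dual-norm definition (citing \citet{boyd2004convex}) without spelling out a proof, and your normalization argument with the separate $\mathbf{y}=\mathbf{0}$ case is precisely the standard fleshing-out of that ``by definition'' claim. The extra remark on attainment of the $\max$ via compactness in finite dimensions is a correct and harmless refinement.
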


\begin{lemma}[Azuma–Hoeffding inequality on martingales, ~\citet{azuma1967weighted}] \label{lm:ap-3}
    A discrete-time stochastic process $\{ X_1, X_2, X_3, \ldots \}$ is a martingale w.r.t. a filtration $\{H_1, H_2, H_3, \ldots\}$ if at any time $t$, $\E{|X_t|} < \infty$ and $\E{X_{t} \mid H_1, \cdots, H_{t-1}} = X_{t-1}$. Suppose a martingale satisfies that $|X_{t} - X_{t-1}| \leq c_t$ almost surely, then, for all positive integers $T$ and all positive reals $\epsilon$, 
    \begin{equation*}
        P\left( X_T - X_0 \geq \epsilon \right) \leq \exp(\frac{-\epsilon^2}{2\sum_{t=1}^T c_{t}^2}) .
    \end{equation*}
\end{lemma}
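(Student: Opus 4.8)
The plan is to prove this classical concentration bound by the \emph{exponential moment} (Chernoff) method, reducing the tail probability to a product of conditional moment-generating-function (MGF) bounds. First I would introduce the martingale difference sequence $D_t \coloneqq X_t - X_{t-1}$, so that $X_T - X_0 = \sum_{t=1}^T D_t$; the martingale property gives $\E{D_t \mid H_1, \ldots, H_{t-1}} = 0$, while the hypothesis gives $|D_t| \le c_t$ almost surely. For any fixed $s > 0$, Markov's inequality applied to the nonnegative random variable $e^{s(X_T - X_0)}$ yields
\begin{equation}
    P\left( X_T - X_0 \geq \epsilon \right) \leq e^{-s\epsilon}\, \E{e^{s(X_T - X_0)}} . \nonumber
\end{equation}
The task then reduces to controlling the MGF $\E{e^{s(X_T - X_0)}}$ and optimizing over $s$.

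The crux, and the step I expect to be the main obstacle, is the conditional \textbf{Hoeffding lemma}: for a zero-mean random variable $Y$ supported on $[a,b]$ one has $\E{e^{sY}} \le \exp(s^2(b-a)^2/8)$. Applied conditionally with $Y = D_t$, $a = -c_t$, $b = c_t$, this gives $\E{e^{sD_t} \mid H_1, \ldots, H_{t-1}} \le \exp(s^2 c_t^2/2)$. To establish it I would bound $e^{sy}$ on $[a,b]$ by the chord $\frac{b-y}{b-a}e^{sa} + \frac{y-a}{b-a}e^{sb}$ using convexity of the exponential, take conditional expectation and use $\E{Y}=0$ to kill the linear term, and finally write the resulting upper bound as $e^{L(h)}$ with $h = s(b-a)$ and $L$ the logarithm of a two-point cumulant function. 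A short calculation shows $L(0)=L'(0)=0$ and $L''(h) \le 1/4$ everywhere, so a second-order Taylor expansion with remainder yields $L(h) \le h^2/8$. This convexity-plus-Taylor argument is the only genuinely nontrivial analytic piece.

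With the per-step bound in hand, I would peel off the differences one at a time using the tower property. Since $X_{t-1} - X_0$ is measurable with respect to $H_1, \ldots, H_{t-1}$, conditioning lets me factor it out:
\begin{equation}
    \E{e^{s(X_t - X_0)}} = \E{e^{s(X_{t-1}-X_0)}\, \E{e^{sD_t} \mid H_1,\ldots,H_{t-1}}} \le e^{s^2 c_t^2/2}\, \E{e^{s(X_{t-1}-X_0)}} . \nonumber
\end{equation}
Iterating from $t=T$ down to $t=1$ gives $\E{e^{s(X_T - X_0)}} \le \exp\!\big(\tfrac{s^2}{2}\sum_{t=1}^T c_t^2\big)$.

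Finally, substituting this into the Chernoff bound yields $P(X_T - X_0 \ge \epsilon) \le \exp(-s\epsilon + \tfrac{s^2}{2}\sum_t c_t^2)$, which is minimized over $s>0$ at $s^\star = \epsilon / \sum_t c_t^2$. Plugging $s^\star$ back in collapses the exponent to $-\epsilon^2 / (2\sum_t c_t^2)$, giving exactly the claimed bound. The remaining checks — the integrability needed to apply the tower property (guaranteed by boundedness of the increments) and the nonnegativity making Markov's inequality applicable — are routine.
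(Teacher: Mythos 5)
Your proof is correct, and it is the canonical argument: Chernoff bounding via Markov's inequality, the conditional Hoeffding lemma (correctly instantiated with $a=-c_t$, $b=c_t$ so that $(b-a)^2/8 = c_t^2/2$), peeling increments with the tower property, and optimizing $s^\star = \epsilon/\sum_t c_t^2$ to collapse the exponent to $-\epsilon^2/(2\sum_t c_t^2)$. Note that the paper itself does not prove this lemma at all --- it is stated as an established result with a citation to \citet{azuma1967weighted} and used as a black box in the proofs of the high-probability bounds --- so your self-contained derivation supplies what the paper deliberately omits, and it matches the standard proof in the literature; the only point worth double-checking, which you handled, is that Hoeffding's lemma must be applied \emph{conditionally} (the chord bound holds pointwise and the conditional martingale property kills the linear term), which is valid here because the increment bounds $c_t$ are deterministic constants.
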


Based on \Cref{lm:ap-1,lm:ap-2}, we prove another key lemma:

\begin{lemma}[Restated,~\citet{beck2003mirror}] \label{lm:ap-4}
    Given arbitrary sequence $\{\vzeta^{(1)}, \cdots, \vzeta^{(T)}\}$, and sequence $\{\vnu^{(1)}, \cdots, \vnu^{(T)}\}$ of the same dimension that is defined as:
    \begin{align}
        \vnu^{(t+1)} & = \underset{\vnu \in V}{\arg \min}\  \langle \vzeta^{(t)}, \vnu - \vnu^{(t)} \rangle + \frac{1}{\eta_{\vnu}}B_{\psi_{\vnu}}(\vnu; \vnu^{(t)}) , \label{eq:ap-v} \\
        \text{or } \vnu^{(t+1)} & = \underset{\vnu \in V}{\arg \max}\  \langle \vzeta^{(t)}, \vnu - \vnu^{(t)} \rangle - \frac{1}{\eta_{\vnu}}B_{\psi_{\vnu}}(\vnu; \vnu^{(t)}) , \label{eq:ap-v2}
    \end{align}
    where $\eta_{\vnu}$ is a constant step size, $B_{\psi_{\vnu}}$ is the Bregman divergence induced by $\psi_{\vnu}: V \rightarrow \R$, and $\psi_{\vnu}$ is $\mu_{\vnu}$-strongly convex w.r.t. the norm $\|\cdot\|_{\vnu}$. Then, for any $\bm{u} \in V$,
    \begin{align}
        \sum_{t=1}^T \langle \vzeta^{(t)}, \vnu^{(t)} - \bm{u} \rangle & \leq \frac{1}{\eta_{\vnu}}B_{\psi_{\vnu}}(\bm{u}; \vnu^{(1)}) + \frac{\eta_{\vnu}}{2\mu_{\vnu}}\sum_{t=1}^T\| \vzeta^{(t)}\|_{*}^2\ , \label{eq:ap-key} \\
        \text{or respectively, } \sum_{t=1}^T \langle \vzeta^{(t)}, \bm{u} - \vnu^{(t)} \rangle & \leq \frac{1}{\eta_{\vnu}}B_{\psi_{\vnu}}(\bm{u}; \vnu^{(1)}) + \frac{\eta_{\vnu}}{2\mu_{\vnu}}\sum_{t=1}^T\| \vzeta^{(t)}\|_{*}^2\ , \label{eq:ap-key2}
    \end{align}
    where $\| \cdot \|_*$ is the dual norm of $\| \cdot \|_{\vnu}$.
    \begin{proof}
        Consider a single term on the LHS of (\ref{eq:ap-key}),
        \begin{align}
            \langle \vzeta^{(t)}, \vnu^{(t)} - \bm{u} \rangle & = \langle \vzeta^{(t)} + \frac{1}{\eta_{\vnu}}\nabla \psi_{\vnu}(\vnu^{(t+1)}) - \frac{1}{\eta_{\vnu}}\nabla \psi_{\vnu}(\vnu^{(t)}), \vnu^{(t+1)} - \bm{u} \rangle \text{   (term A)} \nonumber \\
            &\ \ \ \ \ + \langle \frac{1}{\eta_{\vnu}}\nabla \psi_{\vnu}(\vnu^{(t)}) - \frac{1}{\eta_{\vnu}}\nabla \psi_{\vnu}(\vnu^{(t+1)}), \vnu^{(t+1)} - \bm{u} \rangle \text{   (term B)}\nonumber \\
            &\ \ \ \ \ + \langle \vzeta^{(t)}, \vnu^{(t)} - \vnu^{(t+1)}\rangle \text{   (term C)} . \label{eq:ap-abc}
        \end{align}
        We bound terms A and B + C, respectively. First, observe rule (\ref{eq:ap-v}), the entire function to be minimized on the RHS is convex w.r.t. $\vnu$. Then, by the optimality condition for $\vnu^{(t+1)}$ on this convex function, we have:
        \begin{align*}
            &\ \ \ \ \ \langle \nabla_{\vnu} \left( \langle \vzeta^{(t)}, \vnu - \vnu^{(t)}\rangle + \frac{1}{\eta_{\vnu}}B_{\psi_{\vnu}}(\vnu;\vnu^{(t)})\right) \mid_{\vnu^{(t+1)}}, \bm{u} - \vnu^{(t+1)}\rangle \\
            & = \langle \vzeta^{(t)} + \frac{1}{\eta_{\vnu}}\nabla \psi_{\vnu}(\vnu^{(t+1)}) - \frac{1}{\eta_{\vnu}}\nabla \psi_{\vnu}(\vnu^{(t)}), \bm{u} - \vnu^{(t+1)}\rangle \geq 0,\ \ \ \forall \bm{u} \in V .
        \end{align*}
        Hence, term A $\leq 0$. Next, we apply the three-point identity in Lemma \ref{lm:ap-1} to term B:
        \begin{align}
            \text{B} & = \frac{1}{\eta_{\vnu}} \langle \nabla \psi_{\vnu}(\vnu^{(t)}) - \nabla \psi_{\vnu}(\vnu^{(t+1)}), \vnu^{(t+1)} - \bm{u} \rangle \nonumber \\
            & = \frac{1}{\eta_{\vnu}} \left( B_{\psi_{\vnu}}(\bm{u};\vnu^{(t)}) - B_{\psi_{\vnu}}(\bm{u}; \vnu^{(t+1)})- B_{\psi_{\vnu}}(\vnu^{(t+1)}; \vnu^{(t)}) \right) . \label{eq:ap-b}
        \end{align}
        Then, we have for term C:
        \begin{align}
            \text{C} & = \frac{1}{\eta_{\vnu}} \langle \eta_{\vnu} \vzeta^{(t)}, \vnu^{(t)} - \vnu^{(t+1)}\rangle \nonumber \\
            & \leq \frac{1}{\eta_{\vnu}} \|\eta_{\vnu} \vzeta^{(t)}\|_* \|\vnu^{(t)} - \vnu^{(t+1)}\| \text{   (by Lemma \ref{lm:ap-2})} \nonumber \\
            & \leq \frac{1}{\eta_{\vnu}} \left( \frac{1}{2\mu_{\vnu}}\| \eta_{\vnu}\vzeta^{(t)} \|_{\vnu,*}^2 + \frac{\mu_{\vnu}}{2}\|\vnu^{(t)} - \vnu^{(t+1)} \|_{\vnu}^2 \right) . \label{eq:ap-c}
        \end{align}
        Now, adding (\ref{eq:ap-b}) and (\ref{eq:ap-c}),
        \begin{align}
            \text{B} + \text{C} & = \frac{1}{\eta_{\vnu}} \left( B_{\psi_{\vnu}}(\bm{u};\vnu^{(t)}) - B_{\psi_{\vnu}}(\bm{u}; \vnu^{(t+1)}) \underline{- B_{\psi_{\vnu}}(\vnu^{(t+1)}, \vnu^{(t)})} \right. \nonumber \\
            & \qquad \qquad + \left. \frac{1}{2\mu_{\vnu}}\| \eta_{\vnu}\vzeta^{(t)} \|_{\vnu,*}^2 + \underline{\frac{\mu_{\vnu}}{2}\|\vnu^{(t)} - \vnu^{(t+1)} \|_{\vnu}^2} \right) \nonumber \\
            & \leq \frac{1}{\eta_{\vnu}}\left( B_{\psi_{\vnu}}(\bm{u};\vnu^{(t)}) - B_{\psi_{\vnu}}(\bm{u}; \vnu^{(t+1)}) + \frac{\eta_{\vnu}^2}{2\mu_{\vnu}}\| \vzeta^{(t)} \|_{\vnu,*}^2 \right) \text{   (by Lemma \ref{lm:ap-1})} \nonumber \\
            & = \frac{1}{\eta_{\vnu}}\left( B_{\psi_{\vnu}}(\bm{u};\vnu^{(t)}) - B_{\psi_{\vnu}}(\bm{u}; \vnu^{(t+1)})  \right) + \frac{\eta_{\vnu}}{2\mu_{\vnu}}\| \vzeta^{(t)} \|_{\vnu,*}^2 . \label{eq:ap-bc}
        \end{align}
        Therefore, given term A $\leq 0$ and (\ref{eq:ap-bc}), we have from (\ref{eq:ap-abc}):
        \begin{align*}
            \langle \vzeta^{(t)}, \vnu^{(t)} - \bm{u} \rangle & = A + (B + C) \leq \frac{1}{\eta_{\vnu}}\left( B_{\psi_{\vnu}}(\bm{u};\vnu^{(t)}) - B_{\psi_{\vnu}}(\bm{u}; \vnu^{(t+1)})  \right) + \frac{\eta_{\vnu}}{2\mu_{\vnu}}\| \vzeta^{(t)} \|_{\vnu,*}^2 .
        \end{align*}
        Finally, by summing up both sides from $t=1$ to $T$, we have
        \begin{align*}
            \sum_{t=1}^T \langle \vzeta^{(t)}, \vnu^{(t)} - \bm{u} \rangle & \leq \frac{1}{\eta_{\vnu}}\left(B_{\psi_{\vnu}}(\bm{u};\vnu^{(1)}) \underline{- B_{\psi_{\vnu}}(\bm{u}; \vnu^{(T+1)})} \right) + \frac{\eta_{\vnu}}{2\mu_{\vnu}}\sum_{t=1}^T\| \vzeta^{(t)} \|_{\vnu,*}^2 \\
            & \leq \frac{1}{\eta_{\vnu}}B_{\psi_{\vnu}}(\bm{u};\vnu^{(1)}) + \frac{\eta_{\vnu}}{2\mu_{\vnu}}\sum_{t=1}^T\| \vzeta^{(t)} \|_{\vnu,*}^2 \text{   (by non-negativity of $B_{\psi_{\vnu}}$)},
        \end{align*}
        which proves (\ref{eq:ap-key}). The maximization case in \cref{eq:ap-key2} can be proved by the same procedure utilizing the concavity of the function to be maximized in rule (\ref{eq:ap-v2}).
    \end{proof}
\end{lemma}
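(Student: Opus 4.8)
The plan is to prove this as a standard online mirror descent regret bound, working entirely from the two cited facts about Bregman divergence (Lemma \ref{lm:ap-1}) and the dual-norm Cauchy--Schwarz inequality (Lemma \ref{lm:ap-2}); no external regret result is needed. I would fix a single round $t$, analyze the summand $\langle \vzeta^{(t)}, \vnu^{(t)} - \bm{u}\rangle$, and only at the end sum over $t$ and telescope. The guiding idea is to pivot the comparison through the next iterate $\vnu^{(t+1)}$ and to insert the ``mirror'' increment $\frac{1}{\eta_{\vnu}}(\nabla\psi_{\vnu}(\vnu^{(t+1)}) - \nabla\psi_{\vnu}(\vnu^{(t)}))$, so that the piece carrying the first-order optimality of the update separates cleanly from the pieces that telescope or that are controlled by the step size.

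Concretely, I would use the algebraic identity (valid by cancellation of the inserted gradient terms)
\begin{align*}
\langle \vzeta^{(t)}, \vnu^{(t)} - \bm{u}\rangle
&= \underbrace{\bigl\langle \vzeta^{(t)} + \tfrac{1}{\eta_{\vnu}}\nabla\psi_{\vnu}(\vnu^{(t+1)}) - \tfrac{1}{\eta_{\vnu}}\nabla\psi_{\vnu}(\vnu^{(t)}),\ \vnu^{(t+1)} - \bm{u}\bigr\rangle}_{A} \\
&\quad + \underbrace{\tfrac{1}{\eta_{\vnu}}\bigl\langle \nabla\psi_{\vnu}(\vnu^{(t)}) - \nabla\psi_{\vnu}(\vnu^{(t+1)}),\ \vnu^{(t+1)} - \bm{u}\bigr\rangle}_{B} + \underbrace{\langle \vzeta^{(t)},\ \vnu^{(t)} - \vnu^{(t+1)}\rangle}_{C}.
\end{align*}
For term $A$, I note that in rule (\ref{eq:ap-v}) the minimized objective is convex in $\vnu$ with gradient $\vzeta^{(t)} + \frac{1}{\eta_{\vnu}}(\nabla\psi_{\vnu}(\vnu) - \nabla\psi_{\vnu}(\vnu^{(t)}))$, so the first-order optimality condition for the minimizer $\vnu^{(t+1)}$ yields $A \leq 0$. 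For term $B$, the three-point identity of Lemma \ref{lm:ap-1} rewrites it as $\frac{1}{\eta_{\vnu}}(B_{\psi_{\vnu}}(\bm{u};\vnu^{(t)}) - B_{\psi_{\vnu}}(\bm{u};\vnu^{(t+1)}) - B_{\psi_{\vnu}}(\vnu^{(t+1)};\vnu^{(t)}))$, producing the telescoping differences plus a negative curvature term.

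For term $C$ I would apply dual-norm Cauchy--Schwarz (Lemma \ref{lm:ap-2}) followed by Young's inequality to split it as $\frac{\eta_{\vnu}}{2\mu_{\vnu}}\|\vzeta^{(t)}\|_*^2 + \frac{\mu_{\vnu}}{2\eta_{\vnu}}\|\vnu^{(t)} - \vnu^{(t+1)}\|_{\vnu}^2$. The crux of the argument is the cancellation inside $B + C$: the strong-convexity lower bound $B_{\psi_{\vnu}}(\vnu^{(t+1)};\vnu^{(t)}) \geq \frac{\mu_{\vnu}}{2}\|\vnu^{(t+1)} - \vnu^{(t)}\|_{\vnu}^2$ from Lemma \ref{lm:ap-1} exactly absorbs the positive primal-norm term produced by $C$, leaving only the telescoping Bregman differences and $\frac{\eta_{\vnu}}{2\mu_{\vnu}}\|\vzeta^{(t)}\|_*^2$. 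Summing from $t=1$ to $T$ collapses the differences to $B_{\psi_{\vnu}}(\bm{u};\vnu^{(1)}) - B_{\psi_{\vnu}}(\bm{u};\vnu^{(T+1)})$, and discarding the last (nonnegative) term gives exactly (\ref{eq:ap-key}).

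The step I expect to require the most care is matching the constants so that the $\|\vnu^{(t)} - \vnu^{(t+1)}\|_{\vnu}^2$ terms \emph{cancel} rather than merely bound one another: the Young split must be carried out with the specific weight $\mu_{\vnu}/\eta_{\vnu}$ so that its primal term pairs exactly with the strong-convexity factor $\mu_{\vnu}/2$ coming out of $B$. The maximization variant (\ref{eq:ap-key2}) I would dispatch by the symmetric argument --- in rule (\ref{eq:ap-v2}) the objective is concave, so $\vnu^{(t+1)}$ is a maximizer and the first-order condition flips sign, which correspondingly flips the sign pattern in the decomposition of $\langle \vzeta^{(t)}, \bm{u} - \vnu^{(t)}\rangle$; the same three-point identity and Young's inequality then deliver the identical right-hand side.
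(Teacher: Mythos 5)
Your proposal is correct and takes essentially the same route as the paper's own proof: the identical $A/B/C$ decomposition pivoting through $\vnu^{(t+1)}$, term $A$ killed by first-order optimality, term $B$ rewritten via the three-point identity, and term $C$ split by dual-norm Cauchy--Schwarz with Young's inequality weighted by $\mu_{\vnu}/\eta_{\vnu}$ so that the $\|\vnu^{(t)}-\vnu^{(t+1)}\|_{\vnu}^2$ term is exactly absorbed by the strong-convexity lower bound on $B_{\psi_{\vnu}}(\vnu^{(t+1)};\vnu^{(t)})$, followed by telescoping and discarding the final nonnegative Bregman term. Your symmetric dispatch of the maximization case also matches the paper, so there is nothing to correct.
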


\textbf{Now, we are ready to prove \Cref{th:3,th:6}.}

\begin{proof}[Proof of \Cref{th:3,th:6}]
    Before starting the derivation, we obtain some bounds on the norm of gradients that will be used later. Directly implied by the assumptions, we have:
    \begin{gather}
        \|\nabla_{\vtheta}\loss(\vtheta, \vlambda; \vw)\|_{\infty} = \|\sum_{i=1}^m \lambda_i w_i \nabla f_i(\vtheta)\|_{\infty} \leq \| \nabla f_i(\vtheta) \|_{\infty} \leq L , \label{eq:ap-L}\\
        \|\nabla_{\vlambda}\loss(\vtheta, \vlambda; \vw)\|_{\infty} = \|\vw \odot \f(\vtheta)\|_{\infty} \leq \|\f(\vtheta)\|_{\infty} \leq U . \label{eq:ap-U}
    \end{gather}
    The same upper bounds also hold for stochastic gradients. Now we begin our derivation. First, we rewrite the LHS of (\ref{eq:14}) with $\loss(\vtheta, \vlambda; \vw)$:
    \begin{align*}
        \operatorname{TCH}(\hvtheta; \vw) - \underset{\vtheta \in \Theta}{\min}\operatorname{TCH}(\vtheta; \vw) & = \underset{\vlambda \in \Delta_m}{\max}\ \loss(\hvtheta, \vlambda; \vw) - \underset{\vtheta}{\min}\underset{\vlambda \in \Delta_m}{\max}\ \loss(\vtheta, \vlambda; \vw) . 
    \end{align*}
    Given that $\loss(\vtheta, \vlambda; \vw)$ is convex in $\vtheta$ and linear in $\vlambda$, we have:
    \begin{align}
        & \underset{\vlambda \in \Delta_m}{\max}\ \loss(\hvtheta, \vlambda; \vw) - \underset{\vtheta}{\min}\underset{\vlambda \in \Delta_m}{\max}\ \loss(\vtheta, \vlambda; \vw) \nonumber \\
        = & \underset{\vlambda \in \Delta_m}{\max}\ \loss(\hvtheta, \vlambda; \vw) - \underset{\vlambda \in \Delta_m}{\max}\underset{\vtheta}{\min}\ \loss(\vtheta, \vlambda; \vw) \nonumber \text{     (Von Neumann's theorem)}\\
        \leq & \underset{\vlambda \in \Delta_m}{\max}\ \loss(\hvtheta, \vlambda; \vw) - \underset{\vtheta}{\min}\ \loss(\vtheta, \bvlambda; \vw) \nonumber \\
        = & \loss(\hvtheta, \vlambda^*; \vw) - \loss(\vtheta^*, \bvlambda; \vw) , \label{eq:ap-25}
    \end{align}
    where $\vlambda^* \coloneqq \underset{\vlambda \in \Delta_m}{\arg \max}\ \loss(\hvtheta, \vlambda; \vw)$, $\vtheta^* \coloneqq \underset{\vtheta}{\arg \min}\ \loss(\vtheta, \bvlambda; \vw)$, and $\bvlambda = \frac{1}{T}\sum_{t=1}^{T}\vlambda^{(t)}$. 

    Next, by $\loss(\vtheta, \vlambda; \vw)$'s convexity in $\vtheta$, we have for $\hvtheta$ being $\bvtheta$ and $\tvtheta$ respectively:
    \begin{gather}
        \loss(\bvtheta, \vlambda^*; \vw) \leq \frac{1}{T}\sum_{t=1}^T \loss(\vtheta^{(t)}, \vlambda^*; \vw) , \label{eq:ap-26}\\
        \loss(\tvtheta, \vlambda^*; \vw) \leq \frac{1}{T}\sum_{\vtheta^{(\tau)} \in \Pa} \gamma_{\tau} \loss(\vtheta^{(t)}, \vlambda^*; \vw) \leq \frac{1}{T}\sum_{t=1}^T \loss(\vtheta^{(t)}, \vlambda^*; \vw) , \label{eq:ap-27}
    \end{gather}
    where the second inequality in (\ref{eq:ap-27}) is exactly \Cref{lm:ap-tilde}. With the same RHS in (\ref{eq:ap-26}) and (\ref{eq:ap-27}), we are able to continue the proof for both solutions output by the traditional uniform averaging conversion and the new adaptive conversion. Next, by $\loss(\vtheta, \vlambda; \vw)$'s linearity in $\vlambda$, we have:
    \begin{gather}
        \loss(\vtheta^*, \bvlambda; \vw) = \frac{1}{T} \sum_{t=1}^{T} \loss(\vtheta^*, \vlambda^{(t)}; \vw) \label{eq:ap-28} .
    \end{gather}
    With (\ref{eq:ap-26}), (\ref{eq:ap-27}), and (\ref{eq:ap-28}), we continue from (\ref{eq:ap-25}),
    \begin{align}
        \loss(\hvtheta, \vlambda^*; \vw) - \loss(\vtheta^*, \bvlambda; \vw) & \leq \frac{1}{T} \sum_{t=1}^T \loss(\vtheta^{(t)}, \vlambda^*; \vw) - \frac{1}{T} \sum_{t=1}^T \loss(\vtheta^*, \vlambda^{(t)}; \vw) \nonumber \\
        & = \frac{1}{T} \sum_{t=1}^T \left( \underbrace{\loss(\vtheta^{(t)}, \vlambda^*; \vw) - \loss(\vtheta^{(t)}, \vlambda^{(t)}; \vw)}_{\text{term A}} \right. \nonumber \\
        & \ \ \ \ \ \ \ \ \ \ \ \ \ \ \left. + \underbrace{\loss(\vtheta^{(t)}, \vlambda^{(t)}; \vw) - \loss(\vtheta^*, \vlambda^{(t)}; \vw)}_{\text{term B}} \right) . \label{eq:ap-29}
    \end{align}
    
    Now, we bound A and B by linearity and convexity, respectively:
    \begin{gather*}
        \text{A} = \loss(\vtheta^{(t)}, \vlambda^*; \vw) - \loss(\vtheta^{(t)}, \vlambda^{(t)}; \vw) = \langle \nabla_{\vlambda} \loss(\vtheta^{(t)}, \vlambda^{(t)}; \vw), \vlambda^* - \vlambda^{(t)}\rangle , \\
        \text{B} = \loss(\vtheta^{(t)}, \vlambda^{(t)}; \vw) - \loss(\vtheta^*, \vlambda^{(t)}; \vw) \leq \langle \nabla_{\vtheta} \loss(\vtheta^{(t)}, \vlambda^{(t)}; \vw), \vtheta^{(t)} - \vtheta^* \rangle .
    \end{gather*}
    We can then continue from (\ref{eq:ap-29}):
    \begin{align}
        \frac{1}{T} \sum_{t=1}^T (\text{A} + \text{B}) & \leq \frac{1}{T} \sum_{t=1}^T \left( \langle \nabla_{\vlambda} \loss(\vtheta^{(t)}, \vlambda^{(t)}; \vw), \vlambda^* - \vlambda^{(t)}\rangle + \langle \nabla_{\vtheta} \loss(\vtheta^{(t)}, \vlambda^{(t)}; \vw), \vtheta^{(t)} - \vtheta^* \rangle \right. \nonumber \\
        & = \frac{1}{T} \left( \sum_{t=1}^T \langle \delta_{\vlambda} \loss(\vtheta^{(t)}, \vlambda^{(t)}; \vw), \vlambda^* - \vlambda^{(t)} \rangle \text{     (term D1)}\right. \nonumber \\
        &\ \ \ \ \ \ \ \ + \sum_{t=1}^T \langle \nabla_{\vlambda} \loss(\vtheta^{(t)}, \vlambda^{(t)}; \vw) - \delta_{\vlambda} \loss(\vtheta^{(t)}, \vlambda^{(t)}; \vw), \vlambda^* - \vlambda^{(t)}\rangle \text{     (term E1)}\nonumber \\
        &\ \ \ \ \ \ \ \ + \sum_{t=1}^T \langle \delta_{\vtheta} \loss(\vtheta^{(t)}, \vlambda^{(t)}; \vw), \vtheta^{(t)} - \vtheta^* \rangle \text{     (term D2)} \nonumber \\
        &\ \ \ \ \ \ \ \ + \sum_{t=1}^T \langle \nabla_{\vtheta} \loss(\vtheta^{(t)}, \vlambda^{(t)}; \vw) - \delta_{\vtheta} \loss(\vtheta^{(t)}, \vlambda^{(t)}; \vw), \vtheta^{(t)} - \vtheta^* \rangle \text{     (term E2)} . \label{eq:ap-dddeee}
    \end{align}
    This is the last reduction on the upper bound. After bounding the above four terms separately, we will be able to arrive at the inequalities in the theorem. All of them can be solved by invoking \Cref{lm:ap-4}. For D1 and D2, it is a direct application. For E1 and E2, we need an additional construction step.

    \textbf{Bounding D1.} Now, for D1, recall the update rule for $\vlambda$:
    \begin{equation*}
        \vlambda^{(t+1)} = \underset{\vlambda \in \Delta_m}{\arg \max}\ \langle \delta_{\vlambda} \loss(\vtheta^{(t)}, \vlambda^{(t)}; \vw), \vlambda \rangle - \frac{1}{\eta_{\vlambda}} B_{\psi_{\vlambda}}(\vlambda;\vlambda^{(t)}) .
    \end{equation*}
    By \Cref{lm:ap-4}, substituting $\vzeta^{(t)} = \delta_{\vlambda} \loss(\vtheta^{(t)}, \vlambda^{(t)}; \vw)$, $\vnu^{(t)} = \vlambda^{(t)}$, and $\bm{u} = \vlambda^*$, we have:
    \begin{align}
        \text{D1} & = \sum_{t=1}^T \langle \delta_{\vlambda} \loss(\vtheta^{(t)}, \vlambda^{(t)}; \vw), \vlambda^* - \vlambda^{(t)} \rangle \nonumber \\
        & \leq \frac{1}{\eta_{\vlambda}}B_{\psi_{\vlambda}}(\vlambda^*; \vlambda^{(1)}) + \frac{\eta_{\vlambda}}{2\mu_{\vlambda}}\sum_{t=1}^T \|\delta_{\vlambda} \loss(\vtheta^{(t)}, \vlambda^{(t)}; \vw)\|_{\vlambda,*}^2 \nonumber \\
        & \leq \frac{1}{\eta_{\vlambda}}D_{\vlambda} + \frac{\eta_{\vlambda}}{2\mu_{\vlambda}}TC_{\vlambda}U^2 , \label{eq:ap-d1}
    \end{align}
    where $\mu_{\vlambda}$ and $\|\cdot\|_{\vlambda}$ and are the constant and norm on which the strong convexity of $\psi_{\vlambda}$ is defined, and $\|\cdot\|_{\vlambda, *}$ is the dual norm of $\|\cdot\|_{\vlambda}$. The upper bound of the norm stems from (\ref{eq:ap-U}) at the beginning of the proof, and $C_{\vlambda}$, as well as $D_{\vlambda}$, are constants depending on the specific choice of $\psi_{\vlambda}$. 
    
    \textbf{Bounding D2.} Similarly, for D2, recall the update rule for $\vtheta$:
    \begin{equation*}
        \vtheta^{(t+1)} = \underset{\vtheta \in \Theta}{\arg \min}\ \langle \delta_{\vtheta} \loss(\vtheta^{(t)}, \vlambda^{(t)}; \vw), \vtheta \rangle + \frac{1}{\eta_{\vtheta}} B_{\psi_{\vtheta}}(\vtheta;\vtheta^{(t)}) .
    \end{equation*}
    By \Cref{lm:ap-4}, substituting $\vzeta^{(t)} = \delta_{\vtheta} \loss(\vtheta^{(t)}, \vlambda^{(t)}; \vw)$, $\vnu^{(t)} = \vtheta^{(t)}$, and $\bm{u} = \vtheta^*$, we have:
    \begin{align}
        \text{D2} = \sum_{t=1}^T \langle \delta_{\vtheta} \loss(\vtheta^{(t)}, \vlambda^{(t)}; \vw), \vtheta^{(t)} - \vtheta^* \rangle & \leq \frac{1}{\eta_{\vtheta}}B_{\psi_{\vtheta}}(\vtheta^*; \vtheta^{(1)}) + \frac{\eta_{\vtheta}}{2\mu_{\vtheta}}\sum_{t=1}^T \|\delta_{\vtheta} \loss(\vtheta^{(t)}, \vlambda^{(t)}; \vw)\|_{\vtheta,*}^2 \nonumber \\
        & \leq \frac{1}{\eta_{\vtheta}}D_{\vtheta} + \frac{\eta_{\vtheta}}{2\mu_{\vtheta}}TC_{\vtheta}L^2 . \label{eq:ap-d2}
    \end{align}
    
    \textbf{Bounding E1.} Next, for E1, the first argument of the inner product no longer matches the update rule for $\vlambda^{(t)}$ and thus \Cref{lm:ap-4} cannot be directly applied. Instead, we construct a sequence $\{\vchi^{(t)}\}$ such that
    \begin{gather*}
        \vchi^{(1)} = \vlambda^{(1)} , \\
        \vchi^{(t+1)} = \underset{\vchi \in \Delta_m}{\arg \max}\ \langle \nabla_{\vlambda} \loss(\vtheta^{(t)}, \vlambda^{(t)}; \vw) - \delta_{\vlambda} \loss(\vtheta^{(t)}, \vlambda^{(t)}; \vw), \vchi \rangle - \frac{1}{\eta_{\vlambda}} B_{\psi_{\vlambda}}(\vchi;\vchi^{(t)}) .
    \end{gather*}
    Invoke \Cref{lm:ap-4} on this sequence: substituting $\vzeta^{(t)} = \nabla_{\vlambda} \loss(\vtheta^{(t)}, \vlambda^{(t)}; \vw) - \delta_{\vlambda} \loss(\vtheta^{(t)}, \vlambda^{(t)}; \vw)$, $\vnu^{(t)} = \vchi^{(t)}$, and $\bm{u} = \vlambda^*$, we have:
    \begin{gather}
        \sum_{t=1}^T \langle \nabla_{\vlambda} \loss(\vtheta^{(t)}, \vlambda^{(t)}; \vw) - \delta_{\vlambda} \loss(\vtheta^{(t)}, \vlambda^{(t)}; \vw), \vlambda^* - \vchi^{(t)} \rangle \nonumber \\
        \leq \frac{1}{\eta_{\vlambda}}B_{\psi_{\vlambda}}(\vlambda^*; \vlambda^{(1)}) + \frac{\eta_{\vlambda}}{2\mu_{\vlambda}}\sum_{t=1}^T \|\nabla_{\vlambda} \loss(\vtheta^{(t)}, \vlambda^{(t)}; \vw) - \delta_{\vlambda} \loss(\vtheta^{(t)}, \vlambda^{(t)}; \vw)\|_{\vlambda,*}^2 \nonumber \\
        \leq \frac{1}{\eta_{\vlambda}}B_{\psi_{\vlambda}}(\vlambda^*; \vlambda^{(1)}) + \frac{\eta_{\vlambda}}{2\mu_{\vlambda}}\sum_{t=1}^T \left( \|\nabla_{\vlambda} \loss(\vtheta^{(t)}, \vlambda^{(t)}; \vw)\|_{\vlambda,*} + \|\delta_{\vlambda} \loss(\vtheta^{(t)}, \vlambda^{(t)}; \vw)\|_{\vlambda,*} \right)^2 \nonumber \\
        \leq \frac{1}{\eta_{\vlambda}}D_{\vlambda} + \frac{\eta_{\vlambda}}{2\mu_{\vlambda}}T4C_{\vlambda}U^2 . \label{eq:ap-35}
    \end{gather}
    Then, we have for E1:
    \begin{align}
        \text{E1} & = \sum_{t=1}^T \langle \nabla_{\vlambda} \loss(\vtheta^{(t)}, \vlambda^{(t)}; \vw) - \delta_{\vlambda} \loss(\vtheta^{(t)}, \vlambda^{(t)}; \vw), \vlambda^* - \vchi^{(t)}\rangle \nonumber \\
        & \ \ \ \ + \sum_{t=1}^T \langle \nabla_{\vlambda} \loss(\vtheta^{(t)}, \vlambda^{(t)}; \vw) - \delta_{\vlambda} \loss(\vtheta^{(t)}, \vlambda^{(t)}; \vw), \vchi^{(t)} - \vlambda^{(t)}\rangle , \label{eq:ap-36}
    \end{align}
    where the first term is bounded by (\ref{eq:ap-35}). Different ways to deal with the second term result in the expectation bound and the high-probability bound, respectively. First, we have by the total law of expectation:
    \begin{gather}
        \E { \sum_{t=1}^T \langle \nabla_{\vlambda} \loss(\vtheta^{(t)}, \vlambda^{(t)}; \vw) - \delta_{\vlambda} \loss(\vtheta^{(t)}, \vlambda^{(t)}; \vw), \vchi^{(t)} - \vlambda^{(t)} \rangle } \nonumber\\
        = \sum_{t=1}^T \E { \E { \langle \nabla_{\vlambda} \loss(\vtheta^{(t)}, \vlambda^{(t)}; \vw) - \delta_{\vlambda} \loss(\vtheta^{(t)}, \vlambda^{(t)}; \vw), \vchi^{(t)} - \vlambda^{(t)} \rangle \mid \eta_1, \ldots, \eta_{t-1} } } \nonumber \\
        = \sum_{t=1}^T \E { \langle \vchi^{(t)} - \vlambda^{(t)}, \E { \nabla_{\vlambda} \loss(\vtheta^{(t)}, \vlambda^{(t)}; \vw) - \delta_{\vlambda} \loss(\vtheta^{(t)}, \vlambda^{(t)}; \vw) \mid \eta_1, \ldots, \eta_{t-1} } \rangle } \label{eq:ap-37}\\
        = \sum_{t=1}^T \E { \langle \vchi^{(t)} - \vlambda^{(t)}, \bm 0 \rangle } = 0 , \label{eq:ap-38}
    \end{gather}
    where $\{\eta_t\}$ are the random batch sampled in the past to compute stochastic gradient. Here, (\ref{eq:ap-37}) is based on the fact that both $\vchi^{(t)}$ and $\vlambda^{(t)}$ are deterministic given $\delta_{\vlambda}\loss(\vtheta^{(t-1)}, \vlambda^{(t-1)}; \vw)$, $\vtheta^{(t-1)}$, and $\vlambda^{(t-1)}$, which are all deterministic given up to the $(t-1)$-th random batch. The next equality (\ref{eq:ap-38}) is based on the fact that the stochastic gradient is an unbiased estimator of the true gradient. 
    
    Next, the key observation to obtain the high-probability bound is that $X_t = \sum_{\tau=1}^t \langle \nabla_{\vlambda}\loss(\vtheta^{(\tau)}, \vlambda^{(\tau)}; \vw) - \delta_{\vlambda}\loss(\vtheta^{(\tau)}, \vlambda^{(\tau)}; \vw), \vchi^{(\tau)} - \vlambda^{(\tau)}\rangle$ is a martingale w.r.t. filtration $\{\eta_t\}$. We can check this by definition:
    \begin{align*}
        \E { |X_t| } & = \E { |\sum_{\tau=1}^t \langle \nabla_{\vlambda}\loss(\vtheta^{(\tau)}, \vlambda^{(\tau)}; \vw) - \delta_{\vlambda}\loss(\vtheta^{(\tau)}, \vlambda^{(\tau)}; \vw), \vchi^{(\tau)} - \vlambda^{(\tau)}\rangle| } < \infty \\
        \E { X_{t} \mid \eta_1, \ldots, \eta_{t-1} } & = \E { \langle \nabla_{\vlambda}\loss(\vtheta^{(t)}, \vlambda^{(t)}; \vw) - \delta_{\vlambda}\loss(\vtheta^{(t)}, \vlambda^{(t)}; \vw), \vchi^{(t)} - \vlambda^{(t)}\rangle \mid \eta_1, \ldots, \eta_{t-1} } \\
        & \ \ \ + \E { X_{t-1} \mid \eta_1, \ldots, \eta_{t-1} } \\
        & = 0  + X_{t-1} \text{     (same argument as (\ref{eq:ap-37}) and (\ref{eq:ap-38}))} = X_{t-1} \ .
    \end{align*}
    We can now bound the second term by the Azuma-Hoeffding inequality in \Cref{lm:ap-3}. We have the constant $c_t$ from:
    \begin{align}
        |X_t - X_{t-1}| & = |\langle \nabla_{\vlambda}\loss(\vtheta^{(t)}, \vlambda^{(t)}; \vw) - \delta_{\vlambda}\loss(\vtheta^{(t)}, \vlambda^{(t)}; \vw), \vchi^{(t)} - \vlambda^{(t)}\rangle| \nonumber \\
        & \leq \| \nabla_{\vlambda}\loss(\vtheta^{(t)}, \vlambda^{(t)}; \vw) - \delta_{\vlambda}\loss(\vtheta^{(t)}, \vlambda^{(t)}; \vw) \|_{\infty} \| \vchi^{(t)} - \vlambda^{(t)} \|_1 \label{eq:ap-39}\\
        & \leq \left( \| \nabla_{\vlambda}\loss(\vtheta^{(t)}, \vlambda^{(t)}; \vw) \|_{\infty} + \|\delta_{\vlambda}\loss(\vtheta^{(t)}, \vlambda^{(t)}; \vw)\|_{\infty} \right) \left( \| \vchi^{(1)} \|_1 + \| \vlambda^{(t)} \|_1 \right) \nonumber \\
        & \leq 2U \times 2 = 4U , \nonumber
    \end{align}
    where (\ref{eq:ap-39}) is based on \Cref{lm:ap-2} and that $\|\cdot\|_{\infty}$ is the dual norm of $\|\cdot\|_1$. Then, given that $X_0 = 0$, we have for any positive reals $\epsilon$:
    \begin{align*}
        P(X_T \leq \epsilon) & = P\left( \sum_{t=1}^T \langle \nabla_{\vlambda}\loss(\vtheta^{(t)}, \vlambda^{(t)}; \vw) - \delta_{\vlambda}\loss(\vtheta^{(t)}, \vlambda^{(t)}; \vw), \vchi^{(t)} - \vlambda^{(t)}\rangle \leq \epsilon \right) \\
        & \geq 1 - \exp(\frac{-\epsilon^2}{32TU^2})\ .
    \end{align*}
    Let $\gamma = \exp(\frac{-\epsilon^2}{32TU^2})$, we have $0 < \gamma < 1$ and $\epsilon = 4U\sqrt{2T\log \frac{1}{\gamma}}$. Therefore, we have
    \begin{equation}
        P \left( \sum_{t=1}^T \langle \nabla_{\vlambda}\loss(\vtheta^{(t)}, \vlambda^{(t)}; \vw) - \delta_{\vlambda}\loss(\vtheta^{(t)}, \vlambda^{(t)}; \vw), \vchi^{(t)} - \vlambda^{(t)}\rangle \leq 4U\sqrt{2T\log \frac{1}{\gamma}} \right) \geq 1 - \gamma . \label{eq:ap-40}
    \end{equation}
    Combining (\ref{eq:ap-35}), (\ref{eq:ap-36}), (\ref{eq:ap-38}), and (\ref{eq:ap-40}), we obtain the two types of bounds for E1:
    \begin{align}
        \E{\text{E1}} \leq \frac{D_{\vlambda}}{\eta_{\vlambda}} + \frac{2\eta_{\vlambda}TC_{\vlambda}U^2}{\mu_{\vlambda}} + 0 , \label{eq:ap-e1exp}
    \end{align}
    and with probability at least $1 - \gamma$, $0 < \gamma < 1$:
    \begin{align}
        \text{E1} \leq \frac{D_{\vlambda}}{\eta_{\vlambda}} + \frac{2\eta_{\vlambda}TC_{\vlambda}U^2}{\mu_{\vlambda}} + 4U\sqrt{2T\log \frac{1}{\gamma}} . \label{eq:ap-e1hp}
    \end{align}
    
    \textbf{Bounding E2.} We simply go through a similar process. We explicitly present the steps for completeness. First, construct a sequence $\{\vu^{(t)}\}$ such that
    \begin{gather*}
        \vu^{(1)} = \vtheta^{(1)} \\
        \vu^{(t+1)} = \underset{\vu \in \Theta}{\arg \min}\ \langle \nabla_{\vtheta} \loss(\vtheta^{(t)}, \vlambda^{(t)}; \vw) - \delta_{\vtheta} \loss(\vtheta^{(t)}, \vlambda^{(t)}; \vw), \vu \rangle + \frac{1}{\eta_{\vtheta}} B_{\psi_{\vtheta}}(\vu;\vu^{(t)}) .
    \end{gather*}
    By \Cref{lm:ap-4}, we have
    \begin{gather}
        \sum_{t=1}^T \langle \nabla_{\vtheta} \loss(\vtheta^{(t)}, \vlambda^{(t)}; \vw) - \delta_{\vtheta} \loss(\vtheta^{(t)}, \vlambda^{(t)}; \vw), \vu^{(t)} - \vtheta^* \rangle \nonumber \\
        \leq \frac{1}{\eta_{\vtheta}}B_{\psi_{\vtheta}}(\vtheta^*; \vtheta^{(1)}) + \frac{\eta_{\vtheta}}{2\mu_{\vtheta}}\sum_{t=1}^T \|\nabla_{\vtheta} \loss(\vtheta^{(t)}, \vlambda^{(t)}; \vw) - \delta_{\vtheta} \loss(\vtheta^{(t)}, \vlambda^{(t)}; \vw)\|_{\vtheta,*}^2 \nonumber \\
        \leq \frac{1}{\eta_{\vtheta}}B_{\psi_{\vtheta}}(\vtheta^*; \vtheta^{(1)}) + \frac{\eta_{\vtheta}}{2\mu_{\vtheta}}\sum_{t=1}^T \left( \|\nabla_{\vtheta} \loss(\vtheta^{(t)}, \vlambda^{(t)}; \vw)\|_{\vtheta,*} + \|\delta_{\vtheta} \loss(\vtheta^{(t)}, \vlambda^{(t)}; \vw)\|_{\vtheta,*} \right)^2 \nonumber \\
        \leq \frac{1}{\eta_{\vtheta}}D_{\vtheta} + \frac{\eta_{\vtheta}}{2\mu_{\vtheta}}T4C_{\vtheta}L^2 . \label{eq:ap-44}
    \end{gather}
    Then, we have for E2:
    \begin{align*}
        \text{E2} & = \sum_{t=1}^T \langle \nabla_{\vtheta} \loss(\vtheta^{(t)}, \vlambda^{(t)}; \vw) - \delta_{\vtheta} \loss(\vtheta^{(t)}, \vlambda^{(t)}; \vw), \vtheta^{(t)} - \vu^{(t)}\rangle \nonumber \\
        & \ \ \ \ + \sum_{t=1}^T \langle \nabla_{\vtheta} \loss(\vtheta^{(t)}, \vlambda^{(t)}; \vw) - \delta_{\vtheta} \loss(\vtheta^{(t)}, \vlambda^{(t)}; \vw), \vu^{(t)} - \vtheta^*\rangle , 
    \end{align*}
    where the second term is bounded by (\ref{eq:ap-44}). For the first term, we have:
    \begin{gather}
        \E { \sum_{t=1}^T \langle \nabla_{\vtheta} \loss(\vtheta^{(t)}, \vlambda^{(t)}; \vw) - \delta_{\vtheta} \loss(\vtheta^{(t)}, \vlambda^{(t)}; \vw), \vtheta^{(t)} - \vu^{(t)} \rangle } \nonumber\\
        = \sum_{t=1}^T \E { \E { \langle \nabla_{\vtheta} \loss(\vtheta^{(t)}, \vlambda^{(t)}; \vw) - \delta_{\vtheta} \loss(\vtheta^{(t)}, \vlambda^{(t)}; \vw), \vtheta^{(t)} - \vu^{(t)} \rangle \mid \eta_1, \ldots, \eta_{t-1} } } \nonumber \\
        = \sum_{t=1}^T \E { \langle \vtheta^{(t)} - \vu^{(t)}, \E { \nabla_{\vtheta} \loss(\vtheta^{(t)}, \vlambda^{(t)}; \vw) - \delta_{\vtheta} \loss(\vtheta^{(t)}, \vlambda^{(t)}; \vw) \mid \eta_1, \ldots, \eta_{t-1} } \rangle } \label{eq:ap-46}\\
        = \sum_{t=1}^T \E { \langle \vtheta^{(t)} - \vu^{(t)}, \bm 0 \rangle } = 0 , \label{eq:ap-47}
    \end{gather}
    where (\ref{eq:ap-46}) is based on the fact that both $\vtheta^{(t)}$ and $\vu^{(t)}$ are deterministic given $\delta_{\vtheta}\loss(\vtheta^{(t-1)}, \vlambda^{(t-1)}; \vw) = \vlambda^{(t-1)} \odot \vw \odot \delta \f(\vtheta^{(t-1)})$, $\vtheta^{(t-1)}$, and $\vlambda^{(t-1)}$, which are all deterministic given up to the $(t-1)$-th random batch. And (\ref{eq:ap-47}) is still based on the fact that the stochastic gradient is an unbiased estimator of the true gradient. With the same argument for (\ref{eq:ap-46}) and (\ref{eq:ap-47}), one can check that $Y_t = \sum_{\tau=1}^t \langle \nabla_{\vtheta}\loss(\vtheta^{(\tau)}, \vlambda^{(\tau)}; \vw) - \delta_{\vtheta}\loss(\vtheta^{(\tau)}, \vlambda^{(\tau)}; \vw), \vtheta^{(\tau)} - \vu^{(\tau)}\rangle$ is a martingale w.r.t. filtration $\{\eta_t\}$, and we have
    \begin{align*}
        |Y_t - Y_{t-1}| & = |\langle \nabla_{\vtheta}\loss(\vtheta^{(t)}, \vlambda^{(t)}; \vw) - \delta_{\vtheta}\loss(\vtheta^{(t)}, \vlambda^{(t)}; \vw), \vtheta^{(t)} - \vu^{(t)}\rangle| \nonumber \\
        & \leq \| \nabla_{\vtheta}\loss(\vtheta^{(t)}, \vlambda^{(t)}; \vw) - \delta_{\vtheta}\loss(\vtheta^{(t)}, \vlambda^{(t)}; \vw) \|_{\infty} \| \vtheta^{(t)} - \vu^{(t)} \|_1 \nonumber \\
        & \leq \left( \| \nabla_{\vtheta}\loss(\vtheta^{(t)}, \vlambda^{(t)}; \vw) \|_{\infty} + \|\delta_{\vtheta}\loss(\vtheta^{(t)}, \vlambda^{(t)}; \vw)\|_{\infty} \right) \left( \| \vtheta^{(1)} \|_1 + \| \vu^{(t)} \|_1 \right) \nonumber \\
        & \leq 2L \times 2dR_{\vtheta} = 4dR_{\vtheta}L , \nonumber
    \end{align*}
    where the last inequality is based on $\|\vtheta\|_1 \leq d\|\vtheta\|_{\infty} \leq dR_{\vtheta}$ and the same for $\vu^{(t)}$. Then, by the Azuma-Hoeffding inequality, we have for $0 < \gamma < 1$:
    \begin{equation*}
        P \left( \sum_{t=1}^T \langle \nabla_{\vtheta}\loss(\vtheta^{(t)}, \vlambda^{(t)}; \vw) - \delta_{\vtheta}\loss(\vtheta^{(t)}, \vlambda^{(t)}; \vw), \vtheta^{(t)} - \vu^{(t)}\rangle \leq 4dR_{\vtheta}L\sqrt{2T\log \frac{1}{\gamma}} \right) \geq 1 - \gamma .
    \end{equation*}
    Therefore, for E2, we have the two types of bounds as follows:
    \begin{align}
        \E{\text{E2}} \leq \frac{D_{\vtheta}}{\eta_{\vtheta}} + \frac{2\eta_{\vtheta}TC_{\vtheta}L^2}{\mu_{\vtheta}} + 0 , \label{eq:ap-e2exp}
    \end{align}
    and with probability at least $1 - \gamma$, $0 < \gamma < 1$:
    \begin{align}
        \text{E2} \leq \frac{D_{\vtheta}}{\eta_{\vtheta}} + \frac{2\eta_{\vtheta}TC_{\vtheta}L^2}{\mu_{\vtheta}} + 4dR_{\vtheta}L\sqrt{2T\log \frac{1}{\gamma}} . \label{eq:ap-e2hp}
    \end{align}
    
    \textbf{Finalize.} At last, we are able to assemble the bounds for terms D1, D2, E1, and E2 to bound (\ref{eq:ap-dddeee}), and thus bound the convergence error. For the expectation bound, with (\ref{eq:ap-d1}) for D1, (\ref{eq:ap-d2}) for D2, (\ref{eq:ap-e1exp}) for E1, and (\ref{eq:ap-e2exp}) for E2, we have
    \begin{align*}
        \E{\operatorname{TCH}(\hvtheta; \vw)} - \underset{\vtheta \in \Theta}{\min}\operatorname{TCH}(\vtheta; \vw) \leq
        \frac{1}{T} \left( \underbrace{\frac{2D_{\vtheta}}{\eta_{\vtheta}} + \frac{5\eta_{\vtheta}TC_{\vtheta}L^2}{2\mu_{\vtheta}}}_\text{from D2 and E2} + 
        \underbrace{\frac{2D_{\vlambda}}{\eta_{\vlambda}} + \frac{5\eta_{\vlambda}TC_{\vlambda}U^2}{2\mu_{\vlambda}}}_{\text{from D1 and E1}} \right) .
    \end{align*}
    When $\eta_{\vtheta} = \sqrt{\frac{4\mu_{\vtheta}D_{\vtheta}}{5TC_{\vtheta}L^2}}$ and $\eta_{\vlambda} = \sqrt{\frac{4\mu_{\vlambda}D_{\vlambda}}{5TC_{\vlambda}U^2}}$, we further have:
    \begin{align*}
        \E{\operatorname{TCH}(\hvtheta; \vw)} - \underset{\vtheta \in \Theta}{\min}\operatorname{TCH}(\vtheta; \vw) \leq
        \sqrt{\frac{20D_{\vtheta}C_{\vtheta}L^2}{\mu_{\vtheta}T}} + \sqrt{\frac{{20}D_{\vlambda}C_{\vlambda}U^2}{\mu_{\vlambda}T}} ,
    \end{align*}
    which proves the expectation bound in \Cref{th:6}. For the high-probability bound, with (\ref{eq:ap-d1}) for D1, (\ref{eq:ap-d2}) for D2, (\ref{eq:ap-e1hp}) for E1, and (\ref{eq:ap-e2hp}) for E2, we have with probability at least $1 - \gamma$, $0 < \gamma < 1$:
    \begin{align*}
        \operatorname{TCH}(\hvtheta; \vw) - \underset{\vtheta \in \Theta}{\min}\operatorname{TCH}(\vtheta; \vw) & \leq
        \frac{1}{T} \left( \frac{2D_{\vtheta}}{\eta_{\vtheta}} + \frac{5\eta_{\vtheta}TC_{\vtheta}L^2}{2\mu_{\vtheta}} + 
        \frac{3D_{\vlambda}}{\eta_{\vlambda}} + \frac{9\eta_{\vlambda}TC_{\vlambda}U^2}{2\mu_{\vlambda}} \right. \nonumber \\
        & \ \ \ \ \ \ \ \ \ \ \left. + \underbrace{4dR_{\vtheta}L\sqrt{2T\log \frac{1}{\gamma}}}_{\text{from E2}} + \underbrace{4U\sqrt{2T\log \frac{1}{\gamma}}}_{\text{from E1}} \right) .
    \end{align*}
    With the same optimal step sizes, we have with probability at least $1 - \gamma$, $0 < \gamma < 1$:
    \begin{small}
    \begin{align*}
        \operatorname{TCH}(\hvtheta; \vw) - \underset{\vtheta \in \Theta}{\min}\operatorname{TCH}(\vtheta; \vw) & \leq
        \sqrt{\frac{20D_{\vtheta}C_{\vtheta}L^2}{\mu_{\vtheta}T}} + \sqrt{\frac{20D_{\vlambda}C_{\vlambda}U^2}{\mu_{\vlambda}T}} + 4(dR_{\vtheta}L + U)\sqrt{\frac{2}{T}\log \frac{1}{\gamma}} ,
    \end{align*}
    \end{small}
    which completes the proof of \Cref{th:3}.
\end{proof}

\subsection{Corollaries \ref{th:4} and \ref{th:5}: expectation bounds, proofs, and analysis on the p-norm instance} \label{sec:ap-a2}
\begin{corollary}[Expectation bound for \Cref{th:4}] \label{th:7}
    Suppose Assumption \ref{assump} holds. Using PGD for both $\vtheta$ and $\vlambda$, with the same optimal step sizes $\eta_{\vtheta}$ and $\eta_{\vlambda}$ as in \Cref{th:4}, both \Cref{algo:1,algo:2} converge as:
    \begin{align}
        \E{\operatorname{TCH}(\hvtheta; \vw)} - \underset{\vtheta \in \Theta}{\min}\operatorname{TCH}(\vtheta; \vw) \leq \frac{2\sqrt{10}dR_{\vtheta}L}{\sqrt{T}} + \frac{2\sqrt{10}\sqrt{m}U}{\sqrt{T}} . \label{ap:eq-65}
    \end{align}
\end{corollary}

\begin{corollary}[Expectation bound for \Cref{th:5}] \label{th:8}
    Suppose Assumption \ref{assump} holds. Using PGD for $\vtheta$ and EG for $\vlambda$, with the same optimal step sizes $\eta_{\vtheta}$ and $\eta_{\vlambda}$ as in \Cref{th:5}, both \Cref{algo:1,algo:2} converge as:
    \begin{align}
        \E{\operatorname{TCH}(\hvtheta; \vw)} - \underset{\vtheta \in \Theta}{\min}\operatorname{TCH}(\vtheta; \vw) \leq \frac{2\sqrt{10}dR_{\vtheta}L}{\sqrt{T}} + \frac{2\sqrt{5}\sqrt{\log m}U}{\sqrt{T}} . \label{ap:eq-67}
      \end{align}
\end{corollary}

\begin{proof}[Proof of \Cref{th:4,th:5,th:7,th:8}]
    Given the general bounds in \Cref{th:3,th:6}, the four corollaries can be easily derived by instantiating the constants associated with the specific choices of $\psi$, namely $\mu_{\vtheta}$, $\mu_{\vlambda}$, $D_{\vtheta}$, $D_{\vlambda}$, $C_{\vtheta}$, and $C_{\vlambda}$. Recall their meanings as follows:
    \begin{itemize}
        \item $\psi_{\vtheta}$ is $\mu_{\vtheta}$-strongly convex w.r.t. norm $\|\cdot\|_{\vtheta}$. $\psi_{\vlambda}$ is $\mu_{\vlambda}$-strongly convex w.r.t. norm $\|\cdot\|_{\vlambda}$.
        \item $B_{\psi_{\vtheta}}(\vtheta^*; \vtheta^{(1)}) \leq D_{\vtheta}$. $B_{\psi_{\vlambda}}(\vlambda^*; \vlambda^{(1)}) \leq D_{\vlambda}$.
        \item $\|\nabla_{\vtheta}\loss(\vtheta^{(t)}, \vlambda^{(t+1)}; \vw)\|_{\vtheta,*}^2 \leq C_{\vtheta}L^2$, $\|\nabla_{\vlambda}\loss(\vtheta^{(t)}, \vlambda^{(t)}; \vw)\|_{\vlambda,*}^2 \leq C_{\vlambda}U^2$, with the same inequalities for the corresponding stochastic gradients, where $\|\cdot\|_*$ denotes the dual norm.
    \end{itemize}
    We now discuss the cases for Projected Gradient Descent (PGD) and Exponentiated Gradient (EG), respectively, resulting in the corresponding terms in the four theorems. In the following, we use $\x$ to denote either $\vtheta$ or $\vlambda$ if some property holds for both of them and $\operatorname{dim}(\x)$ for the dimension of $\x$.
    
    \textbf{Projected Gradient Descent.}
    \begin{itemize}[nosep]
        \item The $l$-2 norm induced $\psi(\x)=\frac{1}{2}\|\x\|_2^2$ is 1-strongly convex w.r.t. $\|\cdot\|_2$. Hence, $\mu_{\x} = 1$.
        
        \item For $B_{\psi_{\x}}(\x^*;\x^{(1)})$, we have:
        \begin{align*}
            B_{\psi_{\x}}(\x^*;\x^{(1)}) & = \frac{1}{2} \| \x^* - \x^{(1)}\|_2^2 \leq \frac{1}{2}(\| \x^* \|_2 + \| \x^{(1)} \|_2)^2 .
        \end{align*}

        Recall that $\|\vtheta\|_{\infty} \leq R_{\vtheta}$, hence, for $\vtheta \in \mathbb{R}^d$, we have $\frac{1}{2}(\| \vtheta^* \|_2 + \| \vtheta^{(1)} \|_2)^2 \leq \frac{1}{2}(2\sqrt{d}R_{\vtheta})^2 = 2dR_{\vtheta}^2$. For $\vlambda \in \Delta_m$, we have $\frac{1}{2}(\| \vlambda^* \|_2 + \| \vlambda^{(1)} \|_2)^2 \leq \frac{1}{2}(\| \vlambda^* \|_1 + \| \vlambda^{(1)} \|_1)^2 \leq \frac{1}{2}(2)^2 = 2$. Hence, $D_{\vtheta}=2dR_{\vtheta}^2$ and $D_{\vlambda}=2$.
        
        \item For $C_{\x}$, since the dual norm of $l$-2 norm is still $l$-2 norm, we have for any gradient $\nabla_{\x}$,
        \begin{align*}
            \|\nabla_{\x}\|_{\x,*}^2 = \|\nabla_{\x}\|_{2}^2 \leq \left( \sqrt{\operatorname{dim}(\x)\|\nabla_{\x}\|_{\infty}^2} \right)^2 \leq \operatorname{dim}(\x)\|\nabla_{\x}\|_{\infty}^2 .
        \end{align*}
        Given that $\|\nabla_{\vtheta}\|_{\infty} \leq L$, $\|\nabla_{\vlambda}\|_{\infty} \leq U$, we have $\|\nabla_{\vtheta}\|_{\vtheta,*}^2 \leq dL^2$, $\|\nabla_{\vlambda}\|_{\vlambda,*}^2 \leq mU^2$. The same inequalities hold for stochastic gradients. Hence, $C_{\vtheta}=d$ and $C_{\vlambda}=m$.
    \end{itemize}

    Therefore, when applying PGD to $\vtheta$, by substituting the above constants into expressions in \Cref{th:3}, we have the optimal step size $\eta_{\vtheta}=\sqrt{\frac{4\mu_{\vtheta}D_{\vtheta}}{5TC_{\vtheta}L^2}} = \sqrt{\frac{4 \cdot 1 \cdot 2dR_{\vtheta}^2}{5TdL^2}} = \sqrt{\frac{8R_{\vtheta}^2}{5TL^2}}$, and the first term in (\ref{eq:15}), (\ref{ap:eq-65}), (\ref{eq:16}), and (\ref{ap:eq-67}):
    \begin{equation*}
        \sqrt{\frac{20D_{\vtheta}C_{\vtheta}L^2}{\mu_{\vtheta}T}} = \sqrt{\frac{20 \cdot 2dR_{\vtheta}^2 \cdot d \cdot L^2}{1 \cdot T}} = \frac{2\sqrt{10}dR_{\vtheta}L}{\sqrt{T}} .
    \end{equation*}
    Similarly, when applying PGD to $\vlambda$, we have the optimal step size $\eta_{\vlambda}=\sqrt{\frac{4\mu_{\vlambda}D_{\vlambda}}{5TC_{\vlambda}U^2}} = \sqrt{\frac{4 \cdot 1 \cdot 2}{5TmU^2}} = \sqrt{\frac{8}{5TmU^2}}$, and the second term in (\ref{eq:15}) and (\ref{ap:eq-65}):
    \begin{equation*}
        \sqrt{\frac{20D_{\vlambda}C_{\vlambda}U^2}{\mu_{\vlambda}T}} = \sqrt{\frac{20 \cdot 2 \cdot m \cdot U^2}{1 \cdot T}} = \frac{2\sqrt{10}\sqrt{m}U}{\sqrt{T}} .
    \end{equation*}
        
    \textbf{Exponentiated Gradient (for $\vlambda \in \Delta_m$ only).}
    \begin{itemize}[nosep]
        \item The negative entropy $\psi(\vlambda)=\sum_{i=1}^m \lambda_i\log\lambda_i$ is 1-strongly convex w.r.t. $\| \cdot \|_1$. Hence, $\mu_{\vlambda}=1$.

        \item For $B_{\psi_{\vlambda}}(\vlambda^*;\vlambda^{(1)})$, with initialization $\vlambda^{(1)} = \left[\frac{1}{m}, \cdots, \frac{1}{m} \right]^\top$, we have:
        \begin{equation*}
            B_{\psi_{\vlambda}}(\vlambda^*;\vlambda^{(1)}) = \sum_{i=1}^m \lambda^*_i \log \frac{\lambda^*_i}{\lambda^{(1)}_i} = \left( \sum_{i=1}^m \lambda^*_i \log \lambda^*_i \right) + \log m \leq \log m .
        \end{equation*}
        Hence, $D_{\vlambda}=\log m$.

        \item For $C_{\vlambda}$, since the dual norm of $l$-1 norm is the infinity norm, we have for any gradient $\nabla_{\vlambda}$, $\|\nabla_{\vlambda}\|_{\infty}^2 \leq U^2$. The same holds for the $\delta_{\vlambda}$. Hence, $C_{\vlambda}=1$.
    \end{itemize}

    Therefore, when applying EG to $\vlambda$, we have the optimal step size $\eta_{\vlambda}=\sqrt{\frac{4\mu_{\vlambda}D_{\vlambda}}{5TC_{\vlambda}U^2}} = \sqrt{\frac{4\log m}{5TU^2}}$, and the second term in (\ref{eq:16}) and (\ref{ap:eq-67}):
    \begin{equation*}
        \sqrt{\frac{20D_{\vlambda}C_{\vlambda}U^2}{\mu_{\vlambda}T}} = \sqrt{\frac{20 \cdot \log m \cdot 1 \cdot U^2}{1 \cdot T}} = \frac{2\sqrt{5}\sqrt{\log m}U}{\sqrt{T}} .
    \end{equation*}
    Till now, the proofs for \Cref{th:4,th:5,th:7,th:8} are completed.
\end{proof}

We also considered the \textbf{p-norm algorithm}, another mirror descent instance induced by the $l$-$p$ norm. We analyzed its case and saw that it is no better than PGD or EG, as derived below.
\begin{itemize}[nosep]
    \item The $l$-$p$ norm induced $\psi(\x)=\frac{1}{2}\|\x\|_p^2$ is $(p-1)$-strongly convex w.r.t. $\|\cdot\|_p$ for $1 < p \leq 2$. Hence, $\mu_{\x}=p-1$.

    \item For $B_{\psi_{\x}}(\x^*;\x^{(1)})$, by definition, $B_{\psi_{\x}}(\x^*;\x^{(1)}) = \frac{1}{2}\|\x^*\|_p^2 - \frac{1}{2}\|\x^{(1)}\|_p^2 - \langle \nabla_{\x^{(1)}} \frac{1}{2}\|\x^{(1)}\|_p^2, \x^* - \x^{(1)}\rangle$.
    To deal with the gradient of the $l$-$p$ norm, consider the gradient w.r.t. the $i$-th element of $\mathbf x$, i.e., $x_i$: 
    \begin{align}
    \nabla_{x_i} \frac{1}{2} \|\mathbf x\|_p^2 = \nabla_{x_i} \frac{1}{2} \left( \sum_{j=1}^d |x_j|^p \right) ^{2/p} = \left( \sum_{j=1}^d |x_j|^p \right)^{2/p - 1} |x_i|^{p-2} x_i. \label{eq:lp3}
    \end{align}
    Then, we have $\langle \nabla_{\mathbf x} \frac{1}{2} \| \mathbf x \|_p^2, \mathbf x\rangle = \left( \sum_{j=1}^d |x_j|^p \right)^{2/p-1} \sum_{i=1}^d |x_i|^p = \| \mathbf x \|_p^2$, and thus
    \begin{align}
        B_{\psi_{\x}}(\x^*;\x^{(1)}) & = \frac{1}{2}\|\x^*\|_p^2 - \frac{1}{2}\|\x^{(1)}\|_p^2 - \langle \nabla_{\x^{(1)}} \frac{1}{2}\|\x^{(1)}\|_p^2, \x^*\rangle + \langle \nabla_{\x^{(1)}} \frac{1}{2}\|\x^{(1)}\|_p^2, \x^{(1)}\rangle \nonumber \\
        & = \frac{1}{2}\|\x^*\|_p^2 - \frac{1}{2}\|\x^{(1)}\|_p^2 - \langle \nabla_{\x^{(1)}} \frac{1}{2}\|\x^{(1)}\|_p^2, \x^*\rangle + \|\x^{(1)}\|_p^2 \nonumber \\
        & = \frac{1}{2}\|\x^*\|_p^2 + \frac{1}{2}\|\x^{(1)}\|_p^2 - \langle \nabla_{\x^{(1)}} \frac{1}{2}\|\x^{(1)}\|_p^2, \x^*\rangle . \label{eq:lp2}
    \end{align}
    Next, we upper bound $|\langle \nabla_{\x^{(1)}} \frac{1}{2}\|\x^{(1)}\|_p^2, \x^*\rangle|$. By \cref{lm:ap-2}, we have $| \langle \nabla_{\mathbf x^{(1)}} \frac{1}{2} \| \mathbf x^{(1)} \|_p^2, \mathbf x^*\rangle | \leq \| \nabla_{\mathbf x^{(1)}} \frac{1}{2} \|\mathbf x^{(1)}\|_p^2 \|_q \| \mathbf x^* \|_p$, where $\frac{1}{p} + \frac{1}{q} = 1$, i.e., $q + p = pq$. Utilizing \cref{eq:lp3}, we have
    \begin{align*}
        \| \nabla_{\mathbf x^{(1)}} \frac{1}{2} \| \mathbf x^{(1)} \|_p^2 \|_q & = \left( \sum_{j=1}^d |x_j|^p \right)^{2/p-1} \left( \sum_{i=1}^d (|x_i|^{p-1})^{q} \right)^{1/q} \nonumber \\
        & = \left( \sum_{j=1}^d |x_j|^p \right)^{2/p-1} \left( \sum_{i=1}^d |x_i|^{p} \right)^{1-1/p} \nonumber \\
        & = \left( \sum_{j=1}^d |x_j|^p \right)^{1/p} = \| \mathbf x^{(1)} \|_p. 
    \end{align*}
    Therefore, we have $| \langle \nabla_{\mathbf x^{(1)}} \frac{1}{2} \|\mathbf x^{(1)}\|_p^2, \mathbf x^* \rangle | \leq \| \mathbf x^{(1)} \|_p \| \mathbf x^* \|_p \leq \max(\|\mathbf x^{(1)}\|_p^2, \|\mathbf x^*\|_p^2)$.

    Finally, continuing from \cref{eq:lp2}, we have
    \begin{align*}
        B_{\psi_{\x}}(\x^*;\x^{(1)}) & \leq \frac{1}{2}\|\x^*\|_p^2 + \frac{1}{2}\|\x^{(1)}\|_p^2 + \max ( \|\x^*\|_p^2, \|\x^{(1)}\|_p^2 ) \leq 2 \max \|\x\|_p^2 .
    \end{align*}
    For $\vtheta$, we have $2 \max \|\vtheta\|_p^2 \leq 2\left( d R_{\vtheta}^p \right)^{\frac{2}{p}} = 2d^{\frac{2}{p}} R_{\vtheta}^2$. For $\vlambda$, we have $2 \max \|\vlambda\|_p^2 \leq 2 \max \|\vlambda\|_1^2 \leq 2$. Hence, $D_{\vtheta} = 2d^{\frac{2}{p}} R_{\vtheta}^2$ and $D_{\vlambda} = 2$.

    \item For $C_{\x}$, since the dual norm of $\|\cdot\|_p$ is $\|\cdot\|_q$ s.t. $\frac{1}{p} + \frac{1}{q}=1$, we have for any gradient $\nabla_{\x}$,
    \begin{align*}
        \|\nabla_{\x}\|_{\x,*}^2 = \|\nabla_{\x}\|_{q}^2 \leq \left( \operatorname{dim}(\x)\|\nabla_{\x}\|_{\infty}^q \right)^{\frac{2}{q}} \leq \operatorname{dim}(\x)^{\frac{2}{q}}\|\nabla_{\x}\|_{\infty}^2 .
    \end{align*}
    Hence, $C_{\vtheta}=d^{\frac{2}{q}}$ and $C_{\vlambda}=m^{\frac{2}{q}}$.
\end{itemize}
Therefore, using p-norm, the bounds for $\vtheta$ and $\vlambda$ respectively are:
\begin{equation*}
    \frac{2\sqrt{10}dR_{\vtheta}L}{(p-1)\sqrt{T}}\ \ \ \ \ \ \ \ \ \ \text{and}\ \ \ \ \ \ \ \ \ \ \frac{2\sqrt{10}m^{\frac{1}{q}}U}{\sqrt{(p-1)T}} .
\end{equation*}
The bound for $\vtheta$ achieves the minimum value when $p=2$ (recall that $1 < p \leq 2$), in which case p-norm reduces to Projected Gradient Descent. For $\vlambda$, we can further upper bound the term by considering:
\begin{equation*}
    \frac{m^{\frac{1}{q}}}{\sqrt{p-1}}=m^{\frac{1}{q}}\sqrt{q-1} \leq m^{\frac{1}{q}}\sqrt{q},
\end{equation*}
whose minimum value is $\sqrt{2e\log m}$, obtained when $q=2\log m$. Substituting back, the optimal bound for $\vlambda$ is $\frac{4\sqrt{5e\log m}U}{\sqrt{T}}$, which is no better than Exponentiated Gradient.

\subsection{Comparison with previous results and discussion on sample complexity} \label{sec:ap-a3}

In previous works adopting OMD-based methods with EG for $\vlambda$~\citep{sagawa20,he2024robust}, the convergence rate is derived as $\mathcal O(m\sqrt{\log m / T})$. The additional $m$ factor compared to our bound in \cref{th:5} is induced by their sampling \textit{one} data point in each round. Recall that given the composite loss $\loss(\vtheta, \vlambda) = \sum_{i=1}^m \lambda_if_i(\vtheta)$ (the preference weight $\vw$ is omitted for simplicity), we want to build unbiased estimators for $\nabla_{\vlambda}\loss(\vtheta, \vlambda) = (f_1(\vtheta), \ldots, f_m(\vtheta))^T$. The sampling strategy in these works is: (1) sample an objective index $i \sim [m]$ uniformly, (2) sample one data point $x$ from the dataset of objective $i$, i.e., $x \sim \mathcal D_i$, and (3) construct the stochastic gradient $\delta_{\vlambda}\loss(\vtheta, \vlambda)$:
\begin{align*}
    (\delta_{\vlambda})_i = m f_i(\vtheta,x) \quad \text{and} \quad (\delta_{\vlambda})_j = 0, \quad \forall j \neq i
\end{align*}
Here, we use $\delta_{\vlambda}$ to denote $\delta_{\vlambda}\loss(\vtheta, \vlambda)$ for simplicity. The constant $m$ in the non-zero entry is necessary for this stochastic gradient to be unbiased: for each entry $j$ in $\delta_{\vlambda}$,
\begin{align*}
    \mathbb E \left[ (\delta_{\vlambda})_j \right] & = \frac{m-1}{m} \times 0 + \frac{1}{m} \times m \mathbb E_{x \sim D_j} \left [ f_j(\vtheta,x) \right ] \nonumber \\
    & = f_j(\vtheta) = (\nabla_{\vlambda} \loss(\vtheta, \vlambda))_j .
\end{align*}

Due to the constant $m$, the upper bound of $\| \delta_{\vlambda} \|_{\infty}$ are amplified by a factor of $m$. This is equivalent to having $U \rightarrow mU$ in \cref{th:5,th:8}, and hence the additional $m$ in these previous bounds. In contrast, our strategy samples from all objectives at each step and constructs $(\delta_{\vlambda})_i = f_i(\vtheta;B_i)$, where $B_i$ is the random batch sampled from the data of objective $i$. This stochastic gradient is unbiased without a coefficient of $m$, and thus the optimality.

Full sampling is adopted by most MOL methods, as we usually perform offline learning with a pre-collected training set, and many gradient manipulation methods require full gradient information for conflict resolving. Therefore, our analysis provides a more suitable convergence rate for OMD-based methods for MOL.

Finally, while full sampling yields a tighter iteration complexity in the offline setting, we point out that in online/active/continuous learning, where one desires lower sample complexity, there is no optimal strategy between the two sampling methods. Instead, they offer different trade-offs in the sample complexity bound:
\begin{align*}
    \text{Single-objective $\mathcal O(1)$ sampling: } \mathcal O(\frac{R_{\vtheta}^2 + m^2\log m + (R_{\vtheta}^2 + m^2) \log \frac{1}{\gamma}}{\epsilon^2}) , \\
    \text{Full $\mathcal O(m)$ sampling: } \mathcal O(\frac{mR_{\vtheta}^2 + m \log m + (mR_{\vtheta}^2 + m) \log \frac{1}{\gamma}}{\epsilon^2}) ,
\end{align*}
where $\epsilon$ is the optimization error and constants $d$, $L$, and $U$ are omitted. These sampling complexity bounds can be easily derived from the iteration complexity bounds and the number of samples drawn in each round. As shown, $\mathcal O(1)$ sampling avoids multiplication between $m$ and $R_{\vtheta}$, while $\mathcal O(m)$ sampling avoids the quadratic dependency in $m$.

\section{Full experiment details and results} \label{sec:ap-b}

Our code is available at \url{https://github.com/uiuctml/AdaOMD-TCH}.

\subsection{Synthetic experiments} \label{sec:ap-b1}

Experiments are conducted using an open-source MOL library~\citep{zhang2024libmoon} with implemented synthetic problems and some methods. Uncovered methods are added following the corresponding official codes.

\textbf{Hyperparameters and random seeds.} All experiments run for 1000 epochs. Those of linear scalarization (LS), vanilla Tchebycheff scalarization (TCH), PMTL, EPO, and FERERO use $\eta_{\vtheta}=0.01$. All experiments of (Ada)OMD-TCH, STCH, ExcessMTL, MGDA, CR-MOGM, and Moco use $\eta_{\vtheta}=0.02$. This is to ensure the same scale for updates, since there is an additional coefficient for the gradient term of $\vtheta$ from the dynamic weights, which is roughly of scale 0.5 given that the number of objectives is $2$. Nevertheless, we did not observe significant differences when using either rate on the same method.

In terms of method-specific hyperparameters, all experiments of (Ada)OMD-TCH, either using PGD or EG for $\vlambda$, as well as ExcessMTL, use $\eta_{\vlambda}=1.0$. For STCH, we use $\mu=0.01$. The momentum parameter $\alpha_t$ in CR-MOGM is computed as suggested in Corollary 1 of the original work. For Moco, the regularization constant $\rho$ is set as $0.01$ and the step size for the dynamic weight, similar to $\eta_{\vlambda}$, is set as $1.0$. All results are averages over three random seeds, 0, 19, and 42.

\textbf{Full results.} Figures for all methods on all synthetic problems are presented in \cref{fig:4,fig:8,fig:10}. (Ada)OMDgd-TCH stands for applying PGD to $\vlambda$, and ``eg'' for EG. Similar results as discussed in the main body are observed across different problems. Additionally, compared to the PGD instance, the EG ones are sometimes less effective in recovering the exact PO solutions found by TCH. We also report hypervolume results in \cref{tab:hv}, where TCH achieves the best hypervolume, and AdaOMDgd-TCH ranks the second on five of the seven problems, complementing our findings in the figures.

\textbf{(Ada)OMD-TCH as a middle ground between LS and TCH.} To provide an intuitive illustration for this interpretation in \cref{sec:3-3}, we apply OMDgd-TCH on VLMOP2 with different $\eta_{\vlambda}$. As shown in \cref{fig:9}, when $\eta_{\vlambda}$ reduces to near 0, the results resemble LS, and increasing it recovers the results of TCH.

\begin{figure*}[b]
    \centering
    \includegraphics[width=\textwidth]{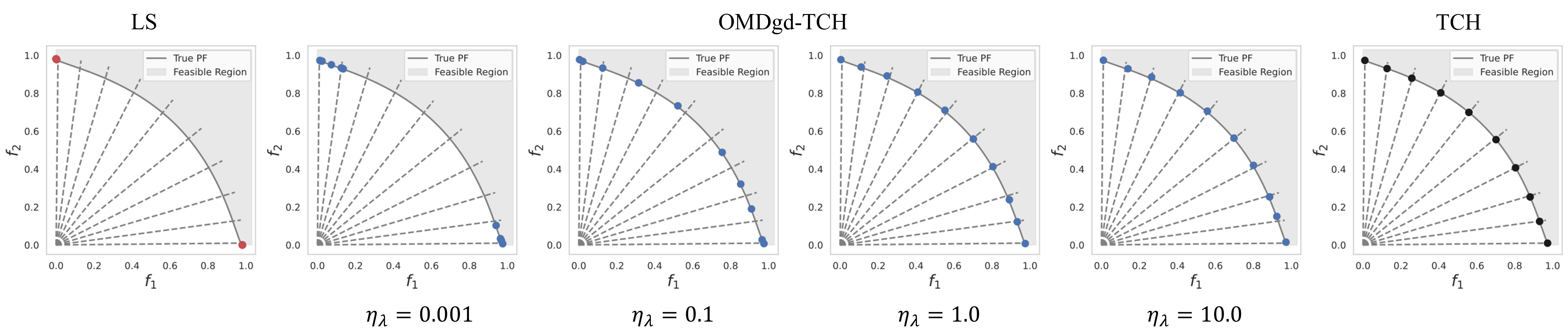}
    \caption{\textbf{An example of (Ada)OMD-TCH lying between LS and TCH.}}
    \vskip -0.1in
    \label{fig:9}
\end{figure*}

\subsection{Experiments on fair federated learning} \label{sec:ap-b2}

\textbf{Data settings.} The detailed approach to simulate client data heterogeneity is as follows:
\begin{itemize}[itemsep=-0.2em, topsep=0.0em, leftmargin=1.0em]
    \item Rotation: Both datasets are equally and randomly split into 10 subsets for 10 clients. For MNIST, 7 subsets are not rotated, 2 subsets are rotated by 90 degrees, and 1 subset is rotated by 180 degrees. For CIFAR10, 7 subsets are unchanged, and 3 subsets are rotated by 180 degrees. For the 2-client preference-guided experiment, data are equally and randomly split into two subsets, each rotated by 0 and 90 degrees.
    \item Partial Class with $C=2$ or $5$: Both datasets are first equally and randomly split into 10 subsets. For each client, $C$ classes are randomly chosen, and the images in the corresponding subset of these classes are assigned to the client.
\end{itemize}

\begin{table}[t]
    \caption{\textbf{Method-specific hyperparameter values.}}
    \label{tab:2}
    \resizebox{\textwidth}{!}{
    \begin{tabular}{@{}lcccccccccl@{}}
    \toprule
                       &                                                 &                                              &                                                                                                      & \multicolumn{6}{c}{Best Value}                                                                                                                                                                                                                                                                                                                                                                            &  \\ \cmidrule(l){5-11} 
    \multirow{-2}{*}{} & \multirow{-2}{*}{Method}                        & \multirow{-2}{*}{Hyperparam}                 & \multirow{-2}{*}{Attempted Values}                                                                   & \begin{tabular}[c]{@{}c@{}}MNIST\\ Rotation\end{tabular} & \begin{tabular}[c]{@{}c@{}}MNIST\\ Partial Class C=2\end{tabular} & \begin{tabular}[c]{@{}c@{}}MNIST\\ Partial Class C=5\end{tabular} & \begin{tabular}[c]{@{}c@{}}CIFAR10\\ Rotation\end{tabular} & \begin{tabular}[c]{@{}c@{}}CIFAR10\\ Partial Class C=2\end{tabular} & \begin{tabular}[c]{@{}c@{}}CIFAR10\\ Partial Class C=5\end{tabular} &  \\ \midrule
                       & qFFL                                            & $q$                                          & 0.1, 0.5, 1.0, 5.0, 10.0                                                                             & 0.1                                                      & 0.1                                                               & 0.1                                                               & 0.1                                                        & 0.1                                                                 & 0.1                                                                 &  \\
                       & \cellcolor[HTML]{EFEFEF}PropFair                & \cellcolor[HTML]{EFEFEF}$M$                  & \cellcolor[HTML]{EFEFEF}2.0, 3.0, 4.0, 5.0                                                           & \cellcolor[HTML]{EFEFEF}2.0                              & \cellcolor[HTML]{EFEFEF}2.0                                       & \cellcolor[HTML]{EFEFEF}2.0                                       & \cellcolor[HTML]{EFEFEF}3.0                                & \cellcolor[HTML]{EFEFEF}4.0                                         & \cellcolor[HTML]{EFEFEF}2.0                                         &  \\
                       & FedMGDA+                                        & $\epsilon$                                   & 0.05, 0.1, 0.5, 1.0                                                                                  & 0.5                                                      & 0.5                                                               & 0.05                                                              & 0.1                                                        & 0.1                                                                 & 0.1                                                                 &  \\
                       & \cellcolor[HTML]{EFEFEF}                        & \cellcolor[HTML]{EFEFEF}$\alpha$             & \cellcolor[HTML]{EFEFEF}0.1, 0.2, 0.5                                                                & \cellcolor[HTML]{EFEFEF}0.2                              & \cellcolor[HTML]{EFEFEF}0.2                                       & \cellcolor[HTML]{EFEFEF}0.2                                       & \cellcolor[HTML]{EFEFEF}0.2                                & \cellcolor[HTML]{EFEFEF}0.1                                         & \cellcolor[HTML]{EFEFEF}0.2                                         &  \\
    \multirow{-2}{*}{} & \multirow{-2}{*}{\cellcolor[HTML]{EFEFEF}FedFV} & \cellcolor[HTML]{EFEFEF}$\tau$               & \cellcolor[HTML]{EFEFEF}0, 1, 3, 10                                                                  & \cellcolor[HTML]{EFEFEF}1                                & \cellcolor[HTML]{EFEFEF}1                                         & \cellcolor[HTML]{EFEFEF}1                                         & \cellcolor[HTML]{EFEFEF}1                                  & \cellcolor[HTML]{EFEFEF}0                                           & \cellcolor[HTML]{EFEFEF}1                                           &  \\
                       & STCH(TERM)                                      & $\mu$                                        & \begin{tabular}[c]{@{}c@{}}0.01, 0.03, 0.1, 0.3, \\ 1.0, 3.0, 10.0\end{tabular}                      & 0.01                                                     & 0.01                                                              & 0.01                                                              & 0.01                                                       & 0.03                                                                & 0.03                                                                &  \\
                       & \cellcolor[HTML]{EFEFEF}ExcessMTL          & \cellcolor[HTML]{EFEFEF}$0.1\eta_{\vlambda}$ &                                                                                                      & \cellcolor[HTML]{EFEFEF}1.0                             & \cellcolor[HTML]{EFEFEF}1.0                                      & \cellcolor[HTML]{EFEFEF}1.0                                      & \cellcolor[HTML]{EFEFEF}1.0                                & \cellcolor[HTML]{EFEFEF}0.1                                        & \cellcolor[HTML]{EFEFEF}0.1                                        &  \\
                       & AdaExcessMTL          & $0.1\eta_{\vlambda}$ &                                                                                                      & 0.3                             & 1.0                                      & 1.0                                      & 0.1                                & 0.03                                        & 0.001                                        &  \\
                       & \cellcolor[HTML]{EFEFEF}OMDgd-TCH(AFL)          & \cellcolor[HTML]{EFEFEF}$0.1\eta_{\vlambda}$ &                                                                                                      & \cellcolor[HTML]{EFEFEF}0.03                             & \cellcolor[HTML]{EFEFEF}0.01                                      & \cellcolor[HTML]{EFEFEF}0.01                                      & \cellcolor[HTML]{EFEFEF}1.0                                & \cellcolor[HTML]{EFEFEF}0.01                                        & \cellcolor[HTML]{EFEFEF}0.01                                        &  \\
                       & AdaOMDgd-TCH                                    & $0.1\eta_{\vlambda}$                         &                                                                                                      & 0.03                                                     & 1.0                                                               & 1.0                                                               & 0.03                                                       & 0.003                                                               & 0.3                                                                 &  \\
                       & \cellcolor[HTML]{EFEFEF}OMDeg-TCH               & \cellcolor[HTML]{EFEFEF}$0.1\eta_{\vlambda}$ &                                                                                                      & \cellcolor[HTML]{EFEFEF}0.3                              & \cellcolor[HTML]{EFEFEF}0.3                                       & \cellcolor[HTML]{EFEFEF}0.1                                       & \cellcolor[HTML]{EFEFEF}1.0                                & \cellcolor[HTML]{EFEFEF}0.03                                        & \cellcolor[HTML]{EFEFEF}0.1                                         &  \\
                       & AdaOMDeg-TCH                                    & $0.1\eta_{\vlambda}$                         & \multirow{-6}{*}{\begin{tabular}[c]{@{}c@{}}0.001, 0.003, 0.01, \\ 0.03, 0.1, 0.3, 1.0\end{tabular}} & 0.1                                                      & 0.3                                                               & 0.03                                                              & 0.3                                                        & 0.03                                                                & 0.03                                                                 &  \\ \bottomrule
    \end{tabular}
    }
\end{table}

\textbf{Hyperparameters and random seeds.} For all MNIST experiments, the hidden layer of the fully connected neural network is of size $h=200$. We go through $T=300$ communication rounds, with each client executing $\tau=10$ epochs of full-batch local updates in each round. For all CIFAR10 experiments, we train the ResNet18 model for $T=45$ communication rounds, with each client doing $\tau=1$ epoch of local update with a batch size of 100 in each round.

All methods' step sizes for model parameters are set as $\eta_{\vtheta}=0.1$. For other method-specific hyperparameters, we try values reported in their original paper and select the best ones that yield the smallest worst client training loss, in order to match our methods for a fair comparison. The attempted values and the best ones are listed in Table \ref{tab:2}. Here, $\eta_{\vlambda}$ for (Ada)OMD-TCH is reported as $0.1\eta_{\vlambda}$, which is what we used in practical codes. This is because under a balanced tradeoff, $w_i = 1/m = 0.1$ for all objectives/clients and can be integrated into $\eta_{\vlambda}$. For the 2-client preference-guided experiment, we directly use the best hyperparameter of the corresponding method in the CIFAR10 Rotation setting.

All evaluation results are averaged over 10 random seeds: 0, 25, 37, 42, 53, 81, 119, 1010, 1201, 2003.

\textbf{Machines and training time.} We use a Linux machine with two AMD EPYC 7352 24-Core Processors and eight NVIDIA RTX A6000 GPUs. \cref{tab:4} reports the training time per FL round of different methods averaged over 10 seeds. Comparing AdaOMD-TCH and OMD-TCH, AdaExcessMTL and ExcessMTL, we see that the adaptive conversion does not impose any significant overhead. As discussed in \cref{sec:3-2}, this is mainly because of the small size of $\mathcal P^{(t)}$ in practice. Additionally, since OMD-TCH requires strictly no more computation than ExcessMTL, the discrepancy between them is likely due to scheduling deviation of the servers. Therefore, most methods with small differences can be considered similar in training time.

\begin{table}[b]
    \caption{\textbf{Training time per round.}}
    \label{tab:4}
    \centering
    \resizebox{0.9\textwidth}{!}{
    \begin{tabular}{@{}ccccccc@{}}
    \toprule
    \multirow{2}{*}{Method} & \multicolumn{3}{c}{MNIST}                        & \multicolumn{3}{c}{CIFAR10}                      \\ \cmidrule(l){2-7}
                            & Rotation & Partial Class C=2 & Partial Class C=5 & Rotation & Partial Class C=2 & Partial Class C=5 \\ \midrule
    LS(FedAvg)              & 0.338    & 0.206             & 0.236             & 9.748    & 2.142             & 6.168             \\
    qFFL                    & 0.591    & 0.181             & 0.482             & 7.823    & 2.123             & 5.450             \\
    PropFair                & 0.733    & 0.559             & 0.620             & 8.033    & 2.059             & 5.247             \\
    FedMGDA+                & 1.360    & 0.535             & 0.276             & 7.920    & 2.118             & 3.272             \\
    FedFV                   & 0.341    & 0.233             & 0.607             & 7.741    & 2.438             & 5.937             \\
    TCH                     & 0.329    & 0.227             & 0.266             & 8.191    & 1.480             & 3.170             \\
    STCH(TERM)              & 0.590    & 0.546             & 0.441             & 7.939    & 2.537             & 5.118             \\
    EPO                     & 0.377    & 0.138             & 0.180             & 8.656    & 1.738             & 4.241             \\
    FERERO                  & 0.462    & 0.227             & 0.263             & 8.736    & 2.000             & 4.492             \\
    ExcessMTL               & 0.344    & 0.122             & 0.197             & 8.576    & 1.562             & 4.327             \\
    AdaExcessMTL            & 0.442    & 0.142             & 0.175             & 7.881    & 1.389             & 4.239             \\
    OMDgd-TCH(AFL)          & 0.365    & 0.206             & 0.266             & 8.042    & 2.420             & 5.273             \\
    AdaOMDgd-TCH            & 0.591    & 0.226             & 0.278             & 7.900    & 3.022             & 5.527             \\
    OMDeg-TCH               & 0.348    & 0.209             & 0.270             & 8.313    & 2.342             & 6.017             \\
    AdaOMDeg-TCH            & 0.564    & 0.460             & 0.569             & 7.995    & 2.431             & 5.508             \\ \bottomrule
    \end{tabular}
    }
\end{table}

\textbf{Full results.} Figure \ref{fig:5} includes the training and test curves of the maximum client loss for LS, TCH, and (Ada)OMD-TCH, complementing \cref{fig:new-in} in the main text. Figure \ref{fig:6} includes the scatter plots of average test accuracy and test accuracy parity, also complementing \cref{fig:new-in}. Figure \ref{fig:7} visualizes the benchmark across all methods, and Table \ref{tab:3} lists the corresponding numerical results.

\begin{table}[t]
    \caption{\textbf{Comparison between LS, AdaLS, TCH, AdaTCH, and AdaOMDgd-TCH.}}
    \label{tab:ablation}
    \centering
    \begin{subtable}{0.49\textwidth}
    \caption{\footnotesize{Average accuracy $\uparrow$}}
        \resizebox{\textwidth}{!}{
            \begin{tabular}{@{}cccccc@{}}
            \toprule
            Method              & LS     & AdaLS  & TCH    & AdaTCH & AdaOMD-TCH \\ \midrule
            Rotation            & 66.269 & \textit{66.169} $\downarrow$ & 62.556 & \textit{65.257} $\uparrow$ & 66.218       \\
            Partial Class $C=2$ & 38.925 & \textit{38.095} $\downarrow$ & 20.935 & \textit{21.150} $\uparrow$ & 36.765       \\
            Partial Class $C=5$ & 55.626 & \textit{55.540} $\downarrow$ & 36.242 & \textit{47.702} $\uparrow$ & 53.422       \\ \bottomrule
            \end{tabular}
        }
    \end{subtable}
    \begin{subtable}{0.49\textwidth}
    \caption{\footnotesize{Accuracy parity $\downarrow$}}
        \resizebox{\textwidth}{!}{
            \begin{tabular}{@{}cccccc@{}}
            \toprule
            Method              & LS     & AdaLS  & TCH    & AdaTCH & AdaOMD-TCH \\ \midrule
            Rotation            & 6.012  & \textit{6.283} $\uparrow$  & 4.727  & \textit{1.920} $\downarrow$ & 3.592        \\
            Partial Class $C=2$ & 21.801 & \textit{22.658} $\uparrow$ & 27.199 & \textit{16.345} $\downarrow$ & 18.422       \\
            Partial Class $C=5$ & 8.610  & \textit{8.565} $\downarrow$ & 16.141 & \textit{7.204} $\downarrow$ & 6.823        \\ \bottomrule
            \end{tabular}
        }
    \end{subtable}
\end{table}

\textbf{Design components.} The motivations for the three components of AdaOMD-TCH are as follows:
\begin{itemize}[topsep=0.0em, leftmargin=1.0em, itemsep=0.0em]
    \item \textit{Tchebycheff scalarization} ensures (1) recovering the non-convex PF and (2) locating solutions with specific trade-offs (\cref{th:1}), which are important for preference-guided MOL and many methods fail to satisfy, such as LS and gradient manipulation methods without additional design.
    \item \textit{OMD updates} smooth out the one-hot update of TCH to avoid training oscillation and stagnation.
    \item \textit{The adaptive conversion} considers only the trajectory PO iterates to improve solution optimality over traditional conversion while retaining the theoretical convergence guarantees.
\end{itemize}
Although the adaptive conversion is designed for online algorithms such as OMD and thus does not apply, in principle, to LS and TCH, we still investigate the performance of AdaLS and AdaTCH to further justify the use of Tchebycheff scalarization and OMD updates. 

\cref{tab:ablation} presents the comparison on CIFAR over 10 seeds. In terms of accuracy, the adaptive conversion does not help for LS. AdaTCH improves over TCH, but is still outperformed by AdaOMD-TCH by a large amount, not addressing the unstable and slow one-hot update as effectively as OMD. Nevertheless, this improvement indicates that our proposed adaptive scheme is more broadly applicable. 

In terms of fairness, AdaLS is hardly improved and is still outperformed by most TCH-based methods. This is because the adaptive conversion does not change the property of LS, i.e., LS does not aim for equal (weighted) losses like TCH. The fairness of AdaTCH is improved, but its key drawback is still the suboptimal accuracy.

In summary, AdaTCH < AdaOMD-TCH in average accuracy demonstrates the necessity of applying OMD updates, and AdaLS > most TCH methods in accuracy parity further confirms the advantage of Tchebycheff scalarization. Applying the adaptive conversion to a broader range of methods, such as TCH, may be an interesting direction, which we leave for future studies.

\begin{table}[b]
    \caption{\textbf{Ablation on different evaluation data for iterate comparison.}}
    \label{tab:ablation2}
    \centering
    \resizebox{0.6\textwidth}{!}{
    \begin{tabular}{@{}c|ccc|ccc@{}}
    \toprule
    \multirow{2}{*}{Method}      & \multicolumn{3}{c|}{AdaOMDgd-TCH}          & \multicolumn{3}{c}{AdaOMDeg-TCH}           \\
                                 & mini         & full         & val          & mini         & full         & val          \\ \midrule
    Avg. Accuracy $\uparrow$     & 66.218 & 66.061 & 66.386 & 65.885 & 65.913 & 66.260 \\
    Agnostic Loss $\downarrow$   & 1.148  & 1.128  & 1.185  & 1.156  & 1.130  & 1.171  \\
    Accuracy Parity $\downarrow$ & 3.592  & 3.503  & 4.022  & 3.688  & 3.372  & 3.819  \\ \bottomrule
    \end{tabular}
    }
\end{table}

\textbf{Evaluation data for iterate comparison.} To demonstrate the robustness of the adaptive conversion against different evaluation data, we conduct an ablation on the CIFAR rotation setting, where the objective values, i.e., client losses, for dominance comparison use:
\begin{itemize}[topsep=0.0em, leftmargin=1.0em, itemsep=0.0em]
    \item mini-batch estimates (mini): This corresponds to the current results in \cref{tab:3d}.
    \item full training loss (full): Each client is evaluated on its full local training data.
    \item held-out validation estimates (val): Since CIFAR10 does not feature a validation set, we split the test set in half, using one for validation and the other for testing.
\end{itemize}
We directly use the reported hyperparameters and seeds. As shown in \cref{tab:ablation2}, comparing the full and mini columns, one sees that using full training loss indeed improves agnostic loss, which is the goal of Tchebycheff scalarization, and accuracy parity. Using the validation set slightly worsens the two but increases the average accuracy. Most importantly, switching to either evaluation barely changes the relative performance of AdaOMD-TCH compared to other methods, e.g., they still achieve the best agnostic loss. This validates that the adaptive conversion is robust against different dominance comparison data. 

\begin{table}[t]
    \caption{\textbf{Compatibility of (Ada)OMD-TCH with using objective ratios for weight updates.}}
    \label{tab:ratio}
    \centering
    \resizebox{0.9\textwidth}{!}{
    \begin{tabular}{@{}cccccccc@{}}
    \toprule
    Method                      & LS    & TCH   & STCH  & OMDgd-TCH      & AdaOMDgd-TCH   & OMDeg-TCH   & AdaOMDeg-TCH \\ \midrule
    Agnostic Ratio $\downarrow$ & 0.656 & 0.648 & 0.627 & 0.647          & \textbf{0.465} & 0.651       & {\ul 0.465}  \\
    Ratio Parity $\downarrow$   & 0.087 & 0.066 & 0.068 & \textbf{0.024} & 0.043          & {\ul 0.029} & 0.042        \\ \bottomrule
    \end{tabular}
    }
\end{table}

\textbf{Objectives of different scales.} One practical trick to resolve the bias induced by scale differences is to, when updating the dynamic weights, replace $f_i(\theta^{(t)})$ with the ratio $f_i(\theta^{(t)}) / f_i(\theta^{(0)}) \in (0,1]$, where $f_i(\theta^{(0)})$ is the initial value of the $i$-th objective. Thereby, the weight allocation process examines not the raw objectives, but how much each has improved, aiming for a preference-specified degree of training across objectives. This trick has been adopted by previous works to normalize excess risks~\citep{he2024robust,chen2018gradnorm}.

To demonstrate the compatibility of our methods with this trick, we apply it to TCH, STCH, and (Ada)OMD-TCH on the CIFAR rotation setting. By using the ratios, we evaluate the agnostic ratio, i.e., the worst test loss ratio across clients, and the ratio parity, i.e., the standard deviation of test loss ratios. We use the current hyperparameters and seeds. As shown in \cref{tab:ratio}, the results are consistent with our current findings. Specifically, we still have (1) TCH is better than LS, (2) OMD-TCH is often better than TCH, and (3) AdaOMD-TCH greatly improves optimality over OMD-TCH, as reflected by the smaller agnostic ratio, i.e., the performance of the least-trained objective, with a slight increase in ratio parity, which is still better than other methods. This demonstrates the compatibility of our methods with the ratio trick and, thus, potentially with different scales of objectives.

\newpage

\begin{table}[h]
    \caption{\textbf{Hypervolumes of different methods on synthetic problems.}}
    \label{tab:hv}
    \centering
    \resizebox{0.9\textwidth}{!}{
    \begin{tabular}{@{}cccccccc@{}}
    \toprule
    Problem      & VLMOP2               & F1                   & F2                   & F3                   & F4                   & F5                   & F6                   \\ \midrule
    LS           & 0.042±0.001          & 0.938±0.065          & 1.005±0.005          & 0.967±0.018          & 1.002±0.013          & 1.004±0.019          & 0.989±0.038          \\
    TCH          & \textbf{0.295±0.000} & \textbf{1.011±0.009} & \textbf{1.021±0.009} & \textbf{1.015±0.003} & \textbf{1.024±0.016} & \textbf{1.027±0.003} & \textbf{1.023±0.006} \\
    STCH         & 0.291±0.000          & 0.986±0.007          & 0.995±0.007          & 0.966±0.004          & 1.004±0.014          & 1.003±0.007          & 1.014±0.007          \\
    OMDgd-TCH    & 0.289±0.001          & 0.970±0.007          & 0.990±0.003          & 0.950±0.003          & 0.973±0.003          & 0.983±0.010          & 1.003±0.004          \\
    AdaOMDgd-TCH & {\ul 0.292±0.000}    & {\ul 0.992±0.005}    & {\ul 1.008±0.004}    & 0.970±0.003          & {\ul 1.006±0.013}    & {\ul 1.014±0.010}    & {\ul 1.015±0.004}    \\
    OMDeg-TCH    & 0.269±0.006          & 0.960±0.010          & 0.992±0.004          & 0.926±0.024          & 0.942±0.013          & 0.962±0.016          & 0.977±0.018          \\
    AdaOMDeg-TCH & 0.270±0.006          & 0.981±0.009          & 1.006±0.004          & 0.944±0.021          & 0.979±0.015          & 0.984±0.026          & 0.990±0.014          \\
    ExcessMTL    & 0.119±0.011          & 0.931±0.022          & 0.965±0.026          & 0.986±0.012          & 0.994±0.005          & 0.993±0.018          & 0.989±0.016          \\
    EPO          & 0.283±0.003          & 0.968±0.003          & 0.953±0.007          & 0.966±0.003          & 0.957±0.014          & 0.965±0.024          & 0.952±0.020          \\
    FERERO       & 0.289±0.000          & {\ul 1.010±0.008}    & 1.001±0.001          & {\ul 0.993±0.005}    & 0.996±0.009          & 1.002±0.014          & 0.995±0.009          \\
    MGDA         & 0.232±0.006          & 0.963±0.015          & 0.970±0.015          & 0.981±0.013          & 0.970±0.014          & 0.956±0.018          & 0.994±0.010          \\
    CR-MOGM      & 0.231±0.006          & 0.963±0.015          & 0.971±0.014          & 0.982±0.013          & 0.970±0.014          & 0.958±0.017          & 0.993±0.010          \\
    Moco         & 0.255±0.011          & 0.956±0.023          & 0.977±0.014          & 0.979±0.021          & 0.992±0.034          & 0.981±0.024          & 0.989±0.035          \\
    PMTL         & 0.208±0.028          & 0.883±0.021          & 0.892±0.022          & 0.884±0.004          & 0.887±0.019          & 0.880±0.043          & 0.884±0.011          \\ \bottomrule
    \end{tabular}
    }
\end{table}

\begin{table}[h]
    \caption{\textbf{Numerical results of all methods.}}
    \label{tab:3}
    \begin{subtable}{\textwidth}
    \caption{\footnotesize{MNIST Rotation}}
        \resizebox{\textwidth}{!}{
        \begin{tabular}{@{}ccccccccccccccccc@{}}
        \toprule
        Method          & LS(FedAvg)   & qFFL         & PropFair     & FedMGDA+          & FedFV                 & TCH          & STCH               & EPO          & FERERO       & ExcessMTL    & AdaExcessMTL         & OMDgd-TCH(AFL) & AdaOMDgd-TCH & OMDeg-TCH    & AdaOMDeg-TCH &  \\ \midrule
        Avg. Accuracy   & 92.450±0.234 & 91.896±0.226 & 90.622±0.194 & 92.416±0.210      & \textbf{94.501±0.179} & 92.579±0.270 & {\ul 92.977±0.230} & 92.604±0.302 & 92.443±0.232 & 88.566±0.279 & 92.520±0.235         & 88.597±0.241   & 92.593±0.236 & 88.664±0.220 & 92.583±0.185 &  \\
        Agnostic Loss   & 0.639±0.028  & 0.675±0.027  & 0.742±0.025  & {\ul 0.322±0.013} & \textbf{0.302±0.027}  & 0.342±0.027  & 0.409±0.027        & 0.350±0.029  & 0.640±0.028  & 0.503±0.013  & 0.326±0.013          & 0.514±0.024    & 0.341±0.019  & 0.518±0.026  & 0.334±0.019  &  \\
        Accuracy Parity & 4.796±0.272  & 5.021±0.262  & 5.454±0.152  & {\ul 1.153±0.168} & 1.646±0.335           & 1.397±0.292  & 2.385±0.247        & 1.467±0.288  & 4.807±0.271  & 1.637±0.106  & \textbf{1.117±0.148} & 1.658±0.257    & 1.296±0.211  & 1.680±0.238  & 1.201±0.227  &  \\ \bottomrule
        \end{tabular}
        }
    \end{subtable}
    \begin{subtable}{\textwidth}
    \caption{\footnotesize{MNIST Partial Class C=2}}
        \resizebox{\textwidth}{!}{
        \begin{tabular}{@{}ccccccccccccccccc@{}}
        \toprule
        Method          & LS(FedAvg)   & qFFL         & PropFair     & FedMGDA+             & FedFV                 & TCH                & STCH         & EPO          & FERERO       & ExcessMTL    & AdaExcessMTL      & OMDgd-TCH(AFL) & AdaOMDgd-TCH & OMDeg-TCH    & AdaOMDeg-TCH &  \\ \midrule
        Avg. Accuracy   & 92.330±1.429 & 90.264±1.714 & 91.274±1.505 & 92.236±1.405         & \textbf{93.675±1.331} & {\ul 93.250±1.468} & 92.592±1.558 & 91.866±2.986 & 92.301±1.467 & 91.354±1.368 & 92.435±1.514      & 90.965±1.502   & 92.387±1.616 & 90.133±1.850 & 92.209±1.580 &  \\
        Agnostic Loss   & 0.534±0.106  & 0.659±0.118  & 0.573±0.105  & \textbf{0.386±0.055} & {\ul 0.397±0.094}     & 0.409±0.067        & 0.477±0.098  & 0.517±0.152  & 0.545±0.111  & 0.489±0.047  & 0.404±0.066       & 0.549±0.092    & 0.424±0.099  & 0.569±0.111  & 0.435±0.105  &  \\
        Accuracy Parity & 4.574±1.469  & 6.025±1.696  & 4.987±1.620  & \textbf{2.629±0.489} & 3.318±0.868           & 3.685±0.982        & 3.989±1.201  & 4.273±1.772  & 4.700±1.500  & 3.530±0.673  & {\ul 2.753±0.523} & 4.373±1.027    & 3.153±0.817  & 4.563±1.095  & 3.084±0.865  &  \\ \bottomrule
        \end{tabular}
        }
    \end{subtable}
    \begin{subtable}{\textwidth}
        \caption{\footnotesize{MNIST Partial Class C=5}}
        \resizebox{\textwidth}{!}{
        \begin{tabular}{@{}ccccccccccccccccc@{}}
        \toprule
        Method          & LS(FedAvg)   & qFFL         & PropFair     & FedMGDA+             & FedFV                 & TCH                & STCH         & EPO          & FERERO       & ExcessMTL    & AdaExcessMTL & OMDgd-TCH(AFL) & AdaOMDgd-TCH      & OMDeg-TCH    & AdaOMDeg-TCH &  \\ \midrule
        Avg. Accuracy   & 93.907±0.418 & 93.008±0.514 & 92.865±0.462 & 93.944±0.480         & \textbf{95.017±0.515} & {\ul 94.690±0.317} & 94.140±0.405 & 93.402±0.942 & 93.891±0.421 & 92.424±0.460 & 94.063±0.444 & 92.256±0.501   & 94.239±0.440      & 92.261±0.505 & 94.021±0.432 &  \\
        Agnostic Loss   & 0.282±0.034  & 0.320±0.034  & 0.328±0.030  & 0.267±0.027          & \textbf{0.231±0.032}  & {\ul 0.250±0.045}  & 0.272±0.032  & 0.304±0.047  & 0.283±0.034  & 0.342±0.022  & 0.264±0.031  & 0.349±0.025    & 0.263±0.033       & 0.352±0.028  & 0.275±0.032  &  \\
        Accuracy Parity & 1.559±0.378  & 1.752±0.431  & 1.727±0.424  & \textbf{1.175±0.244} & 1.287±0.306           & 1.518±0.375        & 1.360±0.335  & 1.910±0.484  & 1.564±0.403  & 1.578±0.443  & 1.205±0.268  & 1.733±0.445    & {\ul 1.193±0.280} & 1.799±0.482  & 1.351±0.371  &  \\ \bottomrule
        \end{tabular}
        }
    \end{subtable}
    \begin{subtable}{\textwidth}
        \caption{\footnotesize{CIFAR10 Rotation}}
        \label{tab:3d}
        \resizebox{\textwidth}{!}{
        \begin{tabular}{@{}ccccccccccccccccc@{}}
        \toprule
        Method          & LS(FedAvg)   & qFFL         & PropFair     & FedMGDA+           & FedFV        & TCH          & STCH         & EPO          & FERERO                & ExcessMTL    & AdaExcessMTL & OMDgd-TCH(AFL)       & AdaOMDgd-TCH         & OMDeg-TCH         & AdaOMDeg-TCH      &  \\ \midrule
        Avg. Accuracy   & 66.269±0.541 & 61.514±0.450 & 63.108±0.705 & {\ul 66.321±0.264} & 66.242±0.475 & 62.556±1.546 & 66.320±0.403 & 61.380±2.680 & \textbf{66.368±0.355} & 59.322±0.520 & 66.211±0.412 & 59.244±0.489         & 66.218±0.467         & 59.273±0.443      & 65.885±0.610      &  \\
        Agnostic Loss   & 1.260±0.044  & 1.314±0.041  & 1.286±0.044  & 1.197±0.035        & 1.201±0.038  & 1.266±0.066  & 1.219±0.043  & 1.293±0.062  & 1.259±0.040           & 1.256±0.028  & 1.227±0.041  & 1.239±0.029          & \textbf{1.148±0.037} & 1.254±0.033       & {\ul 1.156±0.045} &  \\
        Accuracy Parity & 6.012±0.596  & 5.451±0.514  & 5.312±0.385  & 4.632±0.430        & 4.668±0.581  & 4.727±1.099  & 4.997±0.712  & 4.731±0.928  & 5.922±0.665           & 2.795±0.332  & 5.615±0.745  & \textbf{2.429±0.531} & 3.592±0.487          & {\ul 2.660±0.549} & 3.688±0.646       &  \\ \bottomrule
        \end{tabular}
        }
    \end{subtable}
    \begin{subtable}{\textwidth}
        \caption{\footnotesize{CIFAR10 Partial Class C=2}}
        \resizebox{\textwidth}{!}{
        \begin{tabular}{@{}ccccccccccccccccc@{}}
        \toprule
        Method          & LS(FedAvg)         & qFFL         & PropFair     & FedMGDA+              & FedFV                 & TCH          & STCH         & EPO          & FERERO       & ExcessMTL    & AdaExcessMTL & OMDgd-TCH(AFL) & AdaOMDgd-TCH       & OMDeg-TCH    & AdaOMDeg-TCH &  \\ \midrule
        Avg. Accuracy   & {\ul 38.925±2.325} & 34.995±2.658 & 35.505±2.107 & 38.560±2.974          & \textbf{39.015±3.170} & 20.935±6.355 & 35.380±3.949 & 20.510±7.571 & 38.700±3.236 & 35.240±2.402 & 38.330±2.893 & 31.080±4.816   & 36.765±3.475       & 34.865±2.422 & 36.755±3.373 &  \\
        Agnostic Loss   & 2.477±0.256        & 2.584±0.255  & 2.486±0.215  & \textbf{2.148±0.089}  & 2.394±0.197           & 4.378±0.387  & 2.386±0.228  & 4.364±0.641  & 2.468±0.260  & 2.526±0.294  & 2.434±0.277  & 2.515±0.269    & {\ul 2.372±0.239}  & 2.534±0.244  & 2.388±0.259  &  \\
        Accuracy Parity & 21.801±5.071       & 23.085±5.014 & 22.642±4.907 & \textbf{14.227±2.428} & 19.289±4.173          & 27.199±3.398 & 19.139±6.063 & 24.116±4.796 & 21.922±4.676 & 22.256±5.647 & 21.518±5.055 & 20.600±6.157   & {\ul 18.422±5.435} & 22.045±5.490 & 18.933±5.478 &  \\ \bottomrule
        \end{tabular}
        }
    \end{subtable}
    \begin{subtable}{\textwidth}
        \caption{\footnotesize{CIFAR10 Partial Class C=5}}
        \resizebox{\textwidth}{!}{
        \begin{tabular}{@{}ccccccccccccccccc@{}}
        \toprule
        Method          & LS(FedAvg)   & qFFL         & PropFair     & FedMGDA+              & FedFV             & TCH          & STCH         & EPO          & FERERO       & ExcessMTL    & AdaExcessMTL       & OMDgd-TCH(AFL) & AdaOMDgd-TCH         & OMDeg-TCH    & AdaOMDeg-TCH &  \\ \midrule
        Avg. Accuracy   & 55.626±1.606 & 49.316±1.859 & 53.794±1.530 & \textbf{56.174±1.339} & 55.746±1.448      & 36.242±4.324 & 55.328±1.656 & 37.458±2.333 & 55.662±1.277 & 48.142±1.862 & {\ul 55.766±1.753} & 47.960±1.874   & 53.422±1.402         & 48.064±1.952 & 55.632±1.524 &  \\
        Agnostic Loss   & 1.705±0.125  & 1.856±0.147  & 1.740±0.137  & \textbf{1.596±0.143}  & {\ul 1.662±0.098} & 3.748±0.279  & 1.622±0.048  & 3.138±0.421  & 1.729±0.157  & 1.870±0.136  & 1.705±0.170        & 1.832±0.080    & 1.664±0.065          & 1.833±0.089  & 1.662±0.074  &  \\
        Accuracy Parity & 8.610±1.751  & 9.614±2.169  & 9.037±1.697  & {\ul 6.862±1.842}     & 7.609±1.284       & 16.141±2.355 & 7.311±1.253  & 14.602±1.629 & 8.854±2.064  & 9.715±2.075  & 8.354±2.072        & 8.787±1.413    & \textbf{6.823±1.504} & 8.803±1.133  & 7.930±1.185  &  \\ \bottomrule
        \end{tabular}
        }
    \end{subtable}
\end{table}

\begin{figure*}[h]
    \centering
    \begin{tikzpicture}
    \matrix(m)[matrix of nodes, nodes={anchor=center}, column sep=0mm, row sep=0mm]{
        & LS & TCH & MGDA & CR-MOGM & Moco \\
        \rotatebox{90}{VLMOP2} &
        \includegraphics[width=0.16\textwidth]{Figures/2_VLMOP2_LS.pdf} &
        \includegraphics[width=0.16\textwidth]{Figures/2_VLMOP2_TCH.pdf} &
        \includegraphics[width=0.16\textwidth]{Figures/2_VLMOP2_MGDA.pdf} &
        \includegraphics[width=0.16\textwidth]{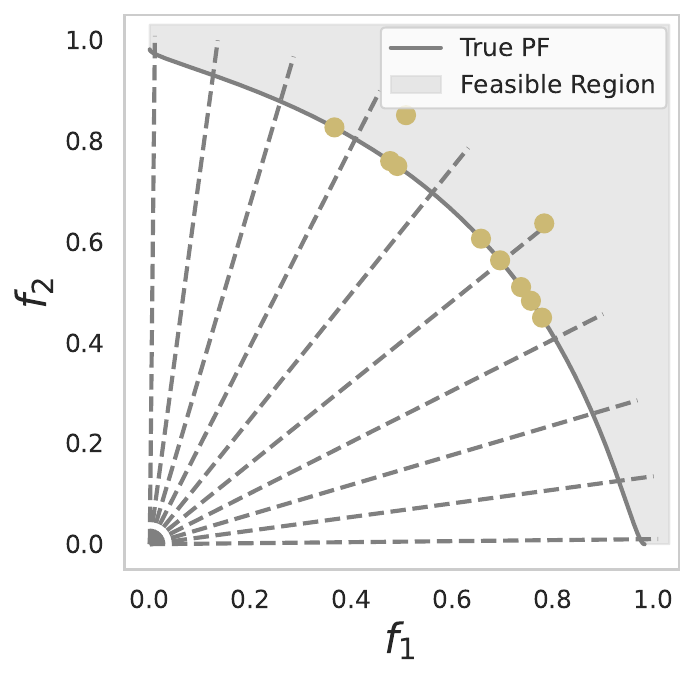} &
        \includegraphics[width=0.16\textwidth]{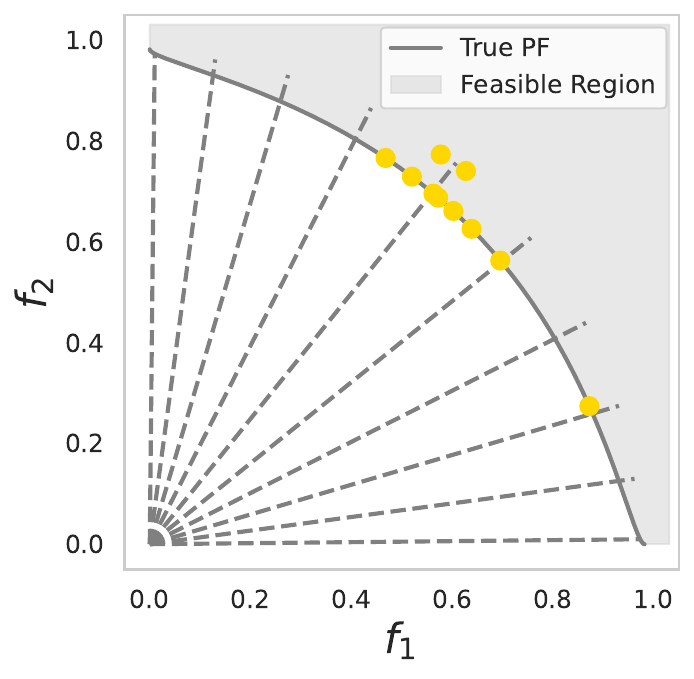} \\
        
        \rotatebox{90}{F1} &
        \includegraphics[width=0.16\textwidth]{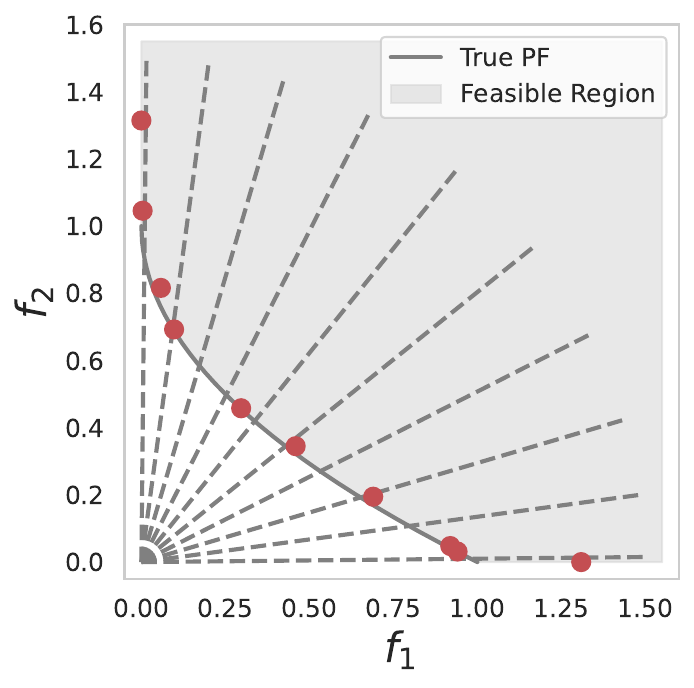} &
        \includegraphics[width=0.16\textwidth]{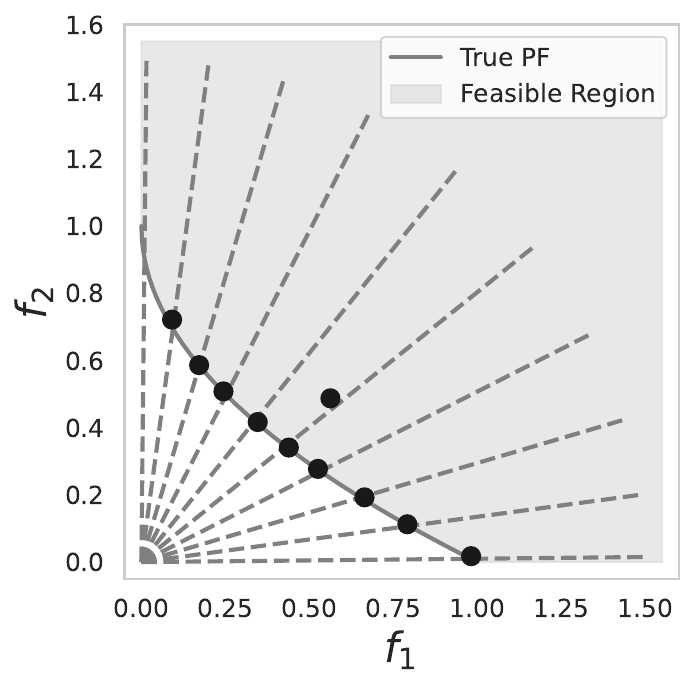} &
        \includegraphics[width=0.16\textwidth]{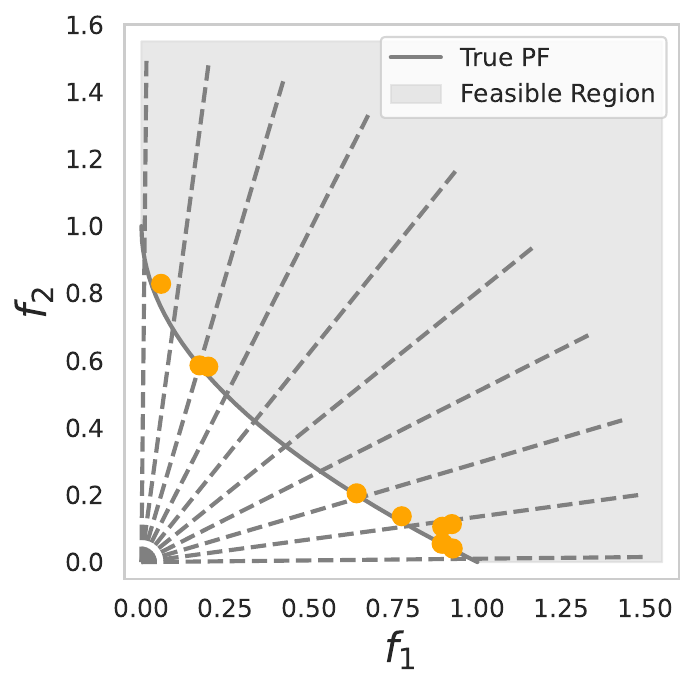} &
        \includegraphics[width=0.16\textwidth]{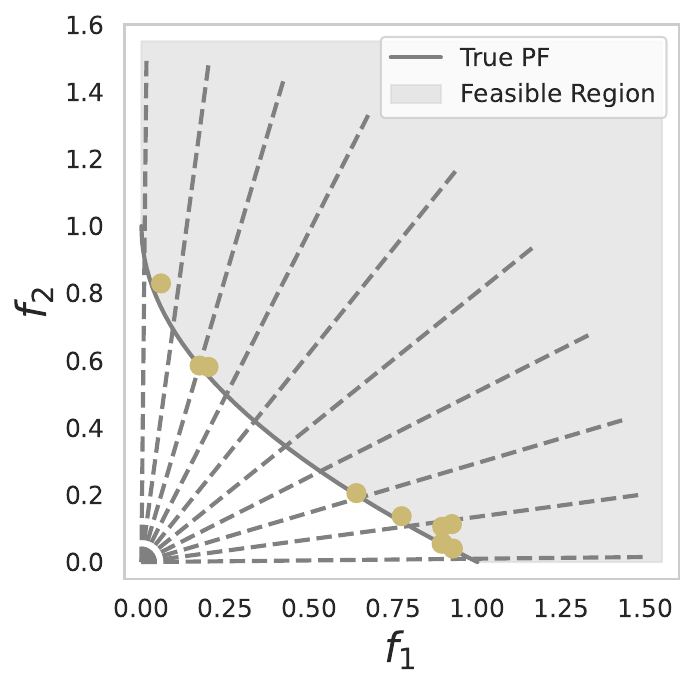} &
        \includegraphics[width=0.16\textwidth]{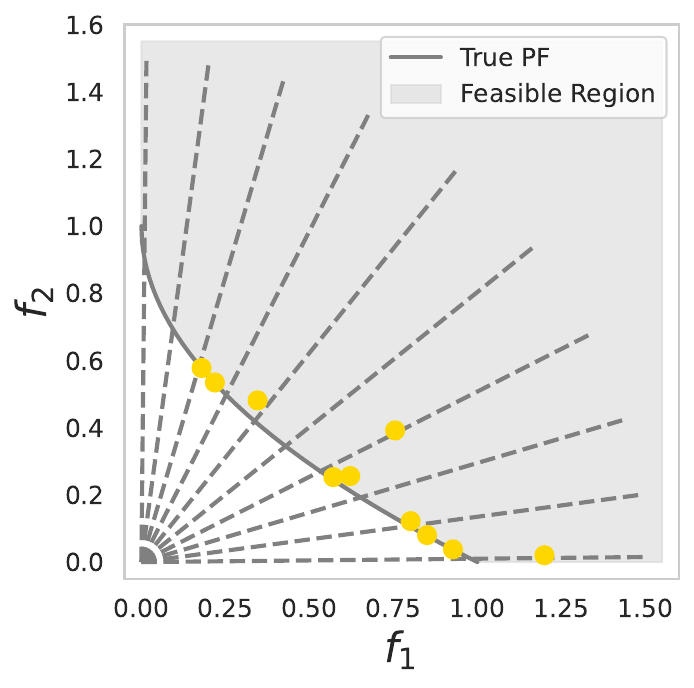} \\
        
        \rotatebox{90}{F2} &
        \includegraphics[width=0.16\textwidth]{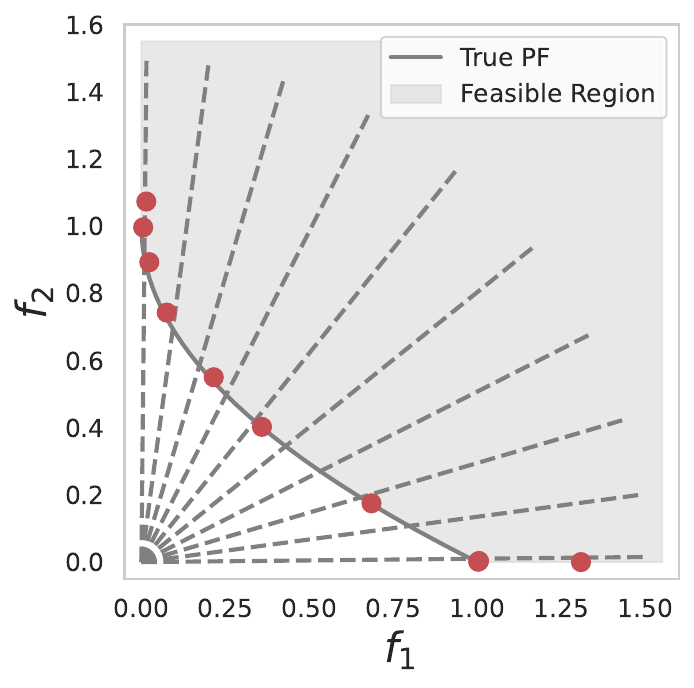} &
        \includegraphics[width=0.16\textwidth]{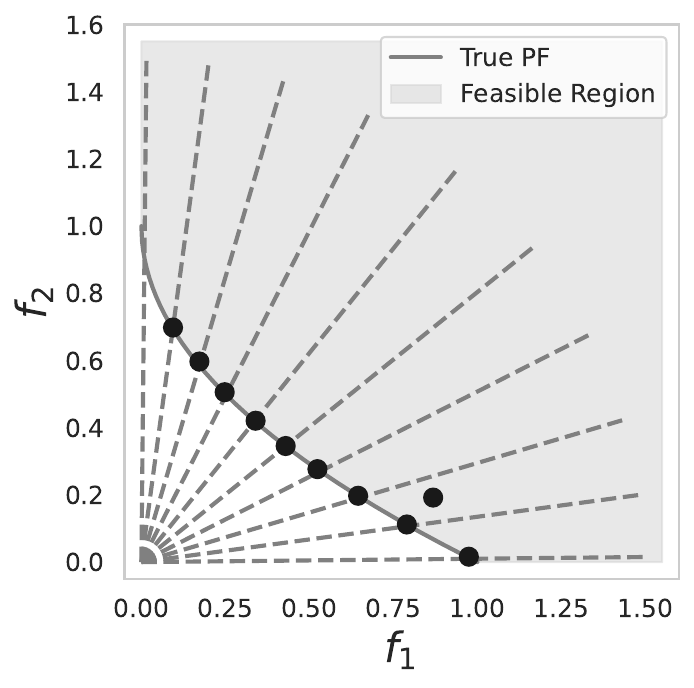} &
        \includegraphics[width=0.16\textwidth]{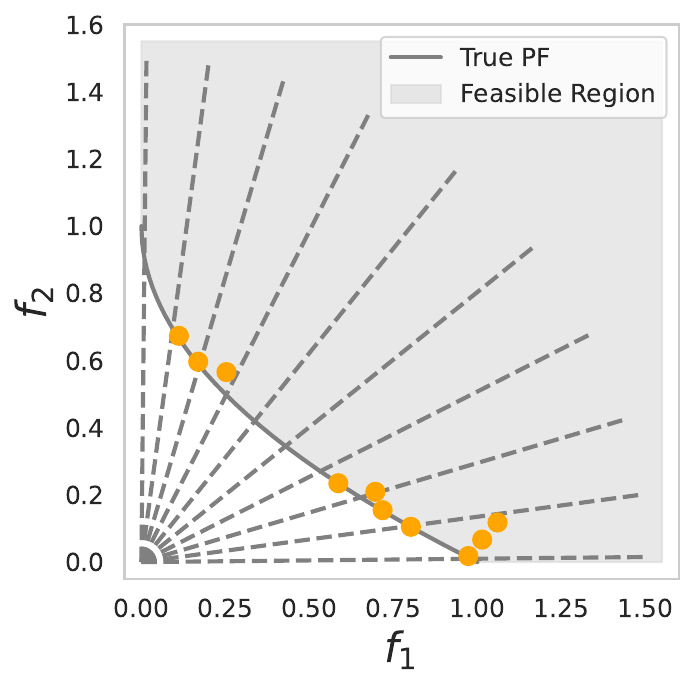} &
        \includegraphics[width=0.16\textwidth]{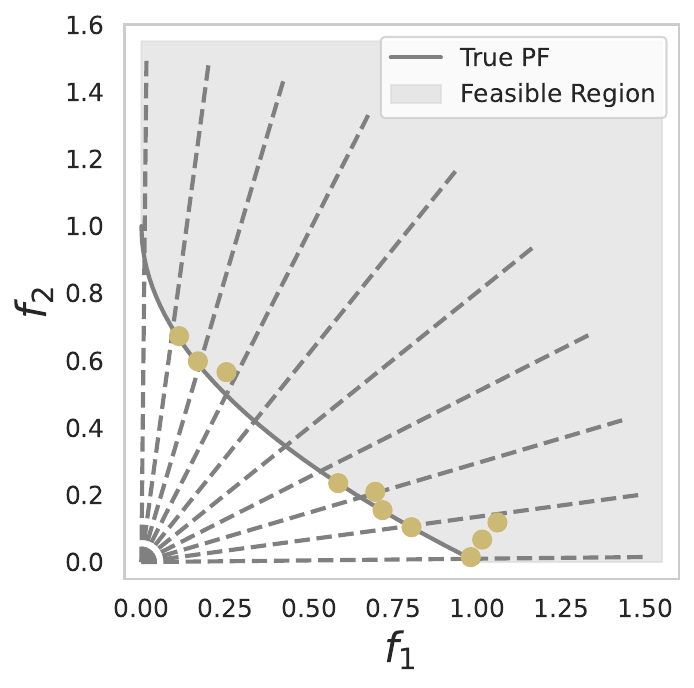} &
        \includegraphics[width=0.16\textwidth]{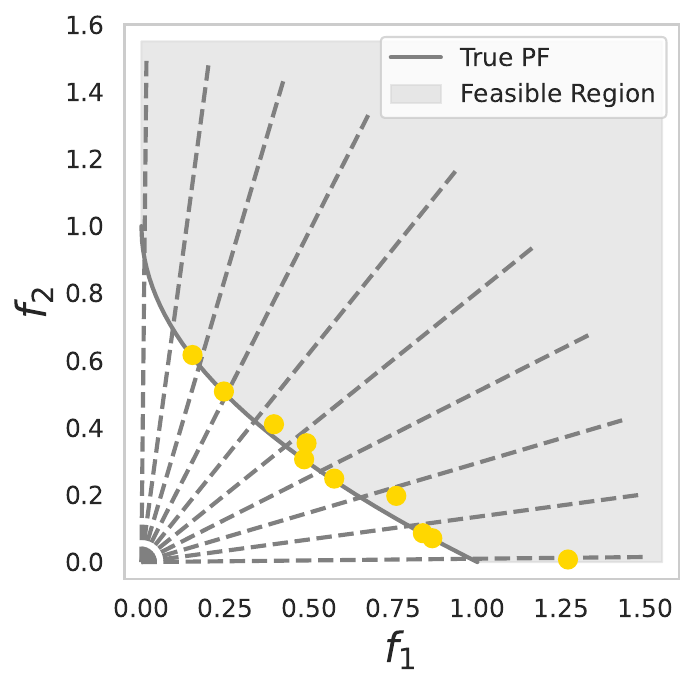} \\
        
        \rotatebox{90}{F3}&
        \includegraphics[width=0.16\textwidth]{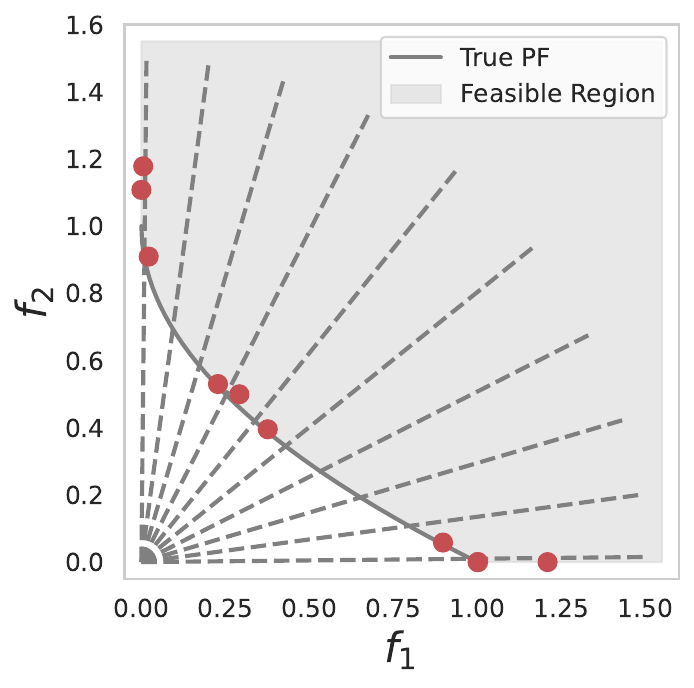} &
        \includegraphics[width=0.16\textwidth]{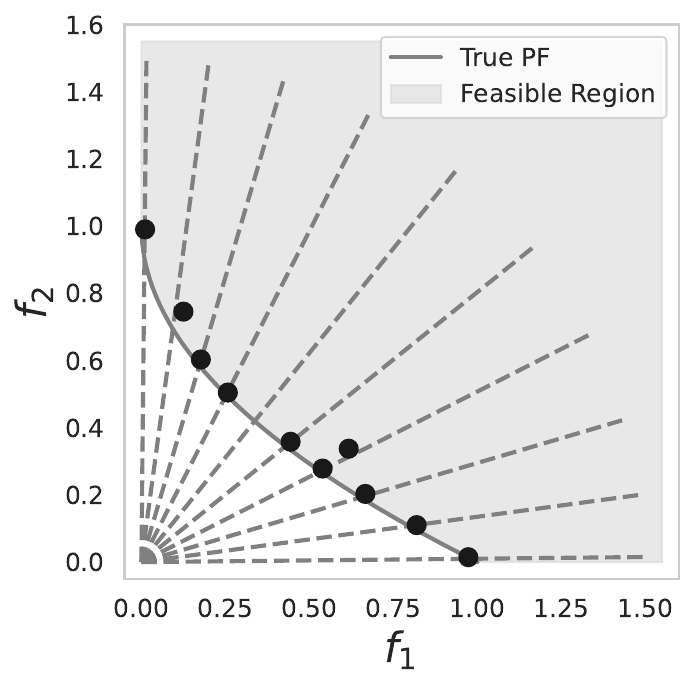} &
        \includegraphics[width=0.16\textwidth]{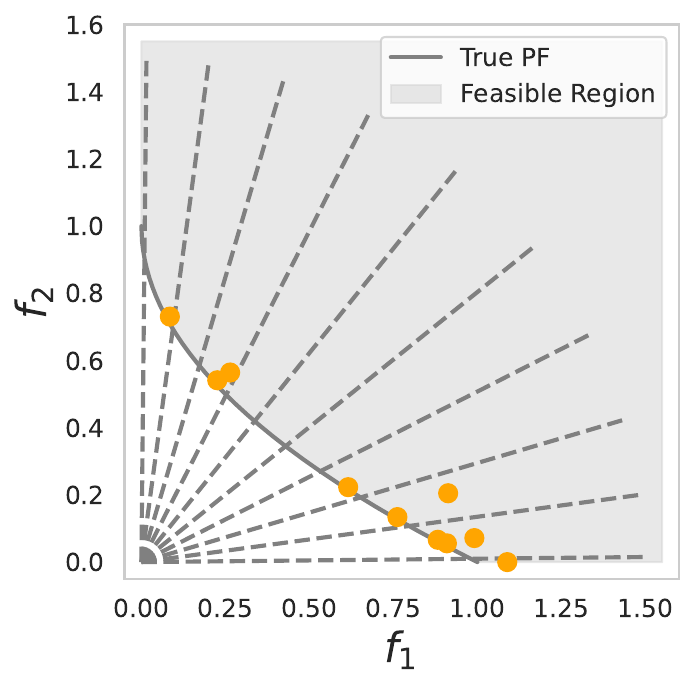} &
        \includegraphics[width=0.16\textwidth]{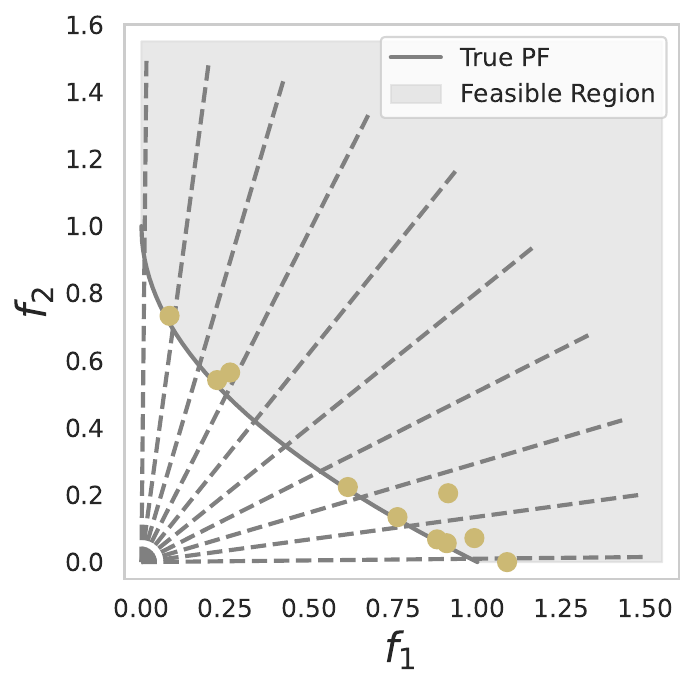} &
        \includegraphics[width=0.16\textwidth]{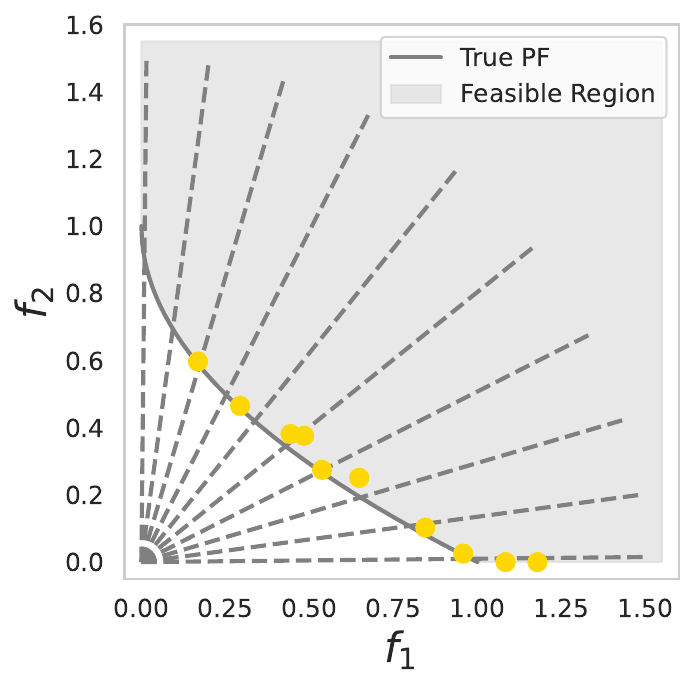} \\
        
        \rotatebox{90}{F4} &
        \includegraphics[width=0.16\textwidth]{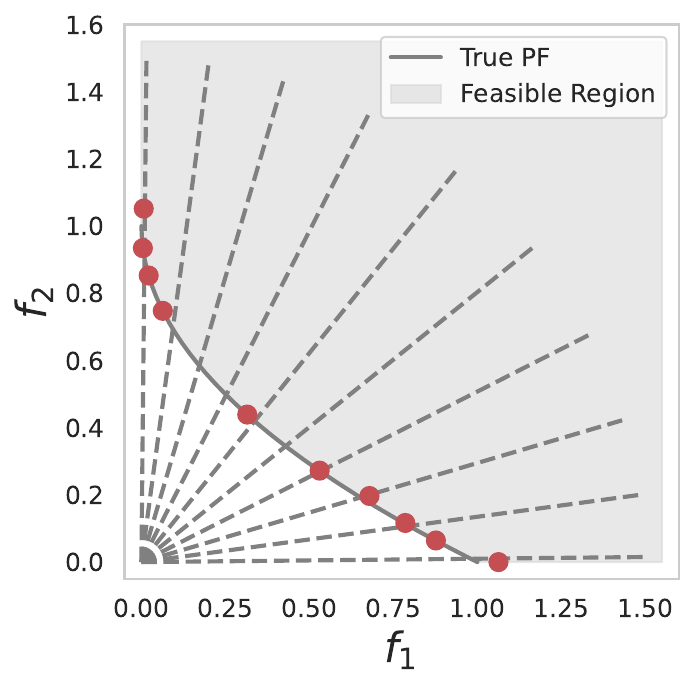} &
        \includegraphics[width=0.16\textwidth]{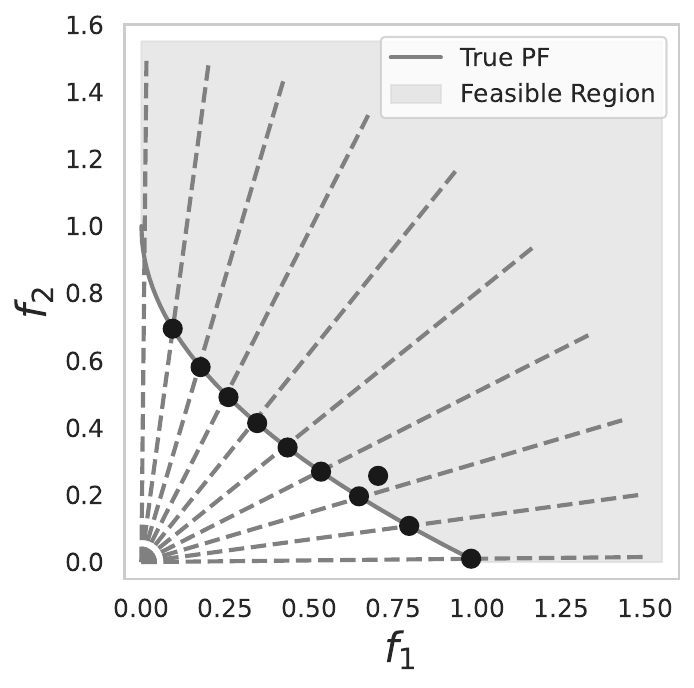} &
        \includegraphics[width=0.16\textwidth]{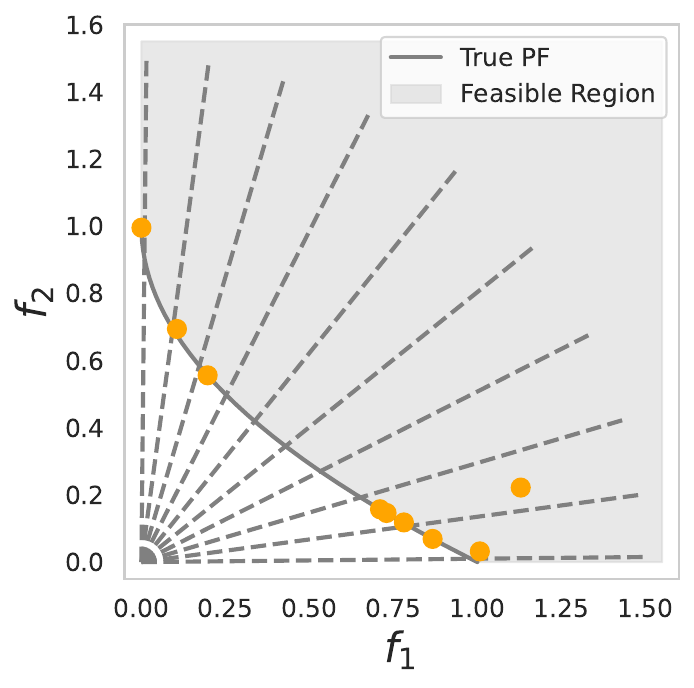} &
        \includegraphics[width=0.16\textwidth]{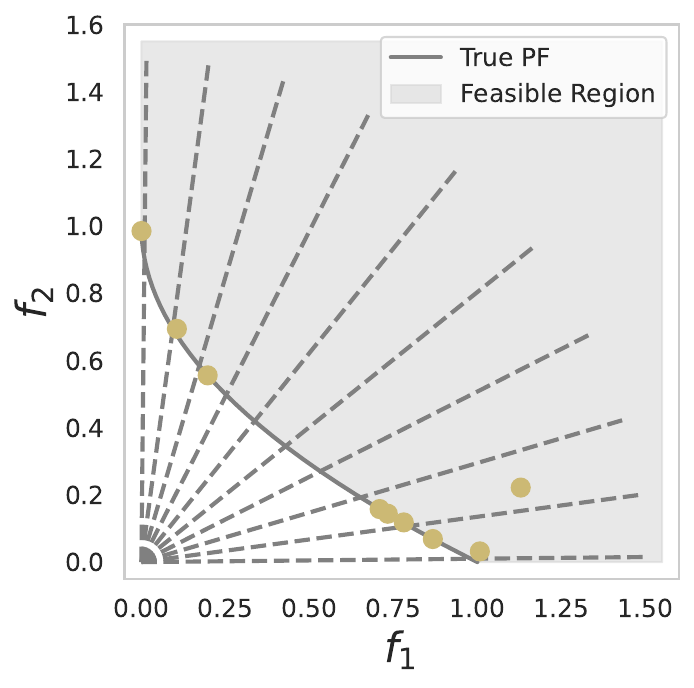} &
        \includegraphics[width=0.16\textwidth]{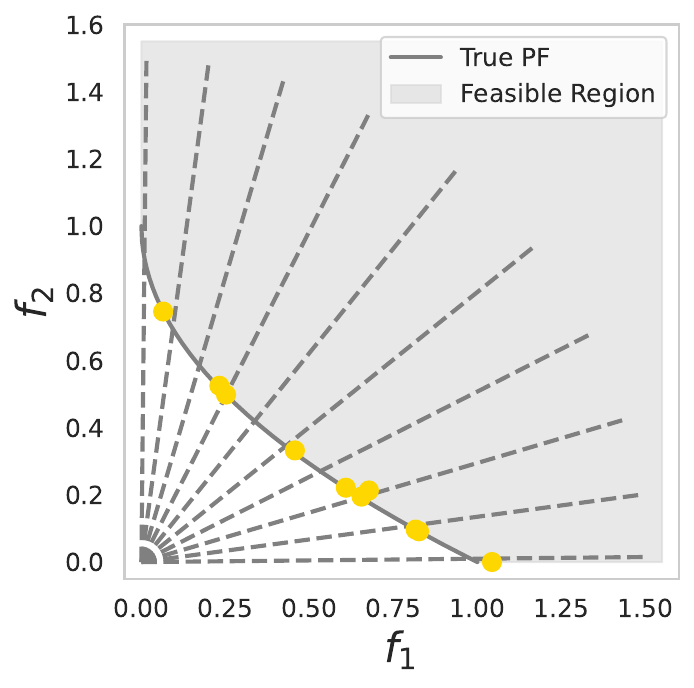} \\
        
        \rotatebox{90}{F5} &
        \includegraphics[width=0.16\textwidth]{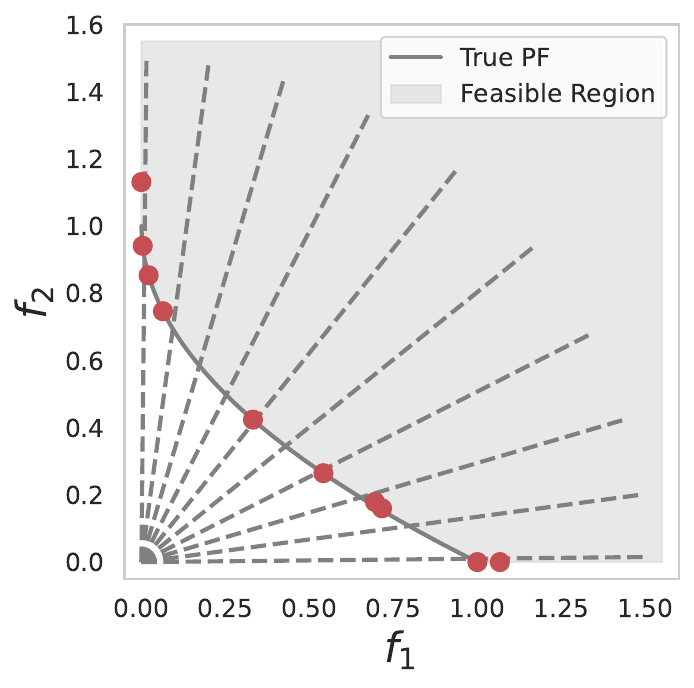} &
        \includegraphics[width=0.16\textwidth]{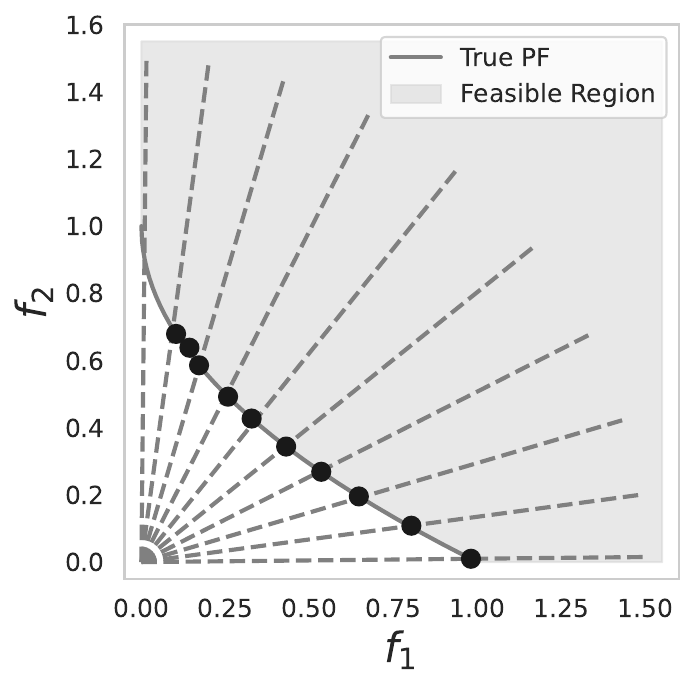} &
        \includegraphics[width=0.16\textwidth]{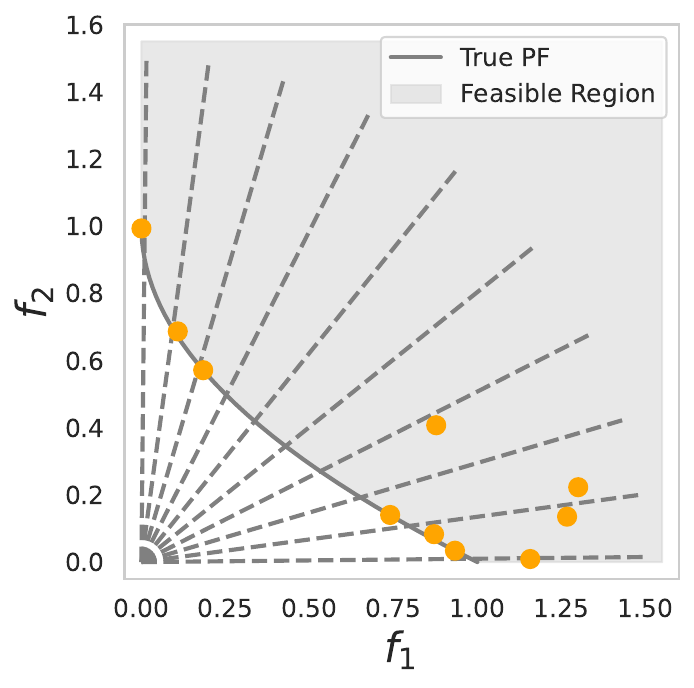} &
        \includegraphics[width=0.16\textwidth]{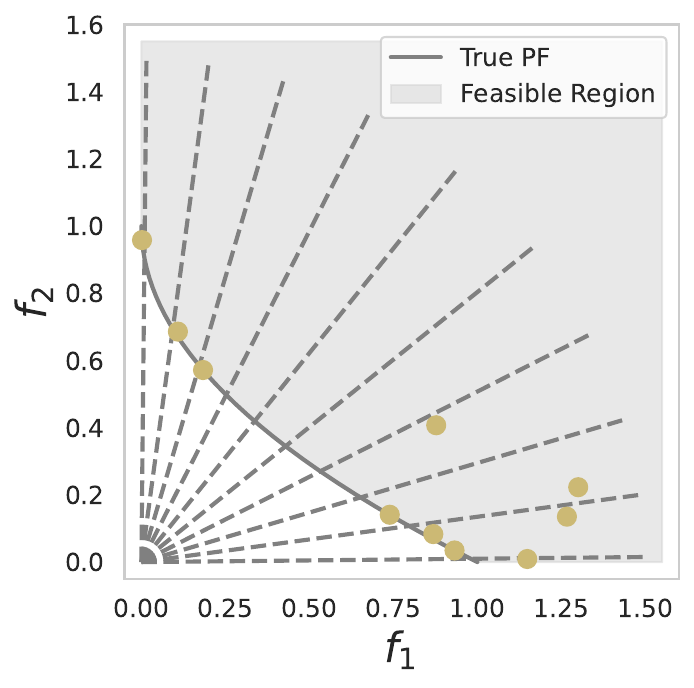} &
        \includegraphics[width=0.16\textwidth]{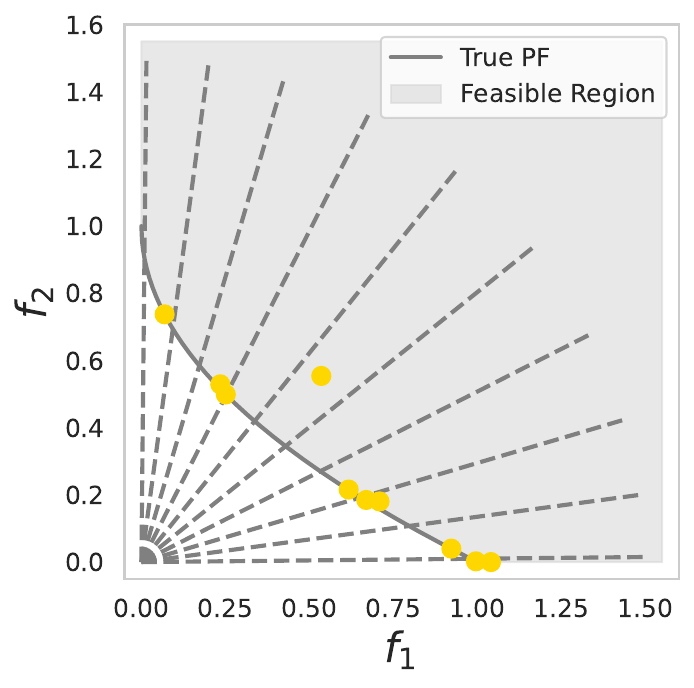} \\

        \rotatebox{90}{F6} &
        \includegraphics[width=0.16\textwidth]{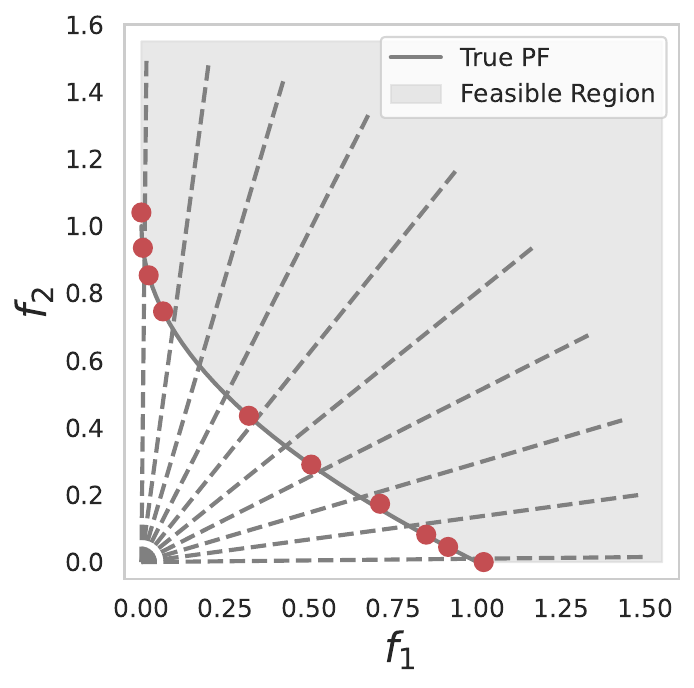} &
        \includegraphics[width=0.16\textwidth]{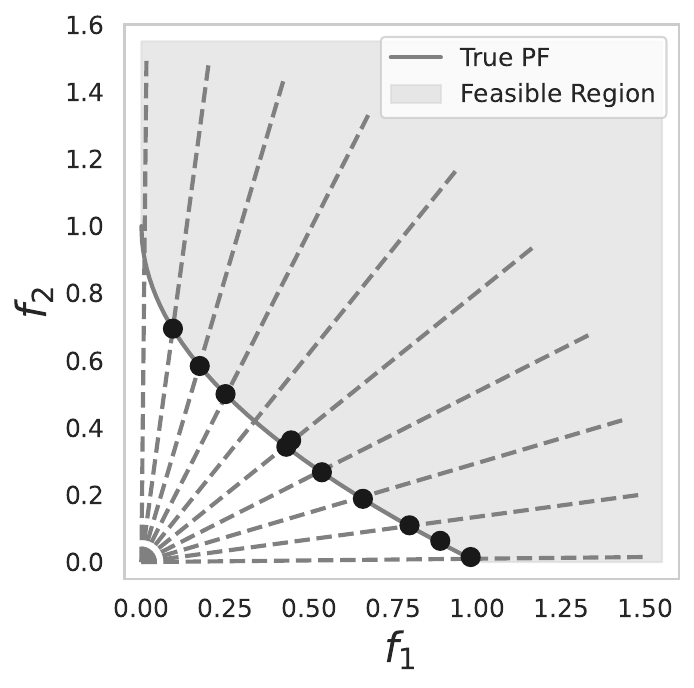} &
        \includegraphics[width=0.16\textwidth]{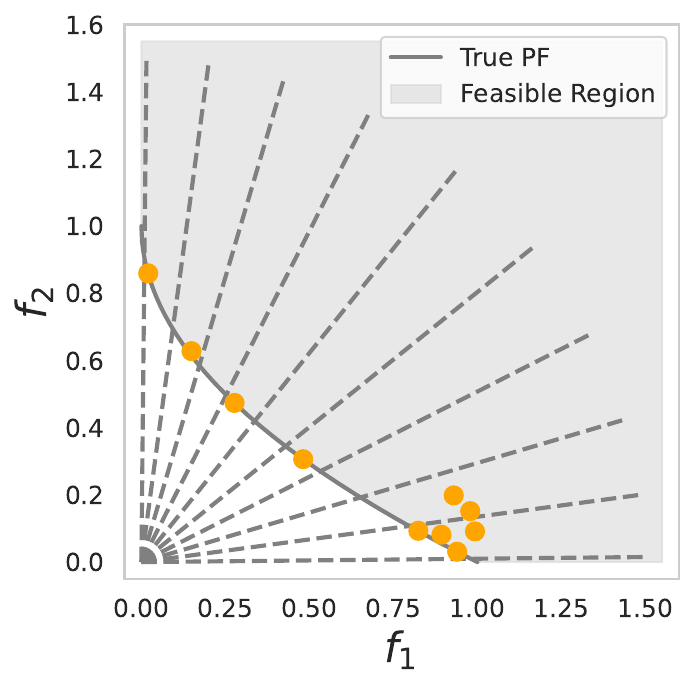} &
        \includegraphics[width=0.16\textwidth]{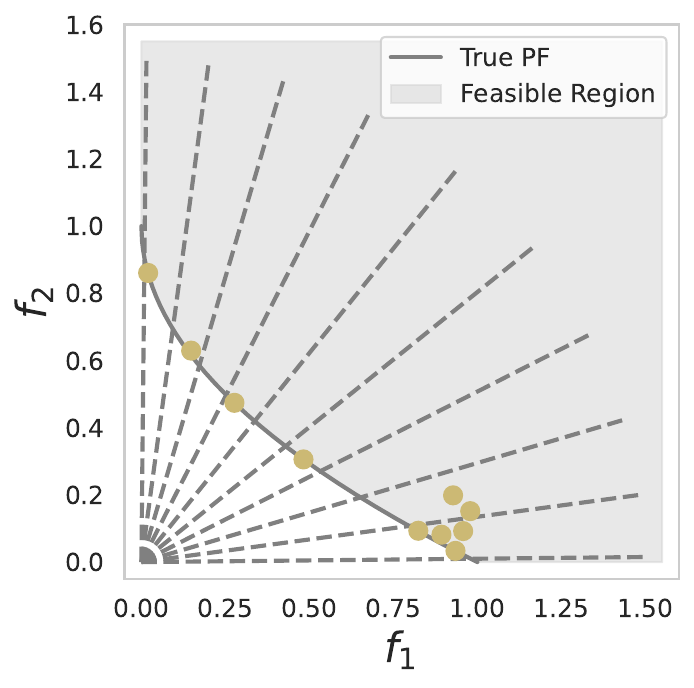} &
        \includegraphics[width=0.16\textwidth]{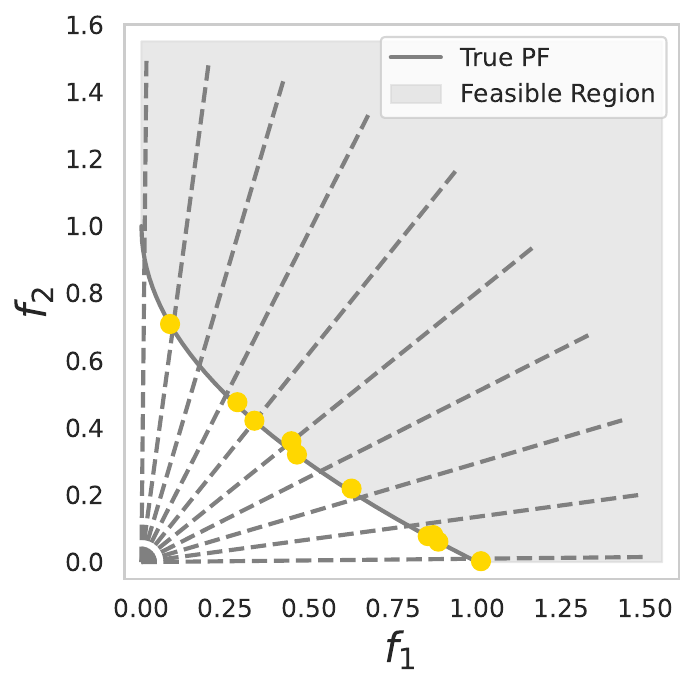} \\
    };

    \end{tikzpicture}
    
    \caption{\textbf{Full results of LS, TCH, and gradient manipulation methods.}}
    \label{fig:4}
\end{figure*}

\begin{figure*}[h]
    \centering
    \begin{tikzpicture}
    \matrix(m)[matrix of nodes, nodes={anchor=center}, column sep=0mm, row sep=0mm]{
        & STCH & OMDgd-TCH & AdaOMDgd-TCH & OMDeg-TCH & AdaOMDeg-TCH \\
        \rotatebox{90}{VLMOP2} &
        \includegraphics[width=0.16\textwidth]{Figures/2_VLMOP2_STCH.pdf} &
        \includegraphics[width=0.16\textwidth]{Figures/2_VLMOP2_OMDgdTCH.pdf} &
        \includegraphics[width=0.16\textwidth]{Figures/2_VLMOP2_AdaOMDgdTCH.pdf} &
        \includegraphics[width=0.16\textwidth]{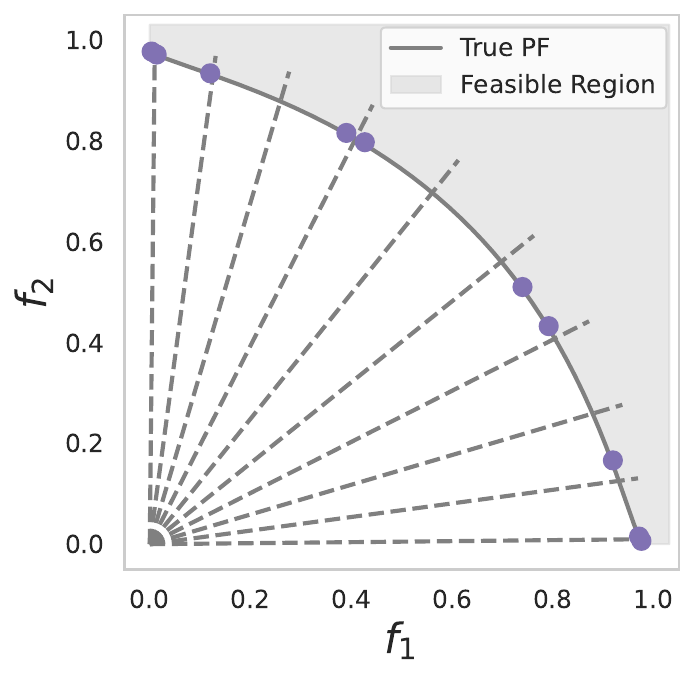} & 
        \includegraphics[width=0.16\textwidth]{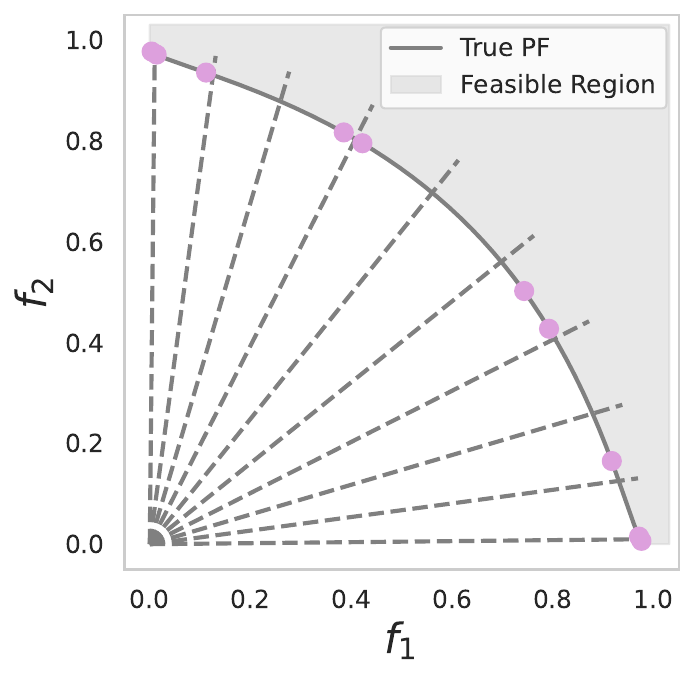} \\
        
        \rotatebox{90}{F1} &
        \includegraphics[width=0.16\textwidth]{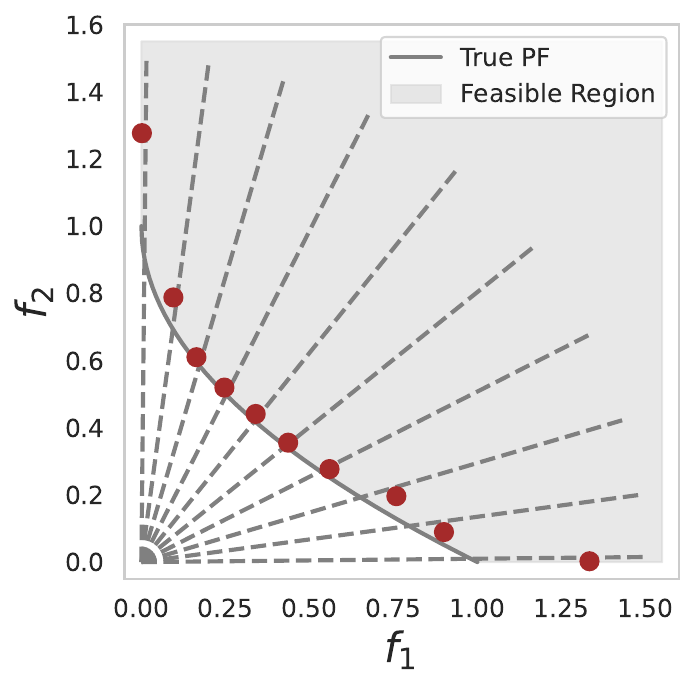} &
        \includegraphics[width=0.16\textwidth]{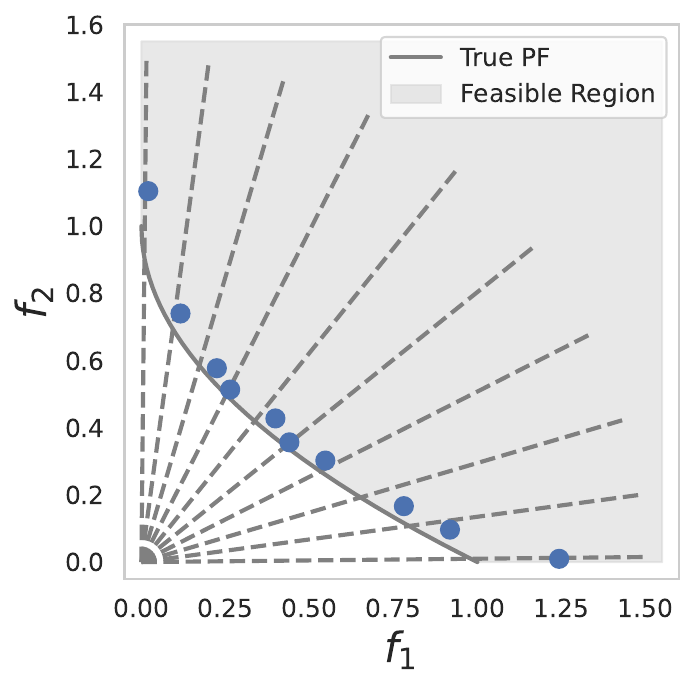} &
        \includegraphics[width=0.16\textwidth]{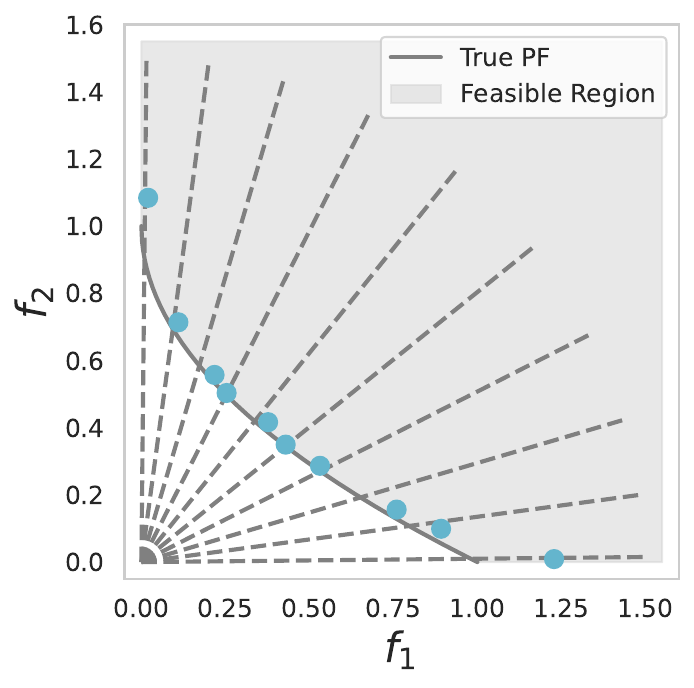} &
        \includegraphics[width=0.16\textwidth]{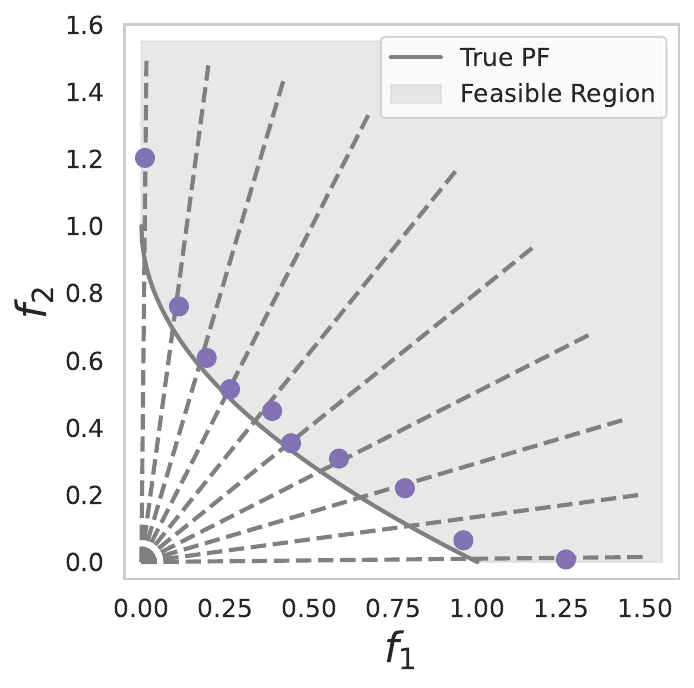} & 
        \includegraphics[width=0.16\textwidth]{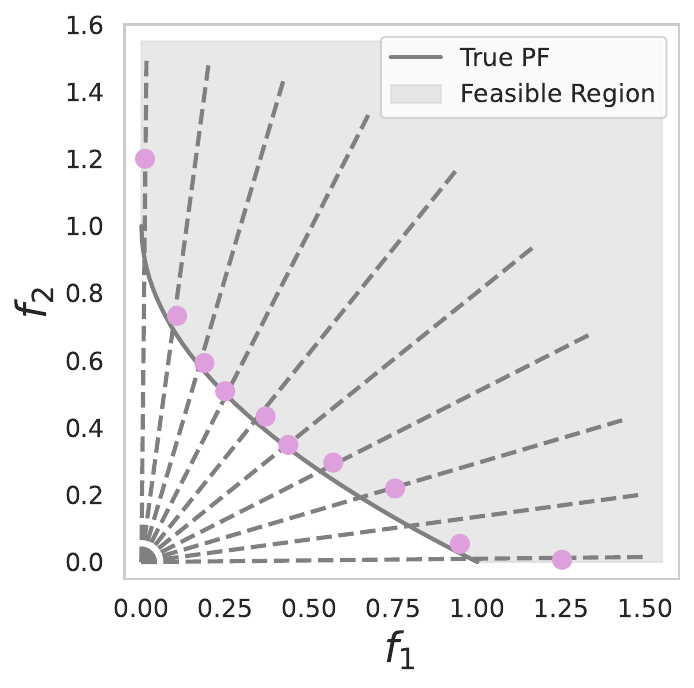} \\
        
        \rotatebox{90}{F2} &
        \includegraphics[width=0.16\textwidth]{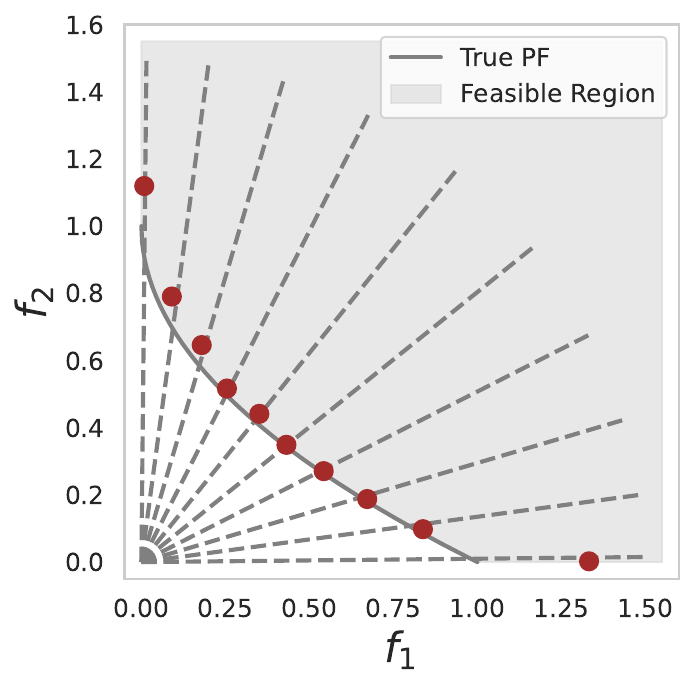} &
        \includegraphics[width=0.16\textwidth]{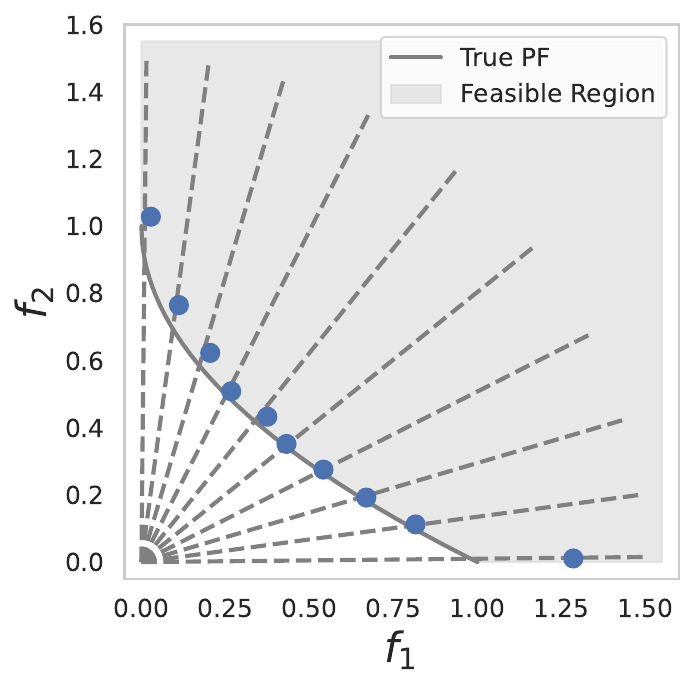} &
        \includegraphics[width=0.16\textwidth]{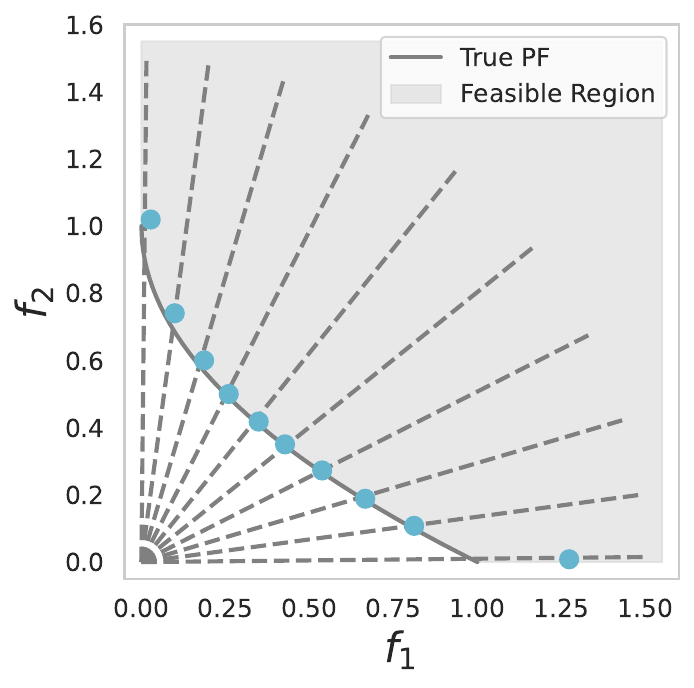} &
        \includegraphics[width=0.16\textwidth]{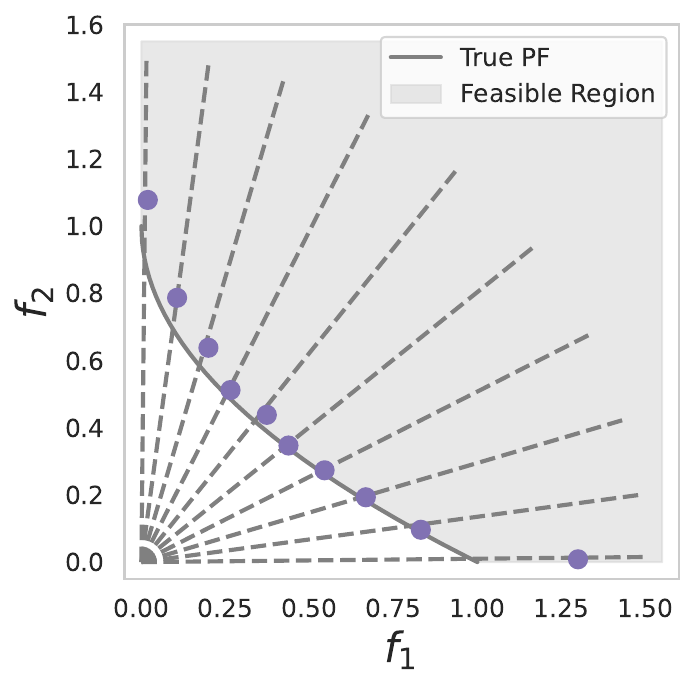} & 
        \includegraphics[width=0.16\textwidth]{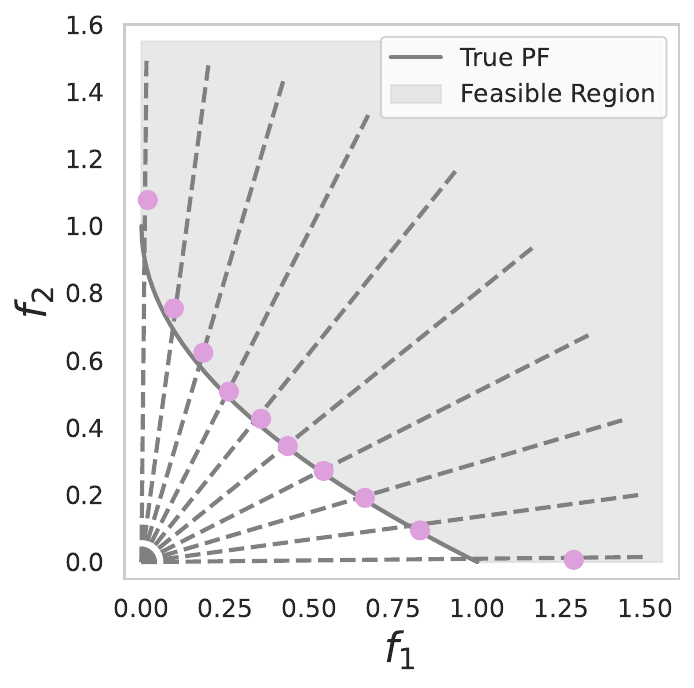} \\
        
        \rotatebox{90}{F3}&
        \includegraphics[width=0.16\textwidth]{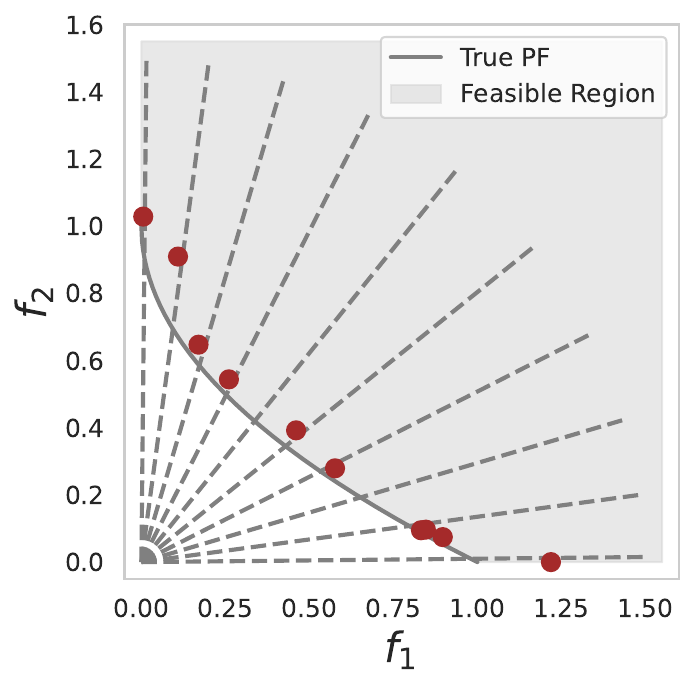} &
        \includegraphics[width=0.16\textwidth]{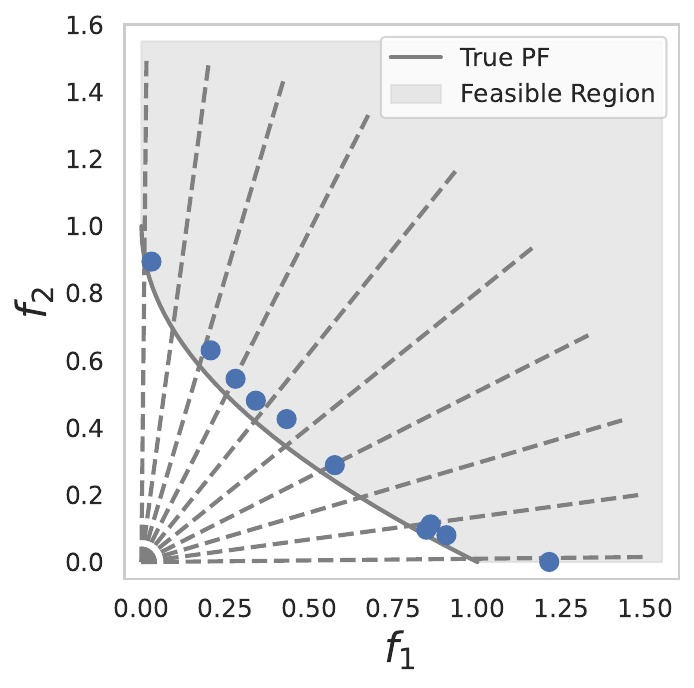} &
        \includegraphics[width=0.16\textwidth]{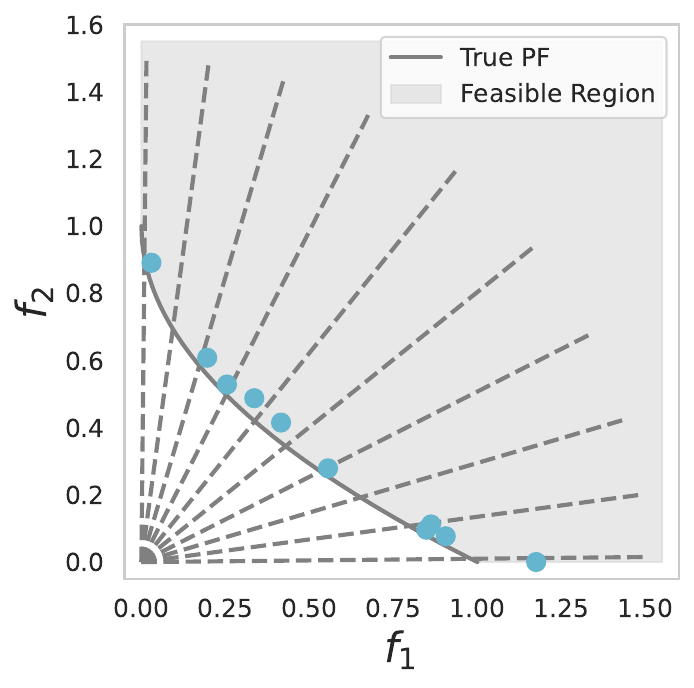} &
        \includegraphics[width=0.16\textwidth]{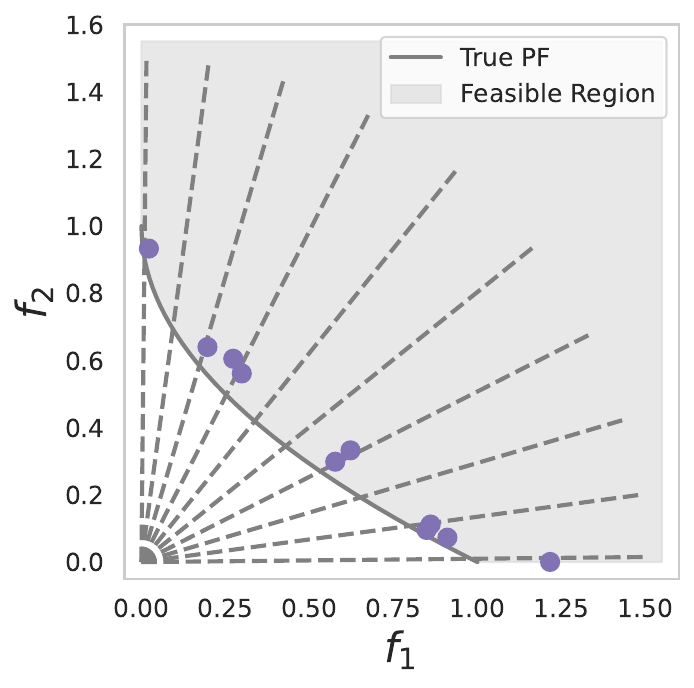} & 
        \includegraphics[width=0.16\textwidth]{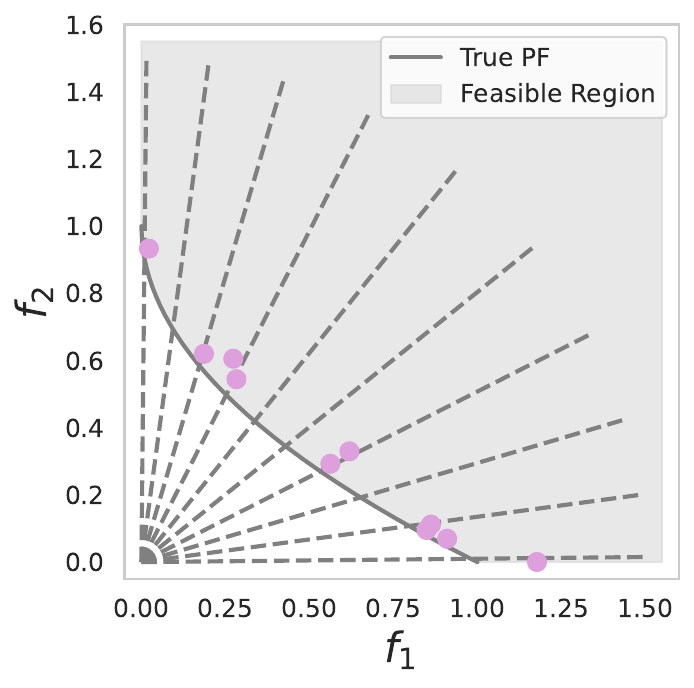} \\
        
        \rotatebox{90}{F4} &
        \includegraphics[width=0.16\textwidth]{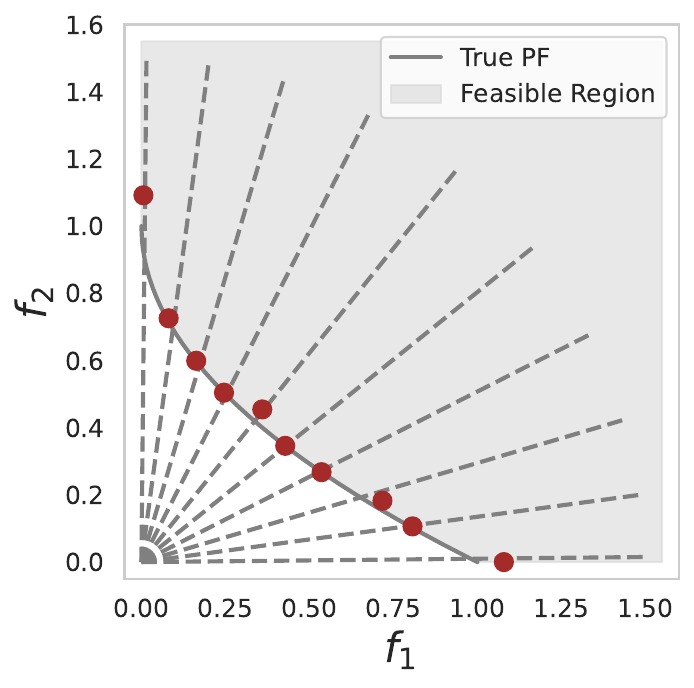} &
        \includegraphics[width=0.16\textwidth]{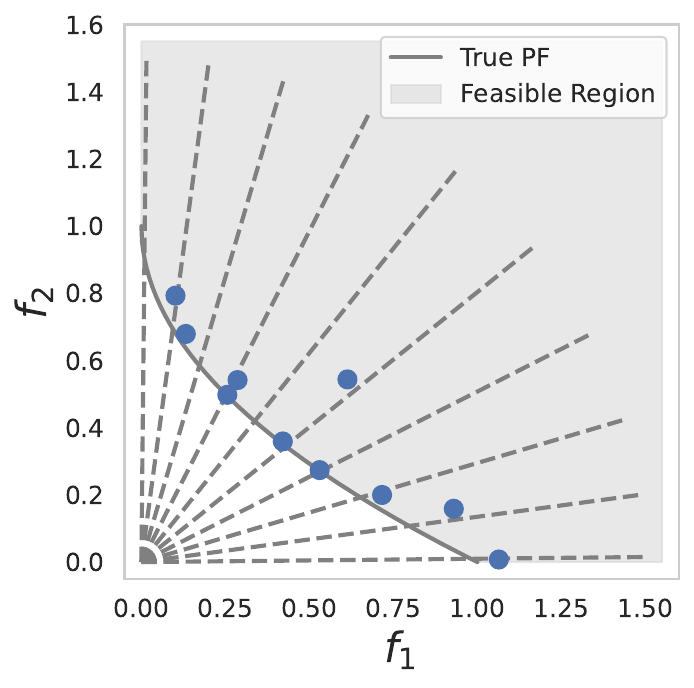} &
        \includegraphics[width=0.16\textwidth]{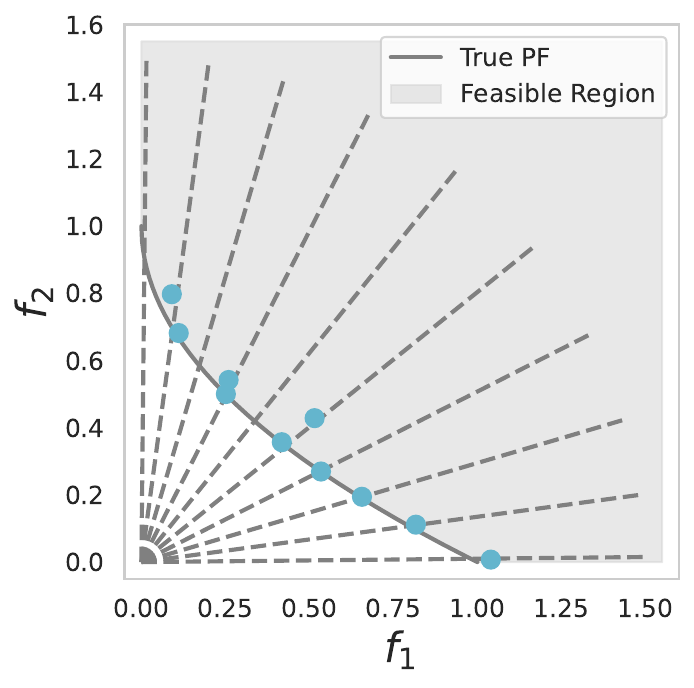} &
        \includegraphics[width=0.16\textwidth]{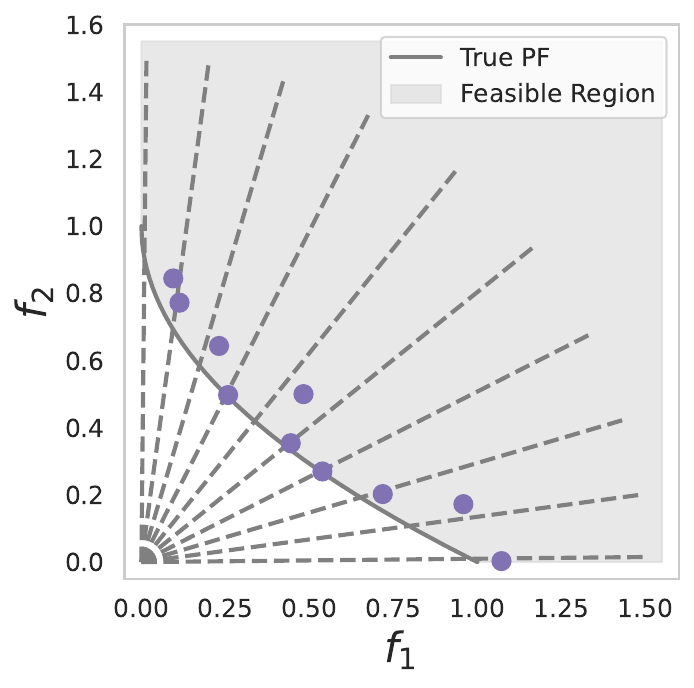} & 
        \includegraphics[width=0.16\textwidth]{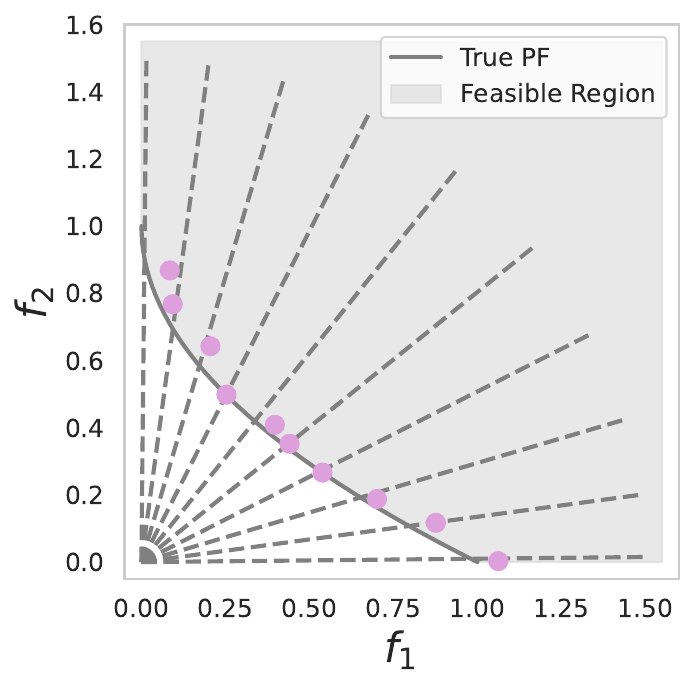} \\
        
        \rotatebox{90}{F5} &
        \includegraphics[width=0.16\textwidth]{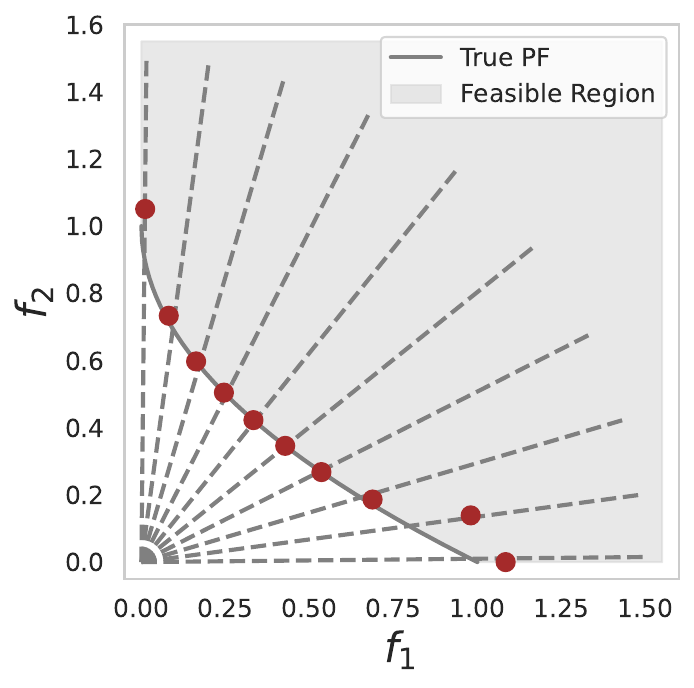} &
        \includegraphics[width=0.16\textwidth]{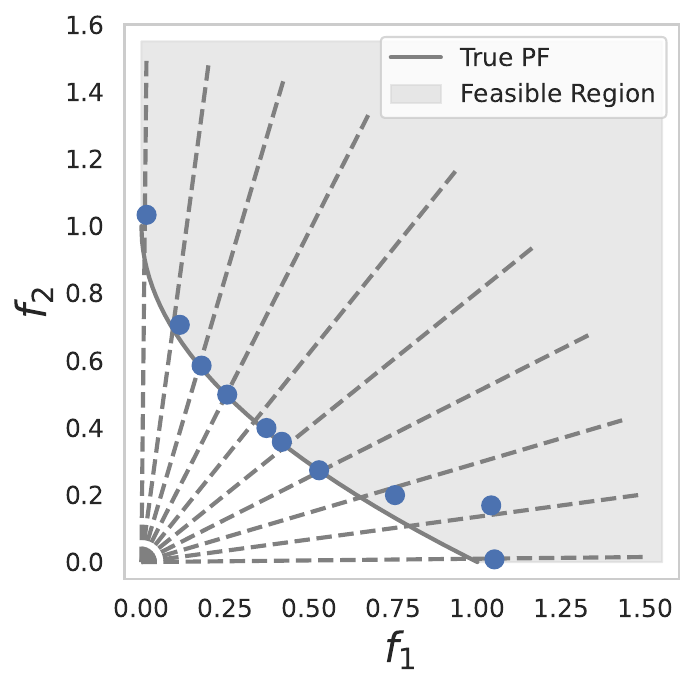} &
        \includegraphics[width=0.16\textwidth]{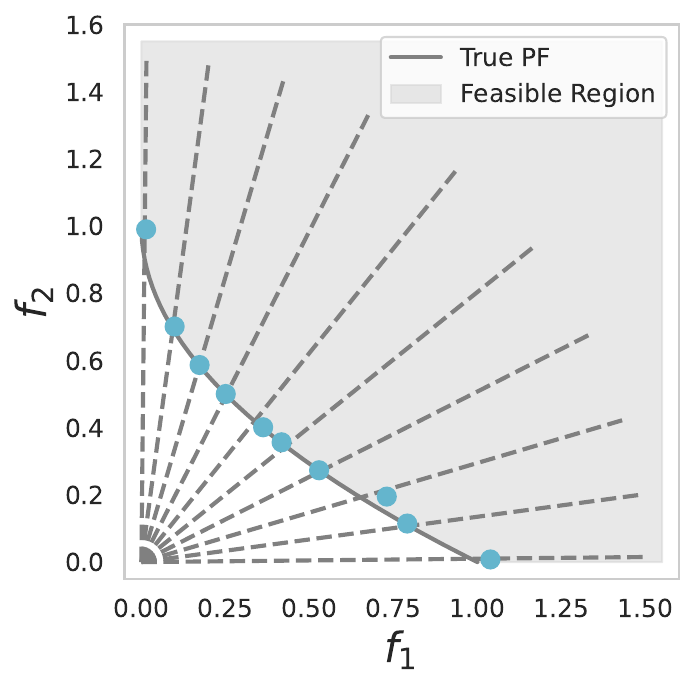} &
        \includegraphics[width=0.16\textwidth]{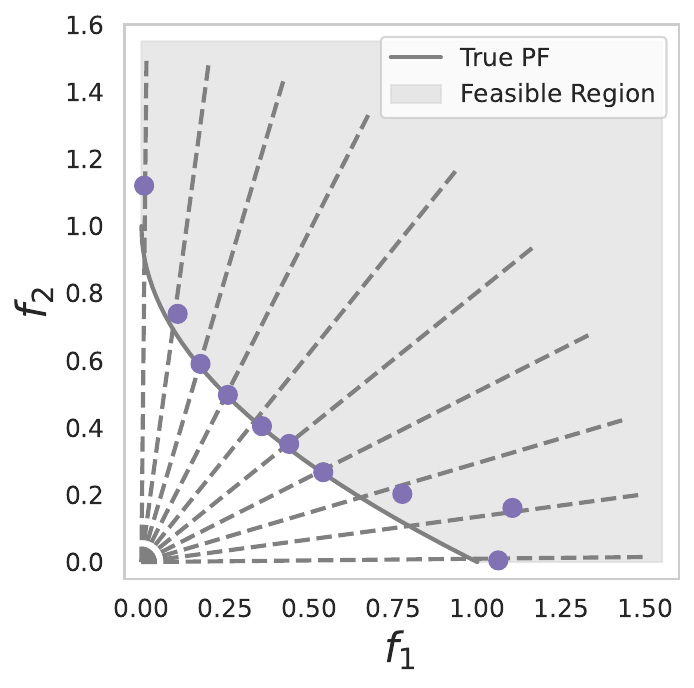} & 
        \includegraphics[width=0.16\textwidth]{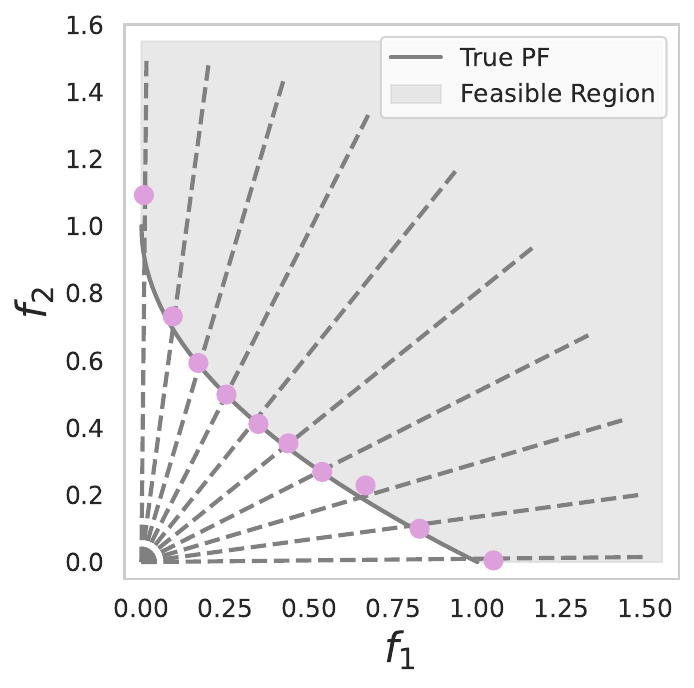} \\

        \rotatebox{90}{F6} &
        \includegraphics[width=0.16\textwidth]{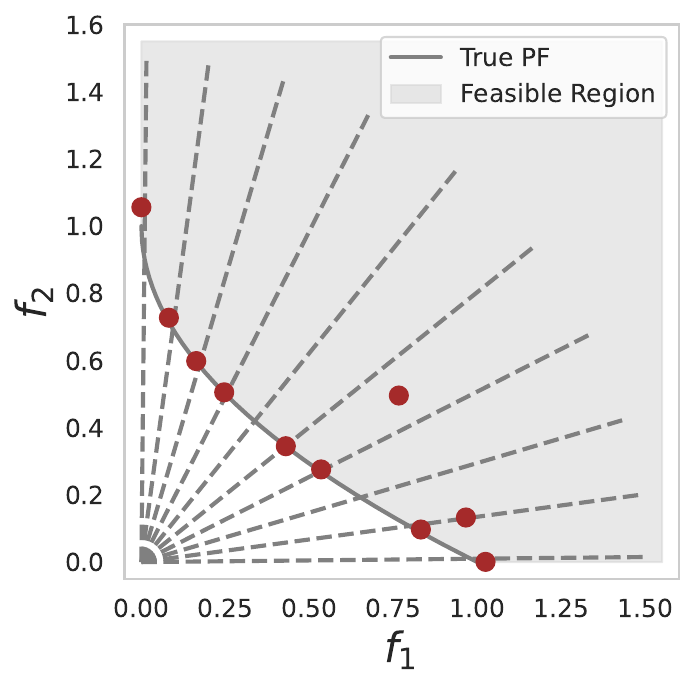} &
        \includegraphics[width=0.16\textwidth]{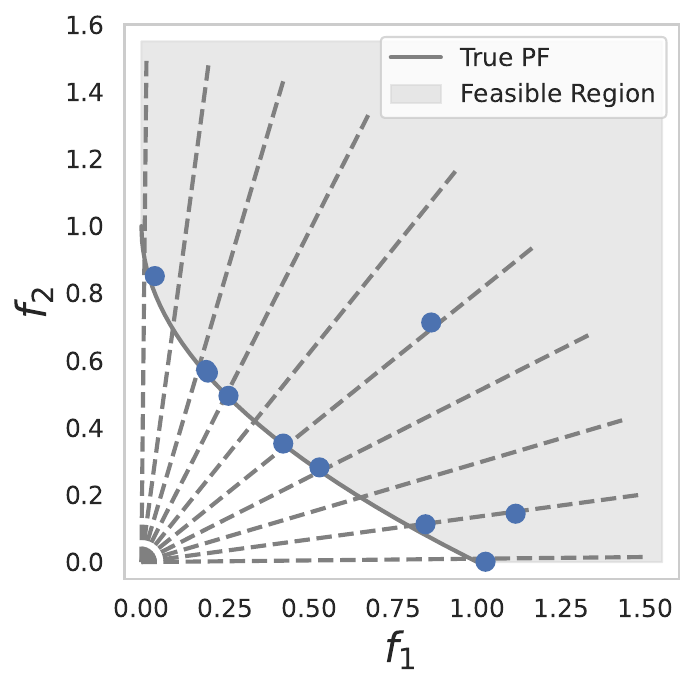} &
        \includegraphics[width=0.16\textwidth]{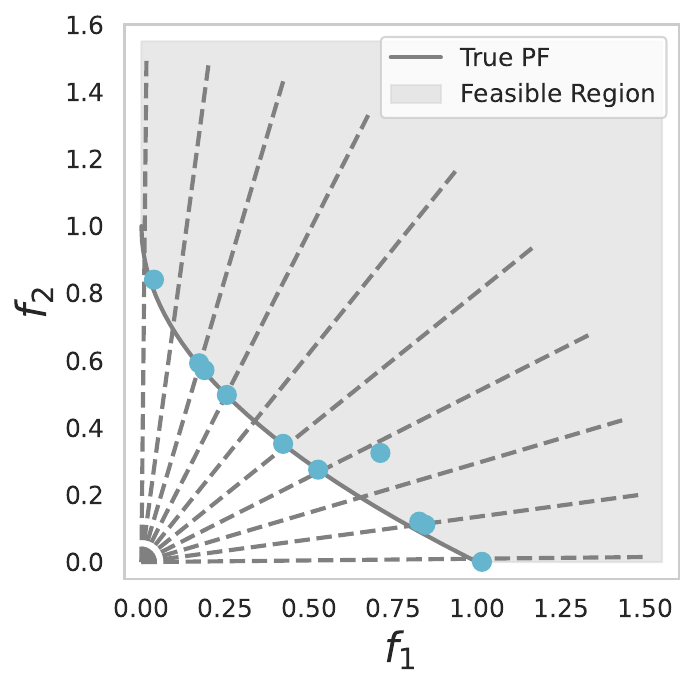} &
        \includegraphics[width=0.16\textwidth]{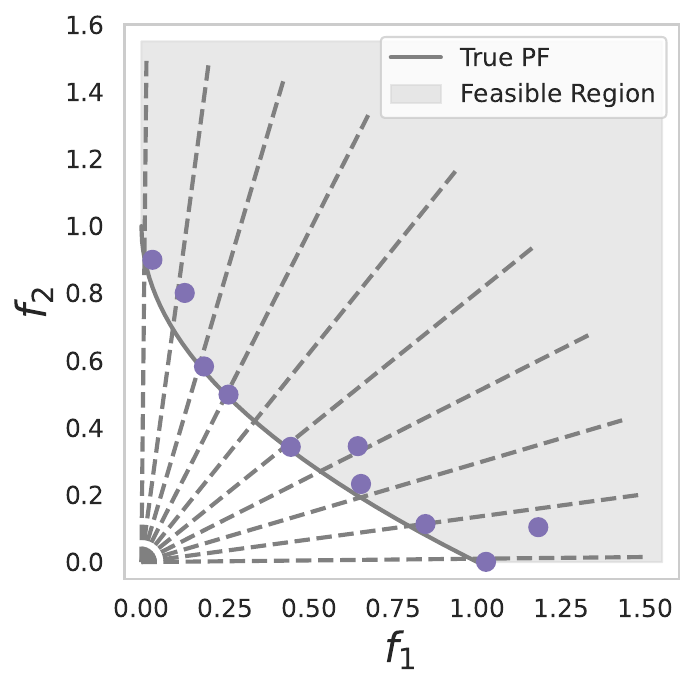} & 
        \includegraphics[width=0.16\textwidth]{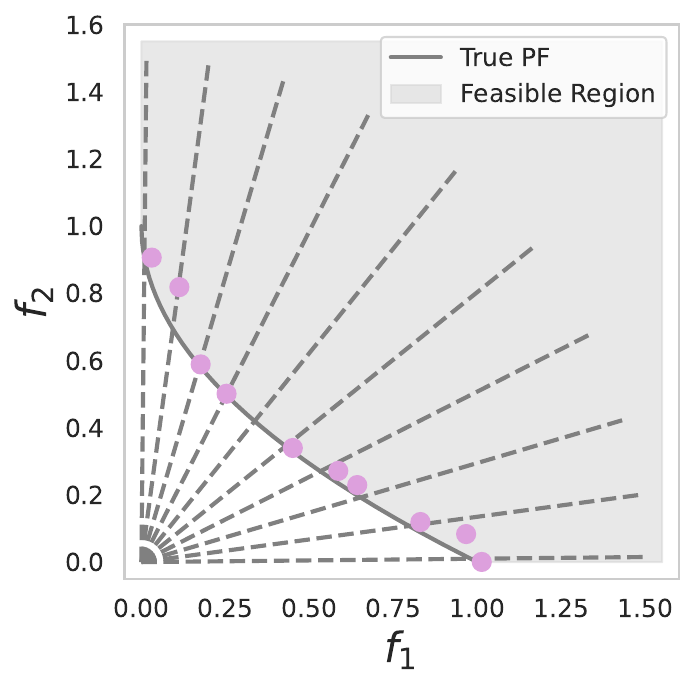} \\
    };

    \end{tikzpicture}
    
    \caption{\textbf{Full results of STCH and (Ada)OMD-TCH.}}
    \label{fig:8}
\end{figure*}

\begin{figure*}[h]
    \centering
    \begin{tikzpicture}
    \matrix(m)[matrix of nodes, nodes={anchor=center}, column sep=0mm, row sep=0mm]{
        & ExcessMTL & PMTL & EPO & FERERO \\
        \rotatebox{90}{VLMOP2} &
        \includegraphics[width=0.16\textwidth]{Figures/2_VLMOP2_ExcessMTL.pdf} &
        \includegraphics[width=0.16\textwidth]{Figures/2_VLMOP2_PMTL.pdf} &
        \includegraphics[width=0.16\textwidth]{Figures/2_VLMOP2_EPO.pdf} &
        \includegraphics[width=0.16\textwidth]{Figures/2_VLMOP2_FERERO.pdf} \\
        
        \rotatebox{90}{F1} &
        \includegraphics[width=0.16\textwidth]{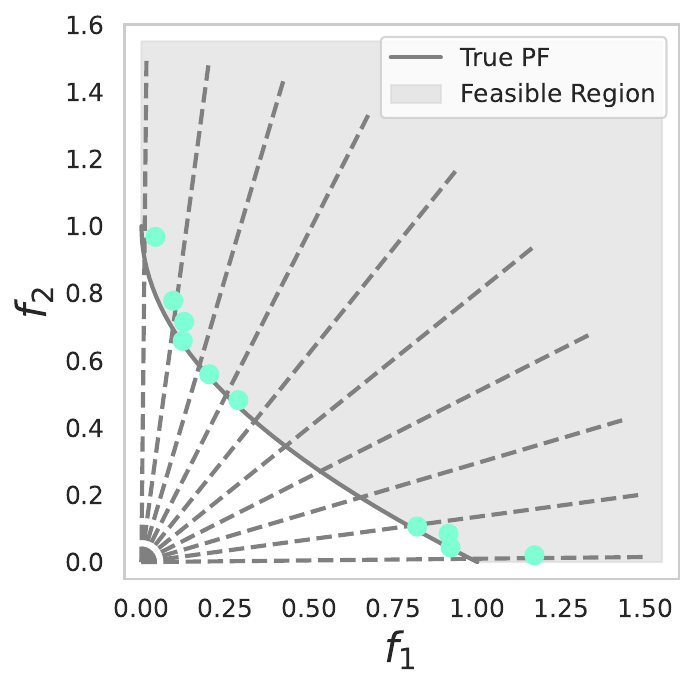} &
        \includegraphics[width=0.16\textwidth]{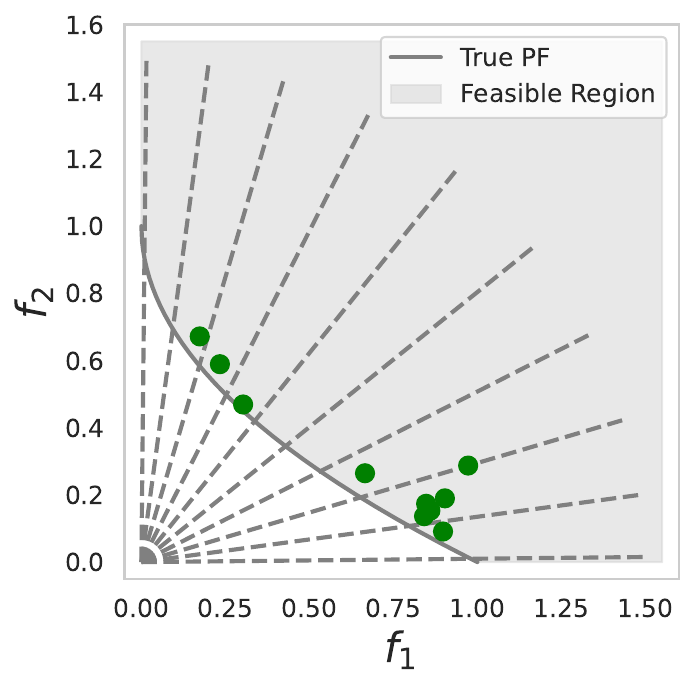} &
        \includegraphics[width=0.16\textwidth]{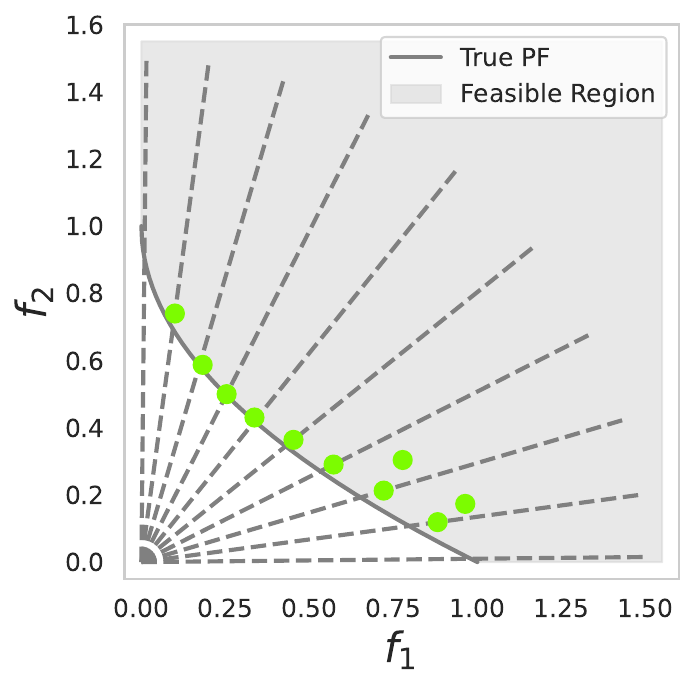} &
        \includegraphics[width=0.16\textwidth]{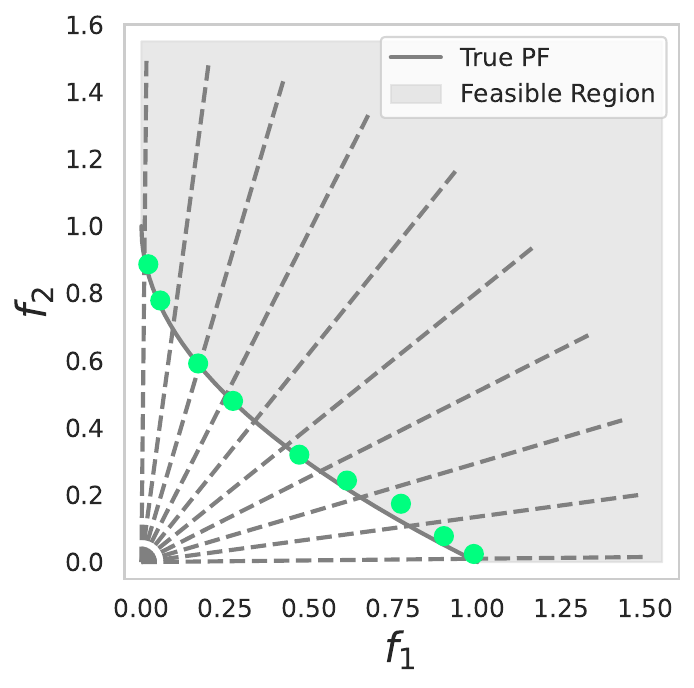} \\
        
        \rotatebox{90}{F2} &
        \includegraphics[width=0.16\textwidth]{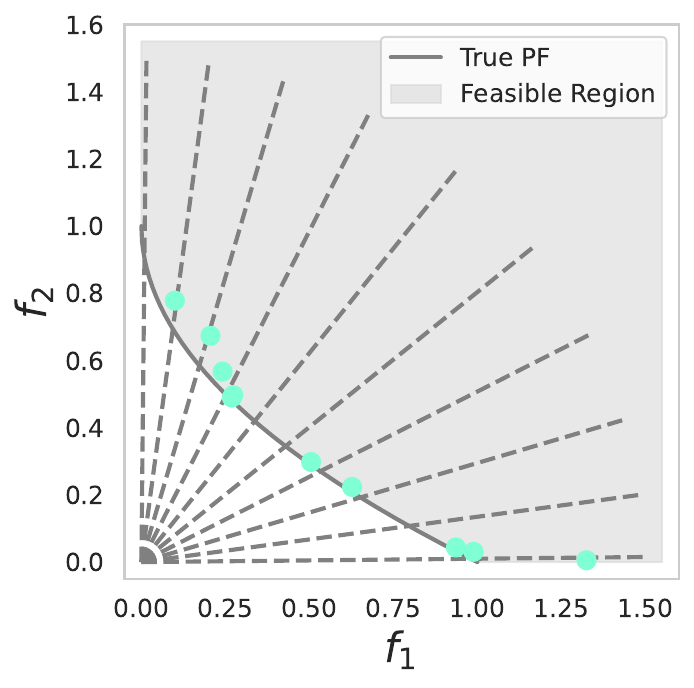} &
        \includegraphics[width=0.16\textwidth]{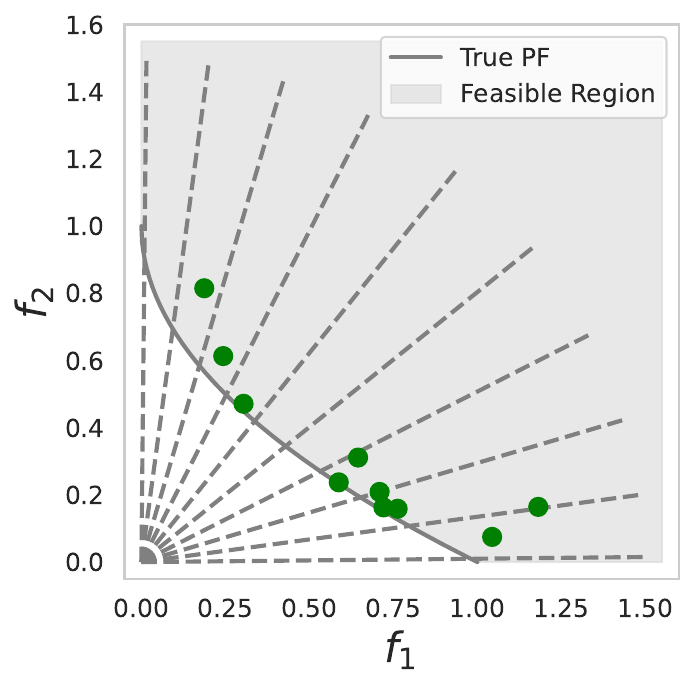} &
        \includegraphics[width=0.16\textwidth]{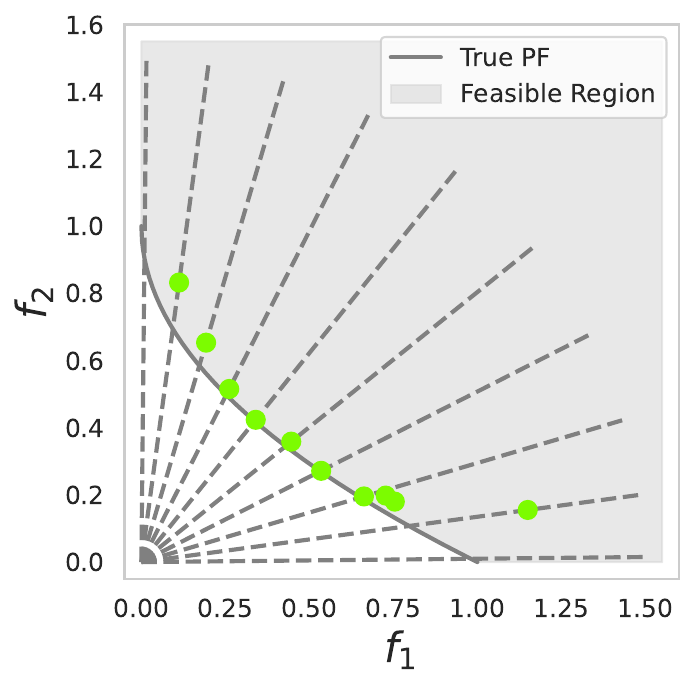} &
        \includegraphics[width=0.16\textwidth]{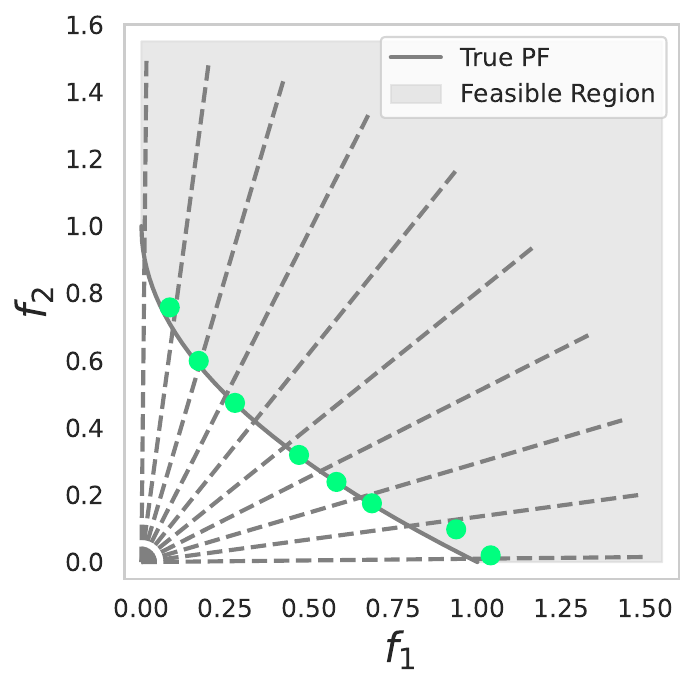} \\
        
        \rotatebox{90}{F3}&
        \includegraphics[width=0.16\textwidth]{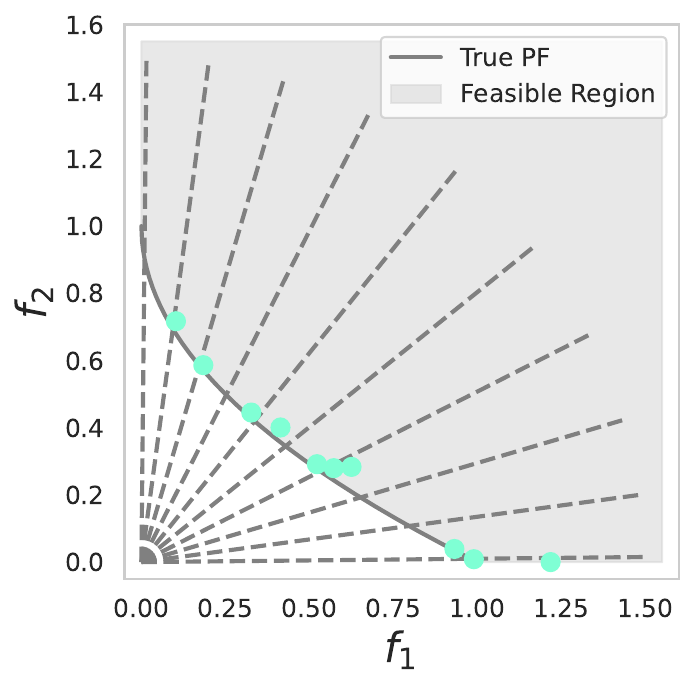} &
        \includegraphics[width=0.16\textwidth]{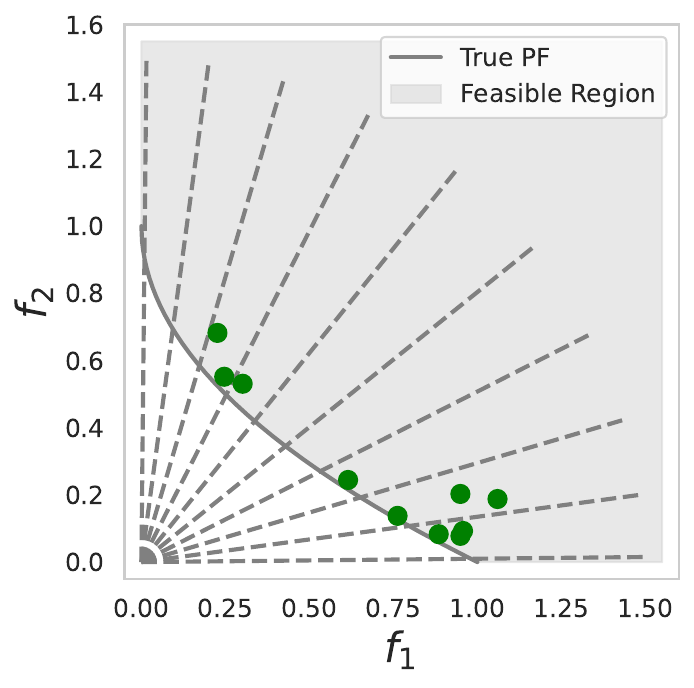} &
        \includegraphics[width=0.16\textwidth]{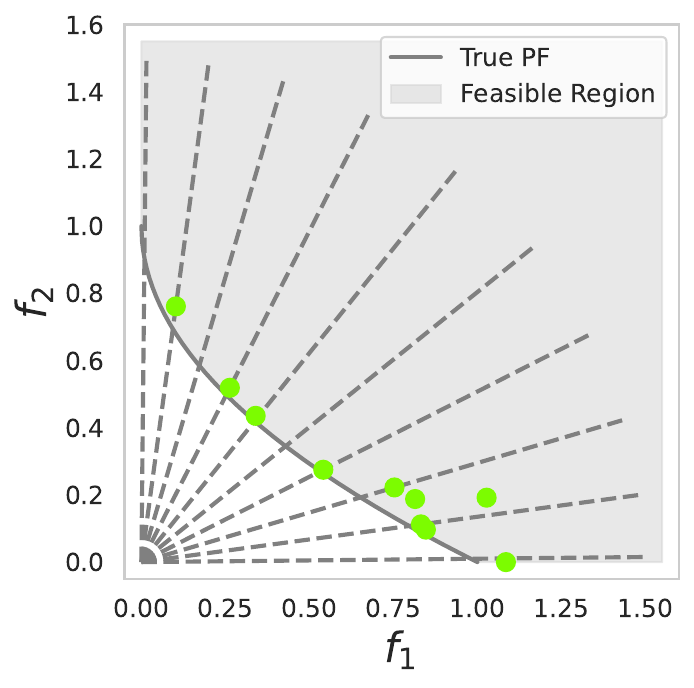} &
        \includegraphics[width=0.16\textwidth]{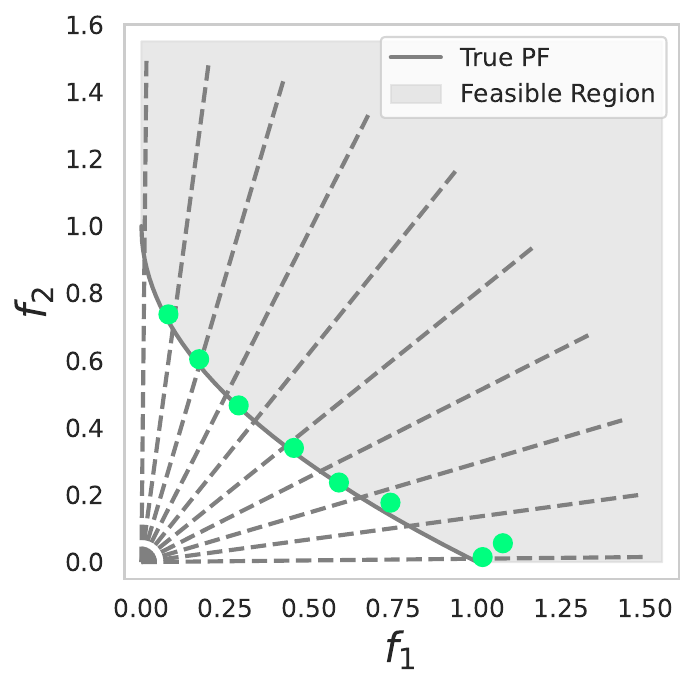} \\
        
        \rotatebox{90}{F4} &
        \includegraphics[width=0.16\textwidth]{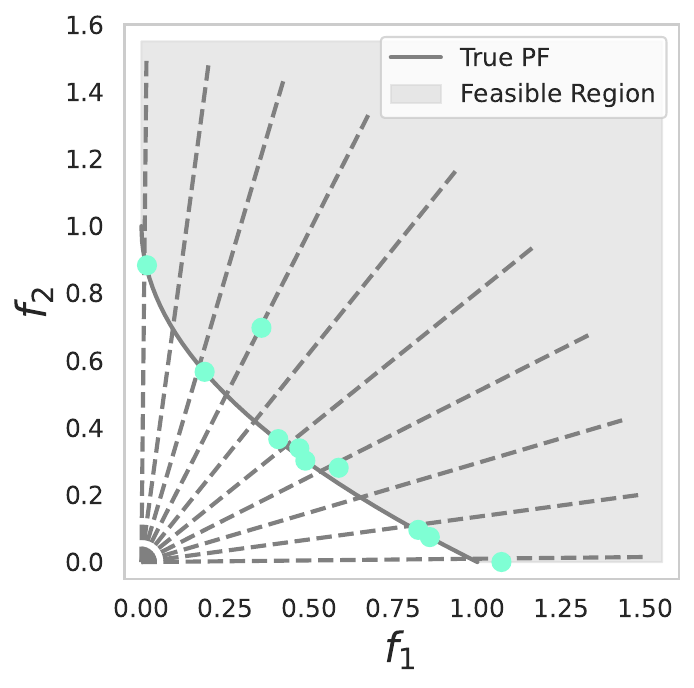} &
        \includegraphics[width=0.16\textwidth]{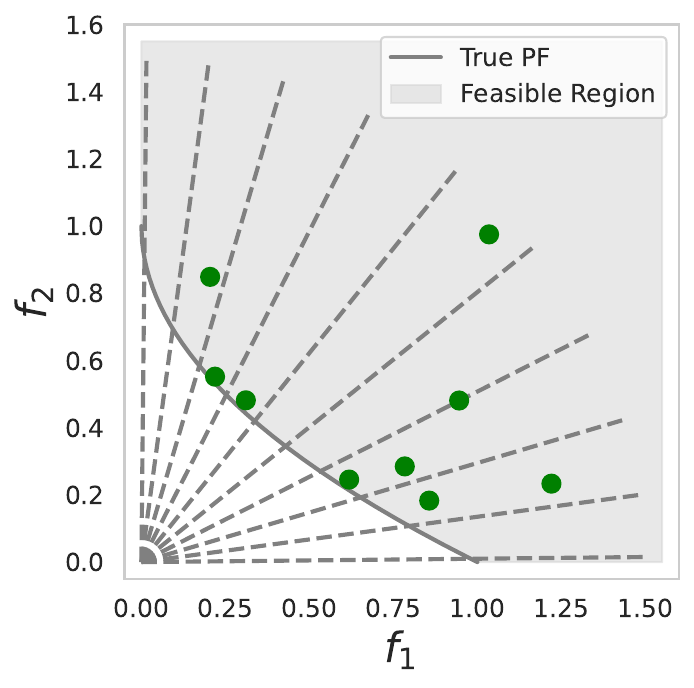} &
        \includegraphics[width=0.16\textwidth]{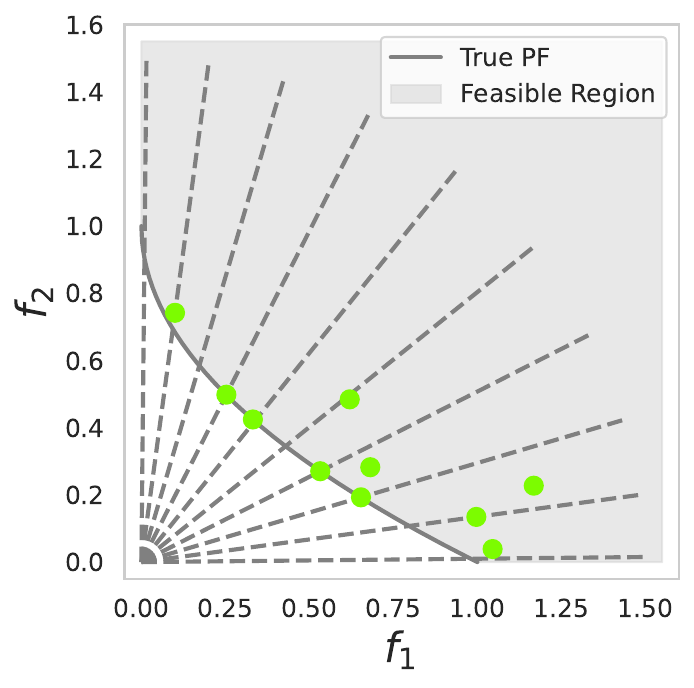} &
        \includegraphics[width=0.16\textwidth]{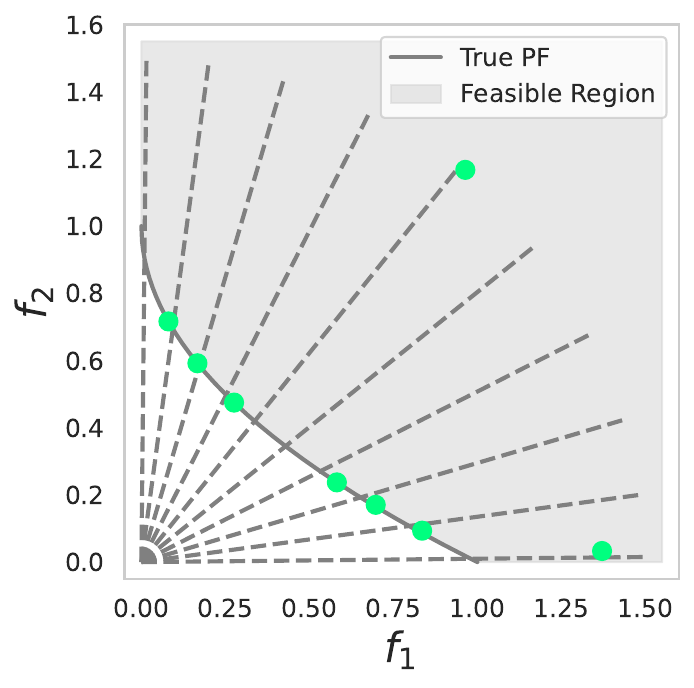} \\
        
        \rotatebox{90}{F5} &
        \includegraphics[width=0.16\textwidth]{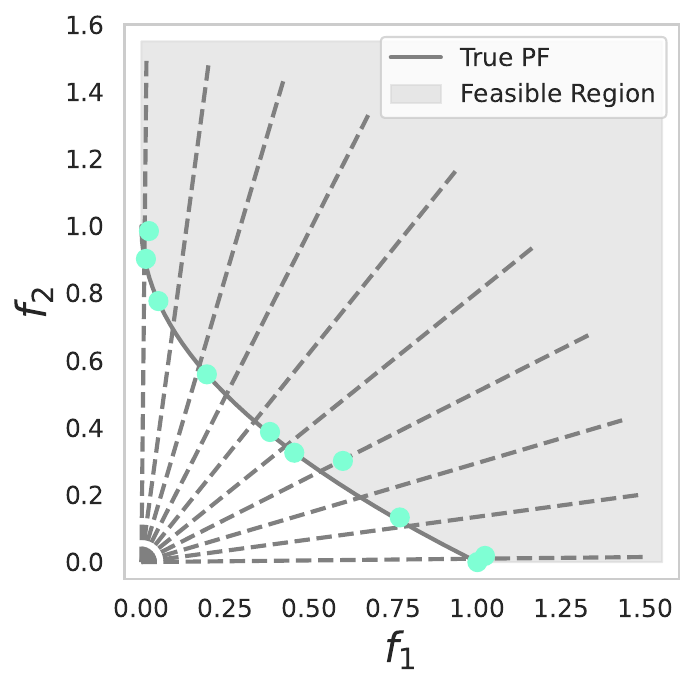} &
        \includegraphics[width=0.16\textwidth]{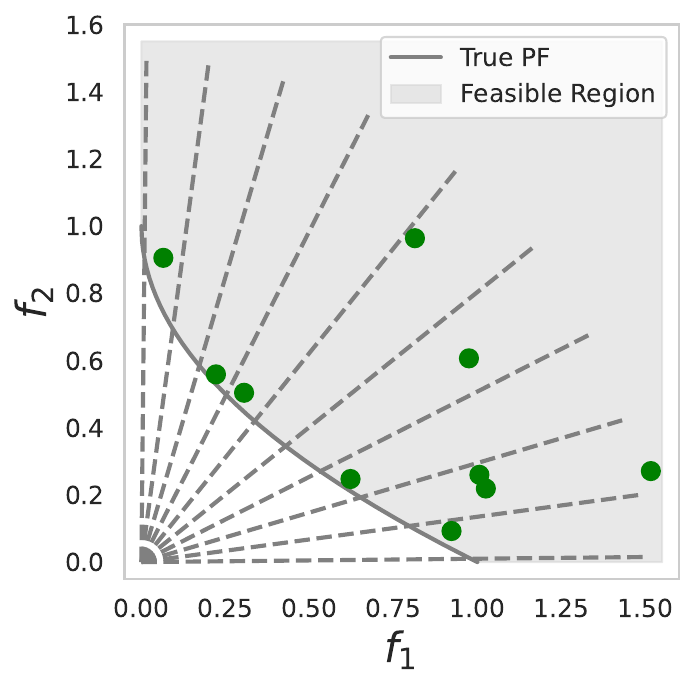} &
        \includegraphics[width=0.16\textwidth]{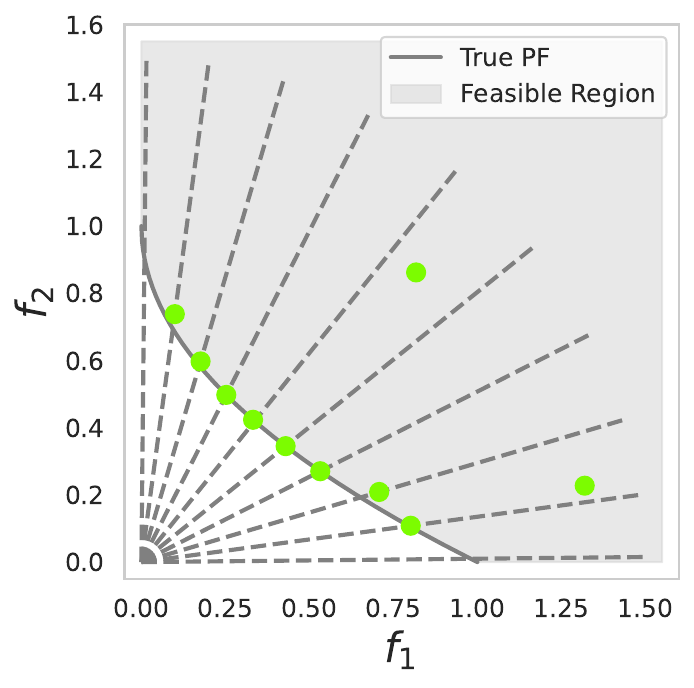} &
        \includegraphics[width=0.16\textwidth]{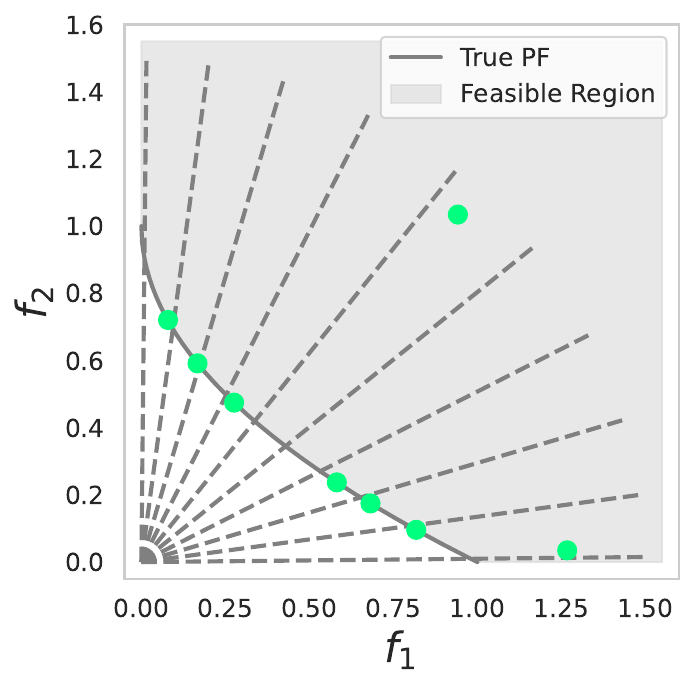} \\

        \rotatebox{90}{F6} &
        \includegraphics[width=0.16\textwidth]{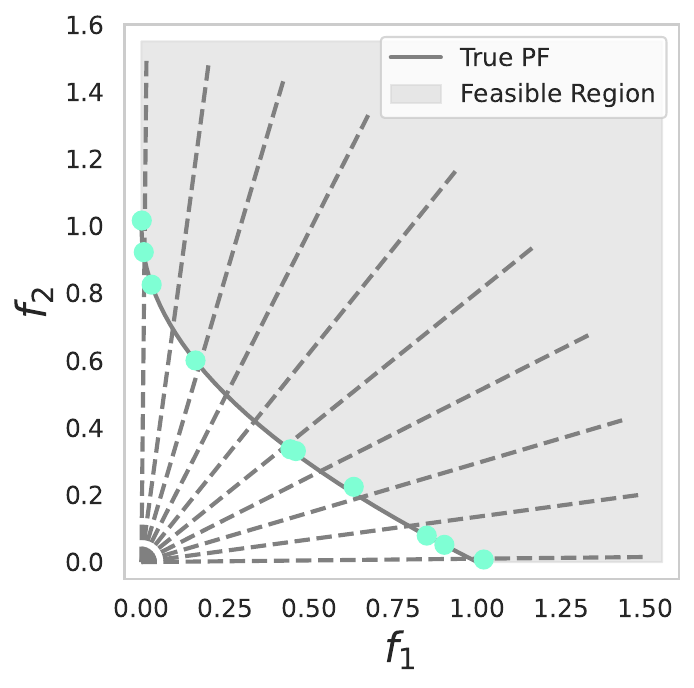} &
        \includegraphics[width=0.16\textwidth]{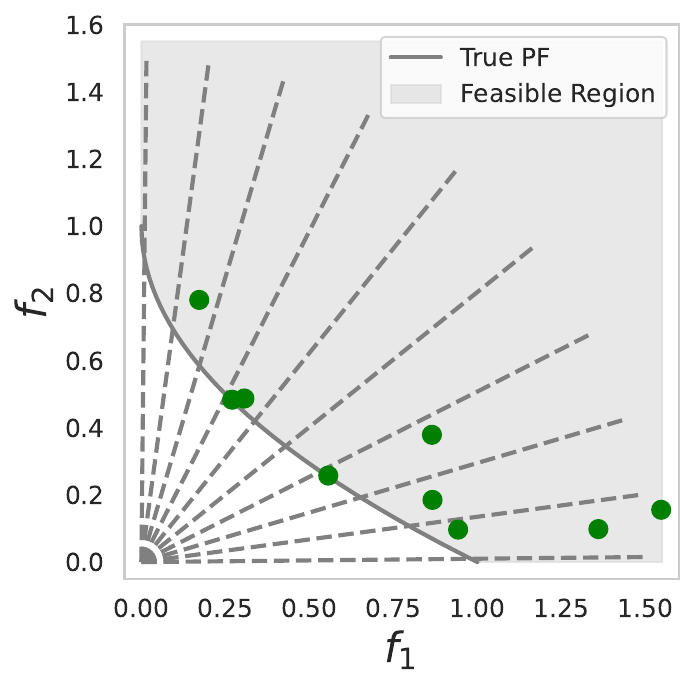} &
        \includegraphics[width=0.16\textwidth]{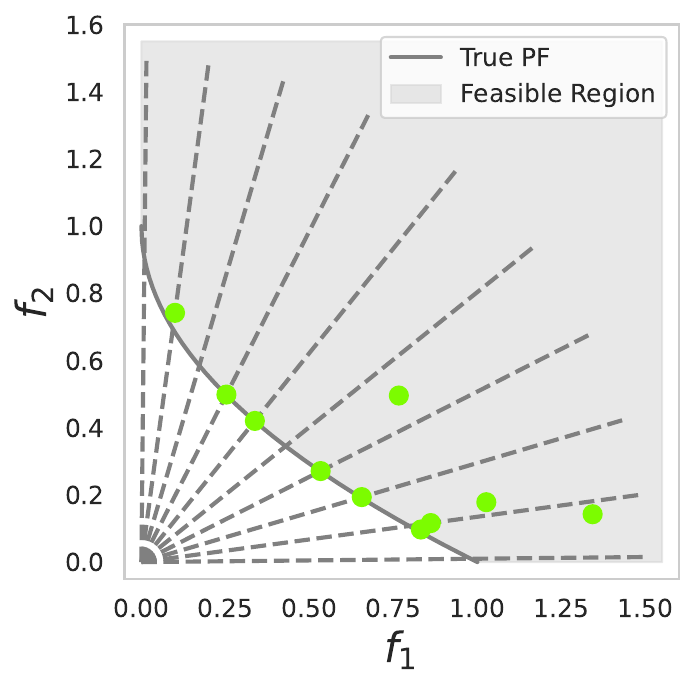} &
        \includegraphics[width=0.16\textwidth]{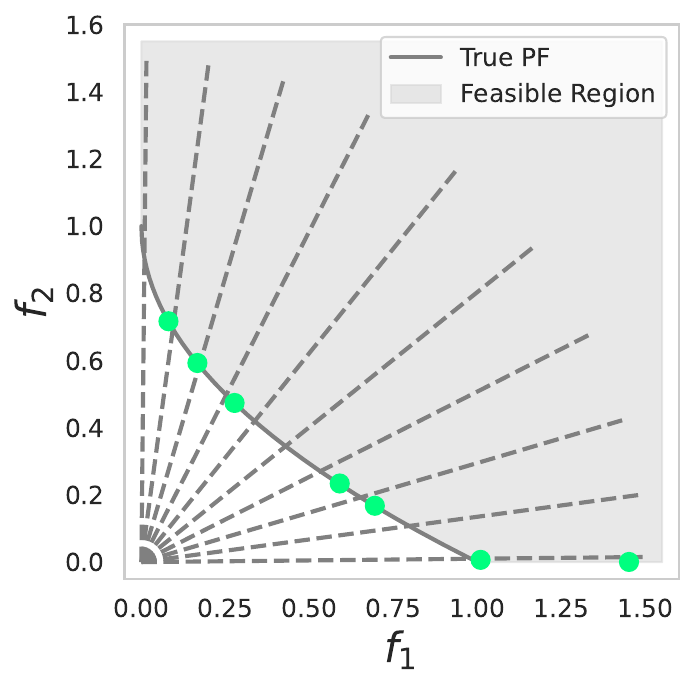} \\
    };

    \end{tikzpicture}
    
    \caption{\textbf{Full results of ExcessMTL and preference-guided MOL methods.}}
    \label{fig:10}
\end{figure*}

\begin{figure*}[h]
    \centering
    \begin{subfigure}[b]{0.49\textwidth}
      \includegraphics[width=\textwidth]{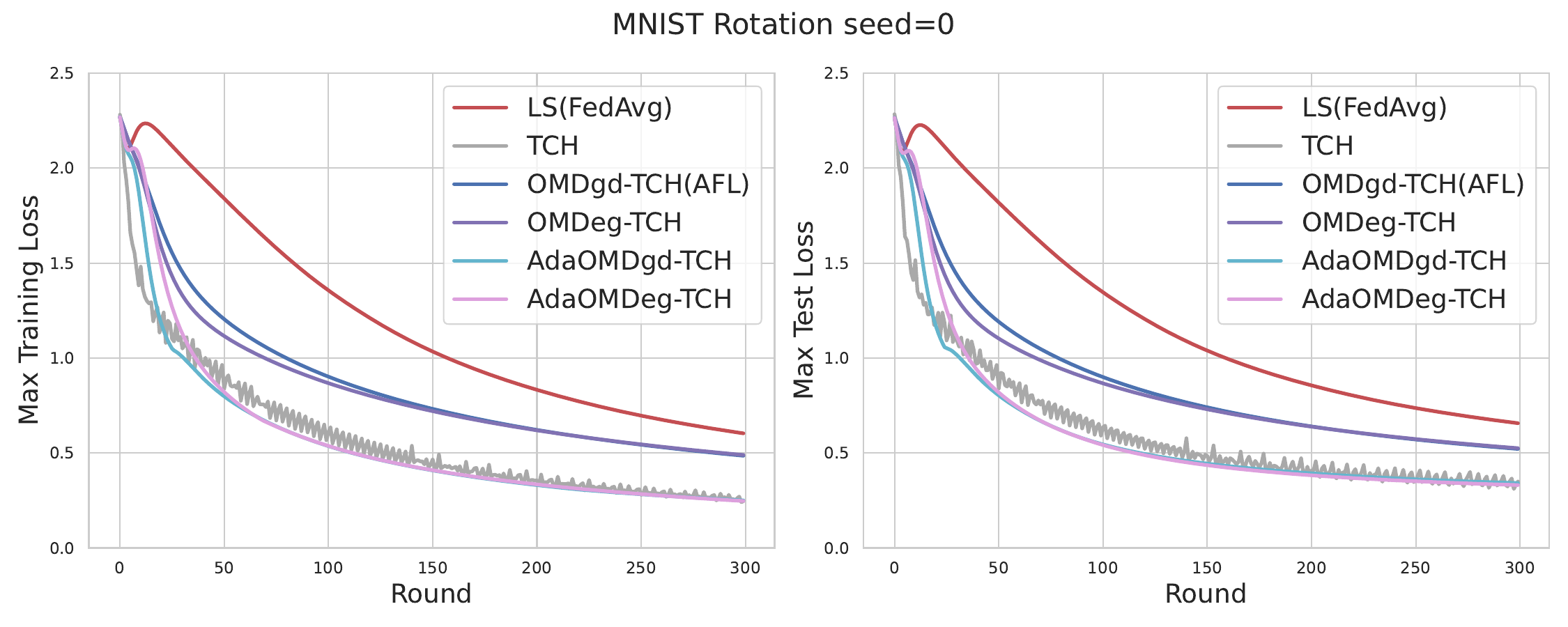}
    \end{subfigure}
    \begin{subfigure}[b]{0.49\textwidth}
        \includegraphics[width=\textwidth]{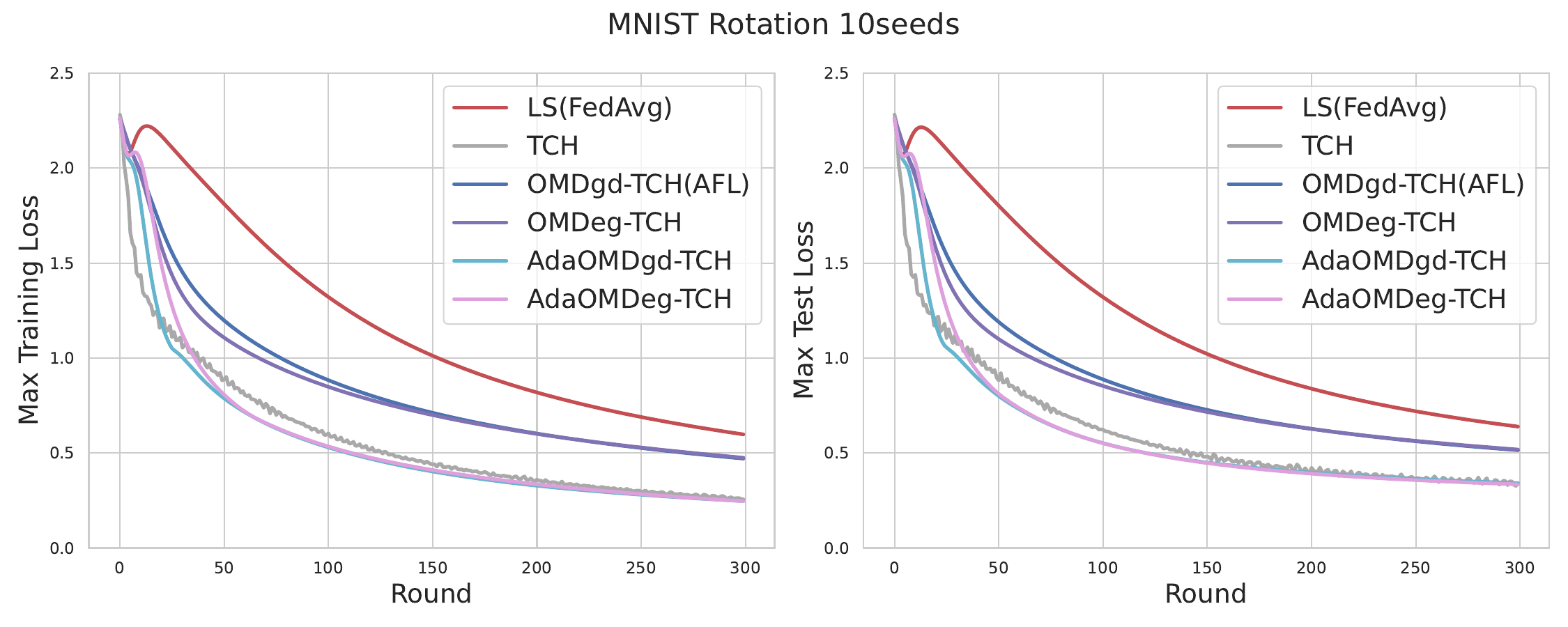}
    \end{subfigure}

    \begin{subfigure}[b]{0.49\textwidth}
        \includegraphics[width=\textwidth]{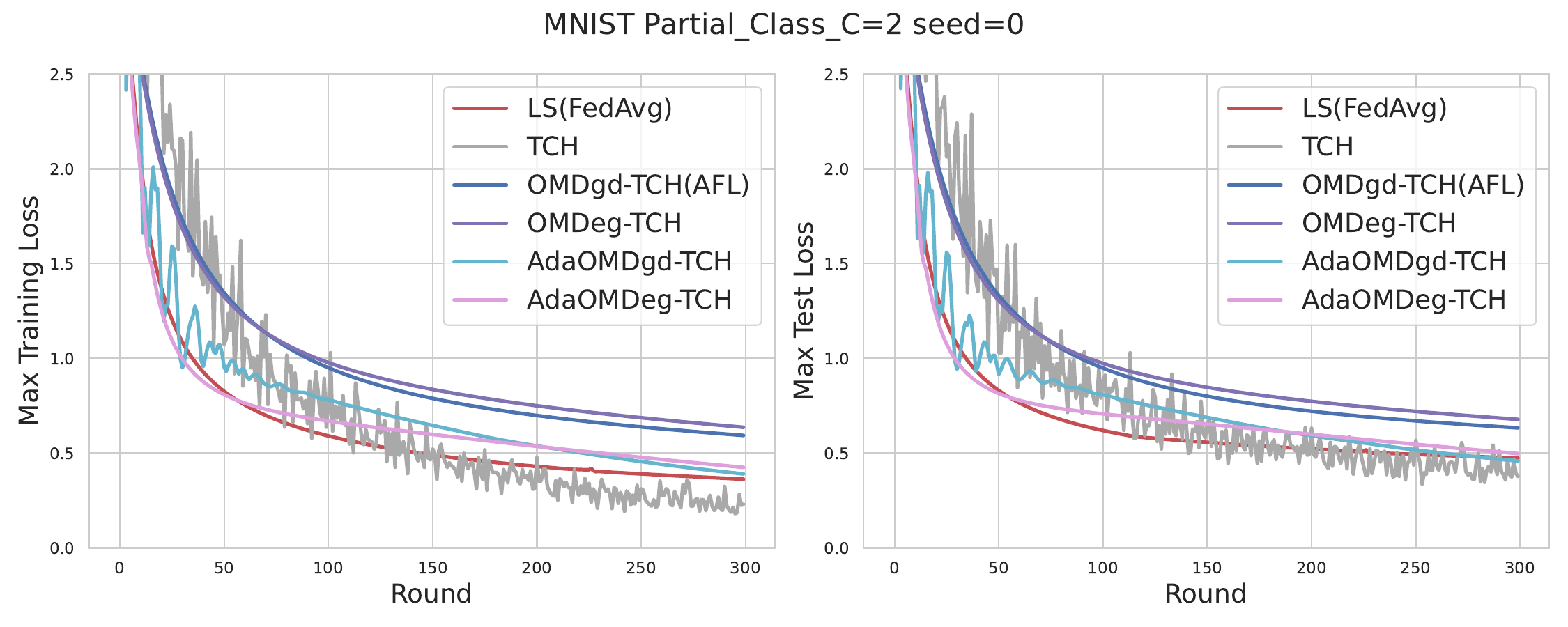}
    \end{subfigure}
    \begin{subfigure}[b]{0.49\textwidth}
        \includegraphics[width=\textwidth]{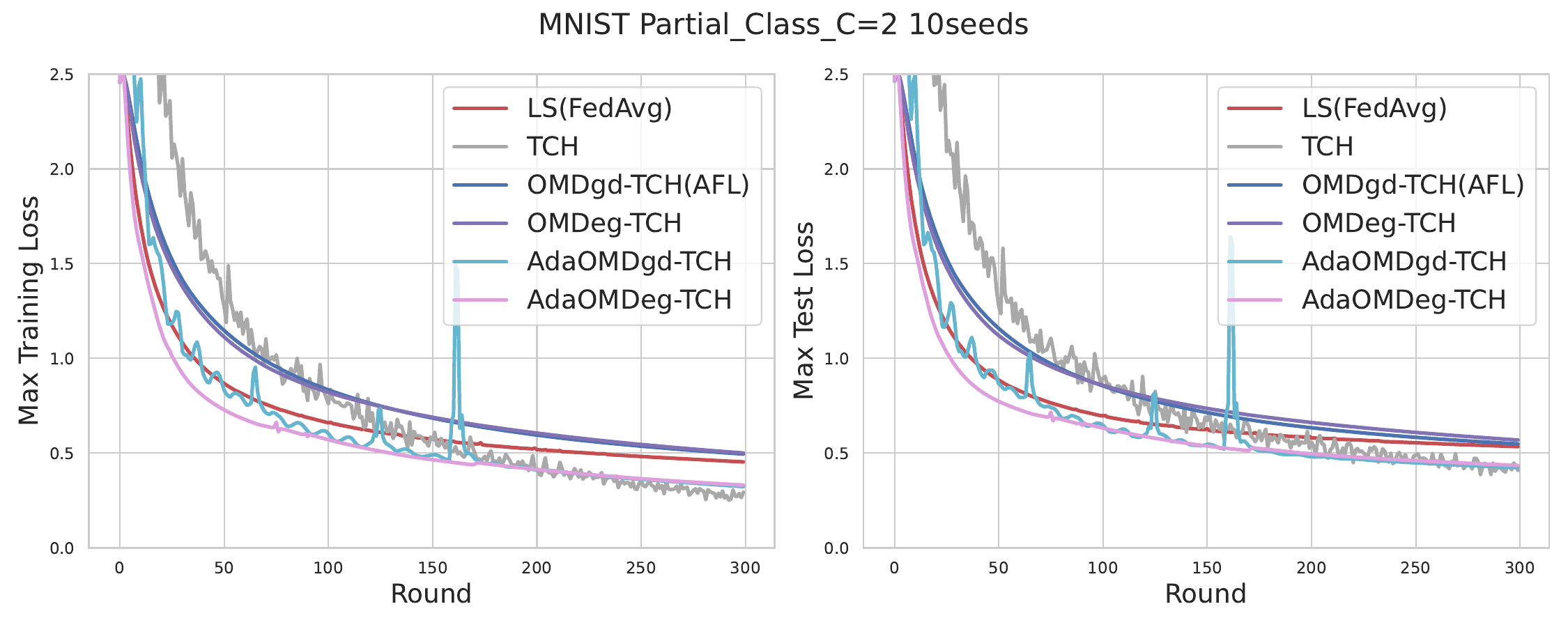}
    \end{subfigure}

    \begin{subfigure}[b]{0.49\textwidth}
        \includegraphics[width=\textwidth]{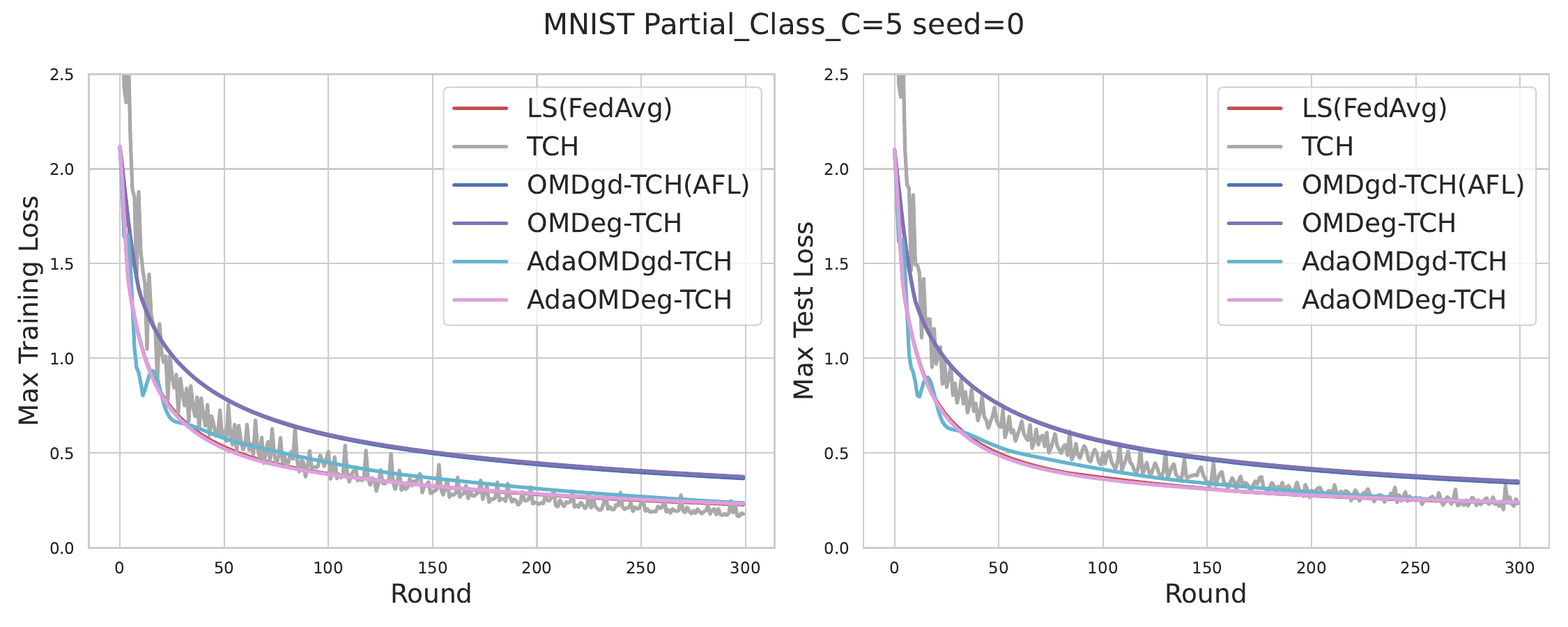}
    \end{subfigure}
    \begin{subfigure}[b]{0.49\textwidth}
        \includegraphics[width=\textwidth]{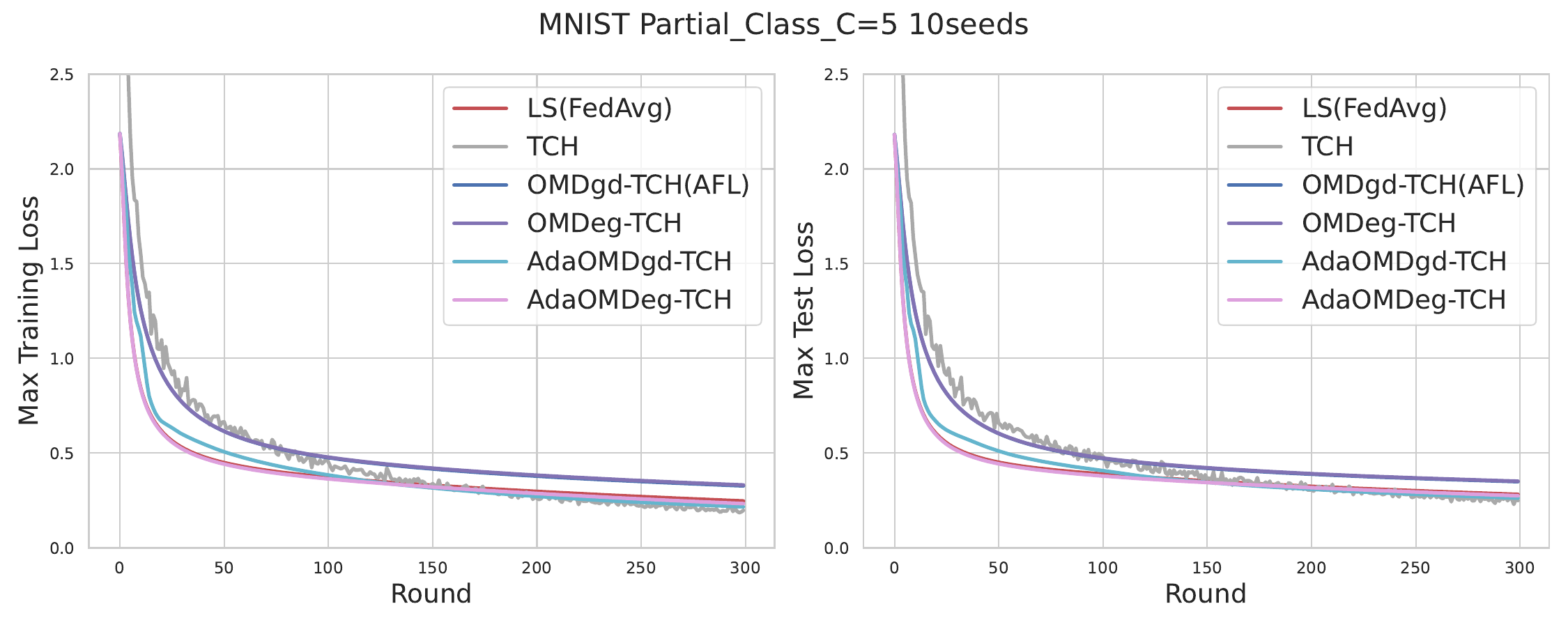}
    \end{subfigure}

    \begin{subfigure}[b]{0.49\textwidth}
        \includegraphics[width=\textwidth]{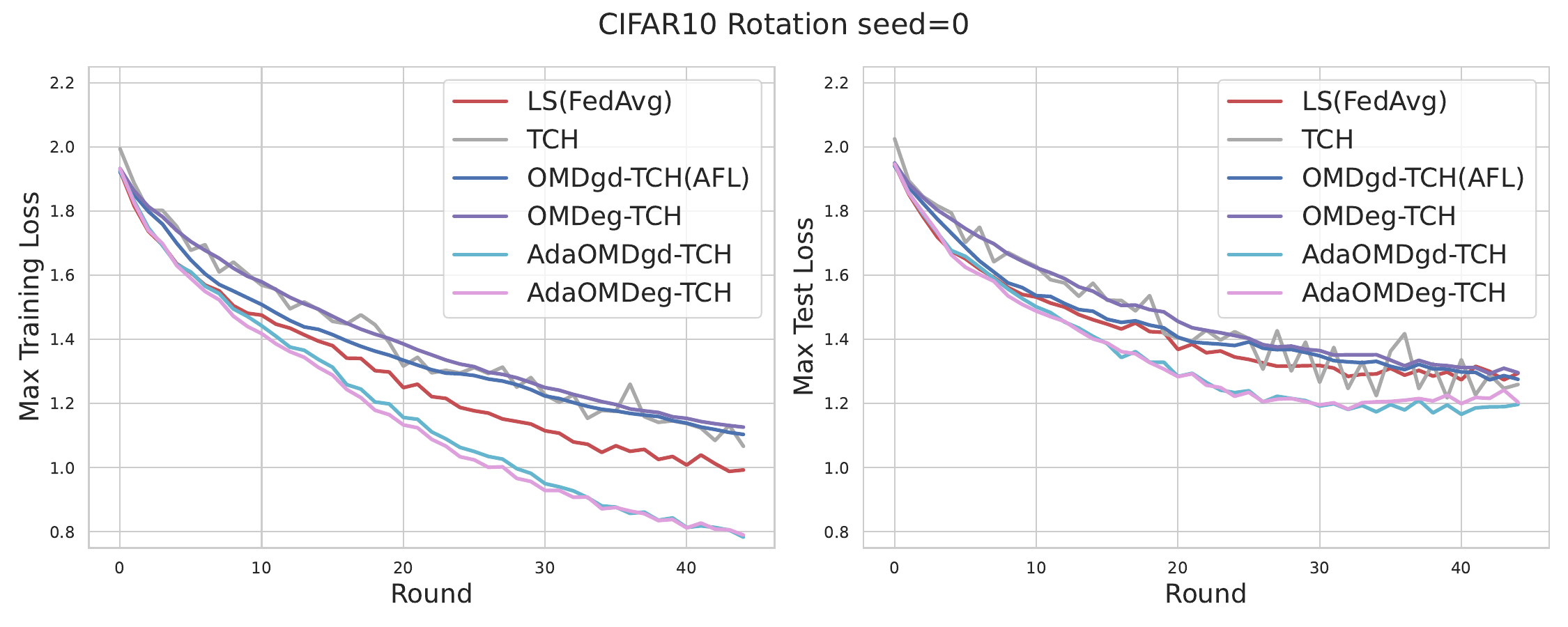}
    \end{subfigure}
    \begin{subfigure}[b]{0.49\textwidth}
        \includegraphics[width=\textwidth]{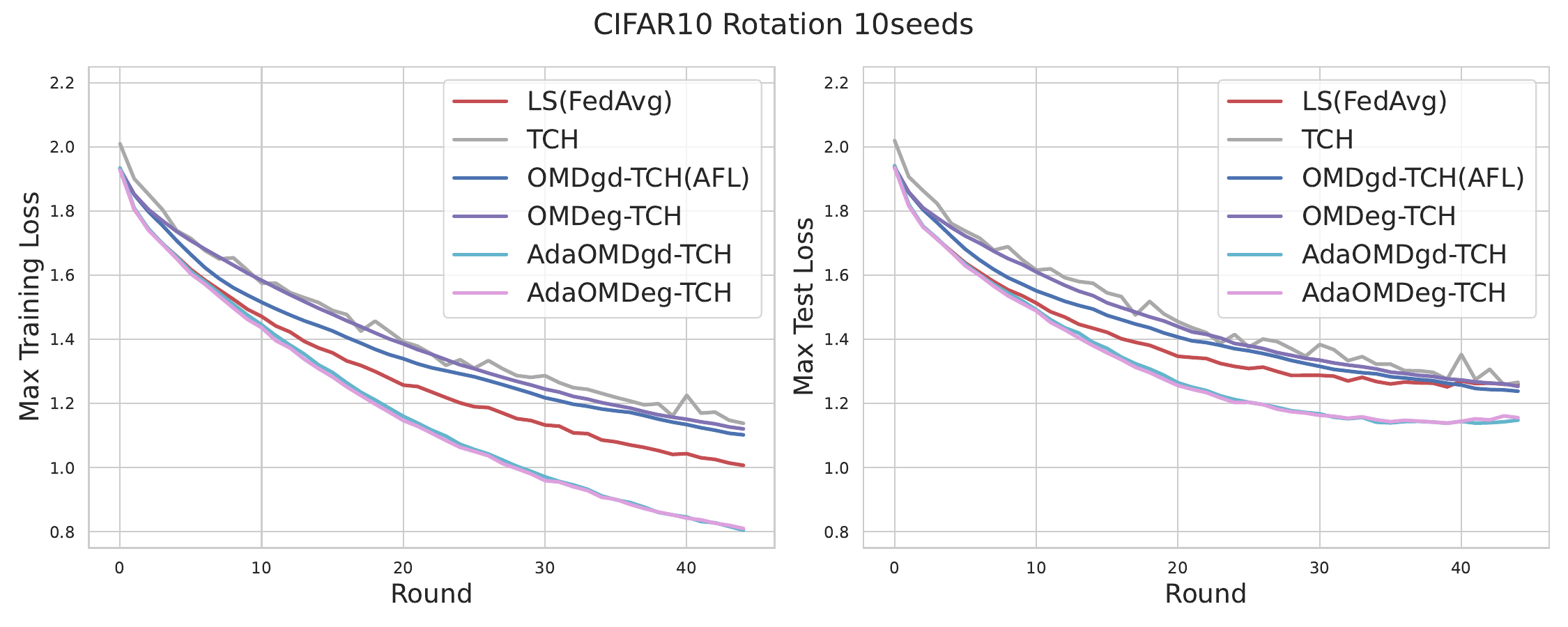}
    \end{subfigure}

    \begin{subfigure}[b]{0.49\textwidth}
        \includegraphics[width=\textwidth]{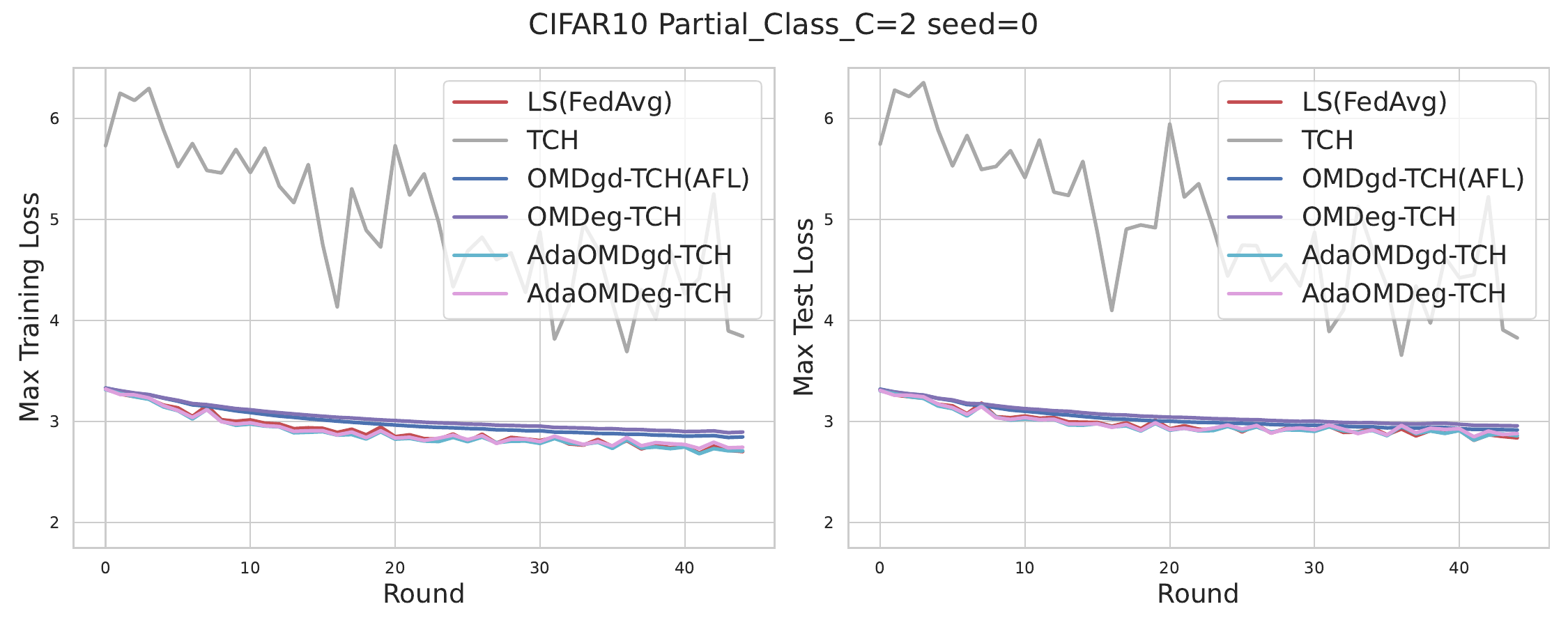}
    \end{subfigure}
    \begin{subfigure}[b]{0.49\textwidth}
        \includegraphics[width=\textwidth]{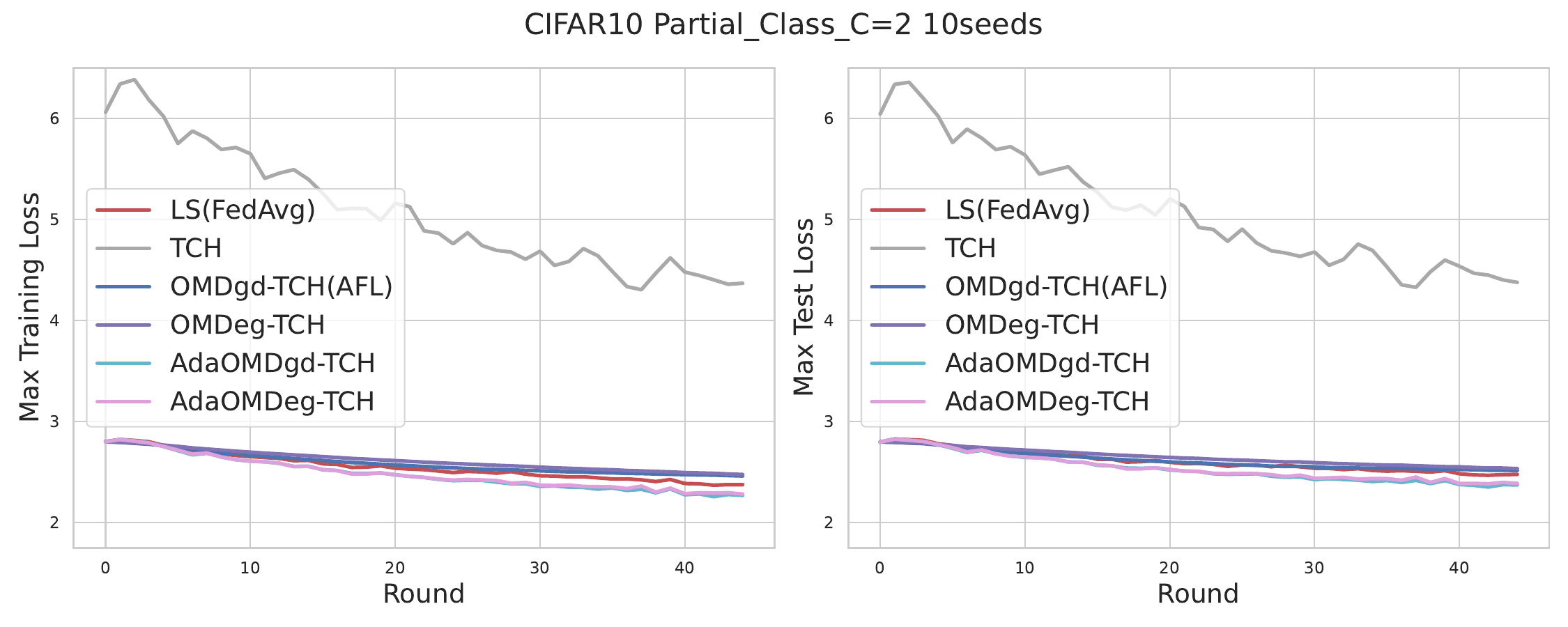}
    \end{subfigure}

    \begin{subfigure}[b]{0.49\textwidth}
        \includegraphics[width=\textwidth]{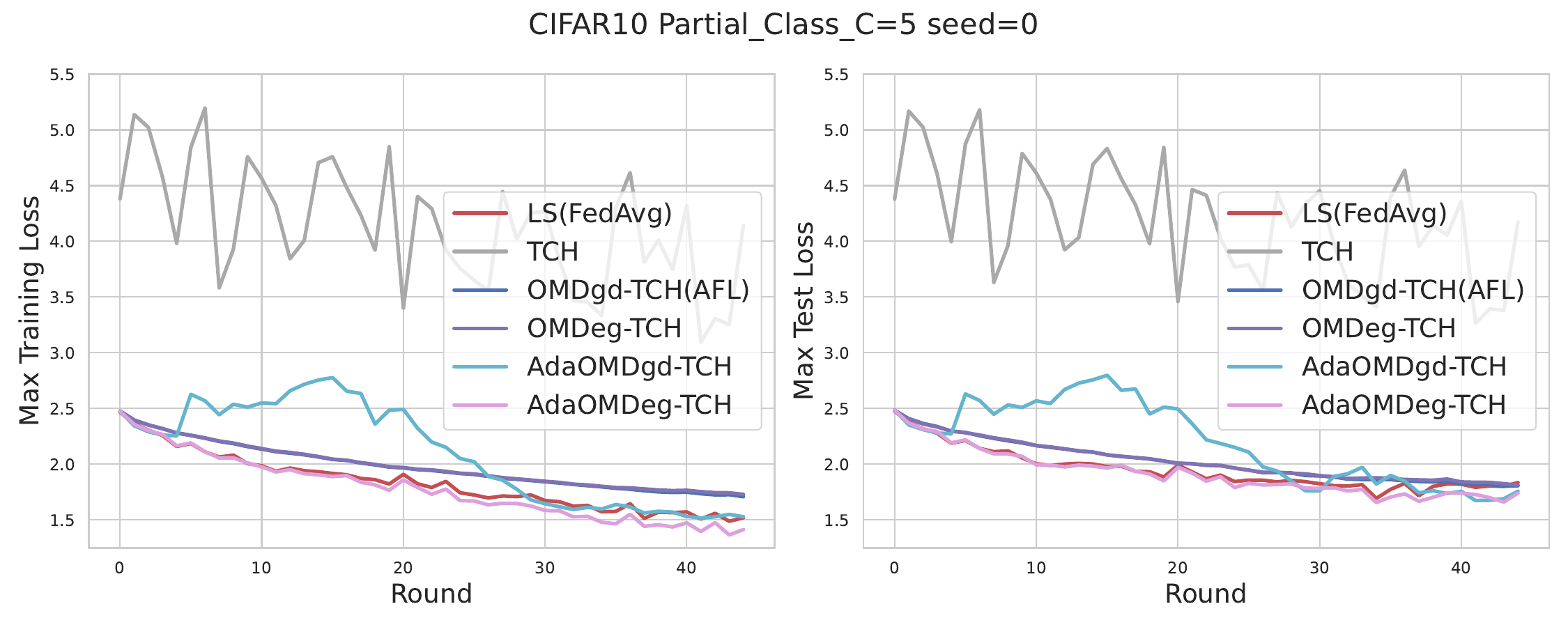}
    \end{subfigure}
    \begin{subfigure}[b]{0.49\textwidth}
        \includegraphics[width=\textwidth]{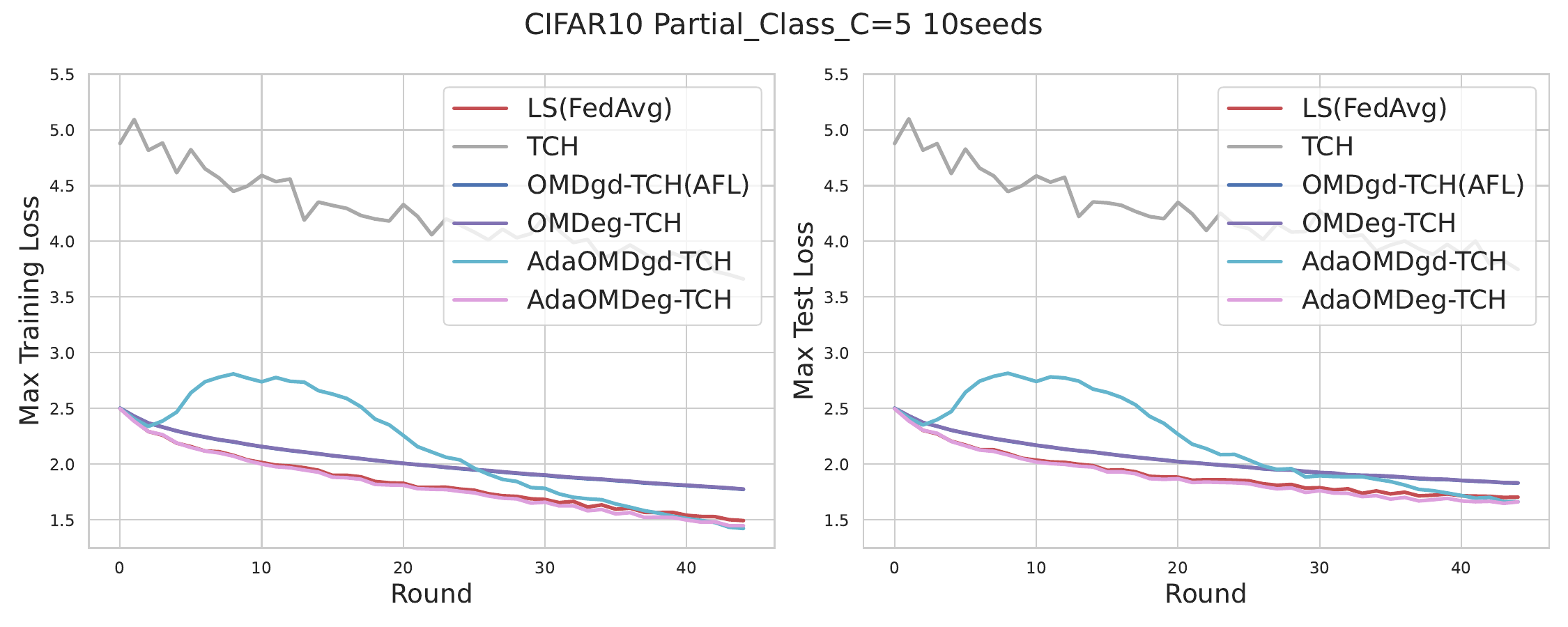}
    \end{subfigure}

    \caption{\textbf{Training and test curves of the worst client loss.}}
    \label{fig:5}
\end{figure*}

\begin{figure*}[h]
    \centering
    \begin{subfigure}[b]{0.3\textwidth}
      \includegraphics[width=\textwidth]{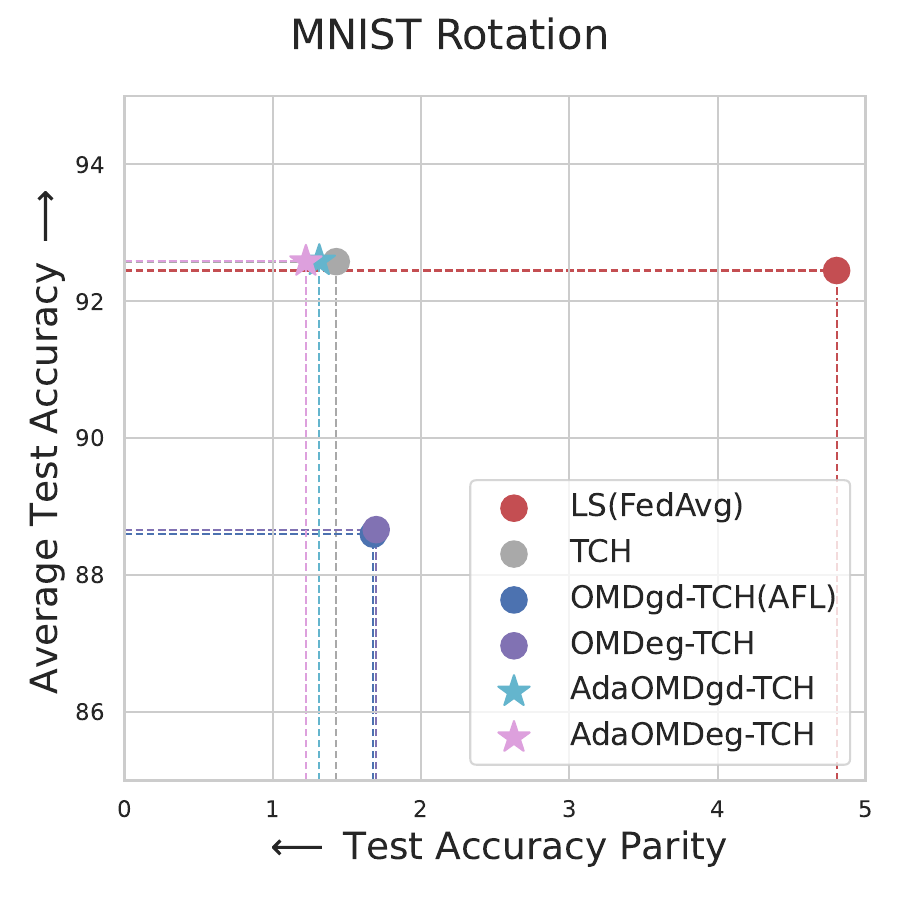}
    \end{subfigure}
    \begin{subfigure}[b]{0.3\textwidth}
        \includegraphics[width=\textwidth]{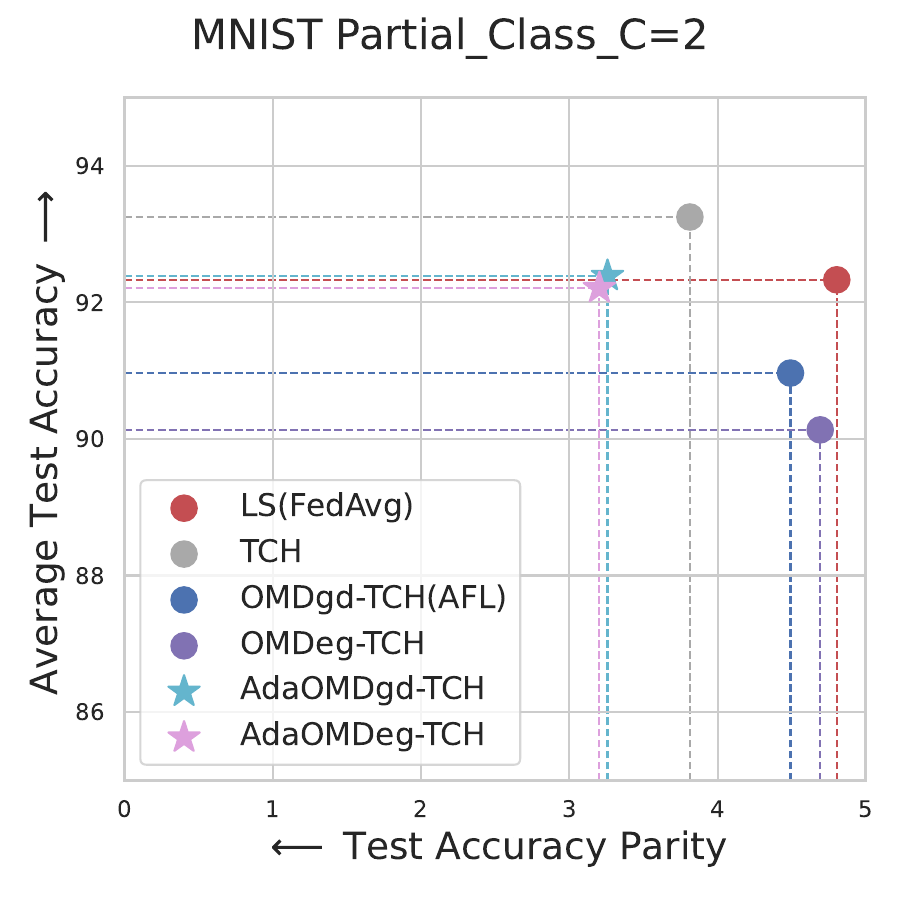}
    \end{subfigure}
    \begin{subfigure}[b]{0.3\textwidth}
        \includegraphics[width=\textwidth]{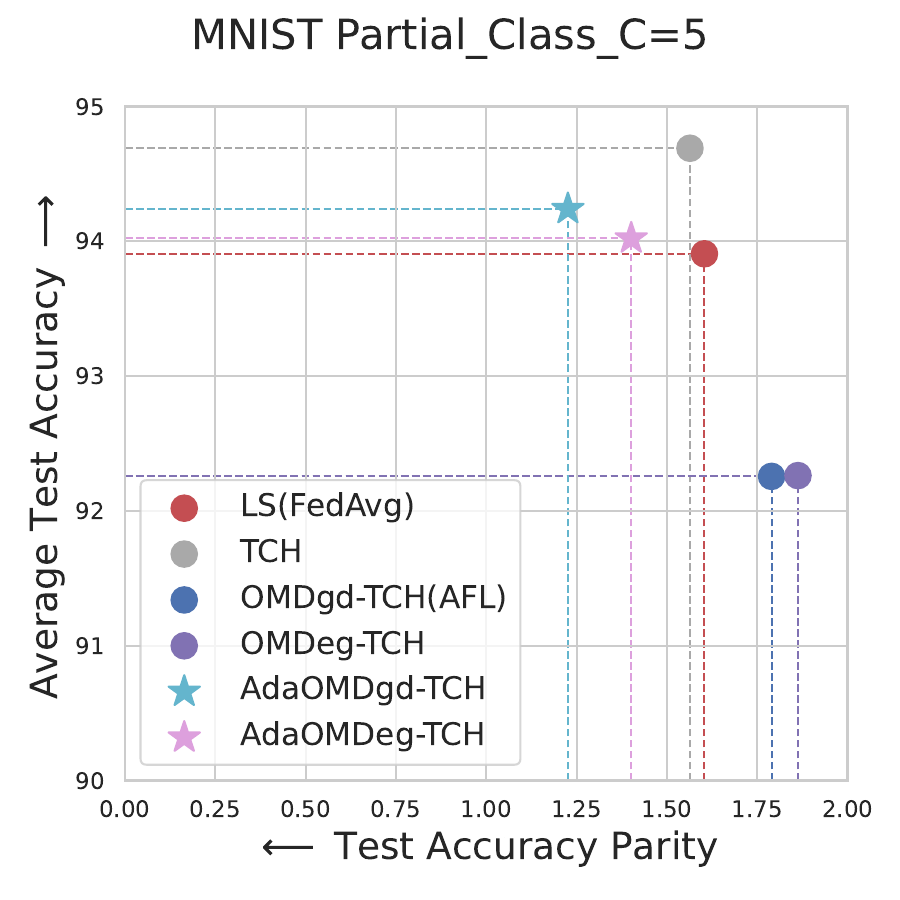}
    \end{subfigure}

    \begin{subfigure}[b]{0.3\textwidth}
        \includegraphics[width=\textwidth]{Figures/CIFAR10_rotation_scatters.pdf}
    \end{subfigure}
    \begin{subfigure}[b]{0.3\textwidth}
        \includegraphics[width=\textwidth]{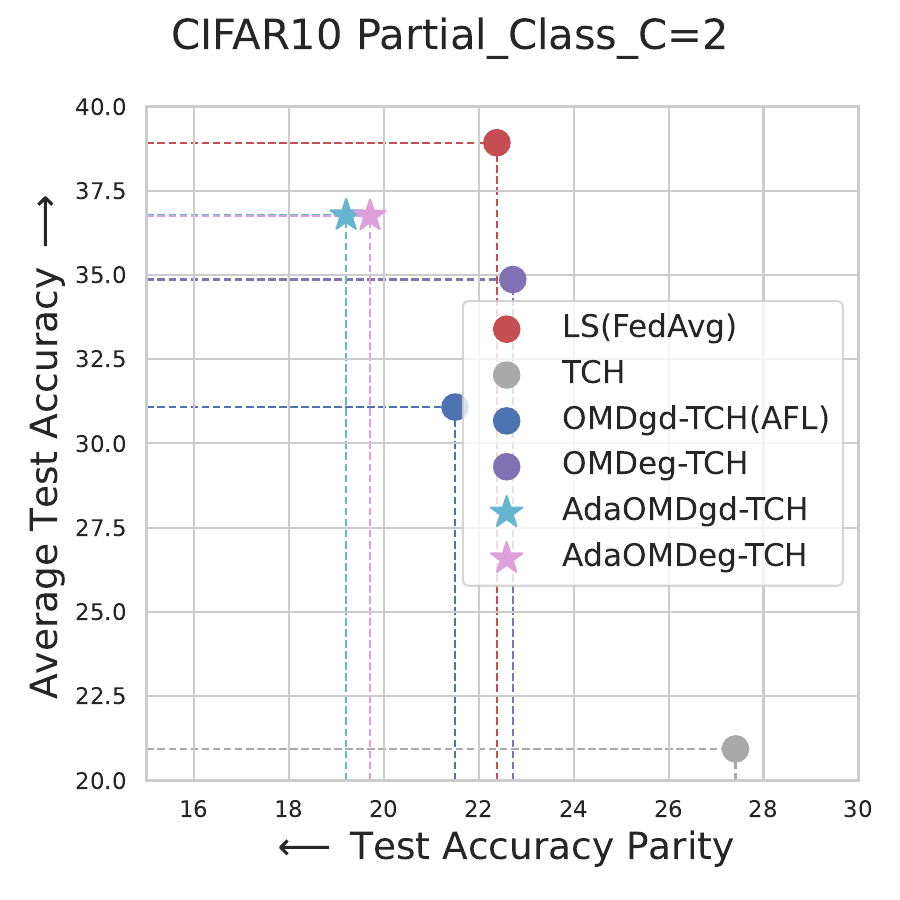}
    \end{subfigure}
    \begin{subfigure}[b]{0.3\textwidth}
        \includegraphics[width=\textwidth]{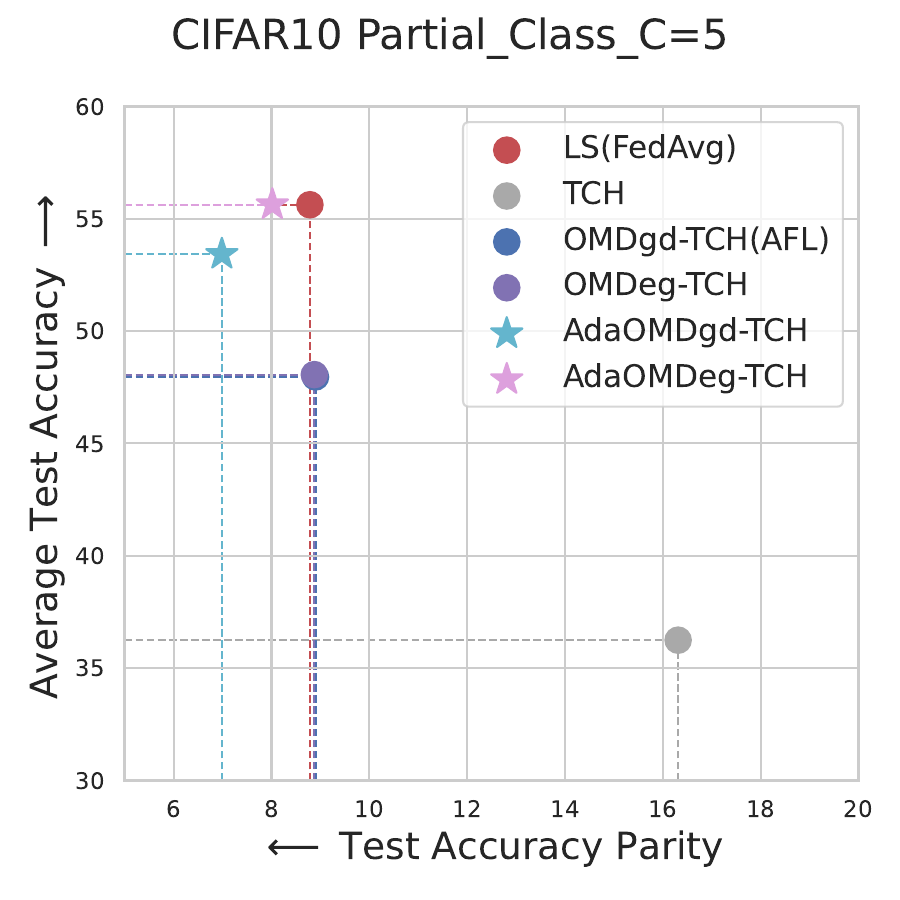}
    \end{subfigure}

    \caption{\textbf{Scatter plots of average accuracy and accuracy parity.}}
    \label{fig:6}
\end{figure*}

\begin{figure*}
    \centering
    \begin{subfigure}[b]{\textwidth}
      \includegraphics[width=\textwidth]{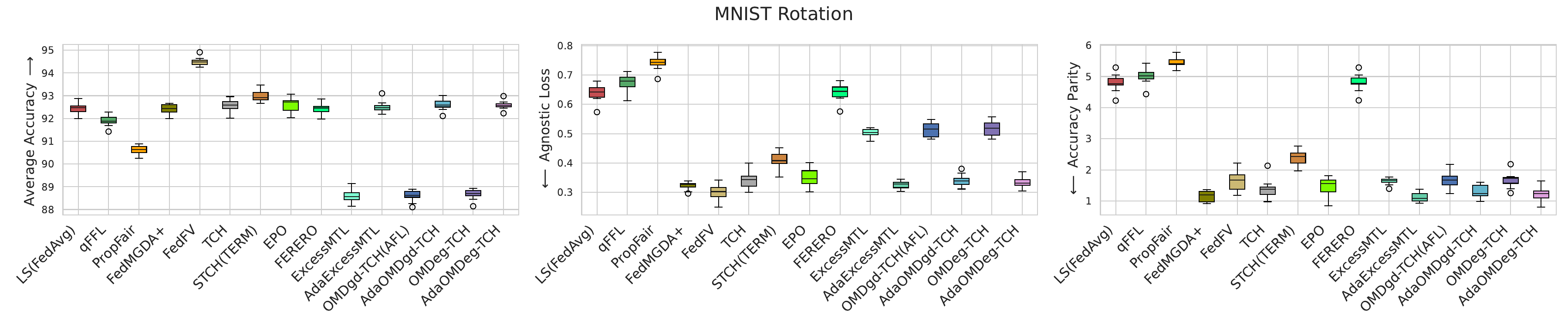}
    \end{subfigure}
    \begin{subfigure}[b]{\textwidth}
        \includegraphics[width=\textwidth]{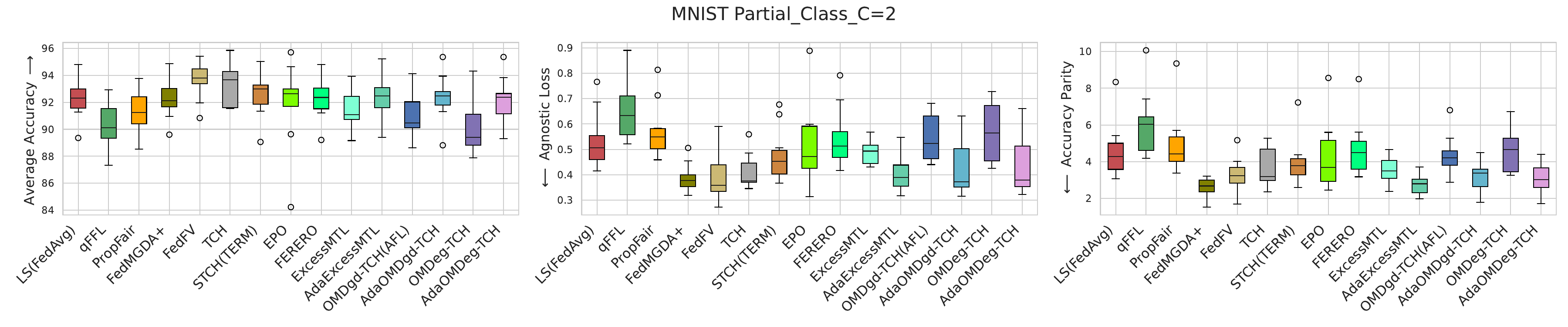}
    \end{subfigure}
    \begin{subfigure}[b]{\textwidth}
        \includegraphics[width=\textwidth]{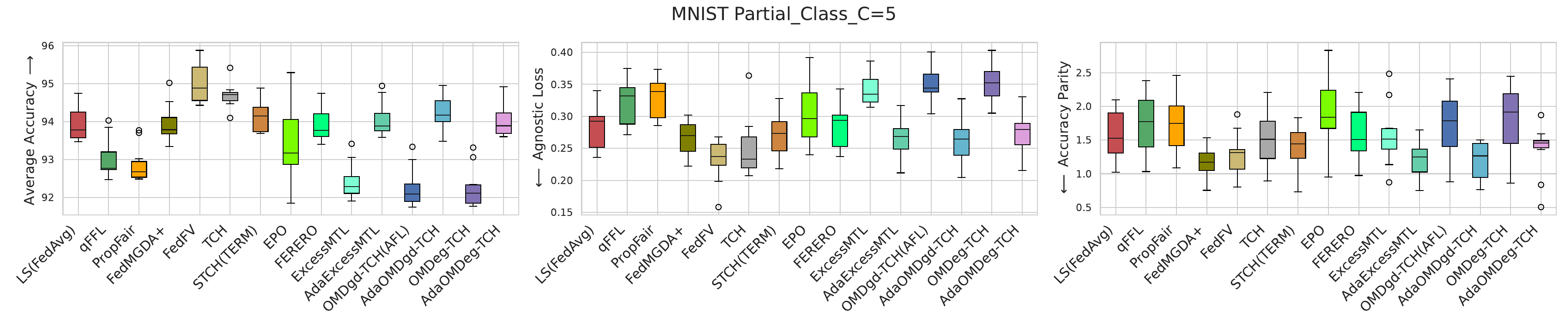}
    \end{subfigure}
    \begin{subfigure}[b]{\textwidth}
        \includegraphics[width=\textwidth]{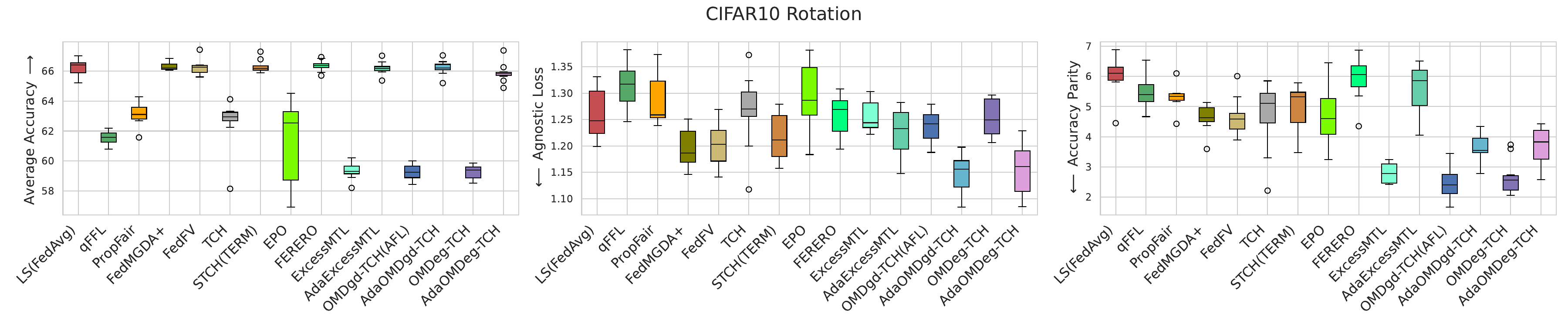}
    \end{subfigure}
    \begin{subfigure}[b]{\textwidth}
        \includegraphics[width=\textwidth]{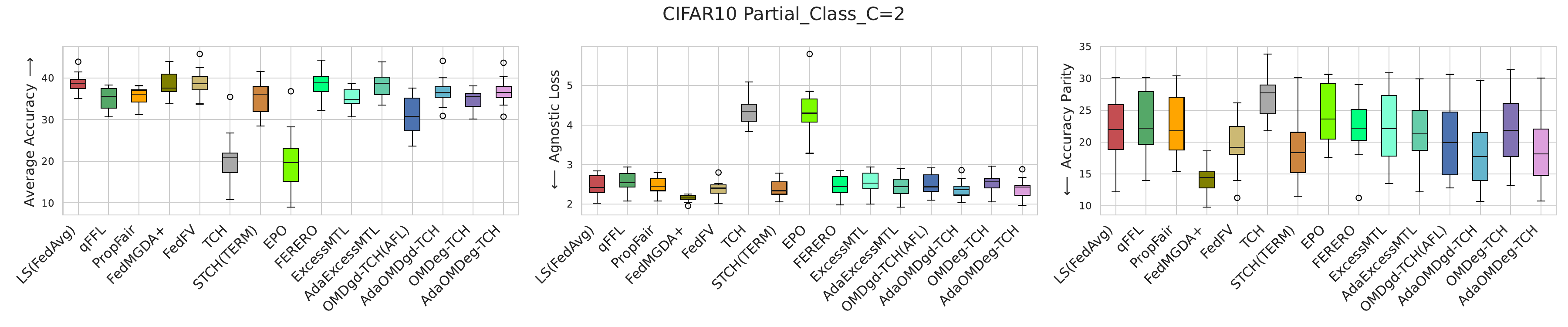}
    \end{subfigure}
    \begin{subfigure}[b]{\textwidth}
        \includegraphics[width=\textwidth]{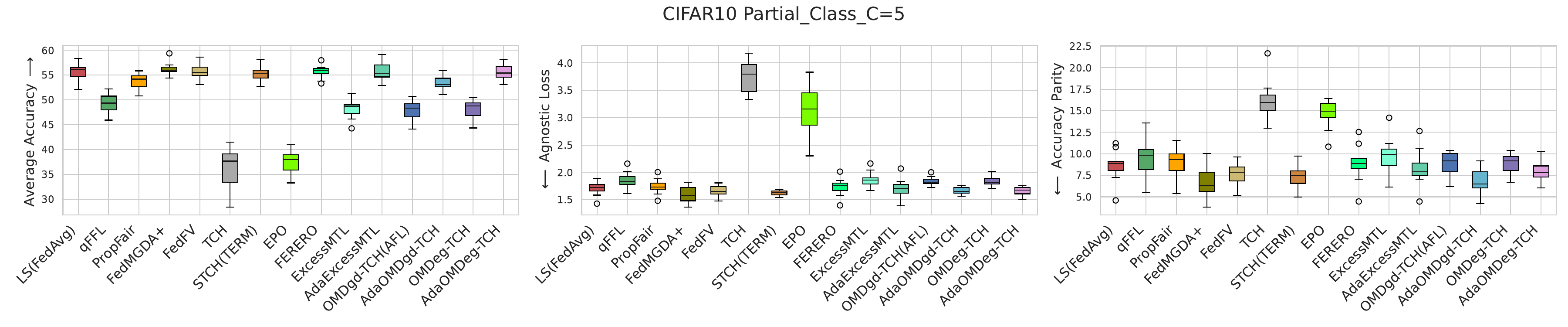}
    \end{subfigure}
    \caption{\textbf{Boxplots of all methods on three metrics.}}
    \label{fig:7}
\end{figure*}

\end{document}